\newtheorem*{prop*}{Proposition}
\newtheorem{theorem}{Theorem}[chapter]
\newtheorem*{lemma*}{Lemma}
\newtheorem{definition}{Definition}[chapter]
\author{Yijie Zhang}
\title{On Cold Posteriors of Probabilistic Neural Networks}
\date{This thesis has been submitted to the Ph.D. School of The Faculty of Science, University of Copenhagen on September $3^{\text{rd}}$ 2024.}
\renewcommand{\headrulewidth}{0pt}
\newcommand\mymainpagestyle{%
\fancyhf{}      
\fancyhead[L]{\nouppercase{\footnotesize{\chaptername~ \thechapter~ |~ \leftmark}} \renewcommand{\headrulewidth}{0.4pt} \headrule \renewcommand{\headrulewidth}{0pt}}
\setlength{\headheight}{25pt}
\fancyfoot[C]{\thepage}
}
\def\eqref#1{equation~\ref{#1}}
\DeclareMathAlphabet{\mathsfit}{\encodingdefault}{\sfdefault}{m}{sl}
\SetMathAlphabet{\mathsfit}{bold}{\encodingdefault}{\sfdefault}{bx}{n}
\newcommand{\E}{\mathbb{E}}
\newcommand{\R}{\mathbb{R}}
\newcommand{\KL}{\mathrm{KL}}
\DeclareMathOperator*{\argmin}{arg\,min}
\newtcolorbox{Overview}[1][]{%
   enhanced,
   detach title,
   title=Section Overview,
   sharp corners=northwest,
   fonttitle=\bfseries,
   before    = \par\smallskip\centering,
   after     = \par,
   colback   = green!5,
   colframe  = green!50!black, 
   coltitle  = green!5!white, 
   colbacktitle = green!50!black,
   attach boxed title to top left={yshift=-4mm,yshifttext=-2mm},
   boxed title style={rounded corners, arc = 2.5mm, outer arc = 1.5mm},
   arc       = 4mm, 
   outer arc = 3.5mm, 
   #1}
\newtheorem{proposition}[theorem]{Proposition}
\newtheorem{insight}{Insight}
\newcommand{\daset}{H}
\newcommand{\lr}[1]{\left (#1\right)}
\newcommand{\lrc}[1]{\left \{#1\right\}}
\newcommand{\lrs}[1]{\left [#1 \right]}
\NewDocumentCommand{\1}{o}{\mathds 1{\IfValueT{#1}{\lr{#1}}}}
\let\P\undefined
\NewDocumentCommand{\P}{o}{\mathbb P{\IfValueT{#1}{\lr{#1}}}}
\newcommand{\PP}{\mathcal P}
\newcommand{\XX}{\mathcal X}
\newcommand{\YY}{\mathcal Y}
\newcommand{\DD}{\mathcal D}
\newcommand{\ZZ}{\mathcal Z}
\newcommand{\Utrain}{U^{\mathrm{train}}}
\newcommand{\Uval}{U^{\mathrm{val}}}
\newcommand{\nval}{n^{\mathrm{val}}}
\newcommand{\HH}{\mathcal H}
\let\emptyset\varnothing
\DeclareMathOperator{\kl}{kl}
\DeclareMathOperator{\B}{B}
\newcommand{\Ex}{\mathcal{E}}
\begin{document}


\maketitle
\frontmatter 
\pagestyle{plain} 

\newpage \ \newpage 

\section*{Abstract}
\label{sec:abstract}
\addcontentsline{toc}{section}{Abstract} 
Bayesian inference provides a principled probabilistic framework for quantifying uncertainty by updating beliefs based on prior knowledge and observed data through Bayes' theorem. In Bayesian deep learning, neural network weights are treated as random variables with prior distributions, allowing for a probabilistic interpretation and quantification of predictive uncertainty. However, Bayesian methods lack theoretical generalization guarantees for unseen data. PAC-Bayesian analysis addresses this limitation by offering a frequentist framework to derive generalization bounds for randomized predictors, thereby certifying the reliability of Bayesian methods in machine learning.

Temperature $T$, or inverse-temperature $\lambda = \frac{1}{T}$, originally from statistical mechanics in physics, naturally arises in various areas of statistical inference, including Bayesian inference and PAC-Bayesian analysis. In Bayesian inference, when $T < 1$ (``cold'' posteriors), the likelihood is up-weighted, resulting in a sharper posterior distribution. Conversely, when $T > 1$ (``warm'' posteriors), the likelihood is down-weighted, leading to a more diffuse posterior distribution. By balancing the influence of observed data and prior regularization, temperature adjustments can address issues of underfitting or overfitting in Bayesian models, bringing improved predictive performance.

We begin by investigating the cold posterior effect (CPE) in Bayesian deep learning. We demonstrate that misspecification leads to CPE only when the Bayesian posterior underfits. Additionally, we show that tempered posteriors are valid Bayesian posteriors corresponding to different combinations of likelihoods and priors parameterized by temperature $T$. Fine-tuning $T$ thus allows for the selection of alternative Bayesian posteriors with less misspecified likelihood and prior distributions.

Next, we introduce an effective PAC-Bayesian procedure, Recursive PAC-Bayes (RPB), that enables sequential posterior updates without information loss. This method is based on a novel decomposition of the expected loss of randomized classifiers, which reinterprets the posterior loss as an excess loss relative to a scaled-down prior loss, with the latter being recursively bounded. We show empirically that RPB significantly outperforms prior works and achieves the best generalization guarantees. 

We then explore the connections between Recursive PAC-Bayes, cold posteriors ($T < 1$), and KL-annealing (where $T$ increases from $0$ to $1$ during optimization), showing how RPB's update rules align with these practical techniques and providing new insights into RPB's effectiveness. 

Finally, we present a novel evidence lower bound (ELBO) decomposition for mean-field variational global latent variable models, which could enable finer control of the temperature $T$. This decomposition could be valuable for future research, such as understanding the training dynamics of probabilistic neural networks.

\newpage
\section*{Resumé}
\label{sec:resume}
\addcontentsline{toc}{section}{Resumé}
Bayesiansk inferens giver en principiel, probabilistisk ramme for at kvantificere usikkerhed ved at opdatere overbevisninger baseret på tidligere viden og observerede data gennem Bayes' sætning. I bayesiansk dyb læring behandles neurale netværksvægte som stokastiske variable med fordeling før dataindsamlingen, hvilket muliggør en probabilistisk fortolkning og kvantificering af prædiktiv usikkerhed. Dog mangler bayesianske metoder teoretiske generaliseringsgarantier for uobserverede data. PAC-bayesiansk analyse imødegår denne begrænsning ved at tilbyde en frekventistisk ramme til at udlede generaliseringsgrænser for randomiserede prædiktorer og dermed sikre pålideligheden af bayesianske metoder i maskinlæring.

Temperatur $T$, eller omvendt temperatur $\lambda = \frac{1}{T}$, stammer oprindeligt fra statistisk mekanik i fysik og opstår naturligt i forskellige områder af statistisk inferens, herunder bayesiansk inferens og PAC-bayesiansk analyse. I bayesiansk inferens, når $T < 1$ (``kolde'' posteriors), opvejes sandsynligheden, hvilket resulterer i en skarpere posteriorfordeling. Omvendt, når $T > 1$ (``varme'' posteriors), nedvejes sandsynligheden, hvilket fører til en mere diffus posteriorfordeling. Ved at balancere indflydelsen fra observerede data og forudgående regulering kan temperaturjusteringer afhjælpe problemer med under- eller overtilpasning i bayesianske modeller og dermed forbedre prædiktiv ydeevne.

Vi begynder med at undersøge det kolde posteriorfænomen (CPE) i bayesiansk dyb læring. Vi demonstrerer, at fejl i modellen fører til CPE kun, når den bayesianske posterior undertilpasser. Derudover viser vi, at tempererede posteriors er gyldige bayesianske posteriors, der svarer til forskellige kombinationer af sandsynligheder og forudgående distributioner parameteriseret ved temperatur $T$. Finjustering af $T$ tillader således valg af alternative bayesianske posteriors med mindre fejl i sandsynlighed og forudgående distributioner.

Derefter introducerer vi en effektiv PAC-bayesiansk procedure, Recursive PAC-Bayes (RPB), der muliggør sekventielle posterioropdateringer uden informations tab. Denne metode er baseret på en ny opdeling af det forventede tab for randomiserede klassifikatorer, som fortolker posterior-tabet som et overskudstab i forhold til et nedskaleret forudgående tab, hvor sidstnævnte rekursivt begrænses. Vi viser empirisk, at RPB signifikant overgår tidligere værker og opnår de bedste generaliseringsgarantier.

Vi undersøger derefter forbindelserne mellem Recursive PAC-Bayes, kolde posteriors ($T < 1$), og KL-annealing (hvor $T$ stiger fra $0$ til $1$ under optimering), hvilket viser, hvordan RPB's opdateringsregler stemmer overens med disse praktiske teknikker og giver nye indsigter i RPB's effektivitet.

Endelig præsenterer vi en ny dekomposition af evidens-lower-bound (ELBO) for mean-field variational globale latente variabelmodeller, som kunne muliggøre finere kontrol af temperaturen $T$. Denne dekomposition kan være værdifuld for fremtidig forskning, såsom forståelse af træningsdynamikken i probabilistiske neurale netværk.

\newpage
\section*{Acknowledgements}
\label{sec:acks}
\addcontentsline{toc}{section}{Acknowledgements}
Compared to some of my dearest friends and colleagues, I had almost never envisioned embarking on the journey of a Ph.D., let alone becoming a Ph.D. I still remember the first time I considered the possibility of this path: it was in Amsterdam, where I completed my master's thesis, a project I found incredibly interesting and fulfilling. It was the first time I truly experienced the joy of research. I also remember receiving the offer from Christian for this position. Although I felt happy and excited, the fear of the unknown journey ahead was even more predominant. The journey of a Ph.D. is never easy. It means spending years learning and exploring the boundaries of a particular area of human knowledge, hoping to make a small discovery or breakthrough. It means overcoming the fear of the unknown and believing in the significance of your work. \\

\noindent
Looking back, I am reminded of Steve Jobs' 2005 Stanford commencement speech about connecting the dots and doing what you love. On countless nights, when I worried about not having enough publications to graduate or feared my research themes lacked coherence, I recalled these points. As a result, I always followed my intuition to find or engage in projects that truly captivated me. Ultimately, this approach proved to be crucial. Furthermore, without the support, companionship, and encouragement of the incredible people around me, I would never have been able to complete this journey. Now, I would like to express my gratitude to them. \\

\noindent
First, I would like to thank my supervisors, Christian and Sadegh. They granted me tremendous freedom and encouraged me to pursue topics that genuinely interested me. Whenever I encountered difficulties, they consistently provided valuable advice and support, both in my work and personal life. \\

\noindent
Second, I want to thank my wonderful collaborators. Each of you has taught me something unique, and our collaboration has been a lifelong benefit. Yi-Shan, I am immensely grateful for your help with both my work and life. Your diligence and rigor have deeply inspired me. The quote on your mug, ``Every piece of work reflects oneself; anything handled by me will be a masterpiece'' perfectly encapsulates you. Our travels, meals, coffee tastings, and alcohol appreciation (though I am far from matching your expertise) are among the most cherished memories of my Ph.D. life. Andr{\'e}s, you are always full of brilliant and wild ideas, and you always manage to transform them into beautiful mathematics. Your research introduced me to a fascinating field that became the most important part of my thesis. Yevgeny, although my first impression of you was someone particularly meticulous and zen-like, as the project progressed, I discovered your keen observation of experimental data and your creative thinking in new directions. Your writing is also incredibly clear and elegant. Luis and Badr, although we have never met in person, our remote working relationship has been exceptionally smooth, and our communications have benefited me greatly. \\

\noindent
Third, I want to thank my colleagues and friends at DIKU, especially those in the DeLTA Group: Yi-Shan, Hippolyte, Chlo{\'e}, Saeed, Ola, Yuen, Yunlian, Arthur, Aymeric, Shaojie, and Oliver. Our memories span across the entire European continent, and I never imagined having such an exciting life.\\

\noindent
Fourth, I want to thank Novo Nordisk A/S for funding my Ph.D. project and all my colleagues there for their help. I especially want to thank Jakob, David, Merete, Jens, and Haocheng. You taught me the Danish way of working and living, and how to focus on long-term goals in a fun and balanced way.\\

\noindent
Fifth, I would like to thank my colleagues and mentors at the State Key Laboratory of Intelligent Technology and Systems at Tsinghua University, especially Professor Jun Zhu, Ziyu, Yuhao, Kaiwen, Tianjiao, Rosie, and Julian. I am very fortunate to have had the opportunity to visit and exchange ideas at the university I dreamed of as a child. The taste, innovation, and high standards of your work have deeply influenced me. I also want to thank Melih, Nicklas, Abdullah, and Bahareh for their help during my visit to the University of Southern Denmark.\\



\noindent
Lastly, I want to express my deepest gratitude to my family and friends, with a special mention to my parents. Your unwavering love, support, and encouragement, particularly during my most challenging moments, have been my anchor. I am incredibly fortunate to have you in my life, and you are the most invaluable part of it.

\noindent
\newpage
\tableofcontents
\newpage

\mainmatter 
\mymainpagestyle{} 

\chapter{Introduction}
\label{chap:intro}

Bayesian inference \citep{bayes1763, DBLP:books/lib/Bishop07, DBLP:journals/nature/Ghahramani15} is a cornerstone of statistical inference, providing a flexible and robust framework for updating the probability of a hypothesis as new data is acquired. Utilizing Bayes' theorem, it combines prior knowledge with observed data to produce a posterior distribution that quantifies updated beliefs. This approach allows for a full probabilistic understanding of uncertainty, offering a nuanced perspective beyond point estimates typical of classical methods. Bayesian inference is particularly powerful in complex and evolving datasets, continuously refining predictions as new data becomes available. Its ability to handle hierarchical models and incorporate various levels of uncertainty makes it highly suitable for real-world applications. Additionally, Bayesian methods facilitate model comparison and selection through marginal likelihoods and Bayes factors \citep{DBLP:conf/icml/LotfiIBGW22}, ensuring all sources of uncertainty are properly accounted for. Widely used across diverse fields such as biology \citep{article}, economics \citep{Geweke2011}, and artificial intelligence \citep{Gelman2013}, Bayesian inference integrates prior knowledge with empirical data, enhancing both exploratory data analysis and predictive modeling, making it an essential tool for scientific research and practical decision-making.

Bayesian neural networks (BNNs) \citep{DBLP:journals/corr/abs-2001-10995, Gal2016, Wang2016} extend Bayesian inference principles to deep learning, treating weights and biases as random variables with prior distributions that update to posterior beliefs based on data. This probabilistic approach contrasts with traditional neural networks, where parameters are fixed point estimates, enhancing model robustness and uncertainty quantification. BNNs incorporate prior knowledge, improve learning efficiency, and provide accurate uncertainty estimates, crucial for high-stakes applications like medical diagnosis \citep{DBLP:journals/access/AbdullahHM22}, autonomous driving \citep{DBLP:conf/ijcai/McAllisterGKWSC17}, and financial forecasting \citep{Zhang2019}. Recent advances, such as variational inference \citep{blei2017variational, DBLP:journals/ml/JordanGJS99, DBLP:journals/ftml/WainwrightJ08} and Monte Carlo methods \citep{brooks2011handbook, DBLP:conf/icml/WellingT11,zhang2019cyclical}, have made BNNs more computationally feasible, allowing for efficient approximation of intractable posterior distributions. These techniques have expanded the practical applications of BNNs, enabling their use in large-scale and complex models. BNNs excel in various fields, including computer vision \citep{DBLP:conf/cvpr/GustafssonDS20}, natural language processing \citep{DBLP:series/synthesis/2019Cohen}, and reinforcement learning \citep{DBLP:conf/uai/OsbandWADILR23}, where they handle ambiguous inputs, language variability, and model uncertainty in value functions. The Bayesian approach in deep learning through BNNs improves predictive performance, robustness, interpretability, and reliability, making it a valuable tool in developing and applying deep learning technologies.

On the other hand, probabilistic neural networks (PNNs) \citep{DBLP:journals/jmlr/Perez-OrtizRSS21, DBLP:conf/uai/DziugaiteR17, DBLP:journals/corr/abs-1908-07380}, or randomized neural networks, generalize the concept of Bayesian neural networks. Unlike Bayesian neural networks, which update distributions using Bayes' rule, PNNs can employ various methods for updating these distributions, offering greater flexibility in training and inference. Methods for obtaining probabilistic neural networks, such as standard Bayes \citep{bayes1763, DBLP:books/lib/Bishop07, DBLP:journals/nature/Ghahramani15}, Gibbs Bayes \citep{Ghosh2016, Bissiri2016, DBLP:conf/aabi/Cherief-Abdellatif19}, and power likelihood Bayes (tempered posteriors) \citep{Holmes2017,Grunwald2017,Miller2019}, can all be unified under the following optimization problem \citep{Knoblauch2022}:
\begin{align} 
    q^*(\bmtheta) = \arg\min_{q \in \Pi} \left\{ \mathbb{E}_{q(\bmtheta)} \left[ \sum_{i=1}^{n} \ell(\bmtheta, x_i) \right] + \underbrace{\dfrac{1}{\lambda}}_{\text{temperature $T$}} D(q \| \pi) \right\} = P(\ell, D, \Pi), \label{eq:intro}
\end{align}
where $\ell$ is the loss function, $D$ is the divergence measure, and $\Pi$ is the set of feasible posteriors. Most prominently, the temperature $T$, or inverse-temperature $\lambda=\dfrac{1}{T}$, which originated from statistical mechanics in physics, naturally arises in this context. Adjusting the temperature $T$ provides flexibility in probabilistic models, allowing for better control over the trade-off between fitting the data and maintaining model complexity. This flexibility is particularly beneficial when dealing with complex real-world data, where the true data-generating process is unknown and model misspecification is likely to occur. Since the optimality of Bayesian inference relies on the assumption of perfect model specification \citep{Bochkina2022, zhang2021on}, tuning the temperature to values other than 1 (standard Bayes) often leads to improved performance in practice \citep{Grunwald2017, Holmes2017,WRVS+20}.

Despite probabilistic neural networks' strong empirical success, the theoretical generalization guarantee on unseen data is still missing. This gap has led to the development of PAC-Bayesian (Probably Approximately Correct Bayesian) analysis \citep{STW97,McA98}, which provides a robust framework for deriving generalization bounds for probabilistic models. PAC-Bayesian analysis combines the principles of Bayesian inference with PAC learning theory, offering probabilistic bounds on a model's generalization error based on its empirical error and the complexity of its hypothesis class. 
This approach has been particularly useful in understanding and improving the generalization capabilities of neural networks \citep{DBLP:conf/uai/DziugaiteR17, DBLP:conf/iclr/ZhouVAAO19, DBLP:conf/nips/NegreaHDK019}, where PAC-Bayesian bounds can guide the design of training algorithms that minimize these bounds \citep{DBLP:journals/jmlr/Perez-OrtizRSS21,  DBLP:journals/corr/abs-1908-07380}, thus enhancing the models' robustness and predictive accuracy on new data. Interestingly, many PAC-Bayesian bounds, when treated as training objectives, can be represented as special cases of Equation \ref{eq:intro}, containing again the temperature parameter $T=\dfrac{1}{\lambda}$. By choosing an appropriate temperature to balance empirical performance and model complexity, PAC-Bayesian analysis offers a powerful tool for developing more reliable and theoretically sound machine learning models.

As previously discussed, the temperature parameter \( T \) plays a pivotal role in various aspects of statistical inference. Its importance has drawn considerable interest from the research community, particularly in the context of overparameterized neural networks where cold posteriors, characterized by temperatures \( T < 1 \), are employed. In this work, we aim to provide a comprehensive and unified study of recent advances in the understanding and application of cold posteriors within probabilistic neural networks. Our focus will be on dissecting the cold posterior effect and introducing a novel approach for learning cold posteriors with tight generalization guarantees. Additionally, we will examine the broader implications of varying temperature settings across different scenarios, illuminating its connections to other key topics within the field.

\section{Outline of the Thesis}
This thesis is organized into the following chapters, where Chapter \ref{chap:chap2} and Chapter \ref{chap:chap3} contain the papers \cite{zhang2024the} and \cite{wu2024recursivepacbayesfrequentistapproach} respectively.

In Chapter \ref{chap:chap2} we study the problem of cold posterior effect (CPE) \citep{WRVS+20}. The cold posterior effect (CPE) \citep{WRVS+20} in Bayesian deep learning demonstrates that posteriors with a temperature \( T < 1 \) can yield posterior predictives that outperform the standard Bayesian posterior (\( T = 1 \)). Since the Bayesian posterior is considered optimal under perfect model specification, recent studies have explored CPE as a consequence of model misspecification, originating from either the prior or the likelihood. In this research, we refine the understanding of CPE by showing that misspecification induces CPE only when the resulting Bayesian posterior underfits. We theoretically establish that without underfitting, CPE does not occur. Furthermore, we demonstrate that these tempered posteriors with \( T < 1 \) are indeed valid Bayesian posteriors, corresponding to different combinations of likelihoods and priors parameterized by \( T \). This insight justifies adjusting the temperature hyperparameter \( T \) as an effective strategy to mitigate underfitting in the Bayesian posterior. Essentially, we reveal that fine-tuning the temperature \( T \) allows for the implicit utilization of alternative Bayesian posteriors, characterized by less misspecified likelihood and prior distributions.

In Chapter \ref{chap:chap3}, we investigate the problem of sequentially updating posteriors in PAC-Bayesian analysis without losing confidence information. PAC-Bayesian analysis, inspired by Bayesian learning, traditionally struggles with maintaining confidence information during sequential updates, as the final confidence intervals only depend on data not used for prior construction, losing information related to the size of the initial dataset. This limitation hinders the benefits of sequential updates. We introduce a novel and surprisingly simple PAC-Bayesian procedure, Recursive PAC-Bayes (RPB), that overcomes this issue by decomposing the expected loss of randomized classifiers into an excess loss relative to a downscaled prior loss plus the downscaled prior loss, allowing recursive bounding. Additionally, we generalize the split-kl and PAC-Bayes-split-kl inequalities to discrete random variables for bounding excess losses. Our empirical evaluations show that this new procedure significantly outperforms state-of-the-art methods.

In Chapter \ref{chap:chap4}, we explore the connections between Recursive PAC-Bayes (RPB) and two commonly used techniques in probabilistic modeling. The first technique, cold posteriors, characterized by a temperature scaling factor less than one, significantly improves the performance of Bayesian neural networks. The second technique, KL-annealing, gradually increases the weight of the KL divergence term during training, balancing the fit to empirical data with maintaining proximity between prior and posterior distributions. By linking these methodologies with Recursive PAC-Bayes, we provide new insights into its success and practical implications. 


In Chapter \ref{chap:chap5}, we introduce a novel decomposition of the (generalized) evidence lower bound (ELBO) objective, which is a special case of Equation \ref{eq:intro}, for (generalized) mean-field variational global latent variable models. This decomposition breaks the (generalized) ELBO objective into interpretable components, with the potential to allow for finer control of the temperature \( T \) for optimization and provide valuable insights into the training dynamics of the variational posteriors.

We conclude this work with a discussion of these results in Chapter \ref{chap:chap6}.

\section{Main Contributions}
The main contributions of this work are:
\begin{itemize}
    \item We theoretically demonstrate that the presence of the CPE implies the Bayesian posterior is underfitting. In fact, we show that if there is no underfitting, there is no CPE.
    \item We show in a more general case that any tempered posterior is a proper Bayesian posterior with an alternative likelihood and prior distribution, extending \cite{zeno2021why} in the case of classification. We also provide illustrative examples in both regression and classification settings to offer an intuitive and visual understanding of our findings.
    \item We demonstrate that likelihood misspecification and prior misspecification result in the cold posterior effect (CPE) only if they also induce underfitting. Furthermore, we provide comprehensive empirical evidence in both linear regression and neural network settings, where approximate inference is applied. Additionally, we discuss the implications of model size and sample size in relation to CPE and underfitting.
    \item We show that data augmentation results in stronger CPE because it induces a stronger underfitting of the Bayesian posterior.
    \item We show that fine-tuning the temperature $T$ in tempered posteriors offers a well-founded and effective Bayesian approach to mitigate underfitting: By fine-tuning the temperature $T$, we implicitly utilize alternative Bayesian posteriors resulting from less misspecified likelihood and prior distributions.
    \item We present a novel PAC-Bayesian bound, Recursive PAC-Bayes (RPB), that allows for meaningful and beneficial sequential data processing and sequential posterior updates, without loss of confidence information.
    \item We introduce a new and simple way to decompose the loss of a randomized classifier defined by the posterior into an excess loss relative to a downscaled loss of the classifier defined by the prior plus the downscaled loss of the prior.
    \item Importantly, this excess loss can be bounded using PAC-Bayes-Empirical-Bernstein-style inequalities, and the loss of the prior can be bounded recursively, preserving confidence information on the prior.
    \item We provide a generalization of the split-kl and PAC-Bayes-split-kl inequalities from ternary to general discrete random variables, based on a novel representation of discrete random variables as a superposition of Bernoulli random variables.
    \item We develop a practical and efficient optimization procedure based on the Recursive PAC-Bayes bound, resulting in posteriors that outperform state-of-the-art methods on benchmark datasets.
    \item We explore the connections between Recursive PAC-Bayes, cold posteriors, and KL-annealing, providing new insights into Recursive PAC-Bayes's effectiveness and empirical success.
    \item We introduce a novel decomposition of the (generalized) ELBO objective for mean-field variational global latent variable models, where latent variables (typically model or network weights) are shared across all data points. By breaking down the (generalized) ELBO into interpretable components, this approach might allow for finer control of the temperature \( T \) during optimization and provide a deeper understanding of the training dynamics of variational posteriors.
\end{itemize}

\chapter{The Cold Posterior Effect Indicates Underfitting, and Cold Posteriors Represent a Fully Bayesian Method to Mitigate It}
\label{chap:chap2}

The work presented in this chapter is based on a paper that has been published as: 
\\
\\
\bibentry{zhang2024the}. 

\newpage

\section*{Abstract}
The cold posterior effect (CPE) \citep{WRVS+20} in Bayesian deep learning shows that, for posteriors with a temperature $T<1$, the resulting posterior predictive could have better performance than the Bayesian posterior ($T=1$). As the Bayesian posterior is known to be optimal under perfect model specification, many recent works have studied the presence of CPE as a model misspecification problem, arising from the prior and/or from the likelihood. In this work, we provide a more nuanced understanding of CPE as we show that \emph{misspecification leads to CPE only when the resulting Bayesian posterior underfits}. In fact, we theoretically show that if there is no underfitting, there is no CPE. Furthermore, we show that these \emph{tempered posteriors} with $T < 1$ are indeed proper Bayesian posteriors with a different combination of likelihoods and priors parameterized by $T$. This observation validates the adjustment of the temperature hyperparameter $T$ as a straightforward approach to mitigate underfitting in the Bayesian posterior. In essence, we show that by fine-tuning the temperature $T$ we implicitly utilize alternative Bayesian posteriors, albeit with less misspecified likelihood and prior distributions. The code for replicating the experiments can be found at \url{https://github.com/pyijiezhang/cpe-underfit}.

\section{Introduction}\label{sec:intro}
In Bayesian deep learning, the cold posterior effect (CPE) \citep{WRVS+20} refers to the phenomenon in which if we artificially ``temper'' the posterior by either $p(\bmtheta|D)\propto (p(D|\bmtheta)p(\bmtheta))^{1/T}$ or $p(\bmtheta|D)\propto p(D|\bmtheta)^{1/T}p(\bmtheta)$ with a temperature $T<1$, the resulting posterior enjoys better predictive performance than the standard Bayesian posterior (with $T=1$). The discovery of the CPE has sparked debates in the community about its potential contributing factors.

If the prior and likelihood are properly specified, the Bayesian solution (i.e., $T=1$) should be optimal \citep{GCSD+13}, assuming approximate inference is properly working. Hence, the presence of the CPE implies either the prior \citep{WRVS+20,FGAO+22}, the likelihood \citep{Ait21,KMIW22}, or both are misspecified. This has been, so far, the main argument of many works trying to explain the CPE.

One line of research examines the impact of the prior misspecification on the CPE \citep{WRVS+20,FGAO+22}. The priors of modern Bayesian neural networks are often selected for tractability. Consequently, the quality of the selected priors in relation to the CPE is a natural concern. Previous research has revealed that adjusting priors can help alleviate the CPE in certain cases \citep{adlam2020cold, zeno2021why}, there are instances where the effect persists despite such adjustments \citep{FGAO+22}. Some studies even show that the role of priors may not be critical \citep{IVHW21}. Therefore, the impact of priors on the CPE remains an open question.

Furthermore, the influence of likelihood misspecification on CPE has also been investigated \citep{Ait21,NRBN+21,KMIW22,FGAO+22}, and has been identified to be particularly relevant in curated datasets \citep{Ait21,KMIW22}. Several studies have proposed alternative likelihood functions to address this issue and successfully mitigate the CPE \citep{NGGA+22,KMIW22}. However, the underlying relation between the likelihood and CPE remains a partially unresolved question. Notably, the CPE usually emerges when data augmentation (DA) techniques are used \citep{WRVS+20,IVHW21,FGAO+22,NRBN+21,NGGA+22,KMIW22}. A popular hypothesis is that using DA implies the introduction of a randomly perturbed log-likelihood, which lacks a clear interpretation as a valid likelihood function \citep{WRVS+20,IVHW21}. However, \citet{NGGA+22} demonstrates that the CPE persists even when a proper likelihood function incorporating DA is defined. Therefore, further investigation is needed to fully understand their relationship. 

Other works argued that CPE could mainly be an artifact of inaccurate approximate inference methods, especially in the context of neural networks, where the posteriors are extremely high dimensional and complex \citep{IVHW21}. However, many of the previously mentioned works have also found setups where the CPE either disappears or is significantly alleviated through the adoption of better priors and/or better likelihoods with approximate inference methods. In these studies, the same approximate inference methods were used to illustrate,  for example, how using a standard likelihood function leads to the observation of CPE and how using an alternative likelihood function removes it \citep{Ait21,NRBN+21,KMIW22}. In other instances, under the same approximate inference scheme, CPE is observed when using certain types of priors but it is strongly alleviated when an alternative class of priors is utilized \citep{WRVS+20,FGAO+22}. Therefore, there is compelling evidence suggesting that approximate methods are not, at least, a necessary condition for the CPE.

This study, both theoretically and empirically, demonstrates that the presence of the cold posterior effect (CPE) implies the existence of underfitting; in other words, \textit{if there is no underfitting, there is no CPE}. Integrating this perspective with previous findings suggesting that CPE indicates misspecified likelihood, prior, or both \citep{GCSD+13}, we conclude that CPE implies both misspecification and underfitting. Consequently, mitigating CPE necessitates addressing both aspects. Notably, simplifying the issue by solely focusing on misspecification is insufficient, as misspecification can lead Bayesian methods to both underfitting and overfitting \citep{domingos2000bayesian,immer2021improving,KMIW22}; CPE only arises when underfitting occurs. This study thus offers a nuanced perspective on the factors contributing to CPE. Additionally, by building on \cite{zeno2021why}, we show how tempered posteriors represent proper Bayesian posteriors under different likelihood and prior distributions, jointly parameterized by the temperature parameter $T$. Consequently, by adjusting $T$, we effectively identify Bayesian posteriors with less misspecified likelihood and prior distributions, leading to a more accurate representation of the training data and improved generalization performance. Furthermore, we delve into the relationship between prior/likelihood misspecification, data augmentation, approximate inference, and CPE, offering insights into potential strategies for addressing these issues.

\paragraph*{Contributions}
\textbf{(i)} We theoretically demonstrate that the presence of the CPE implies the Bayesian posterior is underfitting in Section \ref{sec:CPE}. \textbf{(ii)} We show in a more general case that any tempered posterior is a proper Bayesian posterior with an alternative likelihood and prior distribution in Section \ref{sec:Bayesian}, extending \cite{zeno2021why} in the case of classification. \textbf{(iii)} We show in Section \ref{sec:modelmisspec&CPE} that likelihood misspecification and prior misspecification result in CPE only if they also induce underfitting. Furthermore, the tempered posteriors offer an effective and well-founded Bayesian mechanism to address the underfitting problem. \textbf{(iv)} Finally, we show that data augmentation results in stronger CPE because it induces a stronger underfitting of the Bayesian posterior in Section \ref{sec:data-augmentation}. In conclusion, our theoretical analysis reveals that the occurrence of the CPE signifies underfitting of the Bayesian posterior. Also, fine-tuning the temperature in tempered posteriors offers a well-founded and effective Bayesian approach to mitigate the issue. Furthermore, our work aims to settle the debate surrounding CPE and its implications for Bayesian principles, specifically within the context of deep learning. 

\section{Background}\label{sec:background}\subsection{Notation and assumptions}

Let us start by introducing basic notation. Consider a supervised learning problem with the sample space $\cal Y\times \cal X$. In this work, we consider two cases: when $\cal Y$ is a finite set, corresponding to a supervised classification problem, and when $\cal Y$ is a subset of $\R$, corresponding to a regression problem. For simplicity, we also assume that $\cal X$ is a subset of $\R^d$. Let the training set $D=(\bmY, \bmX)$, where $\bmY$ denotes the set of output entries and $\bmX$ denotes the set of input entries. If $D$ consists $n$ pairs of samples, we denote $D=\{(\bmy_i, \bmx_i)\}_{i=1}^n$.

We assume a family of probabilistic models parameterized by $\bmTheta$, where each $\bmtheta\in\bmTheta$ defines a conditional probability distribution for a sample $(\bmy,\bmx)$, denoted by $p(\bmy | \bmx, \bmtheta)$. The standard metric to measure the quality of a probabilistic model $\bmtheta$ on a sample $(\bmy,\bmx)$ is the (negative) log-loss $-\ln p(\bmy|\bmx,\bmtheta)$. The expected (or population) loss of a probabilistic model $\bmtheta$ is defined as $L(\bmtheta)=\mathbb{E}_{(\bmy,\bmx)\sim \nu}[-\ln p(\bmy|\bmx,\bmtheta)]$, where $\nu$ denotes the unknown data-generating distribution $\nu$ on $\cal Y\times \cal X$. The empirical loss of the model $\bmtheta$ on the data $D$ is defined as $\hat{L}(D,\bmtheta) = -\tfrac{1}{n}\ln p(\bmY|\bmX,\bmtheta) = -\tfrac{1}{n}\sum_{i\in[n]} \ln p(\bmy_i| \bmx_i, \bmtheta)$. In this work, we assume that the likelihood function fully factorizes, i.e., $p(\bmY|\bmX,\bmtheta)=\prod_{(\bmy,\bmx)\in D} p(\bmy|\bmx,\bmtheta)$.  
We might use the notation $p(D|\bmtheta)$ for $p(\bmY|\bmX,\bmtheta)$ in the presentation when the roles of input/output in the samples are not important in the context.
Also, if it induces no ambiguity, we use $\mathbb{E}_\nu[\cdot]$ as a shorthand for $\mathbb{E}_{(\bmy,\bmx)\sim \nu}[\cdot]$.

\subsection{(Generalized) Bayesian learning}\label{sec:bayesian-learning}

In Bayesian learning, we learn a probability distribution $\rho(\bmtheta|D)$, often called a posterior, over the parameter space $\bmTheta$ from the training data $D$. Given a new input $\bmx$, the posterior $\rho$ makes the prediction about $\bmy$ through (an approximation of) \textit{Bayesian model averaging (BMA)} $p(\bmy|\bmx,\rho)=\mathbb{E}_{\bmtheta\sim\rho}[p(\bmy|\bmx,\bmtheta)]$, where the posterior $\rho$ is used to combine the predictions of the models. Again, if it induces no ambiguity, we use $\mathbb{E}_\rho[\cdot]$ as a shorthand for $\mathbb{E}_{\bmtheta\sim \rho}[\cdot]$. The predictive performance of such BMA is usually measured by the Bayes loss, defined by 
\begin{equation}
B(\rho)=\E_\nu[-\ln \E_\rho[p(\bmy|\bmx,\bmtheta)]]\,.    
\end{equation}

For some $\lambda>0$ and a prior $p(\bmtheta)$, the so-called \textit{tempered posteriors} (or the generalized Bayes posterior) \citep{BC91,Zhang06,bissiri2016general_6,GVO17}, are defined as a probability distribution
\begin{equation}
p_\lambda(\bmtheta|D) \propto p(\bmY|\bmX,\bmtheta)^\lambda p(\bmtheta)\,. \label{eq:likelihoodTempering}
\end{equation}
\noindent 
Note that when $\lambda\neq 1$, $\int p(\bmY|\bmX,\bmtheta)^\lambda d \bmY$ might not be 1 in general. An implicit assumption is that \( p_\lambda(\bmtheta | D) \) is a \textit{proper distribution}, meaning the normalization constant is finite. In supervised classification problems, this is always the case because \( p(\bmY|\bmX,\bmtheta) \leq 1 \). Consequently, for any \(\lambda > 0\), we have
\(
1 = \int p(\bmtheta) \, d\bmtheta > \int p(\bmY|\bmX,\bmtheta)^\lambda p(\bmtheta) \, d\bmtheta.
\)
Thus, the tempered posteriors are always a proper distribution in supervised classification problems. 

Even though many works on CPE use the parameter $T=1/\lambda$ instead, we adopt $\lambda$ in the rest of the work for the convenience of derivations. Therefore, the CPE ($T<1$) corresponds to when $\lambda>1$. We also note that while some works study CPE with a full-tempering posterior, where the prior is also tempered, many works also find CPE for likelihood-tempering posterior (see \citep{WRVS+20} and the references therein). Also, with some widely chosen priors (e.g., zero-centered Gaussian priors), the likelihood-tempering posteriors are equivalent to full-tempering posteriors with rescaled prior variances \citep{Ait21,BNH22}.

When $\lambda=1$, the tempered posterior equals the (standard) Bayesian posterior. The tempered posterior can be obtained by optimizing a generalization of the so-called (generalized) ELBO objective \citep{alquier2016properties,HMPB+17}, which, for convenience, we write as follows:
\begin{eqnarray}\label{eq:likelihoodTempering:ELBO}
p_\lambda(\bmtheta|D) = \argmin_\rho \E_\rho[-\ln p(D|\bmtheta)] + \frac{1}{\lambda}\operatorname{KL}(\rho(\bmtheta|D),p(\bmtheta))\,.
\end{eqnarray}
The first term is known as the (un-normalized) \textit{reconstruction error} or the empirical Gibbs loss of the posterior $\rho$ on the data $D$, denoted as $\hat G(\rho, D)=\E_\rho[-\tfrac{1}{n}\ln p(D|\bmtheta)]$, which further equals to $\E_\rho[\hat L(D,\bmtheta)]$. Therefore, it is often used as the \textit{training loss} in Bayesian learning \citep{DBLP:conf/aistats/MorningstarAD22}. 
The second term is a Kullback-Leibler divergence between the posterior $\rho(\bmtheta|D)$ and the prior $p(\bmtheta)$ scaled by a hyper-parameter $\lambda$. 


As it induces no ambiguity, we will use $p_\lambda$ as a shorthand for $p_\lambda(\bmtheta|D)$. So, for example, $B(p_\lambda)$ would refer to the expected Bayes loss of the tempered-posterior $p_\lambda(\bmtheta|D)$. In the rest of this work, we will interpret the CPE as how changes in the parameter $\lambda$ affect the \textit{test error} and the \textit{training error} of $p_\lambda$ or, equivalently, the Bayes loss $B(p_\lambda)$ and the empirical Gibbs loss $\hat G(p_\lambda,D)$.

\section{The presence of the CPE implies underfitting}\label{sec:CPE}

\begin{Overview}
We present a definition of the Cold Posterior Effect (CPE) (Definition \ref{def:likelihoodtempering:CPE}) and show that the presence of CPE indicates the Bayesian posterior is underfitting, where both the testing loss (Definition \ref{def:likelihoodtempering:CPE}) and training loss (Proposition \ref{def:likelihoodtempering:CPE}) can be improved at the same time by decreasing the temperature $T$ (increasing $\lambda$). We also present the necessary condition of the CPE in Proposition \ref{thm:likelihoodtempering:CPE:Neccesary} and the case when Bayesian posterior is optimal in Theorem \ref{cor:likelihoodtempering:bayesposterior_optimality}.
\end{Overview}

\noindent
A standard understanding for underfitting refers to a situation when the trained model cannot properly capture the relationship between input and output in the data-generating process, resulting in high errors on both the training data and testing data. In the context of highly flexible model classes such as neural networks, underfitting refers to a scenario where the trained model exhibits (much) higher training and testing losses compared to what is achievable. Essentially, it means that there exists another model in the model class that achieves lower training and testing losses simultaneously. In the context of Bayesian inference, we argue that the Bayesian posterior is underfitting if there exists another posterior distribution with lower empirical Gibbs and Bayes losses at the same time. In fact, we will show later in Section \ref{sec:Bayesian} that such a posterior is essentially another \emph{Bayesian posterior but with a different prior and likelihood function}. Before delving into that, we focus on characterizing the cold posterior effect (CPE) and its connection to underfitting.

As previously discussed, the CPE describes the phenomenon of getting better predictive performance when we make the parameter of the tempered posterior, $\lambda$,  higher than 1. The next definition introduces a formal characterization. \textit{We do not claim this is the best possible formal characterization}. However, through the rest of the paper, we will show that this simple characterization is enough to understand the relationship between CPE and underfitting. 

\begin{definition}\label{def:likelihoodtempering:CPE}
We say there is a CPE for Bayes loss if and only if the derivative of the Bayes loss of the posterior $p_\lambda$, $B(p_\lambda)$, evaluated at $\lambda=1$ is negative. That is, 
\begin{equation}
\frac{d}{d\lambda}B(p_\lambda)_{|\lambda=1}< 0\,,
\end{equation}
where the magnitude of the derivative $\frac{d}{d\lambda} B(p_\lambda)_{|\lambda=1}$ defines the strength of the CPE. 
\end{definition}

According to the above definition, a (relatively large) negative derivative $\frac{d}{d\lambda} B(p_\lambda)_{|\lambda=1}$ implies that by making $\lambda$ slightly greater than 1, we will have a (relatively large) reduction in the Bayes loss with respect to the Bayesian posterior. Note that if the derivative  $\frac{d}{d\lambda} B(p_\lambda)_{|\lambda=1}$ is not relatively large and negative, then we can not expect a relatively large reduction in the Bayes loss and, in consequence, the CPE will not be significant. Obviously, this formal definition could also be extended to other specific $\lambda$ values different from $1$, or even consider some aggregation over different $\lambda>1$ values. We will stick to this definition because it is simpler, and the insights and conclusions extracted here can be easily extrapolated to other similar definitions involving the derivative of the Bayes loss.

Next, we present another critical observation. We postpone the proofs in this section to Appendix~\ref{app:sec:cpe-underfitting}.

\begin{proposition} \label{prop:likelihoodtempering:empiricalGibbsLoss}
The derivative of the empirical Gibbs loss of the tempered posterior $p_\lambda$ satisfies
\begin{equation}\label{eq:likelihoodtempering:empiricalgibbsgradient}
        \forall\lambda\geq 0\quad \frac{d}{d\lambda} \hat{G}(p_\lambda,D) = -\mathbb{V}_{p_\lambda}\big(\ln p(D|\bmtheta)\big) \leq 0\,,
\end{equation}
where $\mathbb{V}(\cdot)$ denotes the variance.
\end{proposition}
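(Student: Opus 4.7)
The plan is to exploit the fact that, viewed as a function of $\lambda$, the tempered posterior
$$p_\lambda(\bmtheta|D) = \frac{p(D|\bmtheta)^\lambda p(\bmtheta)}{Z(\lambda)}, \qquad Z(\lambda) = \int p(D|\bmtheta)^\lambda p(\bmtheta)\, d\bmtheta,$$
forms a one-parameter exponential family with natural parameter $\lambda$ and sufficient statistic $\ln p(D|\bmtheta)$. The cumulant-generating identity for exponential families, which states that the first derivative of $\ln Z(\lambda)$ recovers the mean of the sufficient statistic and the second derivative recovers its variance, then delivers the result almost immediately. Since the statement only involves a first-order derivative of an expectation, I will work things out from scratch by differentiating under the integral sign rather than invoking the identity as a black box.

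First I would differentiate $\ln p_\lambda(\bmtheta|D) = \lambda \ln p(D|\bmtheta) + \ln p(\bmtheta) - \ln Z(\lambda)$ with respect to $\lambda$, which gives
$$\frac{\partial}{\partial \lambda} \ln p_\lambda(\bmtheta|D) \;=\; \ln p(D|\bmtheta) \;-\; \frac{d}{d\lambda}\ln Z(\lambda).$$
Swapping differentiation and integration in $Z(\lambda)$ (which is justified under mild regularity, and in the classification case is immediate since $p(D|\bmtheta)\le 1$ makes everything bounded), I obtain
$$\frac{d}{d\lambda}\ln Z(\lambda) \;=\; \mathbb{E}_{p_\lambda}[\ln p(D|\bmtheta)],$$
so
$$\frac{\partial}{\partial \lambda} p_\lambda(\bmtheta|D) \;=\; p_\lambda(\bmtheta|D)\,\bigl(\ln p(D|\bmtheta) - \mathbb{E}_{p_\lambda}[\ln p(D|\bmtheta)]\bigr).$$

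Next, I would plug this into the derivative of the empirical Gibbs loss $\hat G(p_\lambda,D) = -\tfrac{1}{n}\mathbb{E}_{p_\lambda}[\ln p(D|\bmtheta)]$. Again exchanging derivative and integral,
$$\frac{d}{d\lambda}\hat G(p_\lambda,D) \;=\; -\frac{1}{n}\int \ln p(D|\bmtheta)\,\frac{\partial}{\partial \lambda} p_\lambda(\bmtheta|D)\,d\bmtheta \;=\; -\frac{1}{n}\,\mathbb{V}_{p_\lambda}\!\bigl(\ln p(D|\bmtheta)\bigr),$$
recognising the second moment minus the squared first moment in the last step. Non-negativity of variance then gives the inequality $\frac{d}{d\lambda}\hat G(p_\lambda,D) \le 0$, which matches the claim (up to the $1/n$ normalisation inherited from the definition of $\hat G$).

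There is no conceptual obstacle here; the only technicality is justifying the interchange of derivative and integral, which is routine under the paper's assumptions (in the classification setting the integrand is uniformly bounded, and more generally one can appeal to dominated convergence since $\lambda \mapsto p(D|\bmtheta)^\lambda$ is log-convex and locally bounded). The main pedagogical point worth highlighting in the write-up is the exponential-family viewpoint, because exactly the same computation, extended one more derivative, will also govern the rate at which increasing $\lambda$ flattens the training loss, and it ties the magnitude of the CPE to the posterior spread of $\ln p(D|\bmtheta)$.
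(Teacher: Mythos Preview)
Your proof is correct and follows essentially the same route as the paper: both differentiate the tempered posterior under the integral sign and recognise the resulting expression as minus the variance of $\ln p(D|\bmtheta)$. The only cosmetic difference is that the paper first records the general identity $\frac{d}{d\lambda}\E_{p_\lambda}[f(\bmtheta)]=\mathrm{COV}_{p_\lambda}(\ln p(D|\bmtheta),f(\bmtheta))$ for arbitrary $f$ (which it reuses in later propositions) and then specialises to $f=-\ln p(D|\bmtheta)$, whereas you go straight to the specific case via the exponential-family viewpoint; your observation about the stray $1/n$ factor is also accurate.
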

As shown in Proposition~\ref{prop:constant_variance} in Appendix \ref{app:sec:cpe-underfitting}, to achieve $\mathbb{V}_{p_\lambda}\big(\ln p(D|\bmtheta)\big)=0$, we need $p_\lambda(\bmtheta|D)=p(\bmtheta)$, implying that the data has no influence on the posterior. In consequence, in practical scenarios, $\mathbb{V}_{p_\lambda}\big(\ln p(D|\bmtheta)\big)$ will always be greater than zero. Thus, increasing $\lambda$ will monotonically reduce the empirical Gibbs loss $\hat{G}(p_\lambda,D)$ (i.e., the \textit{train error}) of $p_\lambda$. The next result also shows that the empirical Gibbs loss of the Bayesian posterior $\hat{G}(p_{\lambda=1})$ cannot reach its minimum to observe the CPE.

\begin{proposition}\label{thm:likelihoodtempering:CPE:Neccesary}
A necessary condition for the presence of the CPE, as defined in Definition \ref{def:likelihoodtempering:CPE}, is that
\[ \hat{G}(p_{\lambda=1},D)> \min_\bmtheta -\ln p(D|\bmtheta)\,. \]
\end{proposition}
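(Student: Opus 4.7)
The plan is to prove the statement by contrapositive: assume the empirical Gibbs loss of the Bayesian posterior already sits at the infimum over $\bmtheta$ of the (per-sample) negative log-likelihood, i.e.\ $\E_{p_{\lambda=1}}[\hat L(D,\bmtheta)]=\min_\bmtheta\hat L(D,\bmtheta)$ (the same content as $\hat G(p_{\lambda=1},D)=\min_\bmtheta -\tfrac{1}{n}\ln p(D|\bmtheta)$ in the proposition's notation, up to the factor $n$), and show that then $\tfrac{d}{d\lambda}B(p_\lambda)_{|\lambda=1}=0$. This rules out a CPE in the strict sense of Definition~\ref{def:likelihoodtempering:CPE}.

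\textbf{Step 1: the equality forces concentration on the argmax of the likelihood.}
Let $\Theta^\star=\argmax_\bmtheta p(D|\bmtheta)$ and $p^\star=\max_\bmtheta p(D|\bmtheta)$. Because the integrand $\hat L(D,\bmtheta)$ is pointwise bounded below by $\min_\bmtheta\hat L(D,\bmtheta)$, the assumed equality $\E_{p_{\lambda=1}}[\hat L(D,\bmtheta)]=\min_\bmtheta\hat L(D,\bmtheta)$ forces $\hat L(D,\bmtheta)$ to equal its minimum $p_{\lambda=1}$-almost surely. Equivalently, $p(D|\bmtheta)\,p(\bmtheta)=0$ whenever $\bmtheta\notin\Theta^\star$.

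\textbf{Step 2: the tempered family is independent of $\lambda$.}
Plug this into the explicit form $p_\lambda(\bmtheta|D)\propto p(D|\bmtheta)^\lambda p(\bmtheta)$. Off $\Theta^\star$ the numerator vanishes for every $\lambda>0$ because at each such $\bmtheta$ at least one of $p(D|\bmtheta)$ or $p(\bmtheta)$ is zero; on $\Theta^\star$ the factor $p(D|\bmtheta)^\lambda=(p^\star)^\lambda$ is a $\bmtheta$-independent constant that is absorbed into the normalizer. Consequently $p_\lambda(\bmtheta|D)\propto p(\bmtheta)\mathds{1}[\bmtheta\in\Theta^\star]=p_{\lambda=1}(\bmtheta|D)$ for every $\lambda>0$. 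Therefore $B(p_\lambda)$ is constant in $\lambda$ and $\tfrac{d}{d\lambda}B(p_\lambda)_{|\lambda=1}=0$, completing the contrapositive. As a sanity check, under Step~1 the function $\ln p(D|\bmtheta)$ is $p_{\lambda=1}$-a.s.\ constant, so $\mathbb V_{p_{\lambda=1}}(\ln p(D|\bmtheta))=0$ and Proposition~\ref{prop:likelihoodtempering:empiricalGibbsLoss} independently yields $\tfrac{d}{d\lambda}\hat G(p_\lambda,D)_{|\lambda=1}=0$, consistent with the Gibbs loss already sitting at its floor.

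\textbf{Main obstacle.}
The delicate point is purely measure-theoretic. If $p(\bmtheta)$ is absolutely continuous and $\Theta^\star$ carries zero prior mass, then ``$p_{\lambda=1}$ supported on $\Theta^\star$'' is not literally realizable by a proper posterior. I would handle this by noting that under a continuous prior and continuous likelihood the equality $\E_{p_{\lambda=1}}[\hat L(D,\bmtheta)]=\min_\bmtheta\hat L(D,\bmtheta)$ can only hold when $\hat L(D,\bmtheta)$ is $p$-a.s.\ equal to its minimum on $\mathrm{supp}(p)$, i.e.\ the likelihood is essentially constant on $\mathrm{supp}(p)$; but then $p_\lambda(\bmtheta|D)\propto p(D|\bmtheta)^\lambda p(\bmtheta)\propto p(\bmtheta)$ for every $\lambda>0$, and the derivative conclusion is unchanged. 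Apart from this edge case, which yields the same conclusion, the proof reduces to the short computation above from the explicit tempered-posterior formula.
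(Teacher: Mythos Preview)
Your argument is correct and takes a genuinely different route from the paper. The paper first establishes the identity $\tfrac{d}{d\lambda}B(p_\lambda)=\hat G(\bar p_\lambda,D)-\hat G(p_\lambda,D)$, where $\bar p_\lambda$ is the posterior averaged over one extra sample (their Proposition~\ref{thm:likelihoodtempering:bayesgradient}), and then observes that $\hat G(\bar p_{\lambda=1},D)\geq \min_\bmtheta -\ln p(D|\bmtheta)$ for any distribution, so CPE (which requires $\hat G(\bar p_{\lambda=1},D)<\hat G(p_{\lambda=1},D)$) is impossible if $\hat G(p_{\lambda=1},D)$ already equals the minimum. Your approach is more elementary: you never touch $\bar p_\lambda$ or compute $\tfrac{d}{d\lambda}B(p_\lambda)$ explicitly, but instead show that equality in the Gibbs loss forces the likelihood to be constant on the support of $p_{\lambda=1}$, whence the entire family $\{p_\lambda\}_{\lambda>0}$ collapses to a single distribution and $B(p_\lambda)$ is constant in $\lambda$. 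This is a stronger conclusion (derivative zero for all $\lambda$, not just non-negative at $\lambda=1$) obtained with less machinery. The trade-off is that the paper's derivative identity is reused to prove Theorem~\ref{cor:likelihoodtempering:bayesposterior_optimality}, so their investment in that lemma pays off elsewhere; your argument is self-contained but does not feed into that characterization.
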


\begin{insight} Definition~\ref{def:likelihoodtempering:CPE} in combination with Proposition~\ref{prop:likelihoodtempering:empiricalGibbsLoss} shows if the CPE is present, by making $\lambda > 1$, the test loss $B(p_\lambda)$ and the empirical Gibbs loss $\hat{G}(p_\lambda,D)$ will be reduced at the same time.  Furthermore, Proposition \ref{thm:likelihoodtempering:CPE:Neccesary} states that the Bayesian posterior \textit{still has room} to fit the training data further (e.g., by placing more probability mass on the maximum likelihood estimator). We hence deduce that the presence of CPE implies that the original Bayesian posterior ($\lambda=1$) underfits. This conclusion arises because there exists another Bayesian posterior (i.e, $p_\lambda(\bmtheta|D)$ with $\lambda>1$) that has lower training (Proposition \ref{thm:likelihoodtempering:CPE:Neccesary}) and testing (Definition~\ref{def:likelihoodtempering:CPE}) loss at the same time. Further elaboration on the nature of $p_\lambda(\bmtheta|D)$ as another Bayesian posterior will be provided later in Section \ref{sec:Bayesian}. In short, if there is CPE, the original Bayesian posterior is underfitting. Or, equivalently, if the original Bayesian posterior does not underfit, there is no CPE.
\end{insight}

However, a final question arises: when is $\lambda=1$  (the original Bayesian posterior of interest) \textit{optimal}? More precisely, when does the derivative of the Bayes loss with respect to $\lambda$ evaluated at $\lambda=1$ become zero ($\frac{d}{d\lambda} B(p_\lambda)_{|\lambda=1}=0$)? This would imply that neither (infinitesimally) increasing nor decreasing $\lambda$ changes the predictive performance. We will see that this condition is closely related to the situation that updating such a Bayesian posterior with more data does not enhance its fit to the original training data better. In other words, the extra information about the data-generation process does not provide the Bayesian posterior with better performance on the originally provided training data.

We start by denoting $\tilde p_\lambda (\bmtheta|D,(\bmy,\bmx))$ as the distribution obtained by updating the posterior $p_\lambda(\bmtheta|D)$ with one new sample $(\bmy,\bmx)$, i.e., $\tilde p_\lambda (\bmtheta|D,(\bmy,\bmx)) \propto p(\bmy|\bmx,\bmtheta)p_\lambda(\bmtheta|D)$. And we also denote $\bar{p}_\lambda$ as the distribution resulting from averaging $\tilde p_\lambda (\bmtheta|D,(\bmy,\bmx))$ over different \textit{unseen} samples from the data-generating distribution $(\bmy,\bmx)\sim\nu(\bmy,\bmx)$:
\begin{equation}\label{eq:likelihoodtempering:updatedposterior}
    \bar{p}_\lambda(\bmtheta|D) = \E_\nu\left[\tilde p_\lambda(\bmtheta|D,(\bmy,\bmx))\right].
\end{equation}

In this sense, $\bar{p}_\lambda$ represents how the posterior $p_{\lambda}$ would be, on average, after being updated with a new sample from the data-generating distribution. This updated posterior contains a bit more information about the data-generating distribution, compared to \(p_\lambda\). Using the updated posterior $\bar{p}_\lambda$, the following result introduces a characterization of the \emph{optimality} of the original Bayesian posterior.

\begin{theorem} \label{cor:likelihoodtempering:bayesposterior_optimality}
The derivative of the Bayes loss at $\lambda=1$ is null, i.e., $\frac{d}{d\lambda} B(p_\lambda)_{|\lambda=1}=0$, if and only if, 
\[  \hat{G}(p_{\lambda=1},D) = \hat{G}(\bar{p}_{\lambda=1},D)\,. \]
\end{theorem}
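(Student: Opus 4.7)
The plan is to directly compute the derivative $\frac{d}{d\lambda} B(p_\lambda)$ in closed form and show that it equals (up to a positive constant) the gap $\hat{G}(\bar{p}_\lambda,D) - \hat{G}(p_\lambda,D)$. Evaluating this identity at $\lambda = 1$ immediately gives both directions of the ``if and only if.''

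First, I would use the explicit form $p_\lambda(\bmtheta|D) = p(D|\bmtheta)^\lambda p(\bmtheta)/Z(\lambda)$, where $Z(\lambda) = \int p(D|\bmtheta)^\lambda p(\bmtheta)\,d\bmtheta$, to obtain a clean expression for $\frac{d}{d\lambda} p_\lambda(\bmtheta|D)$. Differentiating $\ln p_\lambda(\bmtheta|D)$ yields $\ln p(D|\bmtheta) - \tfrac{d}{d\lambda}\ln Z(\lambda)$, and since $\tfrac{d}{d\lambda}\ln Z(\lambda) = \mathbb{E}_{p_\lambda}[\ln p(D|\bmtheta)]$, we get
\begin{equation*}
\frac{d}{d\lambda} p_\lambda(\bmtheta|D) = p_\lambda(\bmtheta|D)\bigl(\ln p(D|\bmtheta) - \mathbb{E}_{p_\lambda}[\ln p(D|\bmtheta)]\bigr).
\end{equation*}

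Next, I would differentiate under the expectation in the definition of $B(p_\lambda) = \mathbb{E}_\nu[-\ln \mathbb{E}_{p_\lambda}[p(\bmy|\bmx,\bmtheta)]]$ (which is standard and can be justified under mild regularity, e.g., dominated convergence). Writing $Z(\bmy,\bmx) = \int p(\bmy|\bmx,\bmtheta) p_\lambda(\bmtheta|D)\,d\bmtheta$ and recognizing
\begin{equation*}
\frac{p(\bmy|\bmx,\bmtheta) p_\lambda(\bmtheta|D)}{Z(\bmy,\bmx)} = \tilde{p}_\lambda(\bmtheta|D,(\bmy,\bmx)),
\end{equation*}
the quotient arising from the chain rule collapses into an expectation under $\tilde{p}_\lambda$. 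After cancellation the constant term $\mathbb{E}_{p_\lambda}[\ln p(D|\bmtheta)]$ comes out of the outer $\mathbb{E}_\nu$, leaving
\begin{equation*}
\frac{d}{d\lambda} B(p_\lambda) = \mathbb{E}_{p_\lambda}[\ln p(D|\bmtheta)] - \mathbb{E}_\nu\bigl[\mathbb{E}_{\tilde{p}_\lambda(\cdot|D,(\bmy,\bmx))}[\ln p(D|\bmtheta)]\bigr].
\end{equation*}

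Finally, swapping the order of integration in the second term (Fubini) and using the definition $\bar{p}_\lambda(\bmtheta|D) = \mathbb{E}_\nu[\tilde{p}_\lambda(\bmtheta|D,(\bmy,\bmx))]$ identifies the inner expectation as $\mathbb{E}_{\bar{p}_\lambda}[\ln p(D|\bmtheta)]$. Converting the log-likelihoods to empirical Gibbs losses using $\hat{G}(\rho,D) = -\tfrac{1}{n}\mathbb{E}_\rho[\ln p(D|\bmtheta)]$ yields
\begin{equation*}
\frac{d}{d\lambda} B(p_\lambda) = n\bigl(\hat{G}(\bar{p}_\lambda,D) - \hat{G}(p_\lambda,D)\bigr).
\end{equation*}
Setting $\lambda=1$ gives the claimed equivalence. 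I do not anticipate a serious obstacle here; the main care point will be justifying the interchange of differentiation and expectation and handling Fubini for the $\bar{p}_\lambda$ step, but both are routine in the settings considered (finite $\cal Y$ or standard regression likelihoods). Everything else is algebraic manipulation around the log-derivative identity for exponential-family-like tempering.
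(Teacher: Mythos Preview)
Your proposal is correct and follows essentially the same route as the paper: the paper first proves the identity $\frac{d}{d\lambda} B(p_\lambda) = \hat{G}(\bar{p}_\lambda,D) - \hat{G}(p_\lambda,D)$ (their Proposition~\ref{thm:likelihoodtempering:bayesgradient}) via exactly your log-derivative/covariance computation, recognition of the $\tilde{p}_\lambda$ ratio, and the Fubini swap, and then reads off the equivalence at $\lambda=1$. The only discrepancy is your extra factor of $n$; this stems from the paper's own notational looseness (the appendix proofs silently use $\hat G(\rho,D)=\E_\rho[-\ln p(D|\bmtheta)]$ while the main text defines it with the $\tfrac{1}{n}$ you adopted), and it is immaterial for the iff statement.
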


\begin{insight}
The original Bayesian posterior of interest is optimal if after updating it using the procedure described in Equation~\ref{eq:likelihoodtempering:updatedposterior}, or in other words, after exposing the Bayesian posterior to more data from the data-generating distribution, the empirical Gibbs loss over the initial training data remains unchanged. 
\end{insight} 
We will give examples that empirically illustrate Theorem \ref{cor:likelihoodtempering:bayesposterior_optimality} and the induced insight later in Section \ref{sec:5.5}.

\section{Tempered posteriors are Bayesian posteriors}
\label{sec:Bayesian}

\begin{Overview}
By extending \cite{zeno2021why} on classification only, we show in general that tempered posteriors are proper Bayesian posteriors with an alternative combination of likelihood and prior functions parameterized by $\lambda$. Thus, the occurrence of CPE can be explained within the Bayesian framework.
\begin{itemize}
    \item  We provide two examples to show how $\lambda$ influences the new likelihoods in Section \ref{sec:likelihood-wrt-lambda} and two examples to show how $\lambda$ influences the new priors in Section \ref{sec:prior-wrt-lambda}. 
    \item We show in Section \ref{sec:generalized-elbo-are-elbo} that the generalized ELBOs are also proper ELBOs. 
    \item We expand the discussion of the implications in Section \ref{sec:sec4-insight}.
\end{itemize}
\end{Overview}

\noindent
As previously discussed, the CPE phenomenon involves achieving improved predictive accuracy by employing a tempered posterior. A potential criticism is that this tempered posterior does not strictly adhere to the principles of a proper Bayesian posterior because the tempered likelihood, \(P(D|\bmtheta)^\lambda\) fails to meet the criteria of a proper likelihood function when $\lambda\neq 1$ (i.e., $\int P(D|\bmtheta)^\lambda dD \neq 1$ when $\lambda\neq 1$). However, as previously discussed by \cite{, zeno2021why}, this tempered posterior effectively serves as a \textit{proper Bayesian posterior} with a combination of \textit{new likelihood and prior functions}. We extend this result beyond classification to our Proposition \ref{prop:true_posterior}, proved in Appendix \ref{app:prop:true_posterior}.

Before delving into the description of the new likelihood and prior functions, it is essential to acknowledge a fundamental aspect. Given a labeled dataset \(D = (\bmX, \bmY)\) and the conditional likelihood associated to a classification model, the application of Bayes' theorem naturally results in the following Bayesian posterior: $$ p(\bmtheta|\bmX, \bmY)\propto p(\bmY|\bmX,\bmtheta) p(\bmtheta|\bmX),$$
\noindent where the prior over $\bmtheta$ is a \textit{conditional prior} \citep{marek2024can, zeno2021why} that depends on the unlabelled training data $\bmX$. However, specifying \(p(\bmtheta|\bmX)\) for a complex model, like a deep neural network, poses a significant challenge. Therefore, for practical purposes, nearly all existing works \citep{WRVS+20,FGAO+22} assume $\bmtheta$ to be independent of $\bmX$, resulting in the simplified expression $p(\bmtheta|\bmX, \bmY) \propto p(\bmY|\bmX,\bmtheta) p(\bmtheta)$, where the prior over $\bmtheta$ is now an \textit{unconditional prior}.

\begin{proposition}\label{prop:true_posterior}
    For any given dataset \(D = (\bmX, \bmY)\) such that the likelihood fully factories $p(\bmY|\bmX,\bmtheta)=\prod_{(\bmy,\bmx)\in D} p(\bmy|\bmx,\bmtheta)$, and \(\lambda > 0\), the tempered posterior defined in Equation \ref{eq:likelihoodTempering} can be expressed as a Bayesian posterior with a new prior and likelihood function as follows:
    \begin{equation}
        p_\lambda(\bmtheta|\bmX, \bmY) \propto q(\bmtheta|\bmX, \lambda) \prod_{(\bmy,\bmx)\in D} q(\bmy|\bmx,\bmtheta,\lambda), 
    \end{equation}
    where the new prior distribution \(q(\bmtheta|\bmX, \lambda)\) and likelihood function \(q(\bmy|\bmx, \bmtheta, \lambda)\) are defined as:
    \begin{equation}\label{eq:newlikelihoodprior}
    q(\bmtheta|\bmX,\lambda) \propto p(\bmtheta) \prod_{\bmx\in\bmX}  \int p(\bmy|\bmx,\bmtheta)^\lambda d\bmy, \quad\quad\quad  q(\bmy|\bmx,\bmtheta,\lambda) = \frac{p(\bmy|\bmx,\bmtheta)^\lambda}{\int p(\bmy|\bmx,\bmtheta)^\lambda d\bmy}.   
    \end{equation}
\end{proposition}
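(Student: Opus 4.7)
The plan is to prove Proposition~\ref{prop:true_posterior} by a direct algebraic rearrangement: we multiply and divide the tempered-posterior expression by the per-input normalizing factor of $p(\bmy|\bmx,\bmtheta)^\lambda$, which simultaneously turns the (improper) tempered likelihood into a genuine conditional density $q(\bmy|\bmx,\bmtheta,\lambda)$ and absorbs the leftover $\bmtheta$-dependent factor into a new, $\bmX$-conditional prior on $\bmtheta$. This reproduces exactly the expressions in Equation~\ref{eq:newlikelihoodprior}.

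Concretely, the steps are as follows. First I would apply the full-factorization assumption to write $p_\lambda(\bmtheta|\bmX,\bmY)\propto p(\bmtheta)\prod_{(\bmy,\bmx)\in D} p(\bmy|\bmx,\bmtheta)^\lambda$. Next, for each $\bmx\in\bmX$, introduce the quantity $Z(\bmx,\bmtheta,\lambda):=\int p(\bmy|\bmx,\bmtheta)^\lambda\,d\bmy$ (interpreted as a finite sum whenever $\mathcal{Y}$ is discrete) and multiply and divide the right-hand side by $\prod_{\bmx\in\bmX} Z(\bmx,\bmtheta,\lambda)$. Rearranging the factors yields
\begin{equation*}
p_\lambda(\bmtheta|\bmX,\bmY)\;\propto\;\Bigl[\,p(\bmtheta)\prod_{\bmx\in\bmX} Z(\bmx,\bmtheta,\lambda)\Bigr]\,\prod_{(\bmy,\bmx)\in D}\frac{p(\bmy|\bmx,\bmtheta)^\lambda}{Z(\bmx,\bmtheta,\lambda)}.
\end{equation*}
By the definitions in Equation~\ref{eq:newlikelihoodprior}, the bracketed factor is proportional to $q(\bmtheta|\bmX,\lambda)$ and the remaining product equals $\prod_{(\bmy,\bmx)\in D} q(\bmy|\bmx,\bmtheta,\lambda)$, which is the desired identity.

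Finally I would verify the two propriety claims. That $q(\bmy|\bmx,\bmtheta,\lambda)$ is a proper conditional density in $\bmy$ is immediate from its definition as $p^\lambda/Z$. For the new prior, it suffices to check that $\int p(\bmtheta)\prod_{\bmx\in\bmX} Z(\bmx,\bmtheta,\lambda)\,d\bmtheta<\infty$: in classification each $Z(\bmx,\bmtheta,\lambda)=\sum_{\bmy} p(\bmy|\bmx,\bmtheta)^\lambda\le|\mathcal{Y}|$, so the integral is bounded by $|\mathcal{Y}|^{|\bmX|}\int p(\bmtheta)\,d\bmtheta=|\mathcal{Y}|^{|\bmX|}<\infty$, while in the regression case finiteness coincides exactly with the propriety assumption already imposed on $p_\lambda$ below Equation~\ref{eq:likelihoodTempering}, since the two normalizing constants differ only by the $\bmtheta$-independent factor that has been factored out.

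The only real obstacle is the regression case in the last step: $Z(\bmx,\bmtheta,\lambda)$ depends on $\bmtheta$ and need not be uniformly bounded, so one must explicitly tie the new prior's finiteness to the assumption that the original tempered posterior is proper. Once this linkage is made, the rest of the argument is a one-line identity and no further computation is required.
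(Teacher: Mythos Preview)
Your argument is correct and is essentially identical to the paper's proof: both introduce the per-input normalizer $Z(\bmx,\bmtheta,\lambda)=\int p(\bmy|\bmx,\bmtheta)^\lambda\,d\bmy$ (the paper writes the product as $k(\bmtheta,\bmX,\lambda)$), multiply and divide by it, and read off the new prior and likelihood. Your propriety checks go beyond what the paper's proof verifies; one caveat is that your regression argument---that finiteness of $\int p(\bmtheta)\prod_{\bmx} Z(\bmx,\bmtheta,\lambda)\,d\bmtheta$ follows from propriety of $p_\lambda$---does not quite work as stated, since the two integrals differ by the $\bmtheta$-dependent factor $\prod_{(\bmy,\bmx)} q(\bmy|\bmx,\bmtheta,\lambda)$ rather than a $\bmtheta$-independent one, but the paper simply leaves this as an implicit assumption and does not attempt the argument either.
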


Note that the new conditional likelihood $q(\bmy|\bmx,\bmtheta,\lambda)$ and the new prior $q(\bmtheta|\bmX,\lambda)$ are both parametrized by the same $\lambda>0$, and note that the prior only depends on the unlabelled training data $\bmX$.

\cite{adlam2020cold} shows that in the specific scenario of Gaussian process regression (essentially our Bayesian linear regression example in Figures \ref{fig:blr} and \ref{fig:blr2}), any positive temperature aligns with a legitimate posterior under an adjusted unconditional prior. This can be seen as a special case of our argument.

In the rest of the section, we will discuss in Section \ref{sec:likelihood-wrt-lambda} how the new likelihoods change with respect to $\lambda$, and in Section \ref{sec:prior-wrt-lambda} how the new priors change with respect to $\lambda$. Additionally, besides demonstrating that tempered posteriors are proper Bayesian posteriors, we also show in Section \ref{sec:generalized-elbo-are-elbo} that the generalized ELBOs are also proper ELBOs. Lastly, we discuss the implications of the results in Section \ref{sec:sec4-insight}.

\subsection{How $\lambda$ influences the new likelihoods}\label{sec:likelihood-wrt-lambda}

The next result, proved in Appendix \ref{app:entropyproof}, shows that in supervised classification settings, higher $\lambda$ values induce new likelihood distributions with lower aleatoric uncertainty or, equivalently, lower Shannon entropy, denoted as $H(q(\bmy|\bmx,\bmtheta,\lambda)):=-\sum_{\bmy\in{\cal Y}} q(\bmy|\bmx,\bmtheta,\lambda)\ln q(\bmy|\bmx,\bmtheta,\lambda)$. 
\begin{proposition}\label{prop:entropy} For any $\bmtheta\in \bmTheta$, any $\bmx\in{\cal X}$, and any finite output set $\cal Y$, the entropy of the conditional likelihood $q(\bmy|\bmx,\bmtheta,\lambda)$ monotonically decreases with $\lambda>0$, i.e.,
\begin{equation} 
    \frac{d}{d\lambda} H(q(\bmy|\bmx,\bmtheta,\lambda))\leq 0\quad \forall\lambda>0\,.
\end{equation}
\end{proposition}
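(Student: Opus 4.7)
The plan is to recognize that the tempered likelihood $q(\bmy|\bmx,\bmtheta,\lambda)$ has exactly the form of a Boltzmann (Gibbs) distribution with inverse temperature $\lambda$, so that the claim becomes the well-known fact that entropy is non-increasing in inverse temperature. Concretely, fix $\bmtheta$ and $\bmx$, and for each $\bmy\in\mathcal{Y}$ set the ``energy'' $E(\bmy):=-\ln p(\bmy|\bmx,\bmtheta)\in[0,\infty]$ (assume for now the support is $\{E(\bmy)<\infty\}$; degenerate atoms with $p(\bmy|\bmx,\bmtheta)=0$ carry no mass at any $\lambda>0$ and can be discarded). Then
\begin{equation*}
q(\bmy|\bmx,\bmtheta,\lambda)=\frac{e^{-\lambda E(\bmy)}}{Z(\lambda)},\qquad Z(\lambda):=\sum_{\bmy\in\mathcal{Y}}e^{-\lambda E(\bmy)}.
\end{equation*}

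The key step is to express the Shannon entropy as
\begin{equation*}
H\bigl(q(\cdot|\bmx,\bmtheta,\lambda)\bigr)=\lambda\,\mathbb{E}_{q(\cdot|\bmx,\bmtheta,\lambda)}[E(\bmy)]+\ln Z(\lambda),
\end{equation*}
which follows directly by substituting the Boltzmann form of $q$ into $-\sum_{\bmy} q\ln q$. Differentiating in $\lambda$ and using the standard identity $\frac{d}{d\lambda}\ln Z(\lambda)=-\mathbb{E}_q[E(\bmy)]$, the $\mathbb{E}_q[E(\bmy)]$ terms cancel and one is left with
\begin{equation*}
\frac{d}{d\lambda}H\bigl(q(\cdot|\bmx,\bmtheta,\lambda)\bigr)=\lambda\,\frac{d}{d\lambda}\mathbb{E}_q[E(\bmy)]=-\lambda\,\frac{d^{2}}{d\lambda^{2}}\ln Z(\lambda).
\end{equation*}
The final ingredient is the classical log-partition identity $\frac{d^{2}}{d\lambda^{2}}\ln Z(\lambda)=\mathbb{V}_{q}(E(\bmy))\geq 0$, which gives $\frac{d}{d\lambda}H\leq 0$ for all $\lambda>0$, as required. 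Finiteness of $\mathcal{Y}$ guarantees absolute convergence and validates interchanging sums and derivatives throughout.

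I do not expect a serious obstacle. The only mildly delicate point is handling outcomes with $p(\bmy|\bmx,\bmtheta)=0$: these contribute $0$ to $Z(\lambda)$ and zero mass to $q$ for every $\lambda>0$, so one may simply restrict the sum to the support of $p(\cdot|\bmx,\bmtheta)$ before writing the Boltzmann form, after which all manipulations are elementary. An equally clean alternative would be to note that $\frac{d}{d\lambda}\mathbb{E}_q[E(\bmy)]=-\mathbb{V}_q(E(\bmy))$ directly (a one-line computation mirroring Proposition \ref{prop:likelihoodtempering:empiricalGibbsLoss}) and then invoke the entropy identity, giving the same conclusion without ever naming $\ln Z$.
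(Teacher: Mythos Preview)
Your proof is correct and reaches the same endpoint as the paper, namely $\frac{d}{d\lambda}H=-\lambda\,\mathbb{V}_{q}(\ln p(\bmy|\bmx,\bmtheta))\leq 0$, but by a genuinely different route. The paper differentiates $H(q)=-\int q\ln q$ directly: it uses $\int\frac{d}{d\lambda}q=0$ to reduce to $-\int(\ln q)\,\frac{d}{d\lambda}q$, then expands $\frac{d}{d\lambda}q$ via the quotient rule and works through the resulting covariance-style algebra until the variance emerges. You instead recognize $q$ as a Gibbs distribution with energy $E=-\ln p$, invoke the free-energy decomposition $H=\lambda\,\mathbb{E}_q[E]+\ln Z(\lambda)$, and read off the result from the standard cumulant identities $\frac{d}{d\lambda}\ln Z=-\mathbb{E}_q[E]$ and $\frac{d^{2}}{d\lambda^{2}}\ln Z=\mathbb{V}_q(E)$. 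Your argument is more structural and concise, and it makes transparent why the result is just the familiar ``entropy decreases with inverse temperature'' fact from statistical mechanics; the paper's direct computation is more self-contained in that it does not presuppose familiarity with log-partition identities. Your handling of zero-probability outcomes is also clean and correct.
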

This result also holds for regression settings, where $\mathcal{Y}\subset \R^d$, under the differential entropy, assuming the Leibniz rule holds. See the proof for a detailed discussion on the matter.

We give two concrete examples, one in regression and one in classification, to illustrate the proposition.

\begin{figure}
    \centering 
\begin{subfigure}{0.32\textwidth}
    \centering
    \hspace*{0.5cm}\( q(\bmy|\bmx,\bmtheta,\lambda)\)\\
  \includegraphics[width=\linewidth]{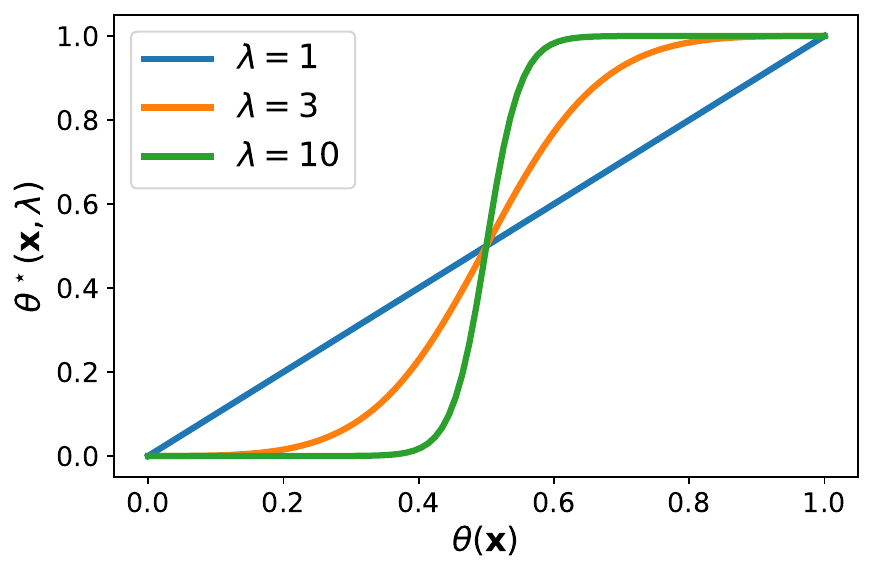}
\end{subfigure}\hfil 
\begin{subfigure}{0.32\textwidth}
\centering
\hspace*{0.5cm}\(q(\bmtheta|\bmX,\lambda)\)\\
  \includegraphics[width=\linewidth]{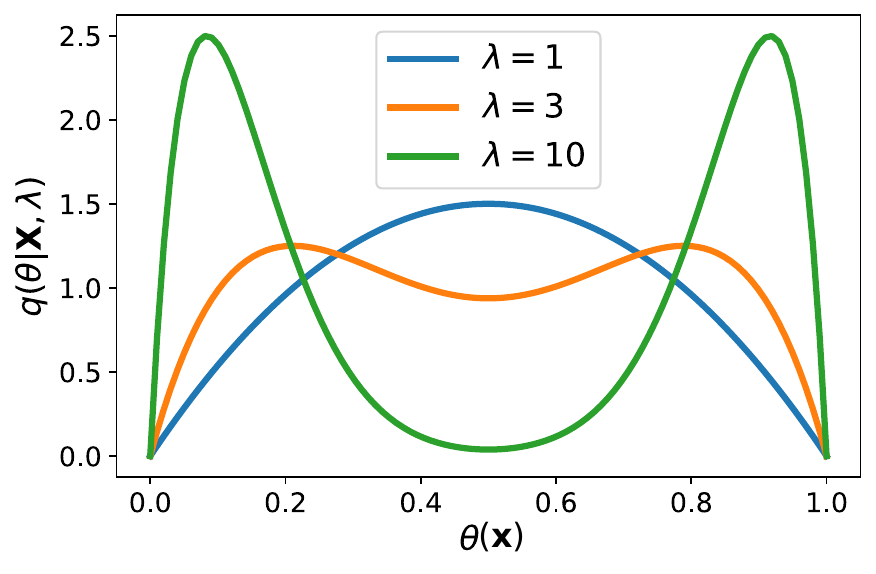}
\end{subfigure}\hfil\begin{subfigure}{0.32\textwidth}
\centering
\hspace*{0.5cm}\(q(\bmtheta|\bmX,\lambda)\)\\
  \includegraphics[width=\linewidth]{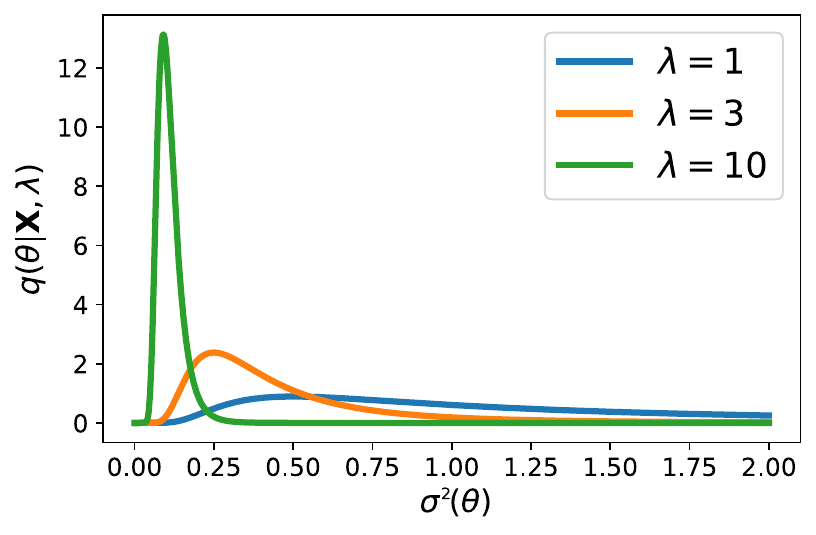}
\end{subfigure}\hfil
\caption{Illustration of the new likelihood \( q(\bmy|\bmx,\bmtheta,\lambda)\) and priors \(q(\bmtheta|\bmX,\lambda)\). In the left and middle figures, the original likelihood is in the form of the Bernoulli distribution. The left figure demonstrates the transformation from \(\theta(\bmx)\) to  \(\theta^\star(\bmx, \lambda) := \frac{\theta(\bmx)^{\lambda}}{\theta(\bmx)^{\lambda} + (1-\theta(\bmx))^\lambda}\). In the middle figure, we display a Beta-Binomial example, where the prior, initialized as a Beta distribution, is updated with a single Bernoulli-distributed sample. In the right figure, we display the new prior, initialized as an inverse-gamma prior and updated with a Gaussian likelihood with a single observation.}
\label{fig:binomial}
\end{figure}

\paragraph*{Regression example}\label{exam:likelihood-regression}\textit{Consider the case where the original likelihood is Gaussian, defined as \(p(\bmy|\bmx, \bmtheta) = \mathcal{N}(\mu(\bm{x}, \bmtheta), \sigma^2(\bmtheta))\), where the variance is input-independent, as typically seen in many regression problems. Then, following Equation \ref{eq:newlikelihoodprior}, the new likelihood corresponds to a scaling in the variance, given by \(q(\bmy|\bmx, \bmtheta, \lambda) = \mathcal{N}(\mu(\bm{x}, \bmtheta), \tfrac{\sigma^2(\bmtheta)}{\lambda^2})\). Thus, as $\lambda$ increases, the tempered likelihood \(q(\bmy|\bmx, \bmtheta, \lambda)\) induces a proper Gaussian likelihood with reduced variance, i.e., a new likelihood with lower aleatoric uncertainty.} 

\paragraph*{Classification example}\label{exam:likelihood-classification}\textit{Consider the case of a binary classification problem where the original conditional likelihood is Bernoulli, defined as \(p(\bmy|\bmx, \bmtheta) = \theta(\bmx)^{y}(1-\theta(\bmx))^{1-y}\) with $y\in\{0,1\}$ and the input-dependent parameter function $\theta(\bmx)\in [0,1]$, which is usually implemented by a neural network with a softmax activation function in the last layer. Then, following Equation \ref{eq:newlikelihoodprior}, the new conditional likelihood \(q(\bmy|\bmx,\bmtheta,\lambda) = \theta^*(\bmx,\lambda)^{y}(1-\theta^*(\bmx,\lambda))^{1-y}\) also follows a Bernoulli distribution with a different parameter function \(\theta^*(\bmx, \lambda) = \frac{\theta(\bmx)^{\lambda}}{\theta(\bmx)^{\lambda} + (1-\theta(\bmx))^\lambda}\in[0,1]\). The function  \(\theta^*(\bmx, \lambda)\) is displayed in Figure \ref{fig:binomial} (left). When $\lambda$ increases, the parameter function that defines the new Bernoulli likelihood becomes more extreme, resulting in a new likelihood with lower aleatoric uncertainty. }

In both cases, we see that as suggested by Proposition \ref{prop:entropy}, as $\lambda$ increases, the new conditional likelihoods $q(\bmy|\bmx,\bmtheta,\lambda)$ have lower entropy, i.e., lower aleatoric uncertainty. In Sections \ref{sec:5.2} and \ref{sec:5.3}, we will further explore the implications of this finding and its connection to existing literature.

\subsection{How $\lambda$ influences the new priors}\label{sec:prior-wrt-lambda}

On the other hand, according to Proposition \ref{prop:true_posterior}, using the tempered posteriors implies implicitly using the prior $q(\bmtheta|\bmX,\lambda)$. Such prior depends on the \emph{unlabelled training data} $\bmX$. On top of that, the functional form of the likelihood function is defined by the probabilistic model family through the term $ \int p(\bmy|\bmx,\bmtheta)^\lambda d\bmy$ for $\bmx\in\bmX$. Hence, models $\bmtheta$ that yield a large value for this term across most of the training data $\bmx\in\bmX$ will be assigned larger probability mass by the new prior. We will showcase this effect in both regression and binary classification problems. Moreover, we will see how the new prior $q(\bmtheta|\bmX,\lambda)$ with $\lambda>1$ \textit{favors those models within the model class that yield likelihoods with lower aleatoric uncertainty on the training data $\bmX$}. 

\paragraph*{Regression example}\label{exam:prior-Regression} \emph{Consider the case where the original likelihood is Gaussian, defined as \(p(\bmy|\bmx, \bmtheta) = \mathcal{N}(\mu(\bm{x}, \bmtheta), \sigma^2(\bmtheta))\), where the variance is input-independent, as typically seen in many regression problems. A common parametrization involves $\bmtheta=(\bmw,\gamma)$, where $\bmw$ refer to the weights of the neural network defining the function $\mu(\bm{x},\bmtheta)$ and $\gamma>0$ is a parameter encoding the variance of the Gaussian likelihood such that $\sigma^2(\bmtheta)=\gamma$. The prior $p(\bmtheta)$ is then defined as $p(\bmtheta)=p(\bmw)p(\gamma)$, where $p(\bmw)$ is usually a  Gaussian distribution with a diagonal covariance matrix, and $p(\gamma)$ is usually defined in terms of an inverse-gamma distribution. Following Equation \ref{eq:newlikelihoodprior}, the new prior would be expressed as \(q(\bmtheta|\bmX,\lambda)=q(\bmw|\bmX,\lambda)q(\gamma|\bmX,\lambda)\), where each term}
    \[q(\bmw|\bmX,\lambda)=p(\bmw)\quad,\quad q(\gamma|\bmX,\lambda) \propto p(\gamma)/\gamma^{n(\lambda-1)}\,.\]
    \emph{Figure~\ref{fig:binomial} (right) plots the density of $q(\gamma|\bmX,\lambda)$ when only one data is observed, with various $\lambda>1$ values when $p(\gamma)$ is an inverse-gamma prior.  For larger $\lambda$ values, this new prior will assign more probability mass to models defining a likelihood with smaller variance or, equivalently, smaller aleatoric uncertainty.} 

\paragraph*{Classification example}\label{exam:prior-Classification}\textit{Consider another case where the original conditional likelihood is Bernoulli, defined as \(p(\bmy|\bmx, \bmtheta) = \theta(\bmx)^{y}(1-\theta(\bmx))^{1-y}\) with $y\in\{0,1\}$ and $\theta(\bmx)\in [0,1]$, as commonly used in binary classification problems. Also, take any prior $p(\bmtheta)$. Then, following Equation \ref{eq:newlikelihoodprior}, the new prior is expressed as
        \[q(\bmtheta|\bmX,\lambda) \propto p(\bmtheta) \prod_{\bmx\in\bmX} \Big(\theta(\bmx)^\lambda + (1-\theta(\bmx))^\lambda\Big)\,. \]
Figure \ref{fig:binomial} (middle) illustrates the transformation of the prior for a Beta-Binomial model with a single training sample. Initially, the prior $p(\bmtheta)$ is taken as a Beta distribution, while the likelihood of this single data is Bernoulli. As $\lambda\ge 1$ increases, the new prior assigns more probability mass to models where $\theta(\bmx)$ is close to either $1$ or $0$. In other words, this new prior assigns more probability mass to models that assign more extreme probabilities to the training data (i.e., models with lower aleatoric uncertainty). Note that the prior does not consider how accurately these models classify the training data, but only the extremity of the probabilities assigned to the training data.}

We will discuss in Section \ref{sec:5.4} further implications of this finding and how it relates to the literature.

\subsection{Generalized ELBOs are also proper ELBOs}\label{sec:generalized-elbo-are-elbo}
Generalized ELBOs, characterized by scaling the KL divergence term using a hyper-parameter $\lambda$, have found widespread application in many studies \citep{WRVS+20}. This popularity stems from the demonstrated ability to adjust $\lambda$ to improve the predictive accuracy of variational approximations:
\begin{equation}\label{eq:pseudoELBO}
q^\star_\lambda := \argmin_{r\in \Pi} \E_r[-\ln p(D|\bmtheta)] + \frac{1}{\lambda}\operatorname{KL}(r(\bmtheta),p(\bmtheta))\,,
\end{equation}
\noindent where $\Pi$ defines the variational family. Critics have pointed out a flaw in the above generalized ELBO when $\lambda$ deviates from $1$, as it no longer functions as a true lower bound for the marginal likelihood. However, Proposition \ref{prop:true_posterior} can be used to justify that such a variational posterior $q^\star_\lambda$ still emerges from minimizing a valid ELBO. Specifically, it is constructed based on the revised likelihood and prior functions as follows: 
\begin{equation}\label{eq:pseudoELBO:fixed}
q^\star_\lambda = \argmin_{r\in \Pi} \E_r[-\ln q(\bmY|\bmX,\bmtheta,\lambda)] + \operatorname{KL}(r(\bmtheta),q(\bmtheta|\bmX, \lambda)) \,.
\end{equation}
Consequently, this analysis shows that using generalized ELBOs as Equation \ref{eq:pseudoELBO} perfectly adheres to variational and Bayesian principles. 

\subsection{Insights and implications from the section}\label{sec:sec4-insight}

In this section, we show that employing tempered posteriors seamlessly fits within a Bayesian framework, which streamlines and enriches the use of diverse likelihood and prior functions.

With the characterization in Section \ref{sec:CPE}, observing CPE implies that the tempered posterior, which implicitly employs the new likelihood and priors defined in Equation \ref{eq:newlikelihoodprior} is better specified in comparison to the original Bayesian posterior. The alignment with the underlying data-generating distribution is easily achieved by tempering. Consequently, tempered posteriors offer a simple, computationally efficient, and theoretically sound approach to mitigate the underfitting problem often encountered in contemporary Bayesian deep learning methods.

Furthermore, as discussed in Section \ref{sec:likelihood-wrt-lambda} and Section \ref{sec:prior-wrt-lambda}, increasing $\lambda$ results in likelihoods with lower aleatoric uncertainty and priors that favor models yielding such likelihoods on the training data $X$. Therefore, the occurrence of CPE in contemporary Bayesian deep learning indicates that the models currently employed in the field often underfit the data by assuming models with too high aleatoric uncertainty. This strengthens our understanding of the CPE as a consequence of underfitting, resulting from poorly specified likelihood and prior functions. 

We will further expand on and discuss how these implications relate to the literature in Section \ref{sec:modelmisspec&CPE}.

\section{Likelihood misspecification, prior misspecification and the CPE}\label{sec:modelmisspec&CPE}

\begin{Overview}
We relate our analysis in previous sections to the main arguments of CPE from the literature.
\begin{itemize}
    \item Section \ref{sec:5.1}: we demonstrate with Bayesian linear regression that exact inference can also bring CPE, showing CPE is not merely a side effect of approximate inference in NNs.
    \item Section \ref{sec:5.2}: using the same regression examples, we show that model misspecification can lead to underfitting or overfitting. CPE arises specifically when misspecified likelihoods or priors cause underfitting, not just from misspecification alone.
    \item Section \ref{sec:5.3}: likelihood misspecification is often identified as a source of CPE in practice. We show it is because the standard softmax likelihood (high aleatoric uncertainty) is misspecified and underfits the data-generating process (curated data with low aleatoric uncertainty).
    \item Section \ref{sec:5.4}: we show prior misspecification leads to CPE if it induces underfitting. Using tempered posteriors implicitly defines better-specified conditional priors that alleviate it.
    \item Section \ref{sec:5.5}: we show that larger models have more flexibility to fit data, thereby mitigating underfitting and CPE. Conversely, with small models and abundant data, the Bayesian posterior may already fit the data optimally, thereby exhibiting minimal underfitting (by our definition) and CPE.
\end{itemize}
\end{Overview}

\subsection{CPE, approximate inference, and NNs}\label{sec:5.1}
As mentioned in the introduction, several works have discussed that CPE is an artifact of inappropriate approximate inference methods, especially in the context of the highly complex posterior that emerge from neural networks \citep{WRVS+20}. There are occasions suggesting that if the approximate inference method is accurate enough, the CPE disappears \citep{IVHW21}. However, Proposition \ref{prop:likelihoodtempering:empiricalGibbsLoss} shows that when $\lambda$ is made larger than 1, the \textit{training loss} of the exact Bayesian posterior decreases; if the \textit{test loss} decreases too, the exact Bayesian posterior underfits. It means that even if the inference method is accurate, we can still observe the CPE due to underfitting. In fact, Figures \ref{fig:blr} and \ref{fig:blr2} shows examples of a Bayesian linear regression model learned on synthetic data. Here, the exact Bayesian posterior can be computed, and it is clear from Figures~\ref{fig:1c} and~\ref{fig:1d} that the CPE can occur in Bayesian linear regression with exact inference.
Although simple, the setting is articulated specifically to mimic the classification tasks using BNNs where CPE was observed. In particular, the linear model has more parameters than observations (i.e., it's overparameterized). We also note that \cite{adlam2020cold} presents similar findings and observations for Gaussian process regression.

\begin{figure}[h]
    \centering 
\begin{subfigure}{0.33\textwidth}
  \includegraphics[width=\linewidth]{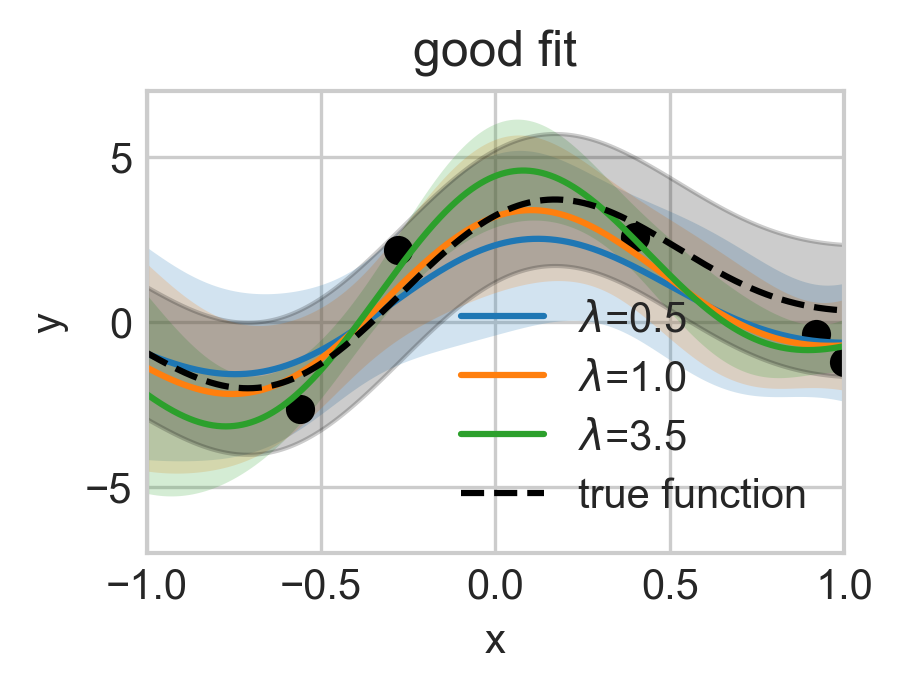}
\end{subfigure}\hfil 
\begin{subfigure}{0.33\textwidth}
  \includegraphics[width=\linewidth]{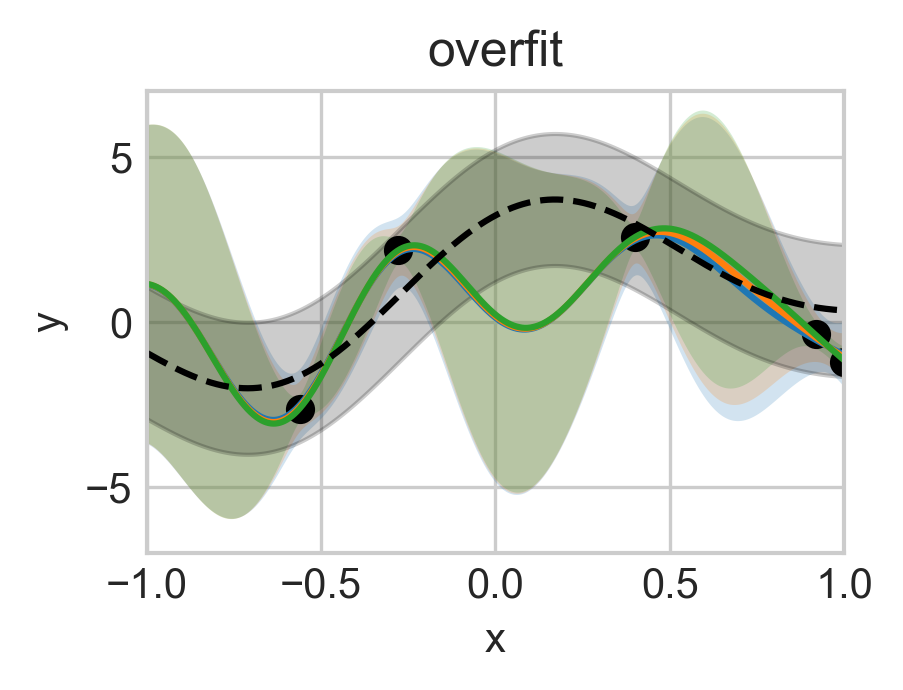}
\end{subfigure}\hfil 
\begin{subfigure}{0.33\textwidth}
  \includegraphics[width=\linewidth]{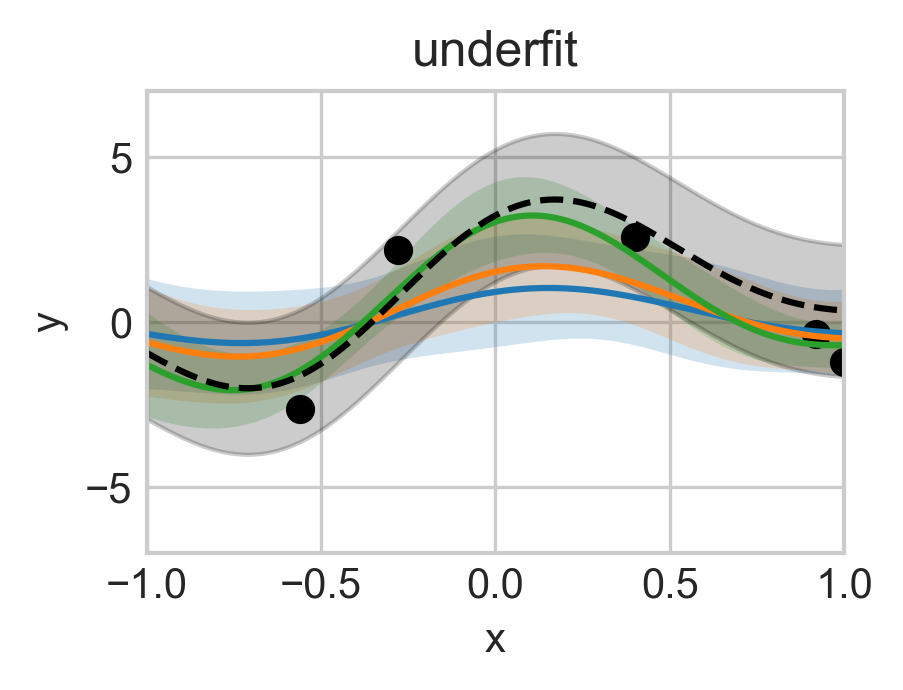}
\end{subfigure}

\begin{subfigure}[b]{0.33\textwidth}
  \includegraphics[width=\linewidth]{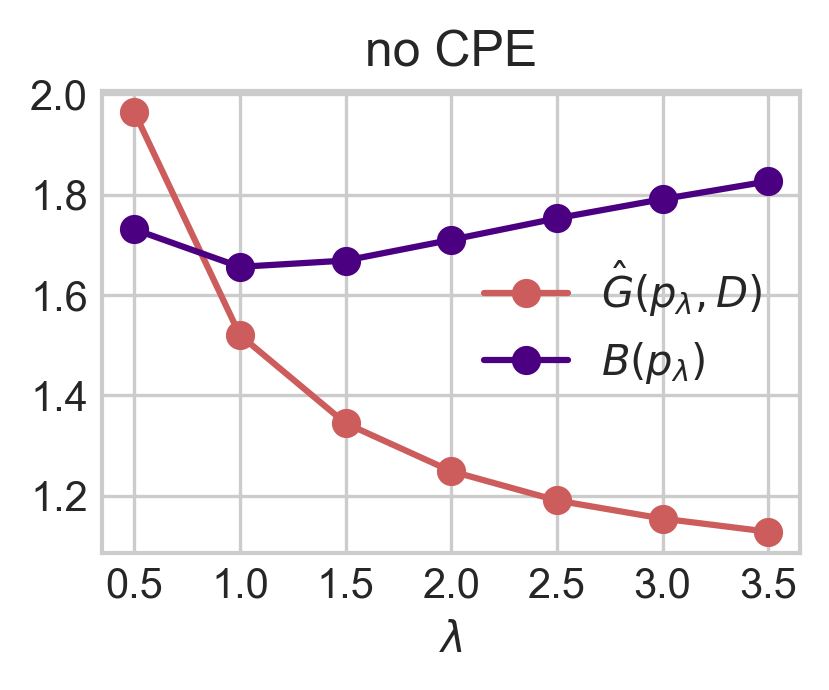}
  \captionsetup{format=hang, justification=centering}
  \caption{No likelihood or prior misspecification}
  \label{fig:1a}
\end{subfigure}\hfil 
\begin{subfigure}[b]{0.315\textwidth}
  \includegraphics[width=\linewidth]{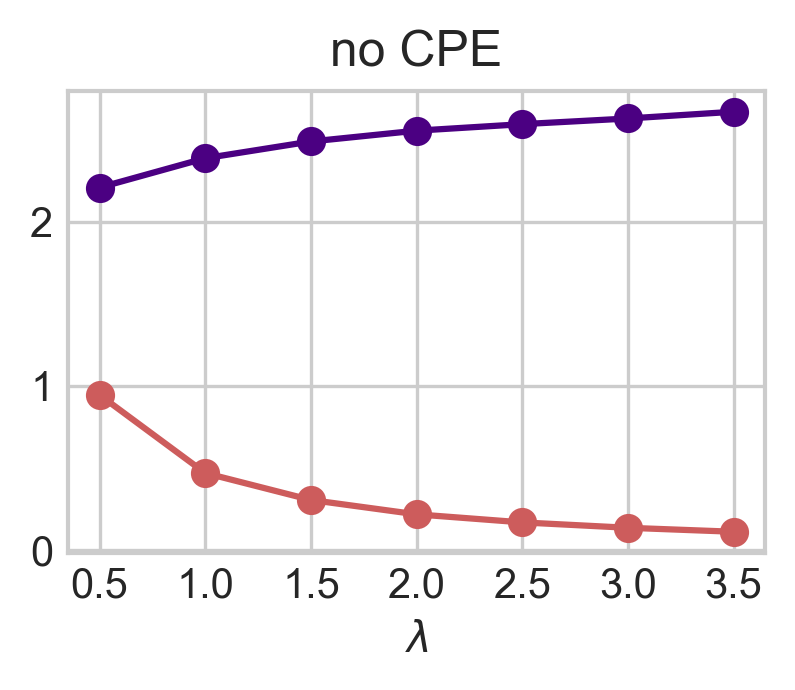}
  \captionsetup{format=hang, justification=centering}
  \caption{Likelihood misspecification case I}
  \label{fig:1b}
\end{subfigure}\hfil 
\begin{subfigure}[b]{0.335\textwidth}
  \includegraphics[width=\linewidth]{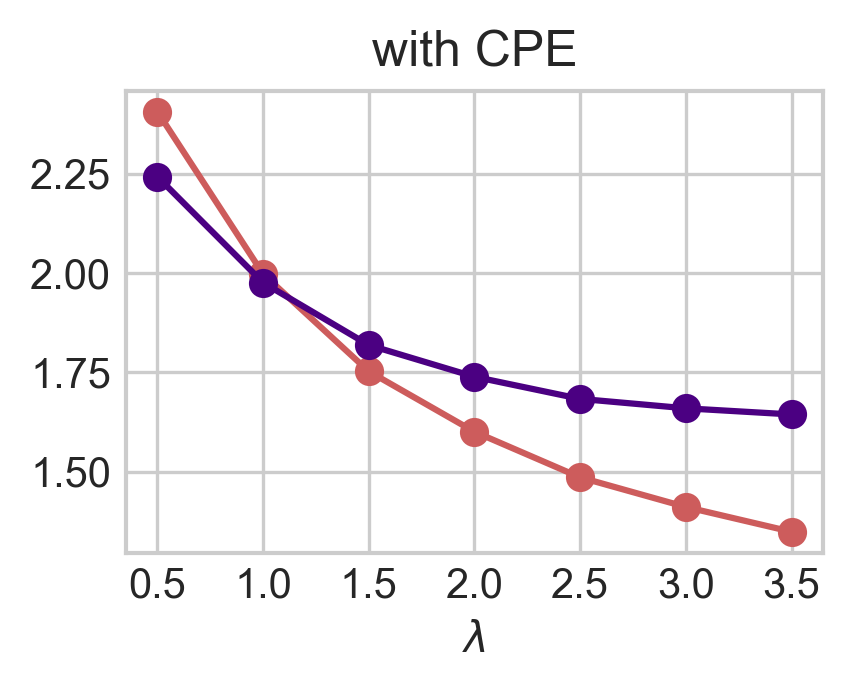}
  \captionsetup{format=hang, justification=centering}
  \caption{Only with prior misspecification}
  \label{fig:1d}
\end{subfigure}

\caption{\textbf{1. The CPE occurs in Bayesian linear regression with exact inference. 2. Model misspecification can lead to overfitting and to a ``warm'' posterior effect (WPE).} Every column displays a specific setting, as indicated in the caption. The first row shows exact Bayesian posterior predictive fits for three different values of the tempering parameter $\lambda$. The second row shows the Gibbs loss $\hat G(p_\lambda, D)$ (aka training loss) and the Bayes loss $B(p_\lambda)$ (aka testing loss) with respect to $\lambda$. The experimental details are given in Appendix \ref{app:sec:experiment}.}
\label{fig:blr}
\end{figure}

\begin{figure}[h]
    \centering 
\begin{subfigure}{0.33\textwidth}
  \includegraphics[width=\linewidth]{imgs/chap2/blm/fit_perfect.png}
\end{subfigure}\hfil 
\begin{subfigure}{0.33\textwidth}
  \includegraphics[width=\linewidth]{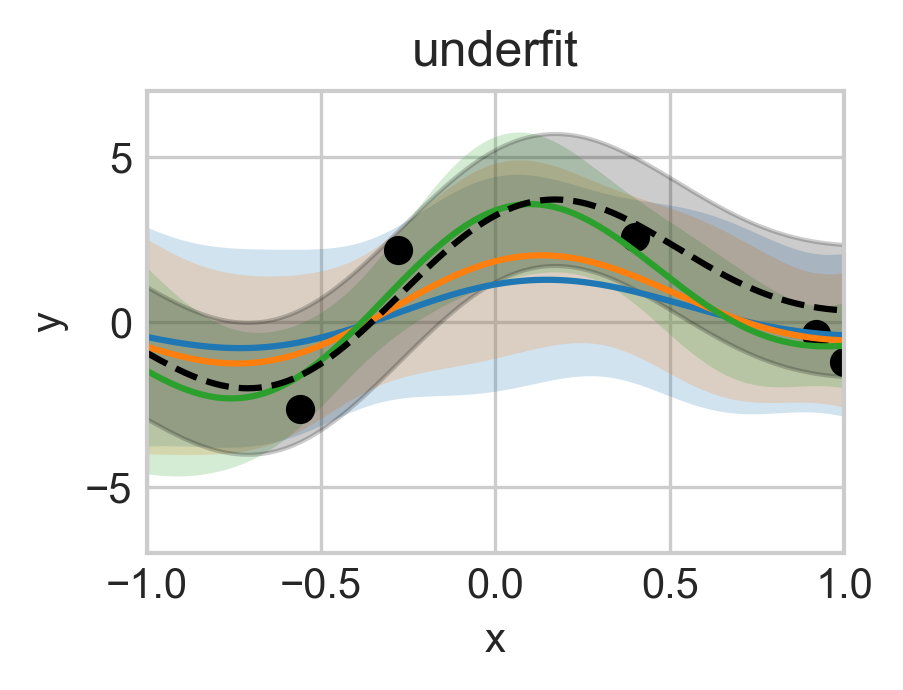}
\end{subfigure}\hfil 
\begin{subfigure}{0.33\textwidth}
  \includegraphics[width=\linewidth]{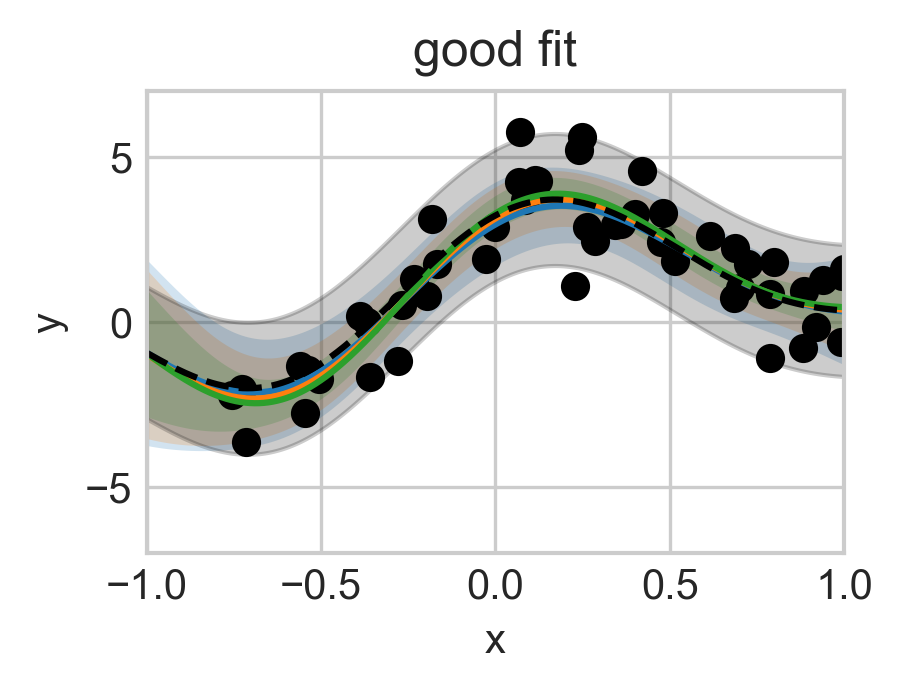}
\end{subfigure}

\begin{subfigure}[b]{0.33\textwidth}
  \includegraphics[width=\linewidth]{imgs/chap2/blm/nll_perfect.png}
  \captionsetup{format=hang, justification=centering}
  \caption{No likelihood or prior misspecification}
\end{subfigure}\hfil 
\begin{subfigure}[b]{0.33\textwidth}
  \includegraphics[width=\linewidth]{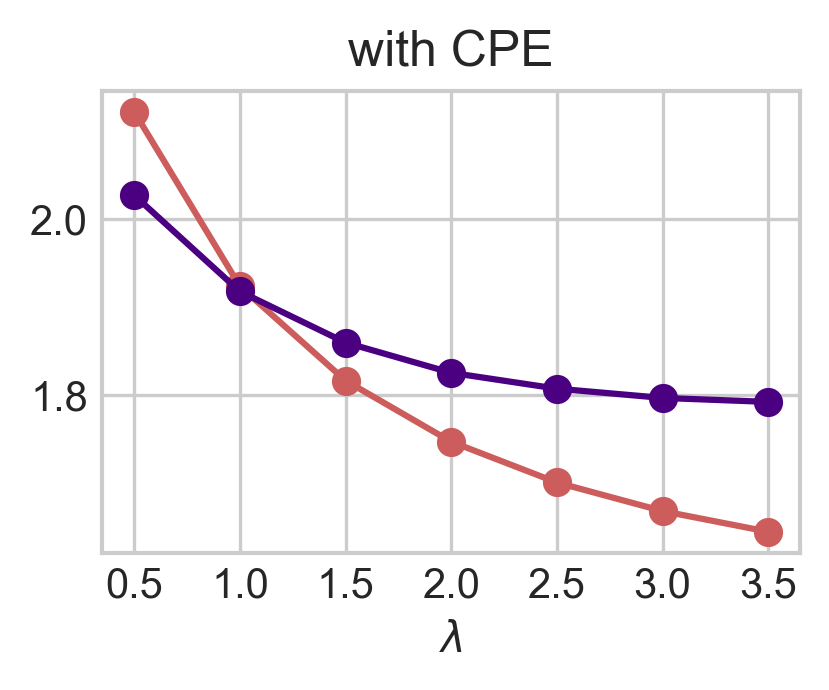}
  \captionsetup{format=hang, justification=centering}
  \caption{Likelihood misspecification case II}
  \label{fig:1c}
\end{subfigure}\hfil 
\begin{subfigure}[b]{0.33\textwidth}
  \includegraphics[width=\linewidth]{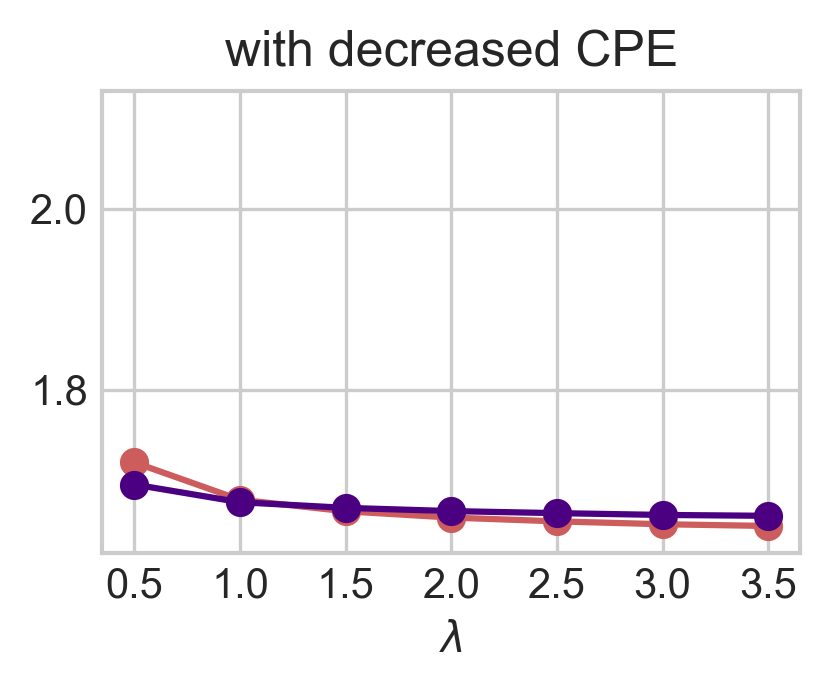}
  \captionsetup{format=hang, justification=centering}
  \caption{Same as (b) but with 50 data points}
  \label{fig:1e}
\end{subfigure}

\caption{\textbf{Extended results of Figure \ref{fig:blr} with more configurations of model misspecification.}}
\label{fig:blr2}
\end{figure}

\subsection{Model misspecification, CPE, and underfitting} \label{sec:5.2}

Prior and/or likelihood misspecification can lead Bayesian methods to both underfitting and overfitting, as widely discussed in the literature \citep{domingos2000bayesian,immer2021improving,KMIW22}. We illustrate this using a Bayesian linear regression model: Figures \ref{fig:1c} and~\ref{fig:1d} show how the Bayesian posterior underfits due to likelihood and prior misspecification, respectively. On the other hand, Figure~\ref{fig:1b} showcases a scenario where likelihood misspecification can perfectly lead to overfitting as well, giving rise to what we term a ``warm'' posterior effect (WPE), i.e., there exist other posteriors ($p_\lambda$ with $\lambda<1$) with lower testing loss, which, at the same time, have higher training loss due to Proposition \ref{prop:likelihoodtempering:empiricalGibbsLoss}. As a result, to describe CPE merely as a model misspecification issue without acknowledging underfitting offers a narrow interpretation of the problem.

The examples presented in Figures \ref{fig:blr} and \ref{fig:blr2} help illustrate the results of Proposition \ref{prop:true_posterior} and provide concrete demonstrations of the theoretical insights discussed: when CPE shows up, tuning $\lambda$ is akin to finding another Bayesian posterior with a less misspecified likelihood and prior.However, we note that in this particular Bayesian linear regression setup, the new prior $q(\bmtheta|\bmX,\lambda)$ is always equal to initial prior $p(\bmtheta)$ because the variance of the likelihood is assumed to be constant. Therefore, the analysis of the  \hyperref[exam:prior-Regression]{regression case} in Section \ref{sec:prior-wrt-lambda} does not directly apply here.

In the discussion regarding likelihood, we refer to the \hyperref[exam:likelihood-regression]{regression example} in Section \ref{sec:likelihood-wrt-lambda}. Let's first have a look at Figure \ref{fig:1c}, where the Gaussian likelihood model has a larger variance than the true data-generating process. By increasing $\lambda$, we obtain a likelihood model with a smaller variance (divided by $\lambda^2$, as shown in the \hyperref[exam:likelihood-regression]{regression example} in Section \ref{sec:likelihood-wrt-lambda}), i.e., we induce a new likelihood with lower aleatoric uncertainty (Proposition \ref{prop:entropy}). Such a new model is closer to the true data-generating distribution and less misspecified, thus enjoying better performance. The opposite can be seen in Figure \ref{fig:1b}, where the Gaussian likelihood model has a lower variance than the true data-generating distribution and the WPE occurs.

\subsection{The likelihood misspecification argument} \label{sec:5.3}

Likelihood misspecification has also been identified as a cause of CPE, especially in cases where the dataset has been \textit{curated} \citep{Ait21,KMIW22}. Data curation often involves carefully selecting samples and labels to improve the quality of the dataset. As a result, the curated data-generating distribution typically presents very low aleatoric uncertainty, meaning that $\nu(\bmy|\bmx)$ usually takes values very close to either $1$ or $0$. However, as previously discussed in \citep{Ait21,KMIW22}, the standard likelihoods used in deep learning for image classification, like softmax or sigmoid, tend to allocate more spread-out probabilities to the outcomes, implicitly reflecting a higher level of aleatoric uncertainty.  Therefore, their use in curated datasets that exhibit low uncertainty made them misspecified \citep{KMIW22,FGAO+22}. To address this issue, alternative likelihood functions like the Noisy-Dirichlet model \citep[Section 4]{KMIW22} have been proposed, which better align with the characteristics of the curated data. On the other hand, introducing noise labels also alleviates the CPE, as demonstrated in \citet[Figure 7]{Ait21}. By introducing noise labels, we intentionally increase aleatoric uncertainty in the data-generating distribution, which aligns better with the high aleatoric uncertainty assumed by the standard Bayesian deep networks \citep{KMIW22}. Consequently, according to these works, the CPE can be strongly alleviated when the likelihood misspecification is addressed.

Our theoretical analysis aligns with these findings in Sections \ref{sec:likelihood-wrt-lambda} and \ref{sec:sec4-insight}: fitting low aleatoric uncertainty data-generating distributions, e.g., $\nu(y|\bmx)\in\{0.01,0.99\}$, with high aleatoric uncertainty likelihood functions e.g., $p(y|\bmx,\bmtheta)\in[0.2,0.8]$, induces underfitting, and thus, CPE.  The presence of underfitting is not mentioned at all by any of these previous works \citep{Ait21,KMIW22}. On top of that, using Propositions \ref{prop:true_posterior} and \ref{prop:entropy}, our work explains why the likelihood implicitly used by the tempered posterior with $\lambda>1$ provides better generalizaton performance. Because, in this case, we are using a likelihood $q(\bmtheta|\bmX,\lambda)$ (Equation \ref{eq:newlikelihoodprior}) with lower aleatoric uncertainty, which better aligns with the low aleatoric uncertainty data-generating distribution induced by curated datasets, thus reducing the degree of model misspecification.

Figures \ref{fig:CPE:NarrowPriorSmall} and \ref{fig:CPE:SoftmaxSmall}, along with Figures \ref{fig:CPE:NarrowPriorLarge} and \ref{fig:CPE:SoftmaxLarge}, illustrate this point through a regular multi-class classification task on a curated benchmark dataset. Both scenarios utilize the same narrow prior. The distinction in Figure \ref{fig:CPE:SoftmaxSmall} lies in the adoption of a tempered softmax likelihood, defined as \(p(y|x,\theta) = \left(1+\exp\left(-\gamma \, \text{{logits}}(x,\theta)\right)\right)^{-1}\), with \(\gamma=2\), compared to \(\gamma=1\) in Figure \ref{fig:CPE:NarrowPriorSmall}. This tempered softmax likelihood, more closely aligned with the dataset's low aleatoric uncertainty as outlined by \citep{DBLP:conf/icml/GuoPSW17}, leads to a reduced incidence of CPE in Figure \ref{fig:CPE:SoftmaxSmall} compared to Figure \ref{fig:CPE:NarrowPriorSmall}. From the perspective of Proposition \ref{prop:true_posterior} and specifically Proposition \ref{prop:entropy}, the intrinsic lower aleatoric uncertainty of the likelihood used in Figure \ref{fig:CPE:SoftmaxSmall} (softmax with  $\gamma=2$) makes the potential for improvement through increasing \(\lambda\) somewhat limited, resulting in a less pronounced CPE compared to Figure \ref{fig:CPE:NarrowPriorSmall}. It is, however, important to highlight the critical interaction between the likelihood and the prior, as we dicuss next.

\begin{figure}[t]
  \begin{subfigure}{.32\linewidth}
    \includegraphics[width=\linewidth]{./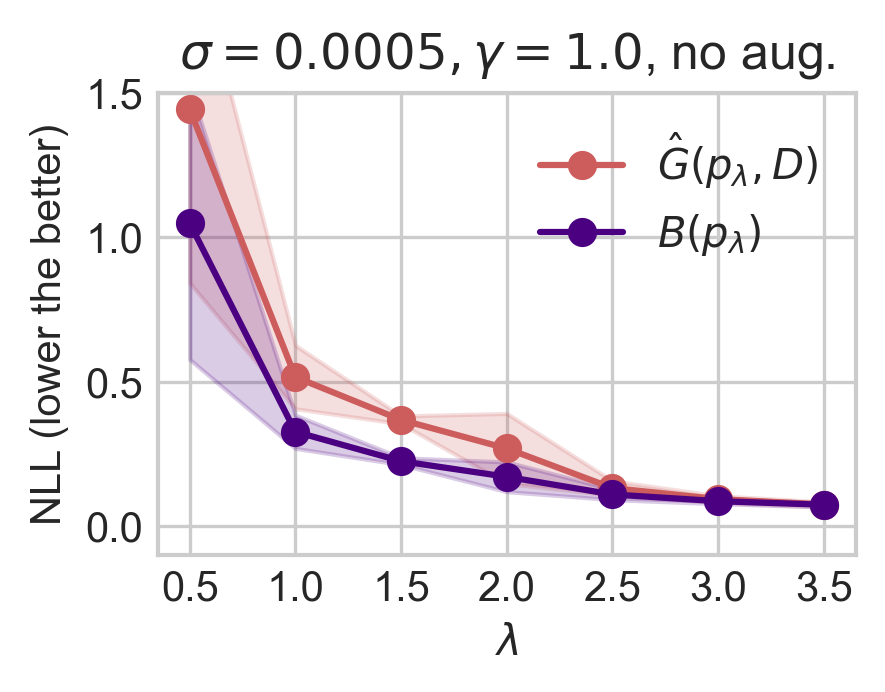}
      \captionsetup{format=hang, justification=centering}
    \caption{Narrow prior and standard softmax}
    \label{fig:CPE:NarrowPriorSmall}
  \end{subfigure}
  \hfill
  \begin{subfigure}{.32\linewidth}
    \includegraphics[width=\linewidth]{./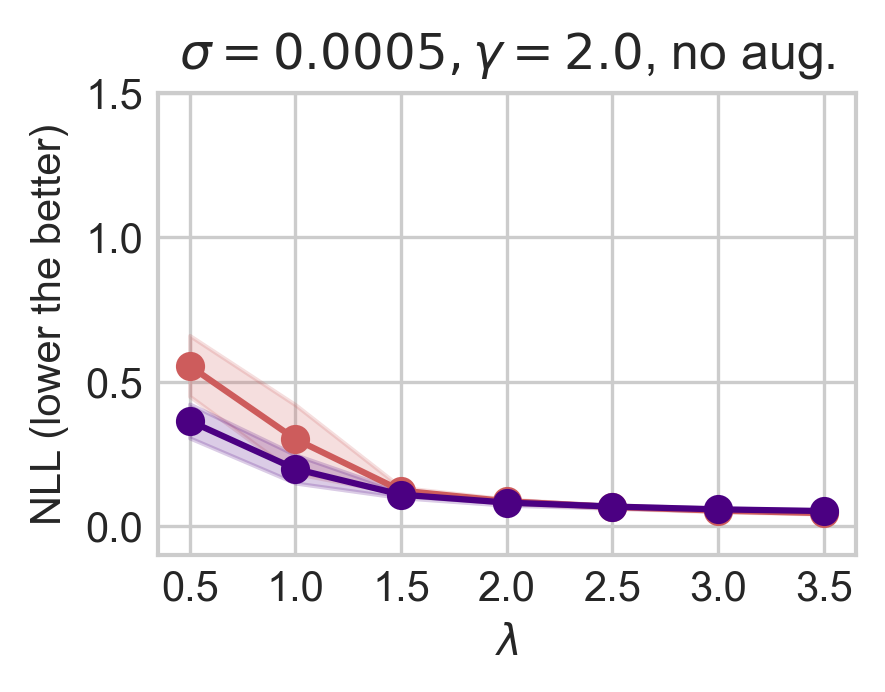}
      \captionsetup{format=hang, justification=centering}
    \caption{Narrow prior and tempered softmax}
    \label{fig:CPE:SoftmaxSmall}
  \end{subfigure}
  \hfill
  \begin{subfigure}{.32\linewidth}
    \includegraphics[width=\linewidth]{./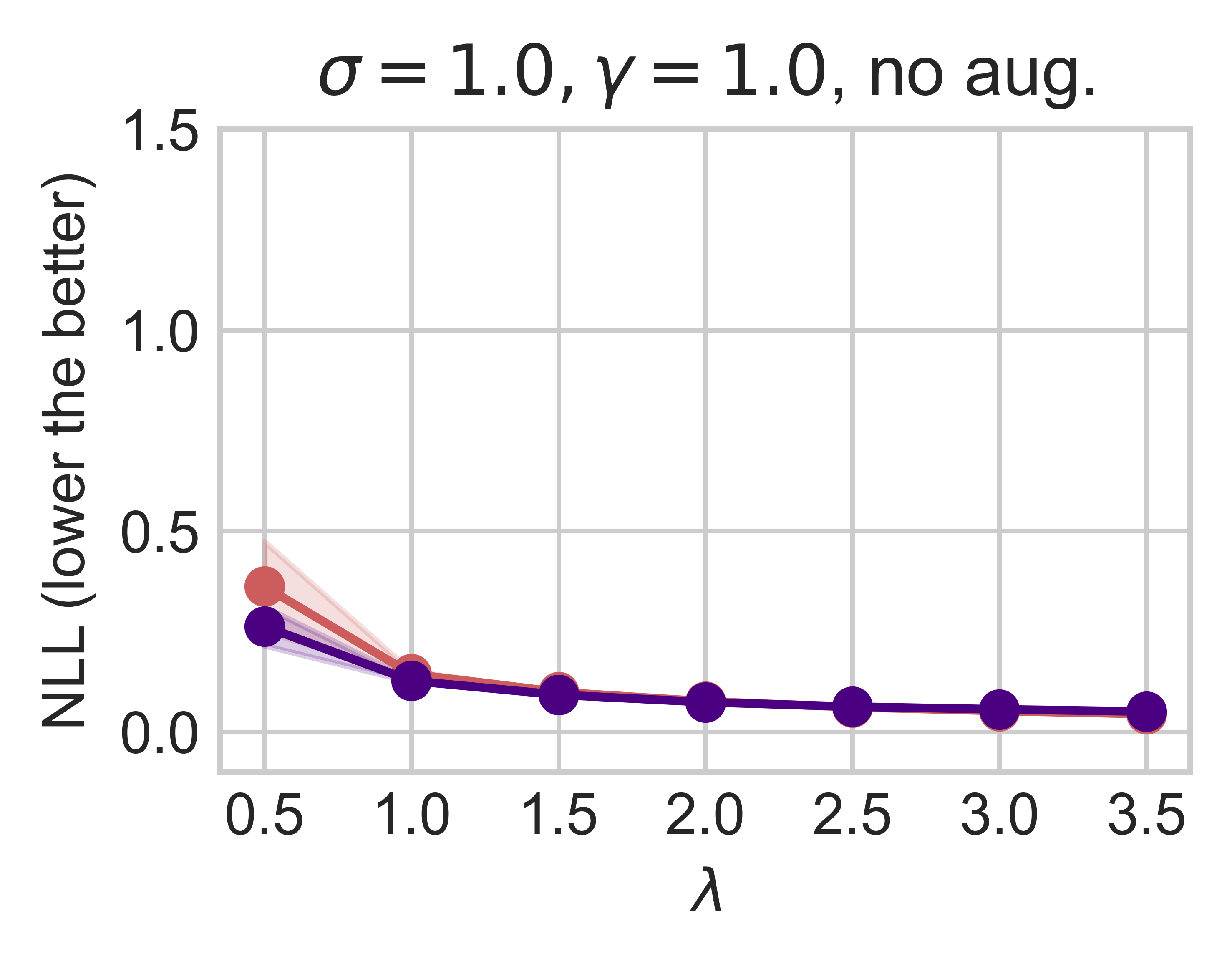}
      \captionsetup{format=hang, justification=centering}
    \caption{Standard prior and standard softmax}
    \label{fig:CPE:StandardPriorSmall}
  \end{subfigure}
    \caption{\textbf{Experimental illustrations for the arguments in Section \ref{sec:modelmisspec&CPE} using small CNN via SGLD on MNIST. We show similar results on Fashion-MNIST with small CNN and CIFAR-10(0) with ResNet-18 in Appendix \ref{app:sec:experiment-approx}}. Figures \ref{fig:CPE:NarrowPriorSmall} to \ref{fig:CPE:StandardPriorSmall} illustrate the arguments in Section~\ref{sec:modelmisspec&CPE}. Figure \ref{fig:CPE:StandardPriorSmall} uses the standard prior ($\sigma=1$) and the standard softmax ($\gamma=1$) for the likelihood without applying DA. Figure \ref{fig:CPE:NarrowPriorSmall} follows a similar setup except for using a narrow prior. Figure \ref{fig:CPE:SoftmaxSmall} uses a narrow prior as in Figure \ref{fig:CPE:NarrowPriorSmall} but with a tempered softmax that results in a lower aleatoric uncertainty. We report the training loss $\hat G(p_\lambda,D)$ and the testing losses, $B(p_\lambda)$ and $G(p_\lambda)$, from 10 samples of the small Convolutional neural network (CNN) via Stochastic Gradient Langevin Dynamics (SGLD). We show the mean and standard error across three different seeds. For additional experimental details, please refer to Appendix \ref{app:sec:experiment-approx}. 
    }
    \label{fig:CPE:small}
\end{figure}

\begin{figure}[t]

    \begin{subfigure}{.32\linewidth}
      \includegraphics[width=\linewidth]{./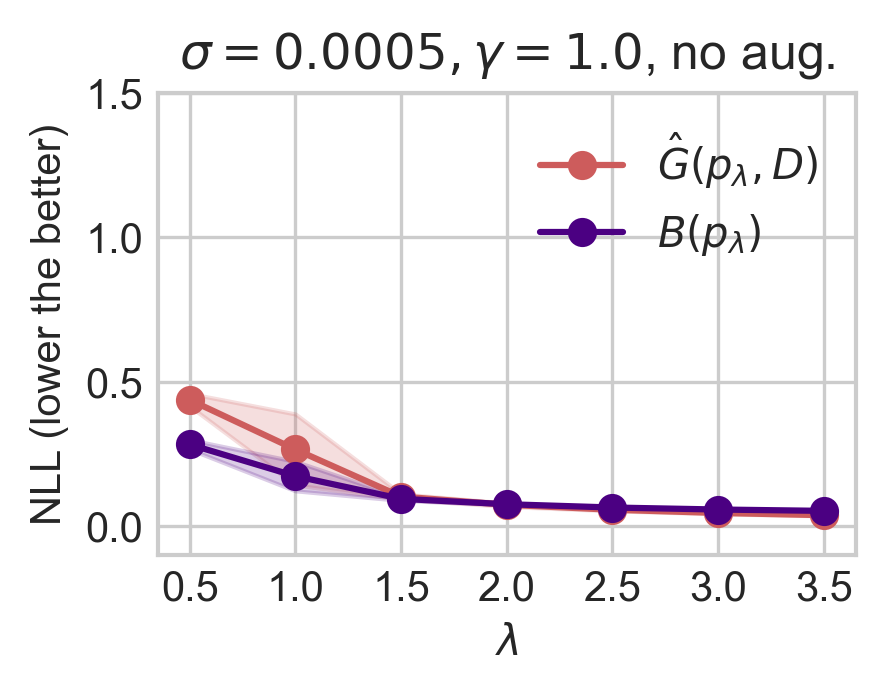}
      \captionsetup{format=hang, justification=centering}
      \caption{Narrow prior and standard softmax}
      \label{fig:CPE:NarrowPriorLarge}
    \end{subfigure}
    \begin{subfigure}{.32\linewidth}
      \includegraphics[width=\linewidth]{./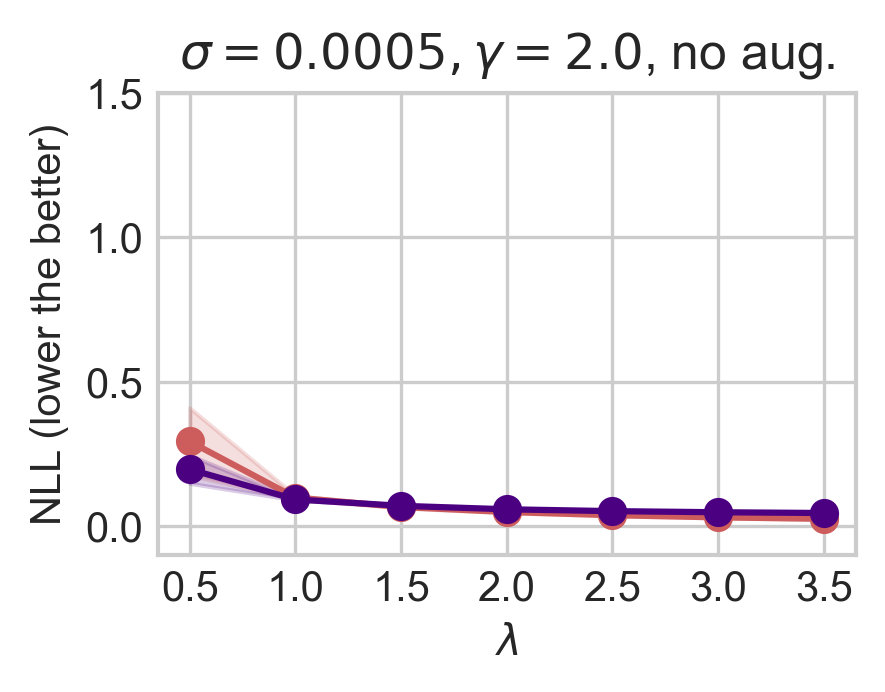}
      \captionsetup{format=hang, justification=centering}
      \caption{Narrow prior and tempered softmax}
      \label{fig:CPE:SoftmaxLarge}
    \end{subfigure}
    \begin{subfigure}{.32\linewidth}
      \includegraphics[width=\linewidth]{./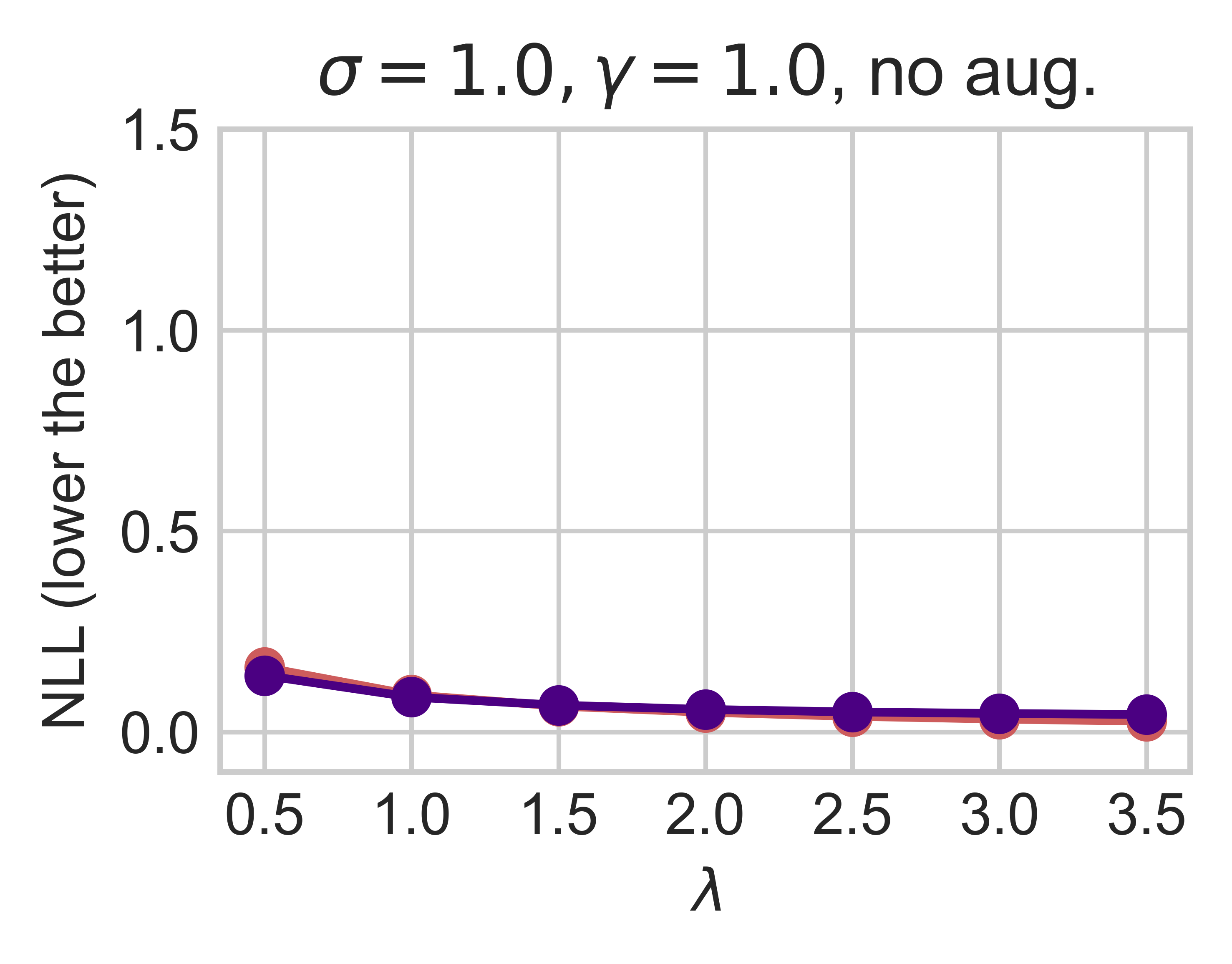}
      \captionsetup{format=hang, justification=centering}
      \caption{Standard prior and standard softmax}
      \label{fig:CPE:StandardPriorLarge}
    \end{subfigure}
      \caption{\textbf{Experimental illustrations for the arguments in Section~\ref{sec:modelmisspec&CPE} using large CNN via SGLD on MNIST. We show similar results on Fashion-MNIST with large CNN and CIFAR-10(0) with ResNet-50 in Appendix \ref{app:sec:experiment-approx}.} The experiment setup is similar to the setups in Figure~\ref{fig:CPE:small} but with a large CNN. Please refer to Appendix \ref{app:sec:experiment-approx} for further details on the model. 
      }
      \label{fig:CPE:large}
  \end{figure}

\subsection{The prior misspecification argument} \label{sec:5.4}

As highlighted in previous works, such as in \citet{WRVS+20,FGAO+22}, isotropic Gaussian priors are commonly chosen in modern Bayesian neural networks for the sake of tractability in approximate Bayesian inference rather than chosen based on their alignment with our actual beliefs. Given that the presence of the CPE implies that either the likelihood and/or the prior are misspecified, and given that neural networks define highly flexible likelihood functions, there are strong reasons for thinking these commonly used priors are misspecified. Notably, the experiments conducted by \citet{FGAO+22} demonstrate that the CPE can be mitigated in fully connected neural networks when using heavy-tailed prior distributions that better capture the weight characteristics typically observed in such networks. However, such priors were found to be ineffective in addressing the CPE in convolutional neural networks \citep{FGAO+22}, indicating the challenges involved in designing effective Bayesian priors within this context.

Our theoretical analysis provides a deeper insight into these observations. As discussed in Section \ref{sec:CPE}, the absence of underfitting means the absence of CPE. This suggests flexible likelihood functions may still result in posteriors that underfit, due to the prior's tendency to overly regularize. This may incur CPE when the strong prior fails to allocate enough probability to models that both fit the training data well and exhibit good generalization capabilities. As detailed in Section \ref{sec:prior-wrt-lambda}, employing tempered posteriors with $\lambda>1$ effectively defines a conditional prior $q(\bmtheta|\bmX,\lambda)$ that favors models with lower aleatoric uncertainty. If such models align better with the training data compared to the original prior, then we may observe CPE. Also, the conditional prior $q(\bmtheta|\bmX,\lambda)$ with $\lambda>1$ can be considered better specified than the original prior $p(\bmtheta)$.

Figures \ref{fig:CPE:NarrowPriorSmall} and \ref{fig:CPE:StandardPriorSmall} exemplify this situation. The prior in the case of Figures \ref{fig:CPE:NarrowPriorSmall} is very narrow ($\sigma=0.0005$), inducing strong regularization. Such a narrow prior results in a posterior that severely underfits the training data, evident from the high empirical Gibbs loss that deviates significantly from zero. Additionally, we observe a strong CPE. On the other hand, with a flatter prior in the case of Figure~\ref{fig:CPE:StandardPriorSmall}, the CPE is considerably diminished. According to the discussion above and Sections \ref{sec:prior-wrt-lambda} and \ref{sec:sec4-insight}, we know that the flatter prior allocates more probability mass to preferred models. Also, such preferred models have lower aleatoric uncertainty than the ones assigned initially by the narrow prior in the former case. To elaborate further, in the former case, the new prior $q(\bmtheta|\bmX,\lambda)$ with $\lambda>1$ would place much more probability mass to models with lower aleatoric uncertainty than the narrow prior and strongly alleviating underfitting. In the second case, since the flatter prior already distributes probability mass more broadly across the model class, the room to shift probability mass to models with lower aleatoric uncertainty is more limited than that from a narrower prior, resulting in a milder CPE. 

\subsection{Model size, sample size in relation to CPE and underfitting} \label{sec:5.5}

Larger models have the capacity to fit data more effectively, while smaller models are more likely to underfit. As we have argued that if there is no underfitting, there is no CPE, we expect that the size of the model has an impact on the strength of CPE as well. We demonstrate in Figure \ref{fig:CPE:small} and Figure \ref{fig:CPE:large}. Specifically, in our experiments presented in Figure \ref{fig:CPE:small}, we use a relatively small convolutional neural network (CNN), which has a more pronounced underfitting behavior, and this indeed corresponds to a stronger CPE. On the other hand, we employ a larger CNN in Figure \ref{fig:CPE:large}, which has less underfitting, and we see the CPE is strongly alleviated. Actually, this effect can be directly inferred from Theorem \ref{cor:likelihoodtempering:bayesposterior_optimality}. For an extremely flexible model capable of perfectly fitting both the original training samples and new samples, this theorem suggests that a CPE should not be expected, as the model's fit on the original data remains perfect, even when new examples are introduced. 

Theorem \ref{cor:likelihoodtempering:bayesposterior_optimality} can also be used to understand why small models in the presence of large training data sets do not exhibit CPE. We empirically illustrate this point in Figures \ref{fig:blr} and \ref{fig:blr2}. In particular, Figure \ref{fig:1c} and Figure \ref{fig:1e} use the same (small) regression models and settings where the only difference is that Figure \ref{fig:1c} uses 5 data points while Figure \ref{fig:1e} uses 50 data points. In situations where a model possesses limited flexibility and the training set is large, including additional examples should barely affect the fit of the original training data because the Bayesian posterior is highly concentrated and will be barely affected by a single extra sample. Then, as predicted by Theorem \ref{cor:likelihoodtempering:bayesposterior_optimality}, CPE in Figure \ref{fig:1e} is much less significant than in Figure \ref{fig:1c}.

Finally, it's worth noting that Figure 11 in \cite{WRVS+20} shows the opposite effect, where larger models exhibit much stronger CPE compared to shallower or narrower versions of the same architectures. However, it's important to recognize that \cite{WRVS+20} studied full-tempering, whereas our work focuses on likelihood-tempering. For full-tempering, Proposition \ref{prop:likelihoodtempering:empiricalGibbsLoss} does not necessarily hold. Intuitively, since $\lambda$ operates on both the likelihood (data) and the prior (regularization) simultaneously, the effect of increasing $\lambda$ is mixed, not necessarily improving the fit on the training data. Consequently, the CPE brought by full-tempering as $\lambda$ increases does not necessarily coincide with better training loss, as the training loss may not be improvable. As a result, the CPE observed with full-tempering cannot be interpreted solely as underfitting. Therefore, for full-tempering, increasing model capacity may not achieve a lower degree of CPE, unlike the behavior we observed in our case focusing on likelihood-tempering.

\section{Data augmentation (DA) and the CPE}\label{sec:data-augmentation}

\begin{Overview}
We show conditions under which data augmentation exacerbates the CPE. 
\begin{itemize}
    \item Section \ref{sec:6.1}: starting with the Gibbs loss for clarity, 
    we show that data augmentation induces a stronger CPE on the Gibbs loss if the augmented data provides more information about the data-generating process, increasing the correlation between the expected and empirical losses.
    \item Section \ref{sec:6.2}: extending the above idea, we show analogous conditions where data augmentation exacerbates CPE on the Bayes loss.
\end{itemize}
\end{Overview}

\noindent
Machine learning is applied to many different fields and problems. In many of them, the data-generating distribution is known to have properties that can be exploited to generate new data samples \citep{shorten2019survey} artificially. This is commonly known as \emph{data augmentation (DA)} and relies on the property that for a given set of transformations \(\daset=\{h:\cal X\rightarrow \cal X\}\), the data-generating distribution satisfies \(\nu(\bmy|\bmx) = \nu(\bmy|h(\bmx))\) for all \(h \in \daset\). In practice, not all the transformations are applied to every single data. Instead, a probability distribution (usually uniform) $\mu_\daset$ is defined over $\daset$, and augmented samples are drawn accordingly. As argued in \citet{NGGA+22}, the use of data augmentation when training Bayesian neural networks implicitly targets the following (pseudo) log-likelihood, denoted $\hat{L}_{\text{\tiny{DA}}}(D,\bmtheta)$ and defined as
\begin{equation}\label{eq:DA:likelihood}
    \hat{L}_{\text{\tiny{DA}}}(D,\bmtheta) = \frac{1}{n}\sum_{i\in[n]} \E_{h\sim \mu_\daset}\left[-\ln p(\bmy_i|h(\bmx_i),\bmtheta)\right]\,,
\end{equation}
where data augmentation provides unbiased estimates of the expectation under the set of transformations using \textit{Monte Carlo samples} (i.e., random data augmentations).

Although some argue that this data-augmented \textit{(pseudo) log-likelihood} ``does not have a clean interpretation as a valid likelihood function'' \citep{WRVS+20,IVHW21}, we do not need to enter into this discussion to understand why the CPE emerges when using the generalized Bayes posterior \citep{bissiri2016general_6} associated to this \textit{(pseudo) log-likelihood}, which is the main goal of this section. We call this posterior the DA-tempered posterior and is denoted by $p_\lambda^{\text{\tiny{DA}}}(\bmtheta|D)$. The DA-tempered posterior can be expressed as the global minimizer of the following learning objective,
\begin{equation}\label{eq:likelihoodTempering:DA:ELBO}
p_\lambda^{\text{\tiny{DA}}}(\bmtheta|D) = \argmin_\rho \E_\rho[n\hat{L}_{\text{\tiny{DA}}}(D,\bmtheta)] + \frac{1}{\lambda}\operatorname{KL}(\rho(\bmtheta|D),p(\bmtheta))\,.
\end{equation}
This is similar to Equation~\ref{eq:likelihoodTempering:ELBO} but now using $\hat{L}_{\text{\tiny{DA}}}(D,\bmtheta)$ instead of $\hat{L}(D,\bmtheta)$, where we recall the notation $\hat{L}(D,\bmtheta) =-\tfrac{1}{n}\ln p(D|\bmtheta)$. Hence, the resulting DA-tempered posterior is given by $p^{\text{\tiny{DA}}}_\lambda(\bmtheta|D)\propto e^{-n\lambda \hat{L}_{\text{\tiny{DA}}}(D,\bmtheta)} p(\bmtheta)$. In comparison, the tempered posterior $p_\lambda(\bmtheta|D)$ in Equation~\ref{eq:likelihoodTempering} can be similarly expressed as $e^{-n\lambda \hat L(D,\bmtheta)}p(\bmtheta)$.

There is large empirical evidence that DA induces a stronger CPE \citep{WRVS+20,IVHW21,FGAO+22}. Indeed, many of these studies show that if CPE is not present in our Bayesian learning settings, using DA makes it appear. According to our previous analysis, this means that the use of DA induces a stronger underfitting. To understand why this is case, we will take a step back and begin analyzing the impact of DA in the so-called Gibbs loss of the DA-Bayesian posterior \(p_{\lambda=1}^{\text{\tiny{DA}}}\) rather than the Bayes loss, as this will help us in understanding this puzzling phenomenon. 

\subsection{Data augmentation and CPE on the Gibbs loss} \label{sec:6.1}

The expected Gibbs loss of a given posterior $\rho$, denoted  \(G(\rho)\), is a commonly used metric in the theoretical analysis of the \textit{generalization performance} of Bayesian methods \citep{germain2016pac,masegosa2020learning}. The Gibbs loss represents the average of the expected log-loss of individual models under the posterior $\rho$, that is,
\[G(\rho) = \E_\rho[L(\bmtheta)] = \E_\rho[\E_{\nu}[-\ln [p(\bmy|\bmx, \bmtheta)]]\,. \]
In fact, Jensen's inequality confirms that the expected Gibbs loss serves as an upper bound for the Bayes loss, i.e., $G(\rho)\geq B(\rho)$. This property supports the expected Gibbs loss to act as a proxy of the Bayes loss, which justifies its usage in gaining insights into how DA impacts the CPE.

We will now study whether data augmentation can cause a CPE on the Gibb loss. In other words, we will examine whether increasing the parameter $\lambda$ of the DA-tempered posterior leads to a reduction in the Gibbs loss. This can be formalized by extending Definition \ref{def:likelihoodtempering:CPE} to the expected Gibbs loss by considering its derivative $\frac{d}{d\lambda} G(p_{\lambda})$ at $\lambda=1$, which can be represented as follows: 
\begin{equation}\label{eq:Gibbs_cpe}
    \frac{d}{d\lambda} G(p_{\lambda})_{|\lambda=1}= - \mathrm{COV}_{p_{\lambda=1}} \big(n \hat{L}(D,\bmtheta),L(\bmtheta)\big)\,.
\end{equation}
Where $\mathrm{COV}(X,Y)$ denotes the covariance of $X$ and $Y$. Again, due to the page limit, we postpone the necessary proofs in this section to Appendix \ref{app:sec:data-augmentation}. 

With this extended definition, if Equation~\ref{eq:Gibbs_cpe} is negative, we can infer the presence of CPE for the Gibbs loss as well. Based on this, we say that DA induces a stronger CPE if the derivative of the expected Gibbs loss for the DA-tempered posterior exhibits a more negative trend at $\lambda=1$, i.e., if $\frac{d}{d\lambda} G(p_\lambda^{\text{\tiny{DA}}})_{|\lambda=1}<\frac{d}{d\lambda} G(p_{\lambda})^{|\lambda=1}$.This condition can be equivalently stated as  
\begin{equation}\label{eq:da:cpe:gibbs}
    \mathrm{COV}_{p_{\lambda=1}^{\text{\tiny{DA}}}}\big(n\hat{L}_{\text{\tiny{DA}}}(D,\bmtheta), L(\bmtheta)\big)>\mathrm{COV}_{p_{\lambda=1}}\big(n\hat{L}(D,\bmtheta), L(\bmtheta)\big)>0\,.    
\end{equation}
The inequality presented above helps characterize and understand the occurrence of a stronger CPE when using DA. A stronger CPE arises if the expected Gibbs loss of a model $L(\bmtheta)$ is more \textit{correlated} with the empirical Gibbs loss of this model on the augmented training dataset $\hat{L}_{\text{\tiny{DA}}}(D,\bmtheta)$ than on the non-augmented dataset $\hat{L}(D,\bmtheta)$. This observation suggests that, if we empirically observe that the CPE is stronger when using an augmented dataset, the set of transformations $\daset$ used to generate the augmented dataset are introducing \textit{valuable information} about the data-generating process.

\begin{figure}[h]
  \begin{subfigure}{.32\linewidth}
    \includegraphics[width=\linewidth]{./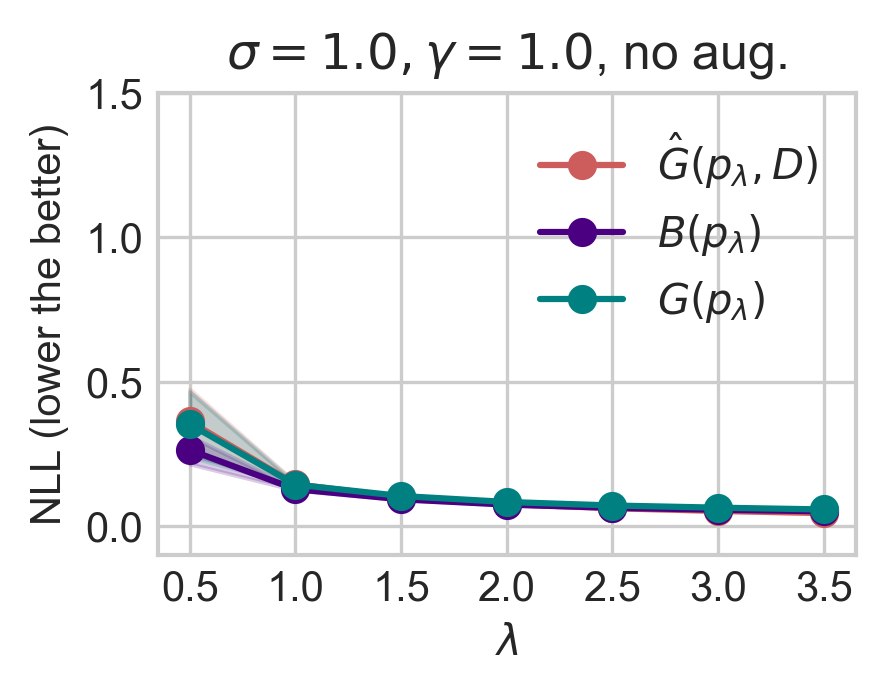}
      \captionsetup{format=hang, justification=centering}
    \caption{Standard prior and standard softmax}
    \label{fig:CPE:StandardPriorSmallNoAug}
  \end{subfigure}
  \hfill
  \begin{subfigure}{.32\linewidth}
    \includegraphics[width=\linewidth]{./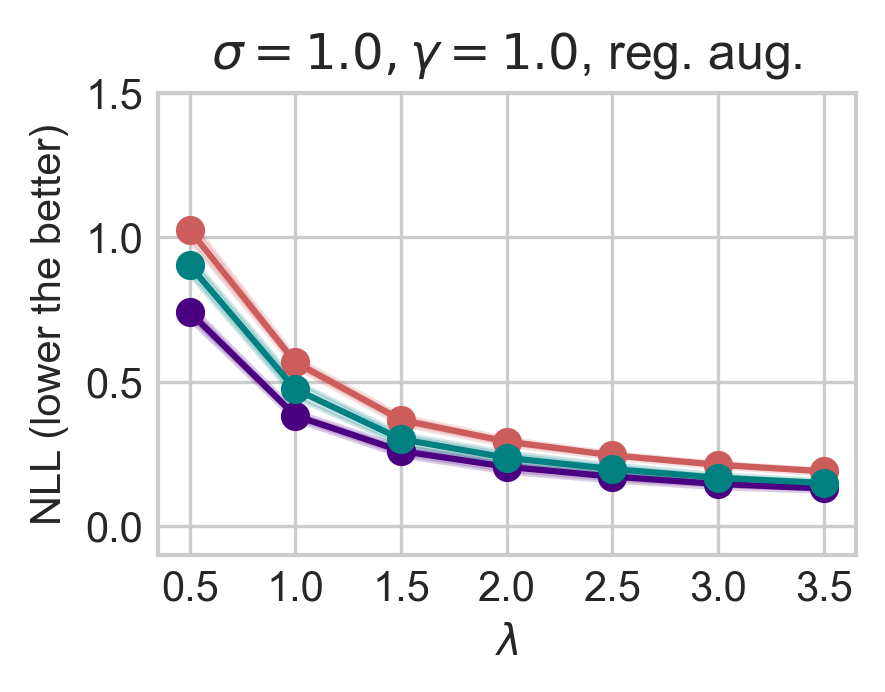}
      \captionsetup{format=hang, justification=centering}
    \caption{Random crop and horizontal flip}
    \label{fig:DA:CPE:augSmall}
  \end{subfigure}
  \hfill
  \begin{subfigure}{.32\linewidth}
    \includegraphics[width=\linewidth]{./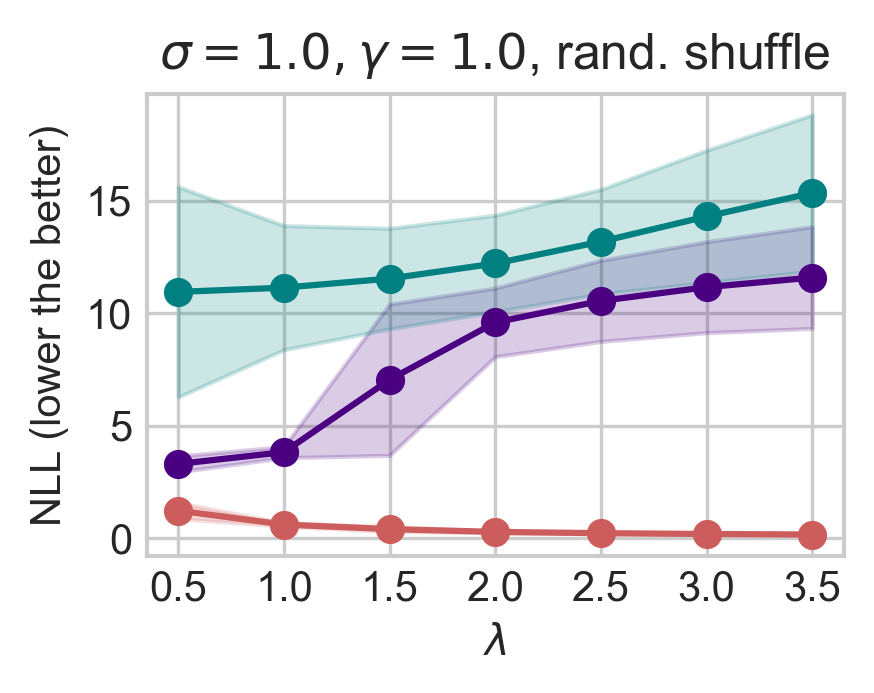}
      \captionsetup{format=hang, justification=centering}
    \caption{Image pixels randomly shuffled}
    \label{fig:DA:CPE:permSmall}
  \end{subfigure}
    \caption{\textbf{Experimental illustrations for the arguments in Section \ref{sec:data-augmentation} using small CNN via SGLD on MNIST. We show similar results on Fashion-MNIST with small CNN and CIFAR-10(0) with ResNet-18 in Appendix \ref{app:sec:experiment-approx}}. Figures \ref{fig:CPE:StandardPriorSmallNoAug} to \ref{fig:DA:CPE:permSmall} illustrate the arguments in Section~\ref{sec:data-augmentation}. Figure \ref{fig:CPE:StandardPriorSmallNoAug} uses the standard prior ($\sigma=1$) and the standard softmax ($\gamma=1$) for the likelihood without applying DA. Figure \ref{fig:DA:CPE:augSmall} follows the setup as in Figure \ref{fig:CPE:StandardPriorSmallNoAug} but with standard DA applied, while Figure \ref{fig:DA:CPE:permSmall} uses fabricated DA. We report the training loss $\hat G(p_\lambda,D)$ and the testing losses $B(p_\lambda)$ and $G(p_\lambda)$ from 10 samples of the small Convolutional neural network (CNN) via Stochastic Gradient Langevin Dynamics (SGLD). We show the mean and standard error across three different seeds. For additional experimental details, please refer to Appendix \ref{app:sec:experiment-approx}. 
    }
    \label{fig:CPE:smallAug}
\end{figure}

\begin{figure}[h]
      \begin{subfigure}{.32\linewidth}
      \includegraphics[width=\linewidth]{./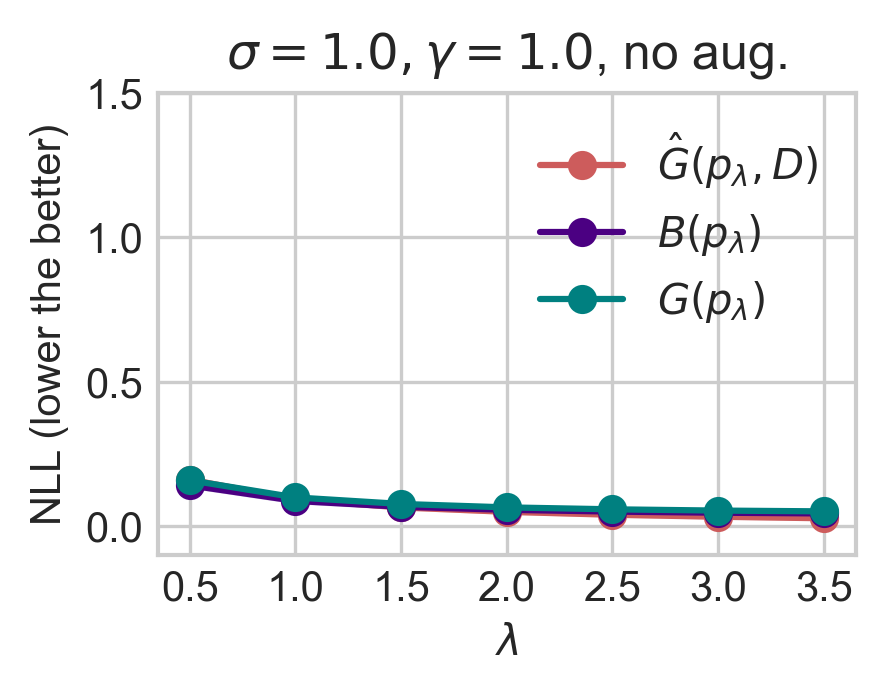}
      \captionsetup{format=hang, justification=centering}
      \caption{Standard prior and standard softmax}
      \label{fig:CPE:StandardPriorLargeNoAug}
    \end{subfigure}
    \begin{subfigure}{.32\linewidth}
      \includegraphics[width=\linewidth]{./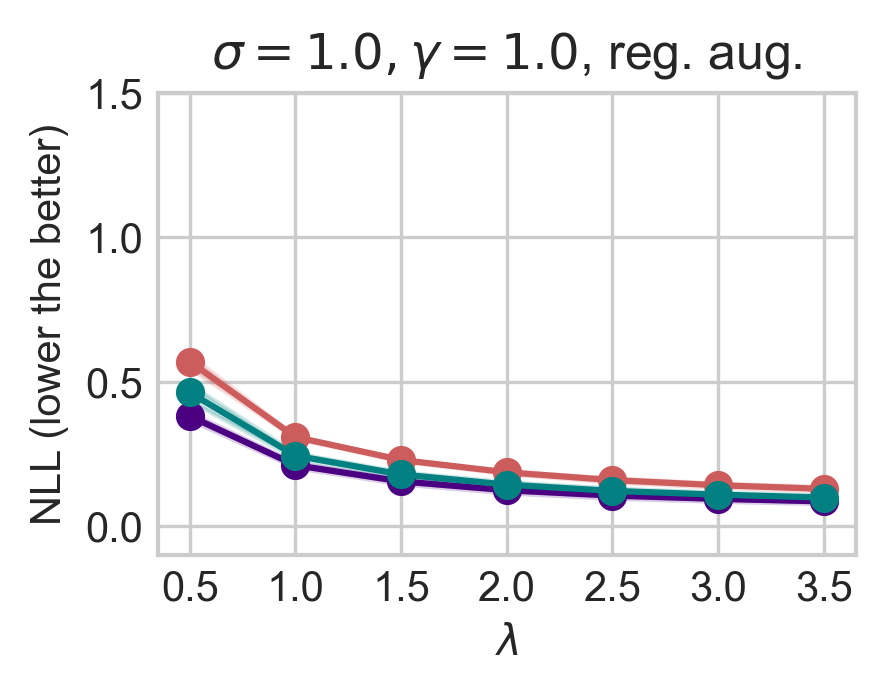}
      \captionsetup{format=hang, justification=centering}
      \caption{Random crop and horizontal flip}
      \label{fig:DA:CPE:augLarge}
    \end{subfigure}
    \begin{subfigure}{.32\linewidth}
      \includegraphics[width=\linewidth]{./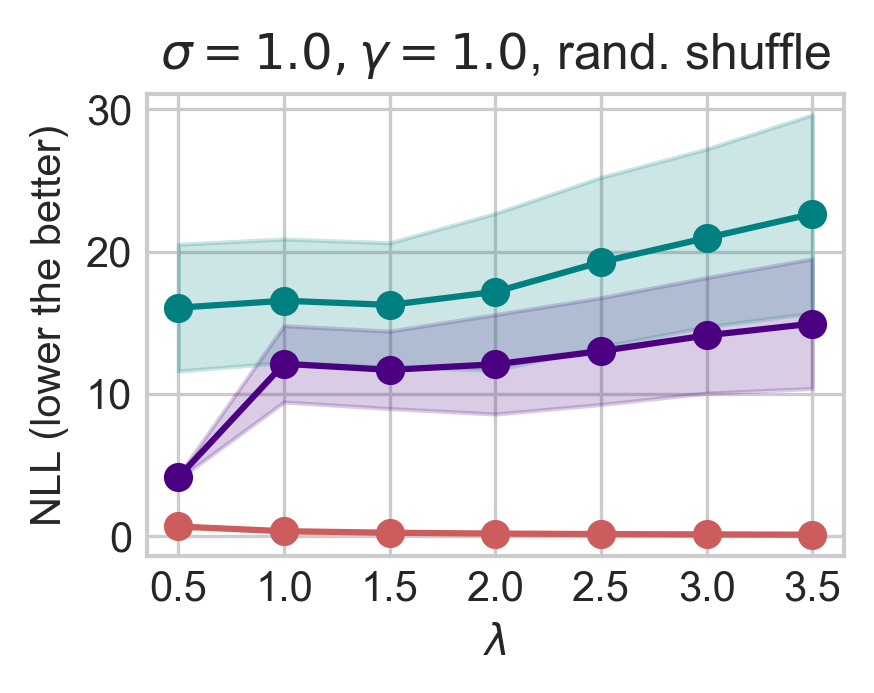}
      \captionsetup{format=hang, justification=centering}
      \caption{Image pixels randomly shuffled}
      \label{fig:DA:CPE:permLarge}
    \end{subfigure}
      \caption{\textbf{Experimental illustrations for the arguments in Section \ref{sec:data-augmentation} using large CNN via SGLD on MNIST. We show similar results on Fashion-MNIST with large CNN and CIFAR-10(0) with ResNet-50 in Appendix \ref{app:sec:experiment-approx}.} The experiment setup is similar to the setups in Figure~\ref{fig:CPE:smallAug} but with a large CNN. Please refer to Appendix \ref{app:sec:experiment-approx} for further details on the model. 
      }
      \label{fig:CPE:largeAug}
  \end{figure}

Figure \ref{fig:CPE:smallAug} clearly illustrates such situations. Figure \ref{fig:DA:CPE:augSmall} shows that, compared to Figure \ref{fig:CPE:StandardPriorSmallNoAug}, the standard DA, which makes use of the invariances inherent in the data-generating distribution, induces a CPE on the Gibbs loss. Thus, the condition in Equation~\ref{eq:da:cpe:gibbs} holds by definition. On the other hand, Figure \ref{fig:DA:CPE:permSmall} uses a fabricated DA, where the same permutation is applied to the pixels of the images in the training dataset, which destroys low-level features present in the data-generating distribution. In this case, the derivative of the Gibb loss is positive, and Equation~\ref{eq:da:cpe:gibbs} holds in the opposite direction. These findings align perfectly with the explanations provided above, showing that DA induces a stronger underfitting.

\subsection{Data augmentation and CPE on the Bayes loss} \label{sec:6.2}

Now, we step aside of the Gibbs loss and focus back to the Bayes loss. The derivative of the Bayes loss at \(\lambda=1\) can also be written as, 
\begin{equation}\label{eq:likelihoodtempering:bayesgradient2}
    \frac{d}{d\lambda} B(p_{\lambda})_{|\lambda=1} = -  \mathrm{COV}_{p_{\lambda=1}}\left( n\hat{L}(D,\bmtheta), S_{p_{\lambda=1}}(\bmtheta) \right)\,,
\end{equation}
where for any posterior $\rho$, \(S_{\rho}(\bmtheta)\) is a (negative) performance measure defined as
\begin{equation}
    S_\rho(\bmtheta) = -\E_{\nu}\left[\frac{ p(\bmy|\bmx,\bmtheta)}{\E_{\rho}[p(\bmy|\bmx,\bmtheta)]} \right]\,.
\end{equation}
This function measures the relative performance of a model parameterized by $\bmtheta$ compared to the average performance of the models weighted by $\rho$. Such measure is conducted on samples from the data-generating distribution $\nu(\bmy,\bmx)$. Specifically, if the model $\bmtheta$ outperforms the average, we have \(S_\rho(\bmtheta) < -1\), and if the model performs worse than the average, we have \(S_p(\bmtheta) > -1\) (i.e., the lower the better). The derivations of the above equations are given in Appendix \ref{app:sec:data-augmentation}.

According to Definition \ref{def:likelihoodtempering:CPE} and Equation~\ref{eq:likelihoodtempering:bayesgradient2}, DA will induce a stronger CPE if and only if the following condition is satisfied: 
\begin{equation}\label{eq:likelihoodtempering:bayesgradient3} 
    \mathrm{COV}_{p_{\lambda=1}^{\text{\tiny{DA}}}}\left(n \hat{L}_{\text{\tiny{DA}}}(D,\bmtheta), S_{p_{\lambda=1}^{\text{\tiny{DA}}}}(\bmtheta) \right) > \mathrm{COV}_{p_{\lambda=1}}\left(n \hat{L}(D,\bmtheta), S_{p_{\lambda=1}}(\bmtheta) \right)\,.
\end{equation}

The previous analysis on the Gibbs loss remains applicable in this context, with the use of \(S_\rho(\bmtheta)\) as a metric for the expected performance on the true data-generating distribution instead of \(L(\bmtheta)\). While these metrics are slightly different, it is reasonable to assume that the same arguments we presented to explain the CPE under data augmentation for the Gibbs loss also apply here. The theoretical analysis aligns with the behavior of the Bayes loss as depicted in Figure \ref{fig:CPE:smallAug}.

Finally, comparing Figure \ref{fig:DA:CPE:augSmall}  with Figure \ref{fig:DA:CPE:augLarge}, we also notice that using a larger neural network enables us to mitigate the CPE because we reduce the underfitting introduced by DA.

\subsection{Related work of the data augmentation argument} 
The relation between data augmentation and CPE is an active topic of discussion \citep{WRVS+20,IVHW21,NRBN+21,NGGA+22}. Some studies suggest that CPE is an artifact of DA because turning off data augmentation is enough to eliminate the CPE \citep{IVHW21,FGAO+22}. Our study shows that this is \textit{much more} than an artifact, as also argued in \cite{NGGA+22}. As discussed, the (pseudo) log-likelihood induced by standard DA is a better proxy of the expected log-loss, as precisely defined by Equation~\ref{eq:da:cpe:gibbs} and Equation~\ref{eq:likelihoodtempering:bayesgradient3}. 

Some argue that when using DA, we are not using a proper likelihood function \citep{IVHW21}, which could be a problem. Recent works \citep{NGGA+22} have developed principle likelihood functions that integrate DA-based approaches, hoping to remove CPE. However, they find that CPE still persists. Another widely accepted viewpoint regarding the interplay between the CPE and DA is that DA increases the effective sample size \citep{IVHW21,NRBN+21}: ``intuitively, data augmentation increases the amount of data observed by the model, and should lead to higher posterior contraction'' \citep{IVHW21}.

Our analysis provides a more nuance understanding of this interplay between CPE and DA. First, we show that, when the augmented data  
provide extra information about the data-generating process, there is a stronger CPE, as shown in Equations \ref{eq:da:cpe:gibbs} and \ref{eq:likelihoodtempering:bayesgradient3}. This, in turn, leads to higher posterior concentration. But, we also show that higher posterior concentration in the context of non-meaningful DA does not improve performance; as discussed before, Figure \ref{fig:DA:CPE:permSmall} illustrates this situation. Using the analysis given in Section \ref{sec:Bayesian}, we can also add that tempering the posterior under DA is again a way to define alternative Bayesian posteriors that addresses this stronger underfitting, i.e., they better fit the training data and improve generalization.

\section{Conclusions and limitations}

Our research contributes to understanding the cold posterior effect (CPE) and its implications for Bayesian deep learning in several ways. Firstly, we theoretically demonstrate that the presence of the CPE implies that the Bayesian posterior is underfitting. Secondly, by building on \cite{zeno2021why}, we show that, in general, any tempered posterior can be considered as a proper Bayesian posterior with an alternative likelihood and prior distribution jointly parametrized by $T$, beyond merely the case of classification. Hence, fine-tuning the temperature parameter $T$ serves as an effective and theoretically sound approach to addressing the underfitting of the Bayesian posterior. Furthermore, we comprehensively discuss the interplay between several factors and CPE, including the use of approximate versus exact inference, model misspecification, and the size of the model and samples. Finally, our analysis in Section \ref{sec:data-augmentation} reveals that data augmentation exacerbates the CPE by intensifying underfitting. This occurs because augmented data provides richer and more reliable information, enhancing the capacity for fitting.

Overall, our theoretical analysis underscores the significance of the CPE as an indicator of underfitting within the Bayesian framework and promotes the fine-tuning of the temperature $T$ in tempered posteriors as a principled approach to mitigate this issue. Furthermore, by dissecting the nature of the CPE and its effect on the Bayesian principle, our work aims to resolve ongoing debates and clarify the role of cold posteriors in enhancing the predictive performance of Bayesian deep learning models.

As a limitation of this work, we want to highlight that the characterization of CPE proposed here is defined only as the local change of Bayes loss at \(\lambda = 1\). This approach does not account for scenarios where significant decreases in Bayes loss at other \(\lambda\) values might also indicate the presence of CPE. We believe that our theoretical analysis could be expanded to include these cases as well.

\section{Appendix}
\subsection{Proofs for Section~\ref{sec:CPE}}\label{app:sec:cpe-underfitting}
In this section, we provide the proofs for Section \ref{sec:CPE} in the following order. We first prove the derivative of the empirical Gibbs loss in Proposition \ref{prop:likelihoodtempering:empiricalGibbsLoss}. Then, we show in Proposition \ref{prop:constant_variance} that for meaningful posteriors (depends on training data), the derivative won't be zero. Before proving Proposition \ref{thm:likelihoodtempering:CPE:Neccesary} and Theorem \ref{cor:likelihoodtempering:bayesposterior_optimality}, we first provide Proposition \ref{thm:likelihoodtempering:bayesgradient}, stating an alternative expression of the derivative of the Bayes loss. The proofs of Proposition \ref{thm:likelihoodtempering:CPE:Neccesary} and Theorem \ref{cor:likelihoodtempering:bayesposterior_optimality} then follow from that.

\subsubsection{Proof of Proposition~\ref{prop:likelihoodtempering:empiricalGibbsLoss}}
We first show a slightly more general result of $\frac{d}{d\lambda}\E_{p_\lambda}[f(\bmtheta)]$ for any function $f(\bmtheta)$ that is independent of $\lambda$. Recall that the posterior $p_\lambda(\bmtheta|D) \propto p(D|\bmtheta)^\lambda p(\bmtheta)$. With the fact that $\frac{d}{d\lambda} \left(p(D|\bmtheta)^\lambda p(\bmtheta)\right) =  \ln (p(D|\bmtheta))p(D|\bmtheta)^\lambda p(\bmtheta)$,
the derivative    
\begin{equation}\label{eq:grad-gibbs-f-likelihood}
    \frac{d}{d\lambda} \E_{p_\lambda}[f(\bmtheta)]=\E_{p_\lambda}[ \ln p(D|\bmtheta)f(\bmtheta)]-\E_{p_\lambda}[ \ln p(D|\bmtheta)]\E_{p_\lambda}[f(\bmtheta)]=\mathrm{COV}_{p_\lambda}( \ln p(D|\bmtheta),f(\bmtheta))\,,
\end{equation}
where we denote $\mathrm{COV}(X,Y)$ as the covariance of $X$ and $Y$. Hence, the derivative of the empirical Gibbs loss
\begin{align*}
    \frac{d}{d\lambda} \hat G(p_\lambda,D)
    &=\frac{d}{d\lambda} \E_{p_\lambda}[-\ln p(D|\bmtheta)]\\
    &=\mathrm{COV}_{p_\lambda}(\ln p(D|\bmtheta),
    -\ln p(D|\bmtheta))\\
    &=-\mathbb{V}_{p_\lambda}(\ln p(D|\bmtheta))\,.
\end{align*}

\subsubsection{Proposition \ref{prop:constant_variance}}
\begin{proposition}\label{prop:constant_variance}
    For any \(\lambda > 0\) and \(D \neq \emptyset\), if the tempered posterior \(p_\lambda(\bmtheta|D) \propto p(D|\bmtheta)^\lambda p(\bmtheta)\) satisfies \(\mathbb{V}_{p_\lambda}(\ln P(D|\bmtheta)) = 0\), then, \(p_\lambda(\bmtheta|D) = p(\bmtheta)\).
\end{proposition}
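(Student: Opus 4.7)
The plan is to unpack the zero-variance hypothesis and then substitute back into the definition of the tempered posterior. Since $\mathbb{V}_{p_\lambda}(\ln p(D|\bmtheta)) = 0$, the random variable $\ln p(D|\bmtheta)$ must be $p_\lambda$-almost surely equal to its mean; write this constant as $\ln c$, so that $p(D|\bmtheta) = c$ for $p_\lambda$-almost every $\bmtheta$. Note $c>0$ because $p_\lambda$ is a proper distribution (the normalizing constant in the definition of $p_\lambda$ is finite and positive).

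Next, I would substitute this into the defining proportionality $p_\lambda(\bmtheta|D) \propto p(D|\bmtheta)^\lambda p(\bmtheta)$. On the support of $p_\lambda$ (which by definition is contained in the support of $p$), this becomes $p_\lambda(\bmtheta|D) \propto c^\lambda p(\bmtheta) \propto p(\bmtheta)$. The constant $c^\lambda$ gets absorbed into the normalizer, so on $\mathrm{supp}(p_\lambda)$ we have $p_\lambda(\bmtheta|D) = p(\bmtheta) / Z'$ where $Z' = \int_{\mathrm{supp}(p_\lambda)} p(\bmtheta)\,d\bmtheta$. Requiring $p_\lambda$ to integrate to one forces $Z'=1$, which means $\mathrm{supp}(p_\lambda)$ coincides with $\mathrm{supp}(p)$ up to a $p$-null set, so $p_\lambda(\bmtheta|D) = p(\bmtheta)$ as probability distributions.

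The main obstacle I anticipate is the support/measure-theoretic bookkeeping, i.e.\ ruling out pathological cases where $p(\bmtheta)$ places positive mass on a set where $p(D|\bmtheta)$ differs from the constant $c$ (potentially even vanishing). The resolution is that the $p_\lambda$-almost-sure statement combined with the fact that $p_\lambda$ and $p$ share the same support whenever $p(D|\bmtheta)>0$ on $\mathrm{supp}(p)$ (a standard regularity assumption implicit in the tempered posterior being well defined) lets us upgrade ``$p_\lambda$-a.s.'' to ``$p$-a.s.'' The rest of the argument is then essentially algebraic cancellation.
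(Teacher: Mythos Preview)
Your proposal is correct and follows essentially the same approach as the paper: deduce from zero variance that $\ln p(D|\bmtheta)$ is constant on $\mathrm{supp}(p_\lambda)$, then substitute back and cancel the constant in the normalizer to obtain $p_\lambda=p$. If anything, you are more careful than the paper about the support/measure-theoretic bookkeeping; the paper simply writes $p_\lambda(\bmtheta|D)=\frac{e^{c\lambda}p(\bmtheta)}{\int_{\bmtheta}e^{c\lambda}p(\bmtheta)}$ with the integral taken over all $\bmtheta$, implicitly assuming the constant holds everywhere rather than just $p_\lambda$-a.s.
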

\begin{proof}
    First of all, note that the tempered posterior is defined as
    \begin{equation*}
        p_\lambda(\bmtheta|D) = \frac{p(D|\bmtheta)^\lambda p(\bmtheta)}{\int_\bmtheta p(D|\bmtheta)^\lambda p(\bmtheta)}\,.
    \end{equation*}
    Then,
    \begin{equation*}
        \mathbb{V}_{p_\lambda}(\ln p(D|\bmtheta)) = 0  \implies \int_{\bmtheta}p_\lambda(\bmtheta|D) \left(\ln p(D|\bmtheta) - \E_{p_\lambda}[\ln p(D|\bmtheta)]\right)^2  = 0   
    \end{equation*}
    Thus, for any \(\bmtheta \in \text{supp}(p_\lambda)\), it verifies that
    \begin{equation*}
        \ln p(D|\bmtheta) = \E_{p_\lambda}[\ln p(D|\bmtheta)]\,.
    \end{equation*}
    That is, \(\ln p(D|\bmtheta)\) is constant in the support of \(p_\lambda\). Let \(c\) denote such constant, then
    \begin{equation*}
        p_\lambda(\bmtheta|D) = \frac{e^{c\lambda} p(\bmtheta)}{\int_\bmtheta e^{c\lambda} p(\bmtheta)} =  \frac{e^{c\lambda} p(\bmtheta)}{e^{c\lambda} \int_\bmtheta  p(\bmtheta)} = p(\bmtheta)\,.
    \end{equation*}
\end{proof}

\subsubsection{Proof of Proposition \ref{thm:likelihoodtempering:CPE:Neccesary} and Theorem \ref{cor:likelihoodtempering:bayesposterior_optimality}}
In order to prove Proposition \ref{thm:likelihoodtempering:CPE:Neccesary} and Theorem \ref{cor:likelihoodtempering:bayesposterior_optimality}, we first show in Proposition \ref{thm:likelihoodtempering:bayesgradient} that the derivative of the Bayes loss of the tempered posterior $p_\lambda$ can be expressed by the difference between the empirical Gibbs loss of $\bar p_\lambda$ and the empirical Gibbs loss of $p_\lambda$.

\begin{proposition}\label{thm:likelihoodtempering:bayesgradient}
The  derivative of the Bayes loss of the tempered posterior $p_\lambda$ can be expressed by 
\begin{equation}\label{eq:likelihoodtempering:bayesgradient}
\frac{d}{d\lambda} B(p_\lambda)  = \hat{G}(\bar{p}_\lambda,D) - \hat{G}(p_\lambda,D)\,.
\end{equation}
\end{proposition}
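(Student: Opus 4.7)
The plan is to differentiate $B(p_\lambda)$ directly and reorganize the resulting expression so that the defining property of $\bar p_\lambda$ collapses it into a difference of expected Gibbs losses. First, I would interchange $\tfrac{d}{d\lambda}$ with $\E_\nu$ (justified under mild dominating-integrability conditions) and apply the chain rule to the outer logarithm in $B(p_\lambda) = \E_\nu[-\ln \E_{p_\lambda}[p(\bmy|\bmx,\bmtheta)]]$, obtaining
\begin{equation*}
\frac{d}{d\lambda} B(p_\lambda) \;=\; -\,\E_\nu\!\left[\frac{1}{\E_{p_\lambda}[p(\bmy|\bmx,\bmtheta)]}\,\frac{d}{d\lambda}\E_{p_\lambda}[p(\bmy|\bmx,\bmtheta)]\right].
\end{equation*}
Invoking the identity $\tfrac{d}{d\lambda}\E_{p_\lambda}[f(\bmtheta)] = \mathrm{COV}_{p_\lambda}(\ln p(D|\bmtheta), f(\bmtheta))$ already derived in the proof of Proposition~\ref{prop:likelihoodtempering:empiricalGibbsLoss} (cf.\ Equation~\ref{eq:grad-gibbs-f-likelihood}), I take $f(\bmtheta) = p(\bmy|\bmx,\bmtheta)$ and expand the covariance into two terms.

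The key structural observation is that
\begin{equation*}
\frac{\E_{p_\lambda}[\ln p(D|\bmtheta)\,p(\bmy|\bmx,\bmtheta)]}{\E_{p_\lambda}[p(\bmy|\bmx,\bmtheta)]} \;=\; \E_{\tilde p_\lambda(\cdot|D,(\bmy,\bmx))}[\ln p(D|\bmtheta)],
\end{equation*}
which follows immediately from the definition $\tilde p_\lambda(\bmtheta|D,(\bmy,\bmx)) \propto p(\bmy|\bmx,\bmtheta)\,p_\lambda(\bmtheta|D)$. The other term in the covariance expansion, $\E_{p_\lambda}[\ln p(D|\bmtheta)]$, is independent of $(\bmy,\bmx)$ and so the outer $\E_\nu$ leaves it unchanged. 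Applying Fubini's theorem to push $\E_\nu$ inside the integral over $\bmtheta$, and then using the definition $\bar p_\lambda(\bmtheta|D) = \E_\nu[\tilde p_\lambda(\bmtheta|D,(\bmy,\bmx))]$, converts $\E_\nu[\E_{\tilde p_\lambda}[\ln p(D|\bmtheta)]]$ into $\E_{\bar p_\lambda}[\ln p(D|\bmtheta)]$. Collecting the two surviving terms and translating $-\E_\rho[\ln p(D|\bmtheta)]$ back into the empirical Gibbs loss $\hat G(\rho,D)$ (with the same normalization convention used throughout the paper) yields the claimed identity.

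The main obstacle is not conceptual but rather verifying the interchanges of differentiation, integration over $\bmtheta$, and the expectation $\E_\nu$. In the classification setting this is immediate because $p(\bmy|\bmx,\bmtheta)\in[0,1]$ ensures boundedness; in the regression setting one needs a dominating bound on $|\ln p(D|\bmtheta)|\,p(\bmy|\bmx,\bmtheta)$ that is integrable under the prior, which typically holds under standard Gaussian-type tail assumptions on the likelihood. The genuine insight doing the real work is simply that a Bayes update with a single additional sample $(\bmy,\bmx)$ produces $\tilde p_\lambda$, and averaging that update over $\nu$ produces $\bar p_\lambda$, so a difference of Gibbs losses appears naturally from the two terms of the covariance expansion.
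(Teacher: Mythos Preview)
Your proposal is correct and follows essentially the same route as the paper's proof: differentiate under $\E_\nu$, apply the chain rule to the logarithm, use the covariance identity from Equation~\ref{eq:grad-gibbs-f-likelihood} with $f(\bmtheta)=p(\bmy|\bmx,\bmtheta)$, recognize the ratio as $\E_{\tilde p_\lambda}[\ln p(D|\bmtheta)]$, and then apply Fubini to collapse $\E_\nu\E_{\tilde p_\lambda}$ into $\E_{\bar p_\lambda}$. The only addition you make is an explicit discussion of the regularity conditions justifying the interchanges, which the paper leaves implicit.
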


\begin{proof}
By definition,
\begin{equation*}
    \frac{d}{d\lambda} B(p_\lambda) = \frac{d}{d\lambda} \E_\nu[{-\ln\E_{p_\lambda}[p(\bmy|\bmx,\bmtheta)] }] = -\E_\nu\left[{\frac{d}{d\lambda} \ln\E_{p_\lambda}[p(\bmy|\bmx,\bmtheta)]}\right],
\end{equation*}
where
\begin{equation*}
    \frac{d}{d\lambda} \ln\E_{p_\lambda}[p(\bmy|\bmx,\bmtheta) ] = \frac{\frac{d}{d\lambda} \E_{p_\lambda}[p(\bmy|\bmx,\bmtheta)]}{\E_{p_\lambda}[p(\bmy|\bmx,\bmtheta)]} = \frac{\mathrm{COV}_{p_\lambda}(\ln p(D|\bmtheta),p(\bmy|\bmx,\bmtheta)) }{\E_{p_\lambda}[p(\bmy|\bmx,\bmtheta)]}
\end{equation*}
due to Equation~\ref{eq:grad-gibbs-f-likelihood}. By expanding the covariance, the above formula further equals to
\begin{align*}
    &\frac{\E_{p_\lambda}[\ln p(D|\bmtheta)p(\bmy|\bmx,\bmtheta)]-\E_{p_\lambda}[\ln p(D|\bmtheta)]\E_{p_\lambda}[p(\bmy|\bmx,\bmtheta)]}{\E_{p_\lambda}[p(\bmy|\bmx,\bmtheta)]} \\
    &= \E_{\tilde p_\lambda}[\ln p(D|\bmtheta)] - \E_{p_\lambda}[\ln p(D|\bmtheta)]\,,
\end{align*}
where the probability distribution $\tilde p_\lambda(\bmtheta|D,(\bmy,\bmx))\propto p_\lambda(\bmtheta|D) p(\bmy|\bmx,\bmtheta)$. Put everything together, we have
\begin{equation}\label{eq:grad-bayes}
    \frac{d}{d\lambda} B(p_\lambda)=\E_{p_\lambda}[\ln p(D|\bmtheta)]-\E_\nu[\E_{\tilde p_\lambda}[\ln p(D|\bmtheta)]] = \E_{p_\lambda}[\ln p(D|\bmtheta)]-\E_{\bar p_\lambda}[\ln p(D|\bmtheta)]\,,
\end{equation}
where
\[\bar p_\lambda(\bmtheta|D)=\E_\nu [\tilde p_\lambda(\bmtheta|D,(\bmy,\bmx))]=\E_\nu\left[\frac{ p_\lambda(\bmtheta|D)p(\bmy|\bmx,\bmtheta)}{\E_{p_\lambda}[p(\bmy|\bmx,\bmtheta)]}\right].\] 
The last equality is because
\begin{align*}
    \E_\nu[\E_{\tilde p_\lambda}[\ln p(D|\bmtheta)]] &= \int_{(\bmy,\bmx)}\nu(\bmy,\bmx)\int_{\bmtheta} \tilde p_\lambda(\bmtheta|D,(\bmy,\bmx)) \ln p(D|\bmtheta)\, d\bmtheta \, d(\bmy,\bmx)\\
    &=\int_\bmtheta \int_{(\bmy,\bmx)}\nu(\bmy,\bmx)\tilde p_\lambda(\bmtheta|D,(\bmy,\bmx))\, d(\bmy,\bmx) \ln p(D|\bmtheta)\, d\bmtheta\\
    &=\int_\bmtheta \E_\nu [\tilde p_\lambda(\bmtheta|D,(\bmy,\bmx))] \ln p(D|\bmtheta)\, d\bmtheta\\
    &=\E_{\bar p_\lambda}[\ln p(D|\bmtheta)]\,.
\end{align*}
The last expression in Equation \ref{eq:grad-bayes} further equals to $\hat G(\bar p_\lambda,D) - \hat G(p_\lambda,D)$ by definition.
\end{proof}

\paragraph{Proof of Proposition \ref{thm:likelihoodtempering:CPE:Neccesary}}

Note that for any distribution $\rho$, we have  $\hat{G}(\rho,D):=\E_\rho{-\ln p(D|\bmtheta)}\geq \min_\bmtheta -\ln p(D|\bmtheta)$. On the other hand, Proposition \ref{thm:likelihoodtempering:bayesgradient} together with Definition \ref{def:likelihoodtempering:CPE} give that the CPE takes place if and only if
\[
\frac{d}{d\lambda} B(p_\lambda)_{|\lambda=1} =  \hat G(\bar p_{\lambda=1},D) - \hat G(p_{\lambda=1},D) <0\,.
\] 
Therefore, it is not possible to have $\hat{G}(p_{\lambda=1},D)\not> \min_\bmtheta -\ln p(D|\bmtheta)$ and, at the same time, $\hat{G}(\bar{p}^{\lambda=1},D) < \hat{G}(p_{\lambda=1},D)$ because  $\hat{G}(\bar{p}^{\lambda=1},D)\geq \min_\bmtheta -\ln p(D|\bmtheta)$.

\paragraph{Proof of Theorem \ref{cor:likelihoodtempering:bayesposterior_optimality}}
It's easy to see from Proposition \ref{thm:likelihoodtempering:bayesgradient} that
\[
\frac{d}{d\lambda} B(p_\lambda)_{|\lambda=1} =  \hat G(\bar p_{\lambda=1},D) - \hat G(p_{\lambda=1},D) =0
\]
if and only if $\hat G(\bar p_{\lambda=1},D) = \hat G(p_{\lambda=1},D)$.

\subsection{Proofs for Section~\ref{sec:Bayesian}}\label{app:sec:bayesian}

\subsubsection{Proof of Proposition~\ref{prop:true_posterior}}\label{app:prop:true_posterior}

    First of all, by the definition in Equation \ref{eq:likelihoodTempering}, and assuming a data-independent prior $p(\bmtheta|\bmX)=p(\bmtheta)$, the tempered posterior is given by
    \begin{equation*}
    p_\lambda(\bmtheta|\bmX,\bmY) \propto p(\bmY|\bmX,\bmtheta)^\lambda p(\bmtheta)\,,
    \end{equation*}
    where the tempered likelihood fully factorizes as $p(\bmY|\bmX,\bmtheta)^\lambda=\prod_{(\bmy,\bmx)\in (\bmY,\bmX)} p(\bmy|\bmx,\bmtheta)^\lambda$. Let a similar but $\bmy$-independent function $k(\bmtheta,\bmX,\lambda)=\prod_{\bmx\in\bmX}\int p(\bmy|\bmx,\bmtheta)^\lambda \,d\bmy$.
    Therefore, $p(\bmY|\bmX,\bmtheta)^\lambda p(\bmtheta)=\frac{p(\bmY|\bmX,\bmtheta)^\lambda}{k(\bmtheta,\bmX,\lambda)}  \left(k(\bmtheta,\bmX,\lambda) p(\bmtheta)\right)$
    , where we can let the new prior
    \begin{equation*}
        q(\bmtheta|\bmX,\lambda) \propto p(\bmtheta) k(\bmtheta,\bmX,\lambda)= p(\bmtheta)\prod_{\bmx\in\bmX} \int p(\bmy|\bmx,\bmtheta)^\lambda \,d \bmy\,,
    \end{equation*}
    and the new posterior 
    \begin{equation*}
        q(\bmY|\bmX,\bmtheta,\lambda)=\frac{p(\bmY|\bmX,\bmtheta)^\lambda}{k(\bmtheta,\bmX,\lambda)} = \frac{\prod_{(\bmy,\bmx)\in(\bmY,\bmX)}p(\bmy|\bmx,\bmtheta)^\lambda}{\prod_{\bmx\in\bmX}\int p(\bmy|\bmx,\bmtheta)^\lambda \,d\bmy}=\prod_{(\bmy,\bmx)\in(\bmY,\bmX)} q(\bmy|\bmx,\bmtheta)\,.
    \end{equation*}

\subsubsection{Proof of Proposition~\ref{prop:entropy}}\label{app:entropyproof}

The proof is made using differential entropy, i.e. assuming continuous target values \(\bm{y}\). The only assumption is that Leibniz integral rule holds for \(q(\bmy|\bmx,\bmtheta,\lambda) \ln q(\bmy|\bmx,\bmtheta,\lambda))\), verifying that
\[
\frac{d}{d \lambda} \int  (q(\bmy|\bmx,\bmtheta,\lambda) \ln q(\bmy|\bmx,\bmtheta,\lambda)) \ d\bmy = \int \frac{d}{d \lambda} (q(\bmy|\bmx,\bmtheta,\lambda) \ln q(\bmy|\bmx,\bmtheta,\lambda)) \ d\bmy\,.
\]
In the case of supervised classification problems, we adopt the Shanon entropy, where equality holds naturally 
\[
\frac{d}{d \lambda} \sum_{\bm{y} \in \mathcal{Y}} (q(\bmy|\bmx,\bmtheta,\lambda) \ln q(\bmy|\bmx,\bmtheta,\lambda)) = \sum_{\bm{y} \in \mathcal{Y}} \frac{d}{d \lambda}  (q(\bmy|\bmx,\bmtheta,\lambda) \ln q(\bmy|\bmx,\bmtheta,\lambda)) \,.
\]
From the definition of differential entropy, we got that
\[
H(q(\bmy|\bmx,\bmtheta,\lambda)) = -\int q(\bmy|\bmx,\bmtheta,\lambda) \ln q(\bmy|\bmx,\bmtheta,\lambda) \ d\bmy\,.
\]
Thus, taking derivative w.r.t. \(\lambda\) and exchanging derivative and integral leads to the following expression
\begin{align*}
    \frac{d}{d \lambda} H(q(\bmy|\bmx,\bmtheta,\lambda)) &= -\int \frac{d}{d \lambda} (q(\bmy|\bmx,\bmtheta,\lambda) \ln q(\bmy|\bmx,\bmtheta,\lambda)) \ d\bmy \\
    &= -\int (\ln q(\bmy|\bmx,\bmtheta,\lambda) + 1)\frac{d}{d \lambda}  q(\bmy|\bmx,\bmtheta,\lambda) \ d\bmy\,.
\end{align*}
Using that \(\int \frac{d}{d \lambda}  q(\bmy|\bmx,\bmtheta,\lambda) d\bmy = \frac{d}{d \lambda}  \int q(\bmy|\bmx,\bmtheta,\lambda)  d\bmy = 0\), simplifies the expression as
\[
\frac{d}{d \lambda} H(q(\bmy|\bmx,\bmtheta,\lambda)) = -\int \ln q(\bmy|\bmx,\bmtheta,\lambda)\frac{d}{d \lambda}  q(\bmy|\bmx,\bmtheta,\lambda) \ d\bmy\,.
\]
Let us consider now the second term inside the integral. Using the derivative of the quotient rule leads to the following:
\begin{align*}
    &\frac{d}{d \lambda}  q(\bmy|\bmx,\bmtheta,\lambda) \\
    &= \frac{d}{d \lambda} \frac{p(\bmy|\bmx, \bmtheta)^\lambda}{\int p(\bmy|\bmx, \bmtheta)^\lambda \ d\bmy} \\
    &= \frac{p(\bmy|\bmx, \bmtheta)^\lambda \ln p(\bmy|\bmx, \bmtheta)}{\int p(\bmy|\bmx, \bmtheta)^\lambda \ d\bmy} - \frac{p(\bmy|\bmx, \bmtheta)^\lambda \int p(\bmy|\bmx, \bmtheta)^\lambda \ln p(\bmy|\bmx, \bmtheta) \ d\bmy}{(\int p(\bmy|\bmx, \bmtheta)^\lambda \ d\bmy)^2}\,.
\end{align*}
Where, using the definition of \(q(\bmy|\bmx, \bmtheta, \lambda)\), we got that
\[
\frac{p(\bmy|\bmx, \bmtheta)^\lambda \ln p(\bmy|\bmx, \bmtheta)}{\int p(\bmy|\bmx, \bmtheta)^\lambda \ d\bmy} = q(\bmy|\bmx, \bmtheta, \lambda)\ln p(\bmy|\bmx, \bmtheta)\,,
\]
and
\begin{align*}
    &\frac{p(\bmy|\bmx, \bmtheta)^\lambda \int p(\bmy|\bmx, \bmtheta)^\lambda \ln p(\bmy|\bmx, \bmtheta) \ d\bmy}{(\int p(\bmy|\bmx, \bmtheta)^\lambda \ d\bmy)^2}\\
    &= q(\bmy|\bmx, \bmtheta, \lambda) \int q(\bmy|\bmx, \bmtheta, \lambda) \ln p(\bmy|\bmx, \bmtheta) \ d\bmy\\
    &= q(\bmy|\bmx, \bmtheta, \lambda) \E_q[\ln p(\bmy|\bmx, \bmtheta)]\,.
\end{align*}
As a result, we got that
\begin{align*}
    &\int \ln q(\bmy|\bmx,\bmtheta,\lambda) \frac{d}{d \lambda}  q(\bmy|\bmx,\bmtheta,\lambda) \ d\bmy \\
    &= \E_{q}[\ln p(\bmy|\bmx, \bmtheta)\ln q(\bmy|\bmx,\bmtheta,\lambda)] - \E_{q}[\ln q(\bmy|\bmx,\bmtheta,\lambda)]\E_{q}[\ln p(\bmy|\bmx, \bmtheta)]
\end{align*}
Using \(q(\bmy|\bmx,\bmtheta,\lambda)\) definition again:
\begin{align*}
    &\int \ln q(\bmy|\bmx,\bmtheta,\lambda) \frac{d}{d \lambda}  q(\bmy|\bmx,\bmtheta,\lambda) \ d\bmy \\
    &= \E_{q}[\ln p(\bmy|\bmx, \bmtheta)\ln \frac{p(\bmy|\bmx,\bmtheta)^\lambda}{\int p(\bmy|\bmx,\bmtheta)^\lambda}]  - \E_{q}[\ln\frac{p(\bmy|\bmx,\bmtheta)^\lambda}{\int p(\bmy|\bmx,\bmtheta)^\lambda}]\E_{q}[\ln p(\bmy|\bmx, \bmtheta)]
\end{align*}
Where, expanding the logarithms the denominators cancel each other, leading to
\begin{align*}
    &\int \ln q(\bmy|\bmx,\bmtheta,\lambda) \frac{d}{d \lambda}  q(\bmy|\bmx,\bmtheta,\lambda) \ d\bmy \\
    & = \E_{q}[\ln p(\bmy|\bmx, \bmtheta)\ln p(\bmy|\bmx,\bmtheta)^\lambda]  - \E_{q}[\ln p(\bmy|\bmx,\bmtheta)^\lambda]\E_{q}[\ln p(\bmy|\bmx, \bmtheta)]\\
    & = \lambda \mathbb{V}(\ln p(\bmy|\bmx,\bmtheta))\geq 0
\end{align*}
As a result, the entropy is negative.

\subsection{Proofs for Section~\ref{sec:data-augmentation}}\label{app:sec:data-augmentation}

\subsubsection{Proof of Equation \ref{eq:Gibbs_cpe}}
Note that 
\[
\frac{d}{d\lambda} G(p_{\lambda})= \frac{d}{d\lambda} \E_{p_\lambda}[L(\bmtheta)] = \mathrm{COV}_{p_\lambda}(\ln p(D|\bmtheta), L(\bmtheta)) = \mathrm{COV}_{p_\lambda}(-\hat L(D,\bmtheta), L(\bmtheta)),
\]
where the second equality is by applying Equation~\ref{eq:grad-gibbs-f-likelihood}. By taking $\lambda=1$, we obtain the desired derivative.

\subsubsection{Proof of Equation \ref{eq:likelihoodtempering:bayesgradient2}}

Recall from the proof of Theorem \ref{thm:likelihoodtempering:bayesgradient} that
\[
\frac{d}{d\lambda} B(p_\lambda) = \E_{p_\lambda}[\ln p(D|\bmtheta)]-\E_{\bar p_\lambda}[\ln p(D|\bmtheta)] = \E_{\bar p_\lambda}[\hat L(D,\bmtheta)] - \E_{p_\lambda}[\hat L(D,\bmtheta)],
\]
where $\bar{p}_\lambda(\bmtheta|D) = \E_\nu\left[\tilde p_\lambda(\bmtheta|D,(\bmy,\bmx))\right]$ (Equation~\ref{eq:likelihoodtempering:updatedposterior}), and $\tilde p_\lambda (\bmtheta|D,(\bmy,\bmx)) \propto p_\lambda(\bmtheta|D) p(\bmy|\bmx,\bmtheta)$ is the distribution obtained by updating the posterior $p_\lambda$ with one new sample $(\bmy,\bmx)$.
Therefore,
\begin{align*}
    \E_{\bar p_\lambda}[\hat L(D,\bmtheta)] = \E_\nu \E_{\tilde p_\lambda}[\hat L(D,\bmtheta)] = \E_\nu\left[\E_{p_\lambda}\left[\frac{p(\bmy|\bmx,\bmtheta)}{\E_{p_\lambda}[p(\bmy|\bmx,\bmtheta)]} \hat L(D,\bmtheta) \right]\right].
\end{align*}
By Fubini's theorem, the above formula further equals to
\begin{align*}
    \E_{p_\lambda}\left[\E_\nu\left[\frac{p(\bmy|\bmx,\bmtheta)}{\E_{p_\lambda}[p(\bmy|\bmx,\bmtheta)]} \hat L(D,\bmtheta) \right]\right] 
    &= \E_{p_\lambda}\left[\E_\nu\left[\frac{p(\bmy|\bmx,\bmtheta)}{\E_{p_\lambda}[p(\bmy|\bmx,\bmtheta)]}\right] \hat L(D,\bmtheta) \right] \\
    &= \E_{p_\lambda}\left[-S_{p_\lambda}(\bmtheta)\cdot\hat L(D,\bmtheta)\right].
\end{align*}
On the other hand, since
\[
\E_{p_\lambda}[-S_{p_\lambda}(\bmtheta)] = \E_{p_\lambda}\left[\E_\nu\left[\frac{p(\bmy|\bmx,\bmtheta)}{\E_{p_\lambda}[p(\bmy|\bmx,\bmtheta)]}\right]\right] = \E_\nu\left[\E_{p_\lambda}\left[\frac{p(\bmy|\bmx,\bmtheta)}{\E_{p_\lambda}[p(\bmy|\bmx,\bmtheta)]}\right]\right]=1\,,
\]
we have
\[
\E_{p_\lambda}[\hat L(D,\bmtheta)] = \E_{p_\lambda}[\hat L(D,\bmtheta)] \E_{p_\lambda}[-S_{p_\lambda}(\bmtheta)]\,.
\]
By putting them altogether,
\begin{align*}
    \frac{d}{d\lambda} B(p_\lambda) &= \E_{p_\lambda}\left[-S_{p_\lambda}(\bmtheta)\cdot\hat L(D,\bmtheta)\right]-\E_{p_\lambda}[\hat L(D,\bmtheta)] \E_{p_\lambda}[-S_{p_\lambda}(\bmtheta)] \\
    &= -\mathrm{COV}\left(\hat L(D,\bmtheta), S_{p_\lambda}(\bmtheta) \right)\,.
\end{align*}

\subsection{Experiment details for Bayesian linear regression on synthetic data with exact inference}\label{app:sec:experiment}
In this section we detail the settings of the toy experiment using synthetic data and exact Bayesian linear regression in Figures \ref{fig:blr} and \ref{fig:blr2}. We also show extra results of the derivative of Gibbs loss and Bayes loss w.r.t to $\lambda$ approximated by samples.  

To begin, we will outline the data-generating process for the synthetic data used in the experiment shown in Figures \ref{fig:blr} and \ref{fig:blr2} and Figures \ref{fig:blr_appendix} and \ref{fig:blr2_appendix}. We sample $x$ uniformly from the $[-1,1]$ interval and pass it through a Fourier transformation to construct the input of the data. That is, for a sampled $x$, the input $\bmx$ is constructed by a 10-dimensional Fourier basis function $\bmphi(x)=[g_1(x), ..., g_K(x)]^T$ for $K=10$, where the basis functions are defined as follows: $g_1(x)=\dfrac{1}{\sqrt{2\pi}}$, and for other odd values of $k$, $g_k(x)=\dfrac{1}{\sqrt{\pi}}\sin{(kx)}$, whereas for even values of $k$, $g_k(x)=\dfrac{1}{\sqrt{\pi}}\cos{(kx)}$. The distribution of the output $y\in\mathbb{R}$ given an input $\bmx$, denoted as $\nu(y|\bmx)$, follows a Normal distribution with mean $\mathbf{1}^T\bmx$ and variance $1.0$, where $\mathbf{1}$ is an all-ones vector. That is, $\nu(y|\bmx)=\mathcal{N}(\mathbf{1}^T\bmx,1.0)$.

\begin{figure}[h]
  \centering 

\begin{subfigure}{0.33\textwidth}
\includegraphics[width=\linewidth]{imgs/chap2/blm/nll_perfect.png}
\end{subfigure}\hfil 
\begin{subfigure}{0.315\textwidth}
\includegraphics[width=\linewidth]{imgs/chap2/blm/nll_overfit.png}
\end{subfigure}\hfil 
\begin{subfigure}{0.33\textwidth}
\includegraphics[width=\linewidth]{imgs/chap2/blm/nll_prior_bad.png}
\end{subfigure}

\begin{subfigure}[b]{0.33\textwidth}
\includegraphics[width=\linewidth]{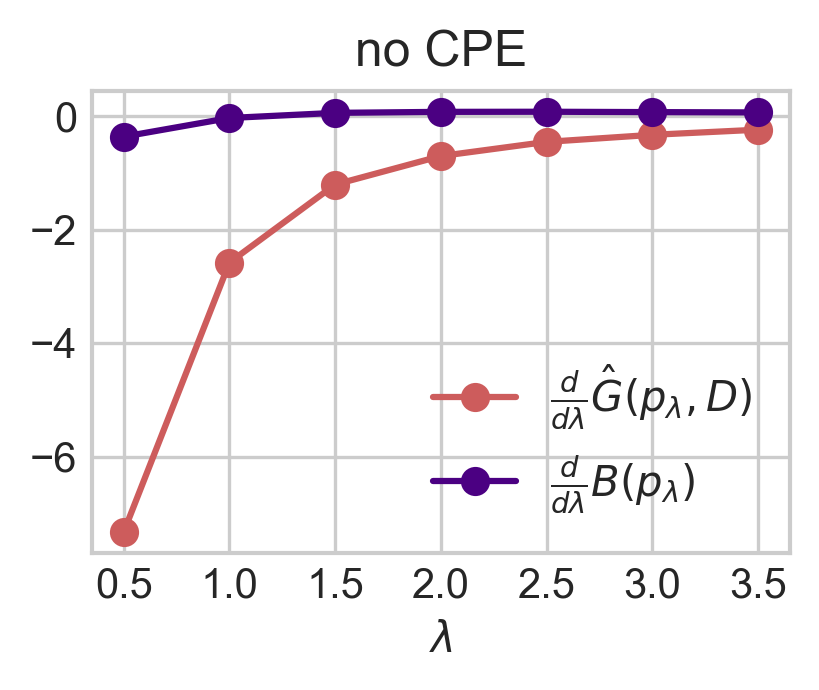}
\captionsetup{format=hang, justification=centering}
\caption{No likelihood or prior misspecification}
\end{subfigure}\hfil 
\begin{subfigure}[b]{0.345\textwidth}
\includegraphics[width=\linewidth]{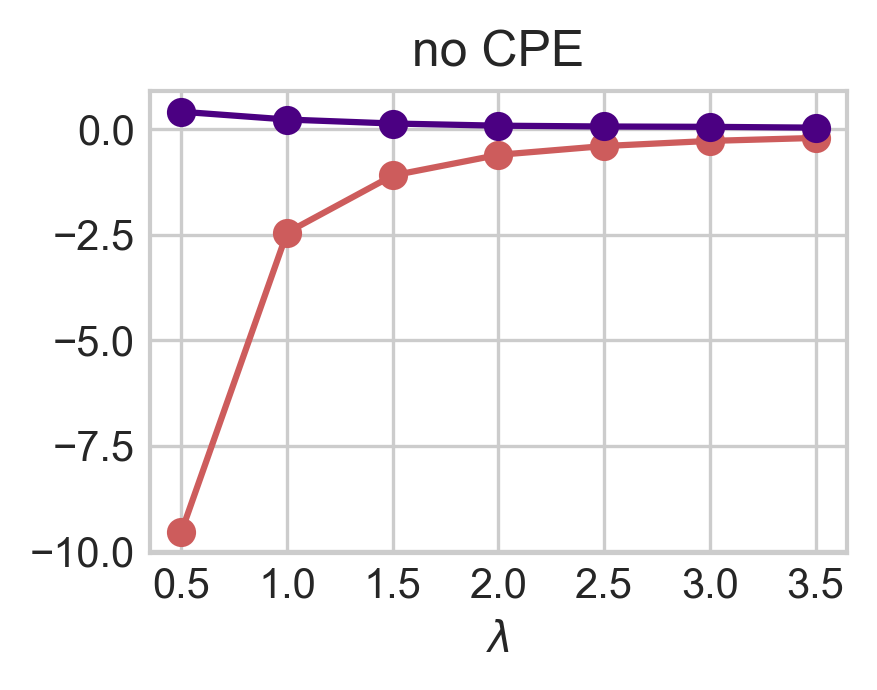}
\captionsetup{format=hang, justification=centering}
\caption{Likelihood misspecification case I}
\end{subfigure}\hfil 
\begin{subfigure}[b]{0.32\textwidth}
\includegraphics[width=\linewidth]{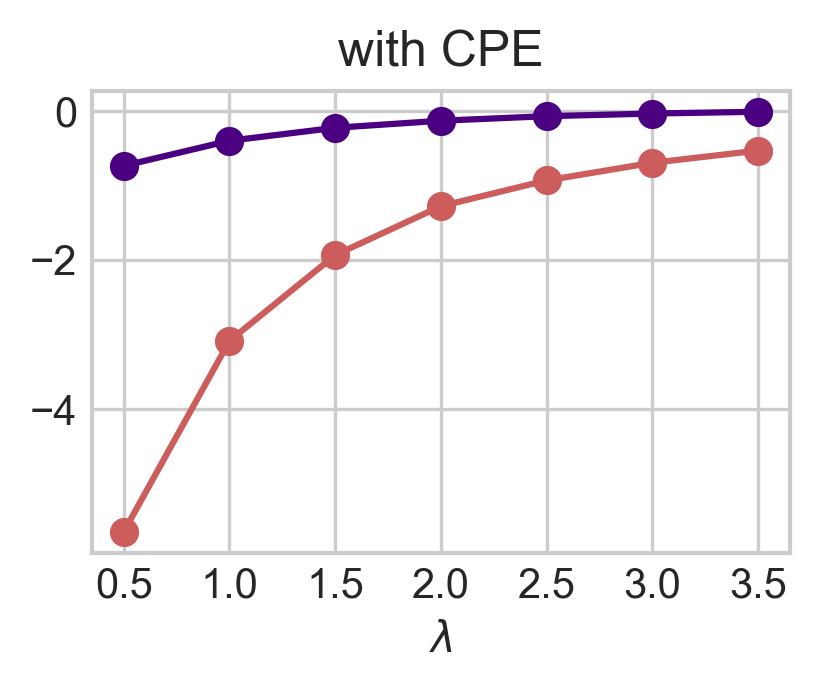}
\captionsetup{format=hang, justification=centering}
\caption{Only with prior misspecification}
\end{subfigure}

\caption{\textbf{The derivatives $\frac{d}{d\lambda} \hat G(p_\lambda, D)$ (Equation \ref{eq:likelihoodtempering:empiricalgibbsgradient}) and $\frac{d}{d\lambda} B(p_\lambda)$ (Equation \ref{eq:likelihoodtempering:bayesgradient}) characterize the Gibbs loss $\hat G(p_\lambda, D)$ and the Bayes loss $B(p_\lambda)$ perfectly.}}
\label{fig:blr_appendix}
\end{figure}

In our experiment, the likelihood model and the prior model are defined differently for the four settings in Figures \ref{fig:blr} and \ref{fig:blr2}. To enable exact inference, both the likelihood and the prior are Gaussian, which gives a closed-form solution for the posterior predictive. This choice also provides convenience when studying the CPE: different values of $\lambda$ on the likelihood term can be naturally absorbed into the Gaussian densities by adjusting the variance (dividing by $\lambda$) without hindering the exact inference step.  We describe them in detail in the following.
\begin{enumerate}
    \item No likelihood or prior misspecification: likelihood $p(y | \bmx, \bmtheta)=\mathcal{N}(\bmtheta^T\bmx,1.0)$, prior $p(\bmtheta)=\mathcal{N}(0,2)$. This is the baseline for comparison. 
    \item Likelihood misspecification case I: likelihood $p(y | \bmx, \bmtheta)=\mathcal{N}(\bmtheta^T\bmx,0.15)$ (the order of Fourier transformation is $K=20$, however note that it still contains the $K=5$ data-generating process in its solution space), prior $p(\bmtheta)=\mathcal{N}(0,2)$. In this case, the model is misspecified in a way that it has a smaller variance than the data-generating process.  
    \item Likelihood misspecification case II: likelihood $p(y | \bmx, \bmtheta)=\mathcal{N}(\bmtheta^T\bmx,3.0)$, prior $p(\bmtheta)=\mathcal{N}(0,2)$. In this case, the model is misspecified in a way that it has a larger variance than the data-generating process. This is similar to one of the scenarios where CPE was found: the curated data has a lower aleatoric uncertainty than the model \citep{Ait21}.
    \item Only with prior misspecification: likelihood $p(y | \bmx, \bmtheta)=\mathcal{N}(\bmtheta^T\bmx,1.0)$, prior $p(\bmtheta)=\mathcal{N}(0,0.5)$. The prior is poorly specified in a way that it is tightly centered at 0 while the best $\bmtheta$ should be 1. 
\end{enumerate}

\begin{figure}[H]
  \centering 

\begin{subfigure}{0.33\textwidth}
\includegraphics[width=\linewidth]{imgs/chap2/blm/nll_perfect.png}
\end{subfigure}\hfil 
\begin{subfigure}{0.33\textwidth}
\includegraphics[width=\linewidth]{imgs/chap2/blm/nll_model_bad.png}
\end{subfigure}\hfil 
\begin{subfigure}{0.33\textwidth}
\includegraphics[width=\linewidth]{imgs/chap2/blm/nll_model_bad_more.png}
\end{subfigure}

\begin{subfigure}[b]{0.33\textwidth}
\includegraphics[width=\linewidth]{imgs/chap2/blm/grad_perfect.png}
\captionsetup{format=hang, justification=centering}
\caption{No likelihood or prior misspecification}
\end{subfigure}\hfil 
\begin{subfigure}[b]{0.33\textwidth}
\includegraphics[width=\linewidth]{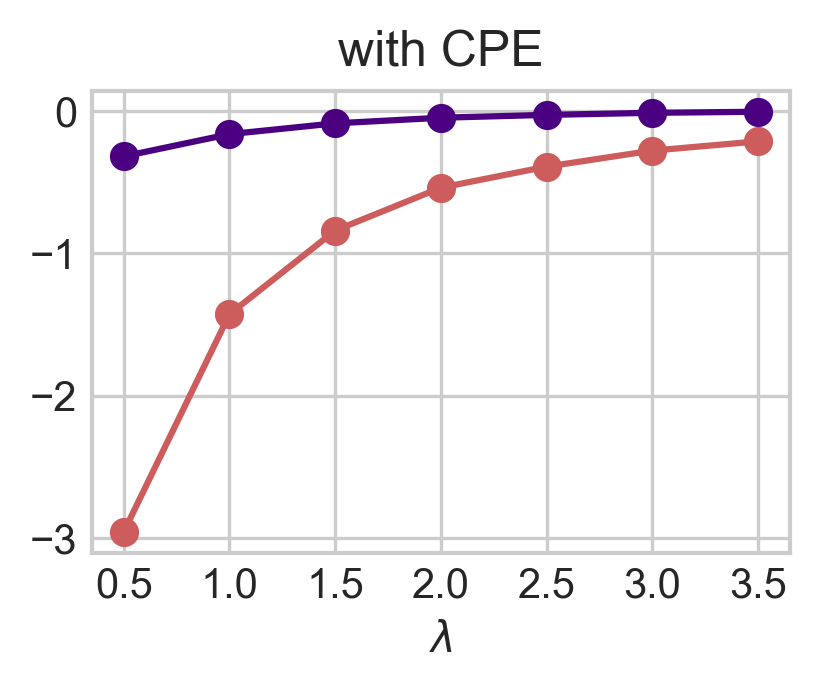}
\captionsetup{format=hang, justification=centering}
\caption{Likelihood misspecification case II}
\end{subfigure}\hfil 
\begin{subfigure}[b]{0.33\textwidth}
\includegraphics[width=\linewidth]{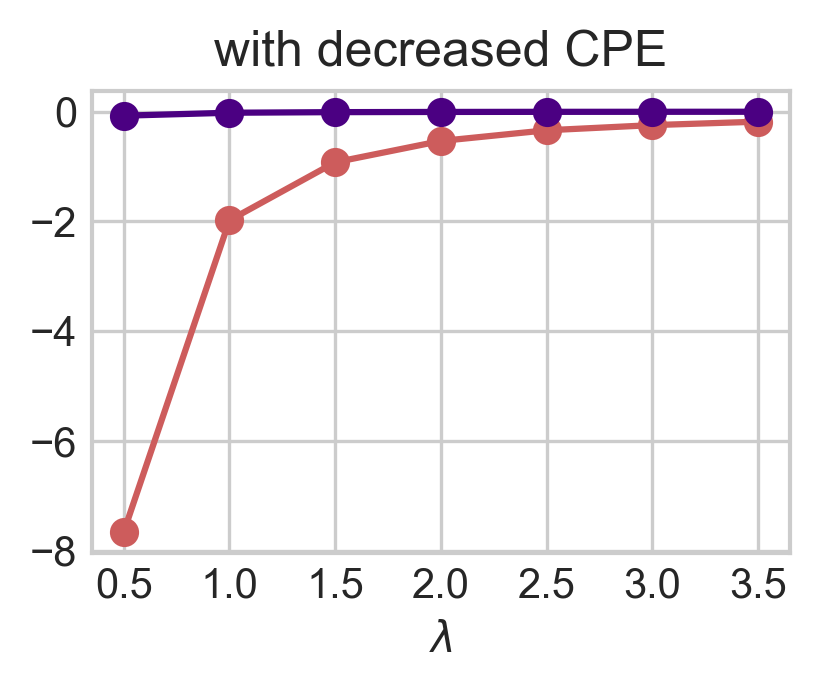}
\captionsetup{format=hang, justification=centering}
\caption{Same as (b) but with 50 data points}
\end{subfigure}

\caption{\textbf{Extended results of Figure \ref{fig:blr_appendix} with more configurations of model misspecification.}}
\label{fig:blr2_appendix}
\end{figure}

In all the experiments, every training set consists of only 5 samples. Since there are more parameters than the number of training data points, our setting falls within the ``overparameterized'' regime where CPE has been observed in Bayesian deep learning \citep{WRVS+20}.

Continuing from Figures \ref{fig:blr} and \ref{fig:blr2}, where we show the Gibbs loss $\hat G(p_\lambda, D)$ (training) and the Bayes loss $B(p_\lambda)$ (testing) with respect to $\lambda$, we now show their derivatives $\frac{d}{d\lambda} \hat G(p_\lambda, D)$ (Equation \ref{eq:likelihoodtempering:empiricalgibbsgradient}) and $\frac{d}{d\lambda} B(p_\lambda)$ (Equation \ref{eq:likelihoodtempering:bayesgradient}) respectively in Figures \ref{fig:blr_appendix} and \ref{fig:blr2_appendix}. Here the losses are included for a clearer depiction of the derivatives. To approximate the Bayes loss for generating the plot, we use 10000 data points sampled from the data-generating distribution. Also, the derivatives are approximated using 10000 samples from the exact posteriors. From Figures \ref{fig:blr_appendix} and \ref{fig:blr2_appendix}, we could clearly see that the derivatives perfectly characterize the losses in all four settings. 

\subsection{Experiment details for Bayesian neural networks on image data with approximate inference}\label{app:sec:experiment-approx}
In this section, we first present in Appendix \ref{app:sec:architecture} the architectures of the small and large CNNs used in this paper. 
As promised in the main text, we then provide results on additional image datasets trained with Stochastic Gradient Langevin Dynamics (SGLD) \citep{DBLP:conf/icml/WellingT11} in Appendix \ref{app:sec:exp_sgld}. 

\subsubsection{Architectures of small/large CNN}\label{app:sec:architecture}
\paragraph{Small CNN}
The small CNN is similar to LeNet-5, but with 107786 parameters in total:
\begin{enumerate}
    \item Convolutional layer 1. Input channels: 1 (assuming grayscale images), output channels: 6, kernel size: 5x5, padding: 2, activation: ReLU.

    \item Average pooling layer 1. Kernel size: 2x2, stride: 2.

    \item Convolutional layer 2. Input channels: 6, output channels: 16, kernel size: 5x5, padding: 2, activation: ReLU.

    \item Average pooling layer 2. Kernel size: 2x2, stride: 2.

    \item Flattening layer. Flattens the output from the previous layers.

    \item Fully connected layer 1. Input features: 784 (16 channels * 7 * 7), output features: 120, activation: ReLU.

    \item Fully connected layer 2. Input features: 120, output features: 84, activation: ReLU.

    \item Fully connected layer 3 (output layer). Input features: 84, output features: num\_classes (specified during instantiation).
\end{enumerate}
\paragraph{Large CNN}
The large CNN is similar to the small CNN, but with 545546 parameters in total:
\begin{enumerate}
    \item Convolutional layer 1. Input channels: 1 (assuming grayscale images), output channels: 6, kernel size: 5x5, padding: 2, activation: ReLU.

    \item Average pooling layer 1. Kernel size: 2x2, stride: 2.

    \item Convolutional layer 2. Input channels: 6, output channels: 16, kernel size: 5x5, padding: 2, activation: ReLU.

    \item Average pooling layer 2. Kernel size: 2x2, stride: 2.

    \item Convolutional layer 3. Input channels: 16, output channels: 120, kernel size: 5x5, padding: 2, activation: ReLU.

    \item Flattening layer. Flattens the output from the previous layers.

    \item Fully connected layer 1. Input features: 5880 (120 channels \(\times\) 7 \(\times\) 7), output features: 84, activation: ReLU.

    \item Fully connected layer 2 (output layer). Input features: 84, output features: num\_classes (specified during instantiation).
\end{enumerate}

In all the convolutional layers, no stride \(= 1\) and padding is set to \emph{same}.

\subsubsection{Stochastic gradient Langevin dynamics (SGLD)} \label{app:sec:exp_sgld}
Our experiments using SGLD are categorized into 4 groups:
\begin{enumerate}
    \item Bayesian CNNs (small and large) on MNIST (Figures \ref{fig:CPE:small} - \ref{fig:CPE:largeAug} in the main text)
    \item Bayesian CNNs (small and large) on Fashion-MNIST (Appendix \ref{app:sec:sgld-Fmnist})
    \item Bayesian ResNets (18 and 50) on CIFAR-10 (Appendix \ref{app:sec:sgld-cifar10})
    \item Bayesian ResNets (18 and 50) on CIFAR-100 (Appendix \ref{app:sec:sgld-cifar100})
\end{enumerate}
where each group evaluates the effect of underfitting on a small model and a large model. Note that as we follow the standard ResNet-18 and ResNet-50, the details of the architectures are omitted. They have around 11 million and 23 million parameters, respectively. We implement with PyTorch \citep{DBLP:conf/nips/PaszkeGMLBCKLGA19} and train the model using cyclical learning rate SGLD (cSGLD) \citep{zhang2019cyclical} for 1000 epochs. We set the learning rate to 1e-6 with a momentum term of 0.99. We run cSGLD for 10 trials and collect 10 samples for each trial. Experiments were conducted on NVIDIA A100 GPU, with each trial taking around 30 hours.

\newpage

\paragraph{Small and large CNNs via SGLD on Fashion-MNIST}\label{app:sec:sgld-Fmnist}

\centerline
\centerline
\centerline

\begin{figure}[H]
  \begin{subfigure}{.32\linewidth}
      \includegraphics[width=\linewidth]{./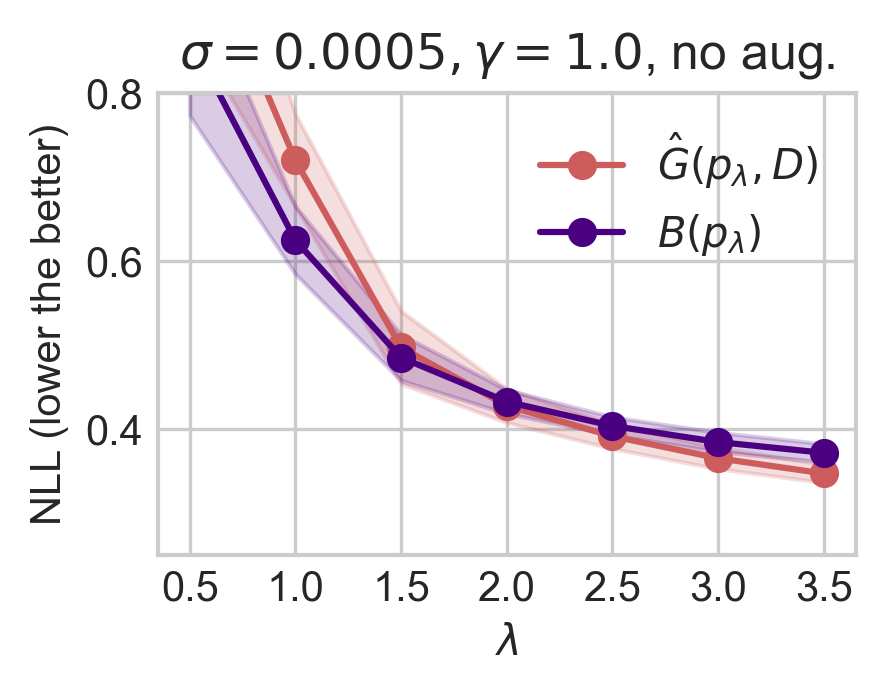}
      \captionsetup{format=hang, justification=centering}
      \caption{Narrow prior and standard softmax}
      \label{fig:CPE:NarrowPriorsmall_fmnist}
  \end{subfigure}
  \hfill
  \begin{subfigure}{.32\linewidth}
      \includegraphics[width=\linewidth]{./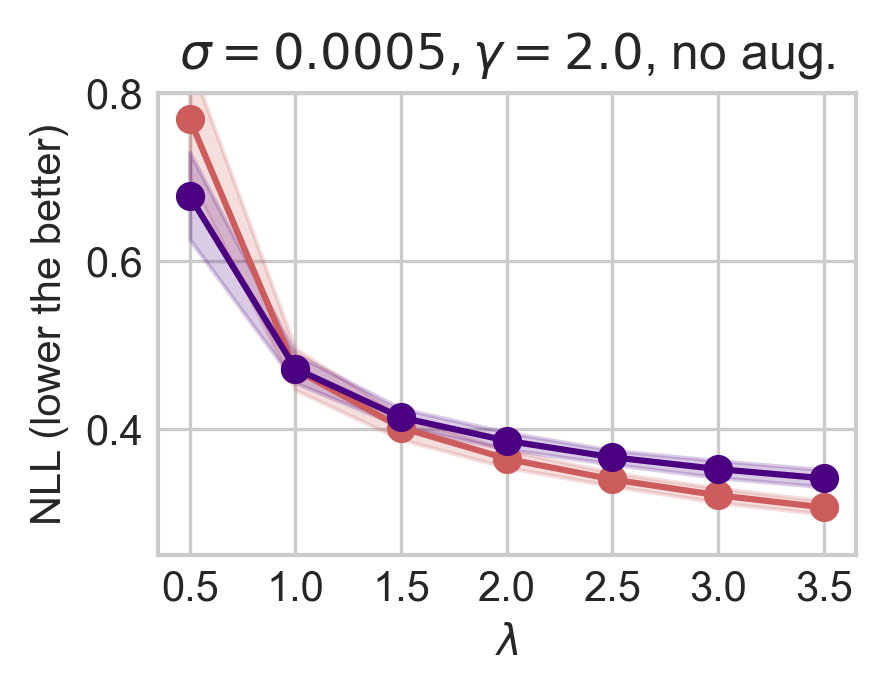}
      \captionsetup{format=hang, justification=centering}
      \caption{Narrow prior and tempered softmax}
      \label{fig:CPE:Softmaxsmall_fmnist}
  \end{subfigure}
  \hfill
  \begin{subfigure}{.32\linewidth}
      \includegraphics[width=\linewidth]{./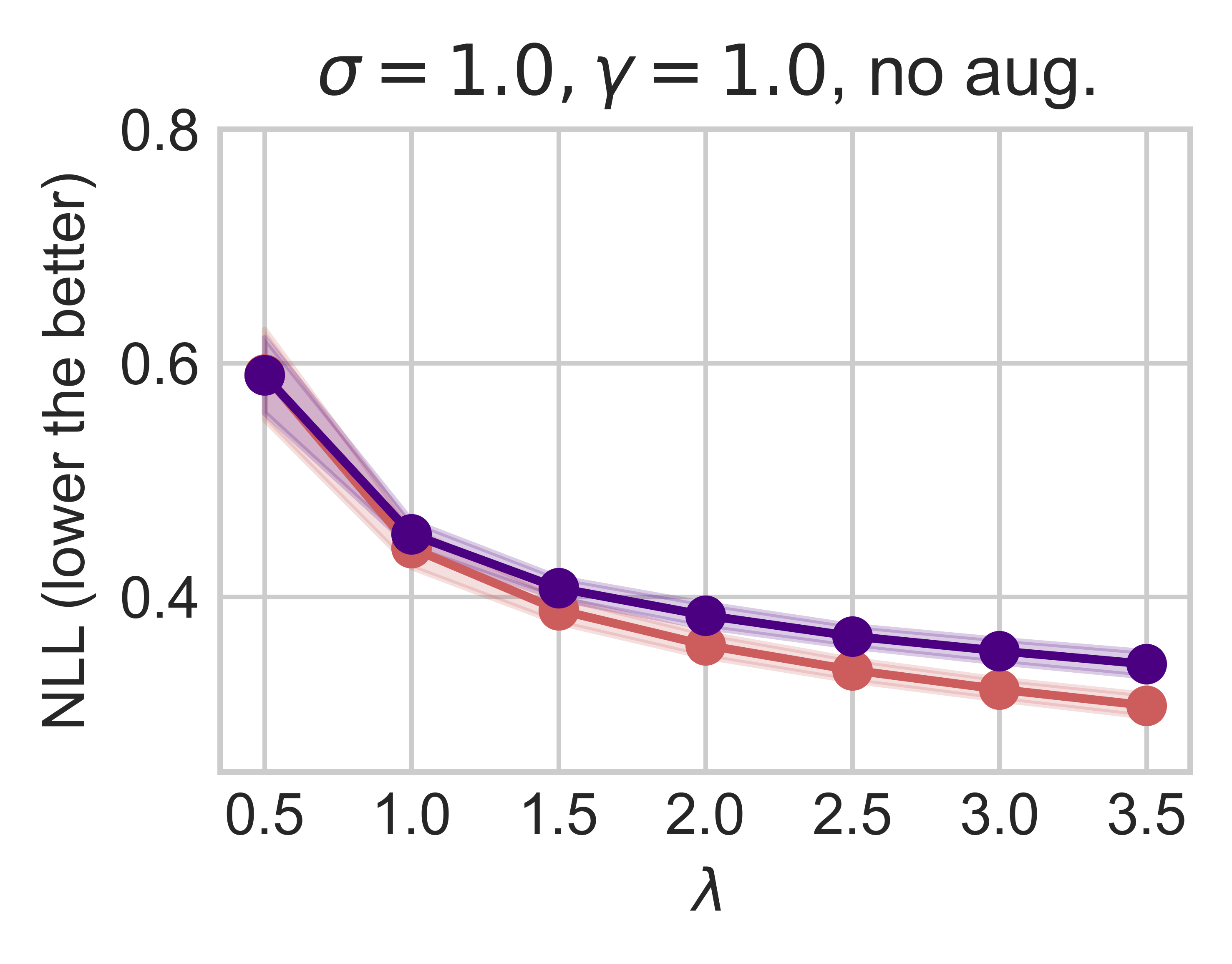}
      \captionsetup{format=hang, justification=centering}
      \caption{Standard prior and standard softmax}
      \label{fig:CPE:StandardPriorsmall_fmnist}
  \end{subfigure}
  \caption{Extended results of Figure \ref{fig:CPE:small} using small CNN via SGLD on Fashion-MNIST.}
  \label{fig:CPE:small_fmnist}
\end{figure}

\centerline
\centerline
\centerline

\begin{figure}[H]

  \begin{subfigure}{.32\linewidth}
      \includegraphics[width=\linewidth]{./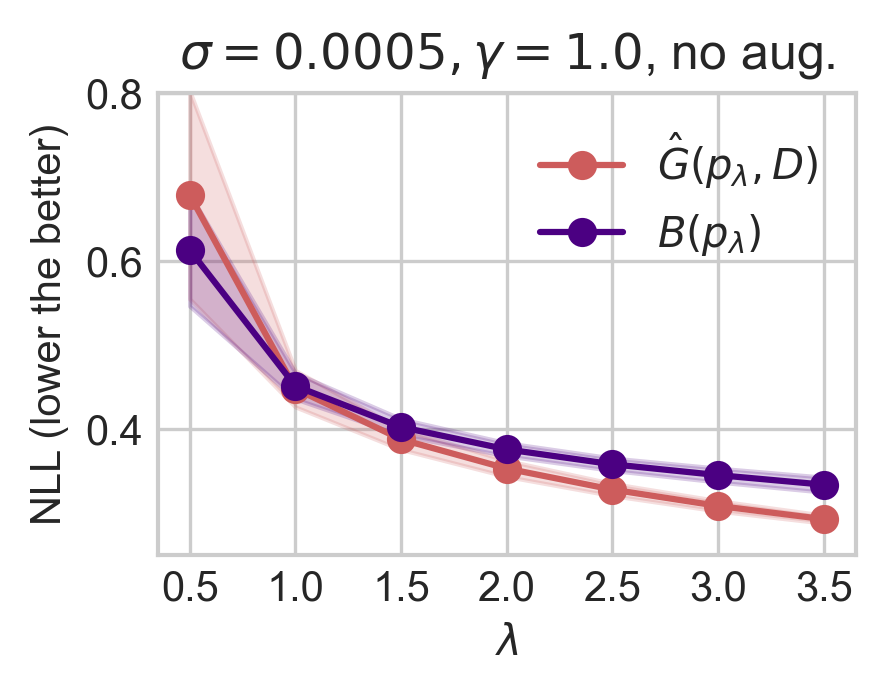}
      \captionsetup{format=hang, justification=centering}
      \caption{Narrow prior and standard softmax}
      \label{fig:CPE:NarrowPriorlarge_fmnist}
  \end{subfigure}
  \begin{subfigure}{.32\linewidth}
      \includegraphics[width=\linewidth]{./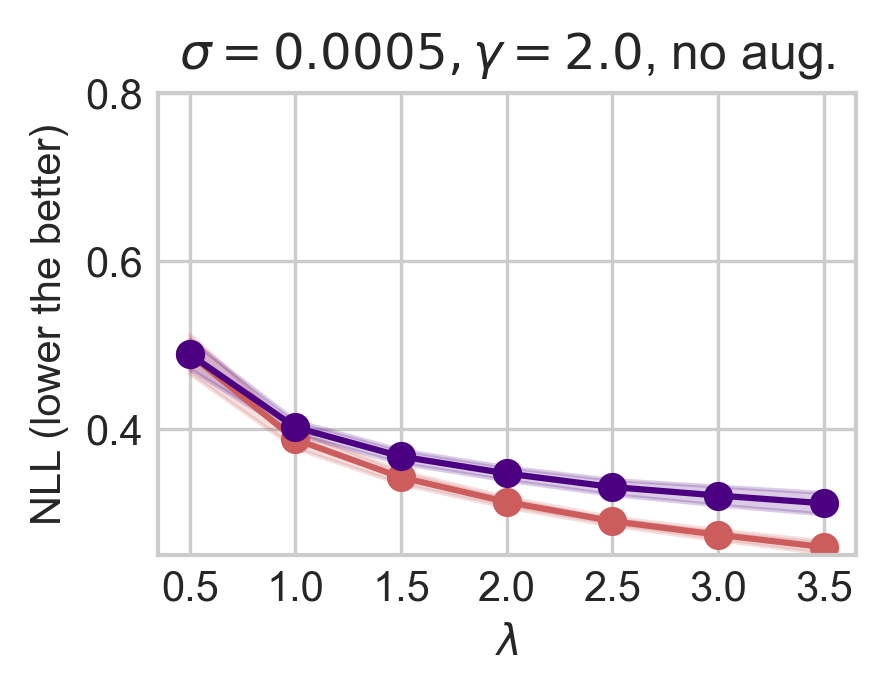}
      \captionsetup{format=hang, justification=centering}
      \caption{Narrow prior and tempered softmax}
      \label{fig:CPE:Softmaxlarge_fmnist}
  \end{subfigure}
  \begin{subfigure}{.32\linewidth}
      \includegraphics[width=\linewidth]{./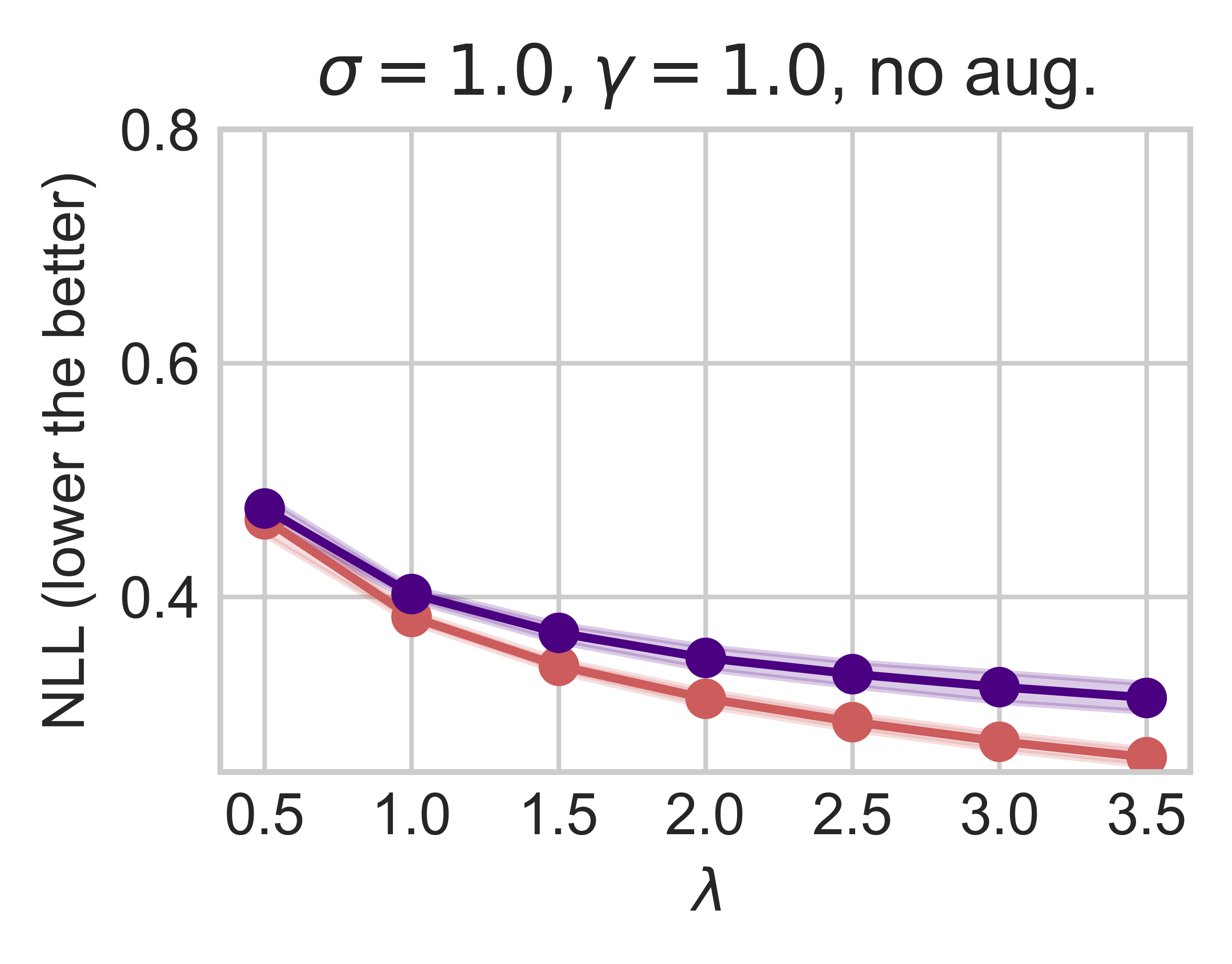}
      \captionsetup{format=hang, justification=centering}
      \caption{Standard prior and standard softmax}
      \label{fig:CPE:StandardPriorlarge_fmnist}
  \end{subfigure}
  \caption{Extended results of Figure \ref{fig:CPE:large} using large CNN via SGLD on Fashion-MNIST.}
\end{figure}

\newpage

\centerline
\centerline
\centerline

\begin{figure}[H]
  \begin{subfigure}{.32\linewidth}
      \includegraphics[width=\linewidth]{./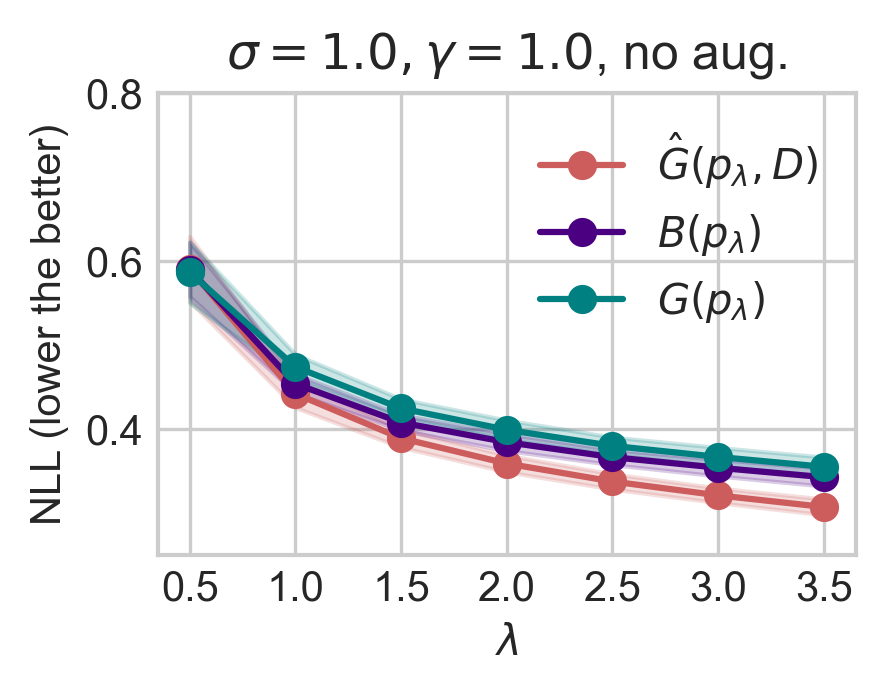}
      \captionsetup{format=hang, justification=centering}
      \caption{Standard prior and standard softmax}
      \label{fig:CPE:fmnist_50}
  \end{subfigure}
  \label{}
  \hfill
  \begin{subfigure}{.32\linewidth}
      \includegraphics[width=\linewidth]{./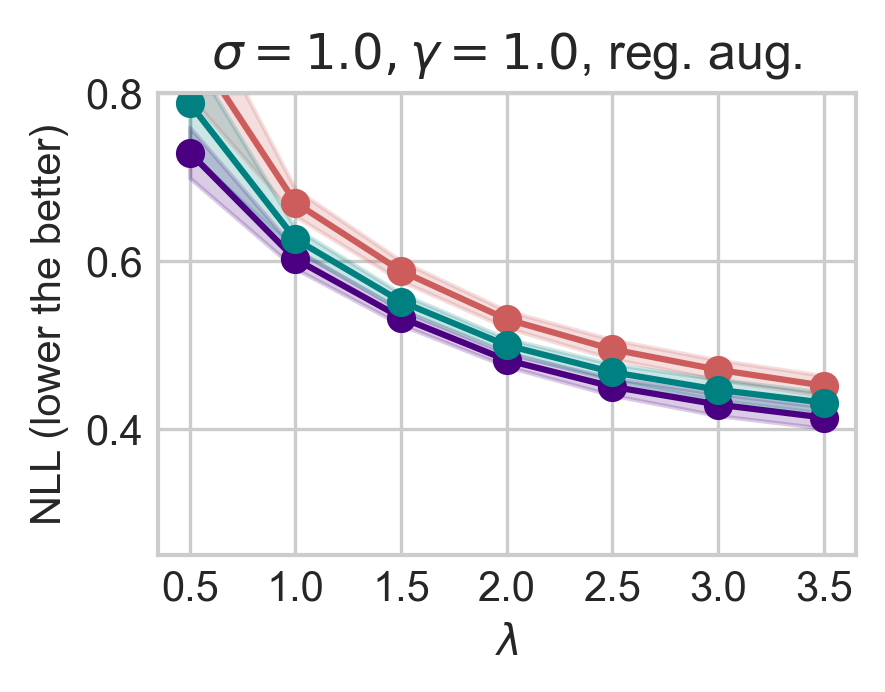}
      \captionsetup{format=hang, justification=centering}
      \caption{Random crop and horizontal flip}
      \label{fig:CPE:fmnist_50_aug}
  \end{subfigure}
  \hfill
  \begin{subfigure}{.32\linewidth}
      \includegraphics[width=\linewidth]{./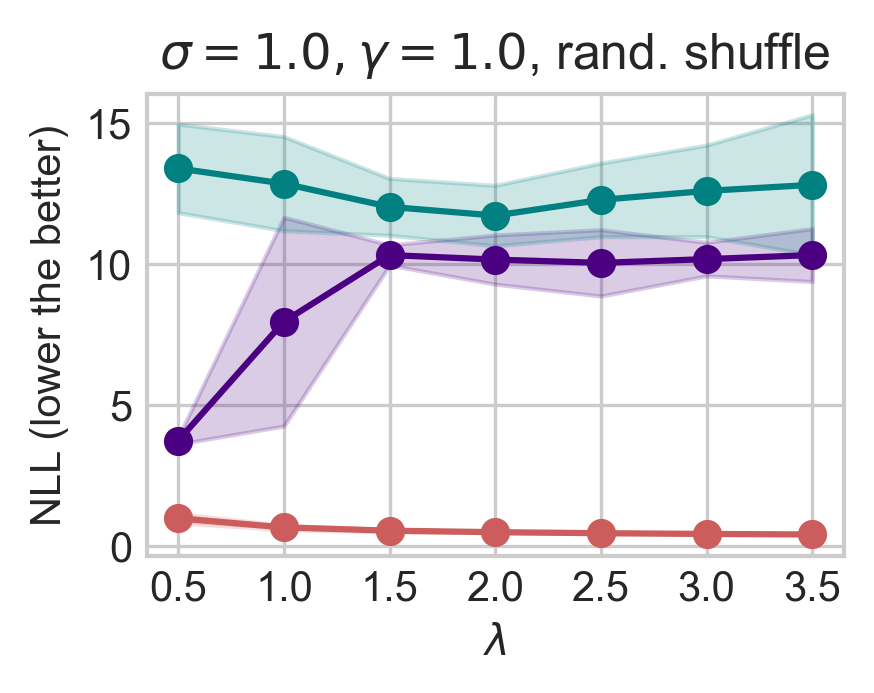}
      \captionsetup{format=hang, justification=centering}
      \caption{Image pixels randomly shuffled}
      \label{fig:CPE:fmnist_50_perm}
  \end{subfigure}
  \caption{Extended results of Figure \ref{fig:CPE:smallAug} using small CNN via SGLD on Fashion-MNIST.}
  \label{fig:CPE:small_fmnist_aug}
\end{figure}

\centerline
\centerline
\centerline

\begin{figure}[H]
  \begin{subfigure}{.32\linewidth}
      \includegraphics[width=\linewidth]{./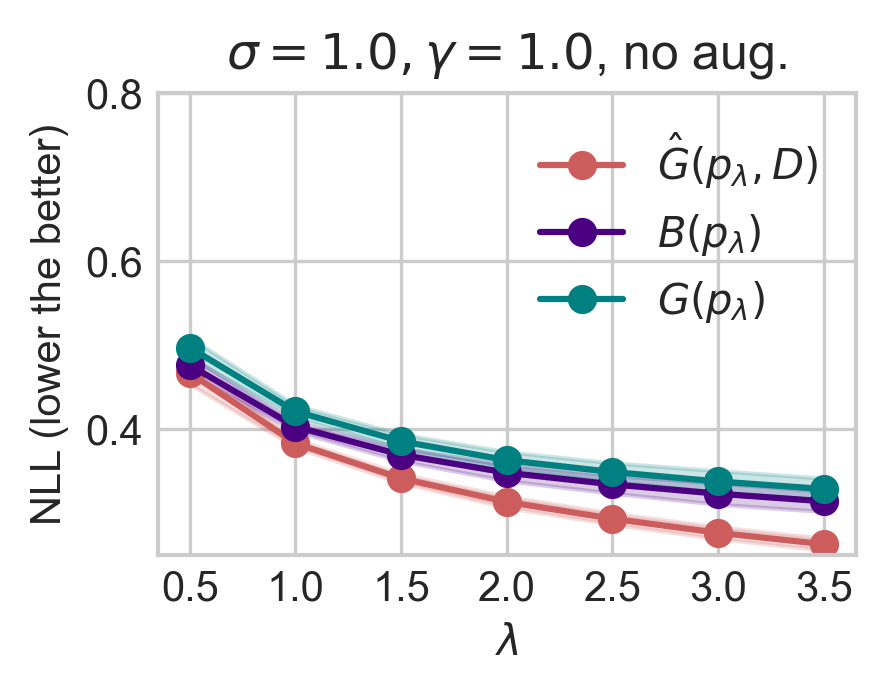}
      \captionsetup{format=hang, justification=centering}
      \caption{Standard prior and standard softmax}
  \end{subfigure}
  \begin{subfigure}{.32\linewidth}
      \includegraphics[width=\linewidth]{./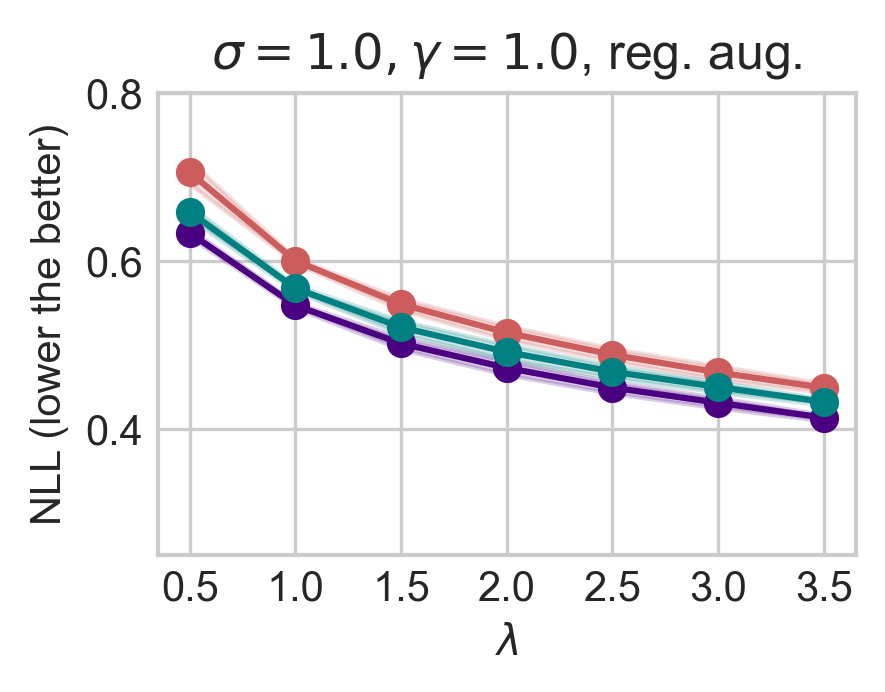}
      \captionsetup{format=hang, justification=centering}
      \caption{Random crop and horizontal flip}
  \end{subfigure}
  \begin{subfigure}{.32\linewidth}
      \includegraphics[width=\linewidth]{./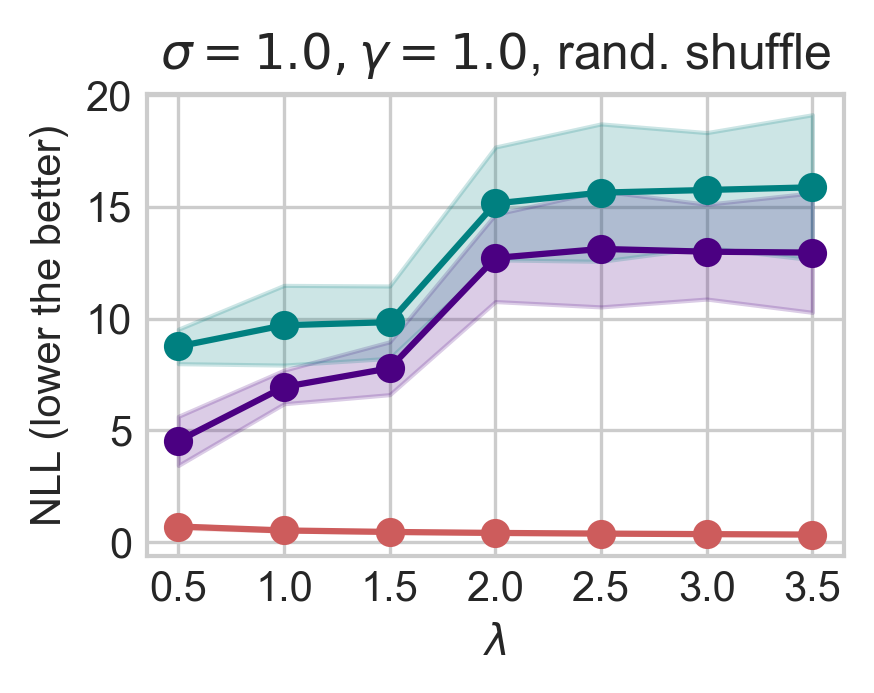}
      \captionsetup{format=hang, justification=centering}
      \caption{Image pixels randomly shuffled}
  \end{subfigure}  
  \caption{Extended results of Figure \ref{fig:CPE:largeAug} using large CNN via SGLD on Fashion-MNIST.}
\end{figure}

\newpage

\paragraph{ResNet-18 and ResNet-50 via SGLD on CIFAR-10}\label{app:sec:sgld-cifar10}

\centerline
\centerline
\centerline

\begin{figure}[H]
  \begin{subfigure}{.32\linewidth}
      \includegraphics[width=\linewidth]{./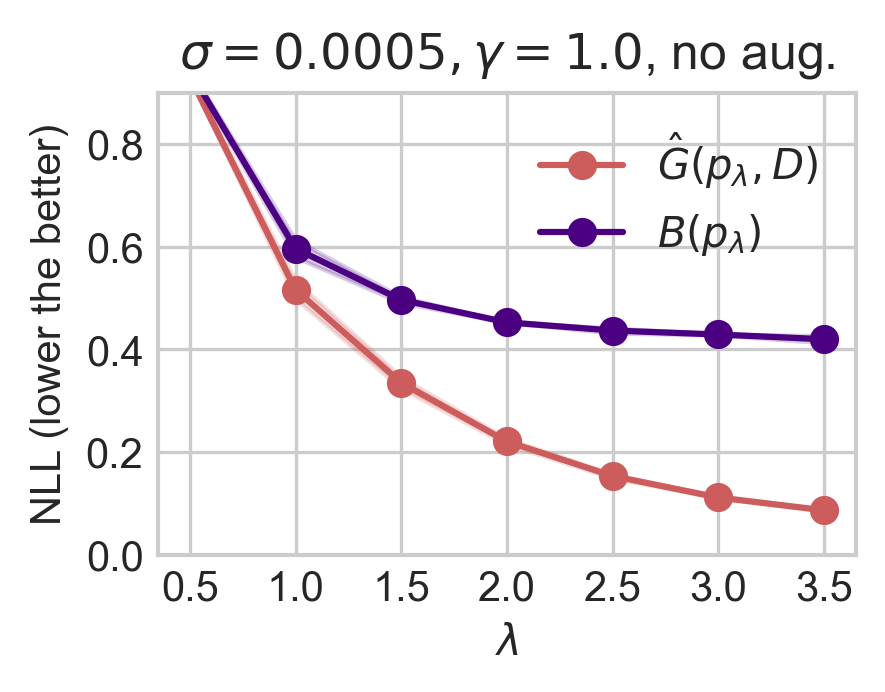}
      \captionsetup{format=hang, justification=centering}
      \caption{Narrow prior and standard softmax}
      \label{fig:CPE:NarrowPriorsmall_cifar10}
  \end{subfigure}
  \hfill
  \begin{subfigure}{.32\linewidth}
      \includegraphics[width=\linewidth]{./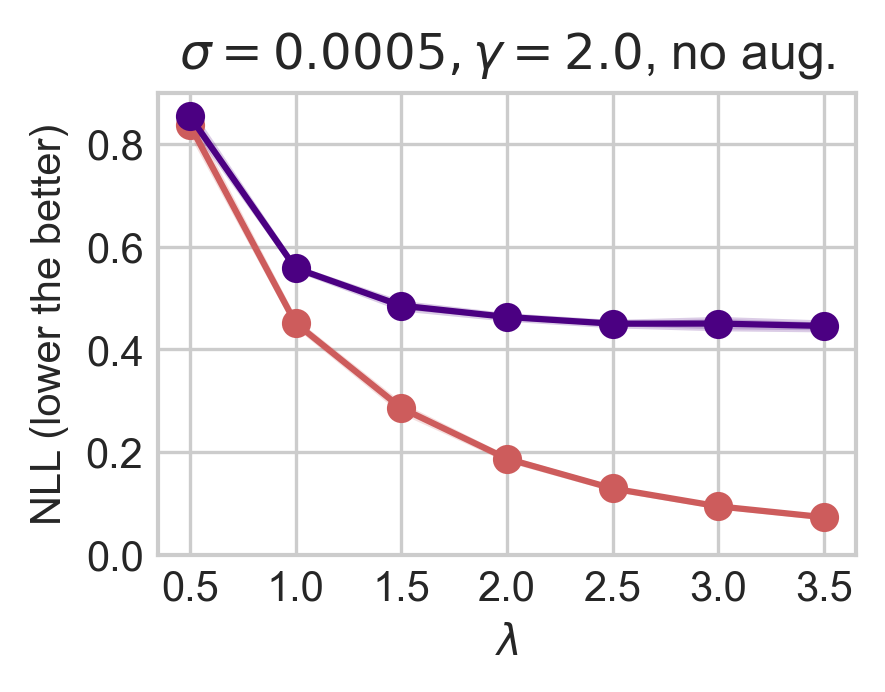}
      \captionsetup{format=hang, justification=centering}
      \caption{Narrow prior and tempered softmax}
      \label{fig:CPE:Softmaxsmall_cifar10}
  \end{subfigure}
  \hfill
  \begin{subfigure}{.32\linewidth}
      \includegraphics[width=\linewidth]{./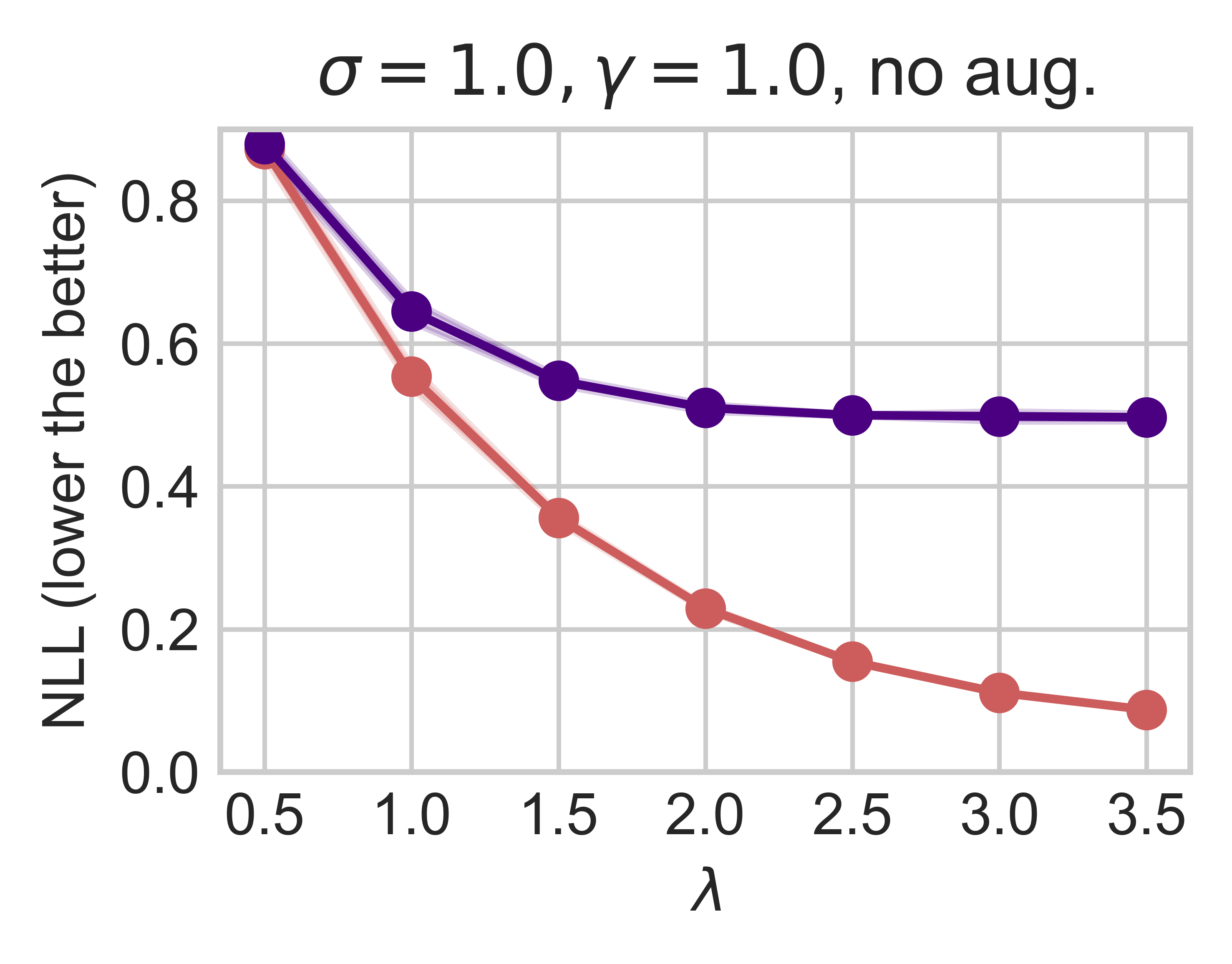}
      \captionsetup{format=hang, justification=centering}
      \caption{Standard prior and standard softmax}
      \label{fig:CPE:StandardPriorsmall_cifar10}
  \end{subfigure}
  \caption{Extended results of Figure \ref{fig:CPE:small} using ResNet-18 via SGLD on CIFAR-10.}
  \label{fig:CPE:small_cifar10}
\end{figure}

\centerline
\centerline
\centerline

\begin{figure}[H]

  \begin{subfigure}{.32\linewidth}
      \includegraphics[width=\linewidth]{./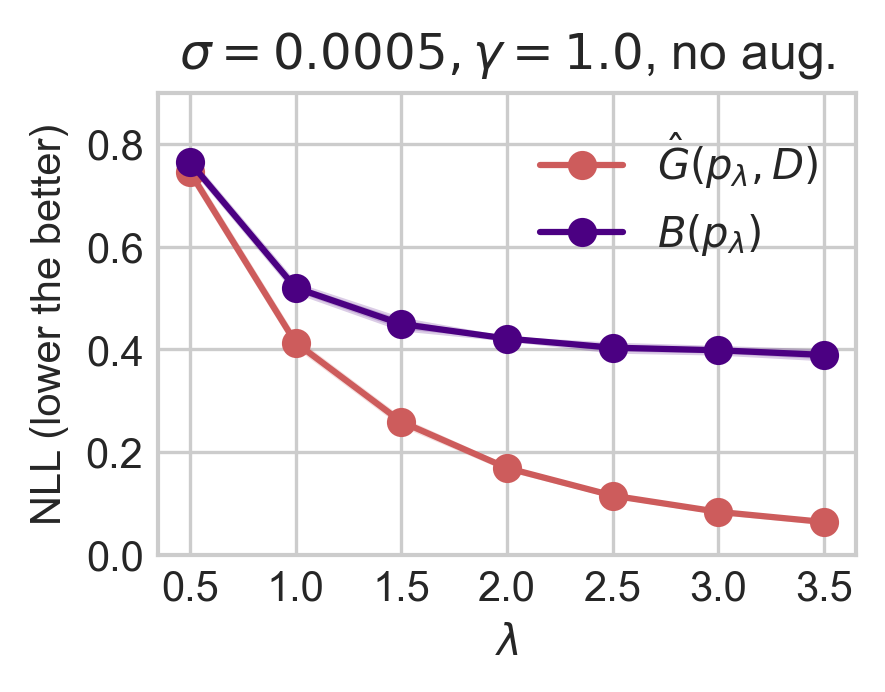}
      \captionsetup{format=hang, justification=centering}
      \caption{Narrow prior and standard softmax}
      \label{fig:CPE:NarrowPriorlarge_cifar10}
  \end{subfigure}
  \begin{subfigure}{.32\linewidth}
      \includegraphics[width=\linewidth]{./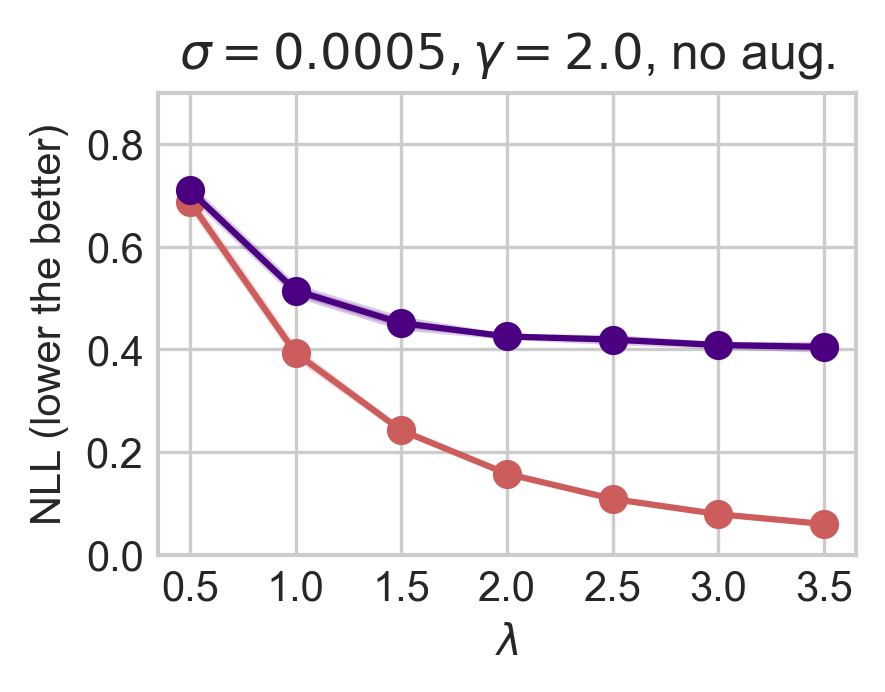}
      \captionsetup{format=hang, justification=centering}
      \caption{Narrow prior and tempered softmax}
      \label{fig:CPE:Softmaxlarge_cifar10}
  \end{subfigure}
  \begin{subfigure}{.32\linewidth}
      \includegraphics[width=\linewidth]{./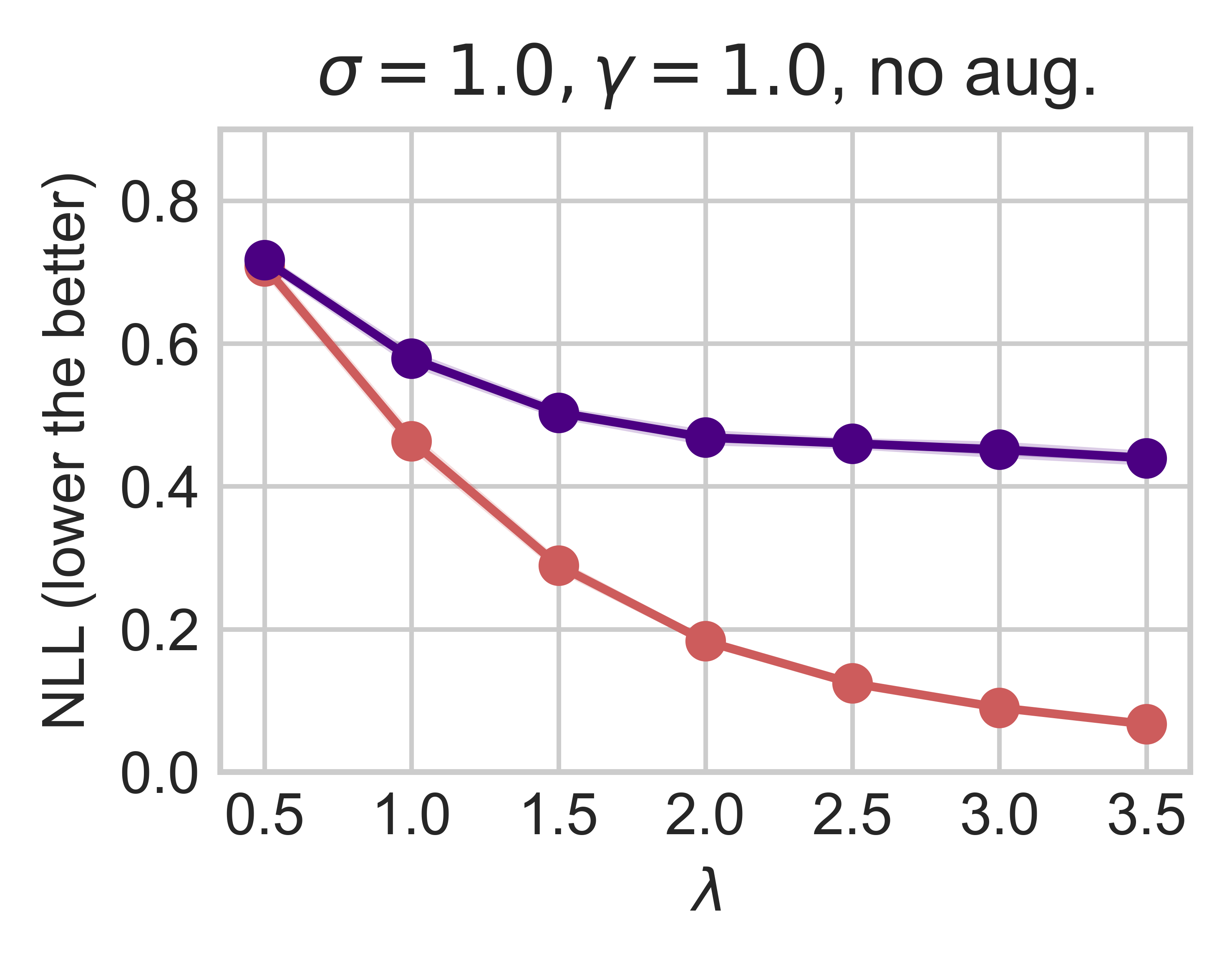}
      \captionsetup{format=hang, justification=centering}
      \caption{Standard prior and standard softmax}
      \label{fig:CPE:StandardPriorlarge_cifar10}
  \end{subfigure}
  \caption{Extended results of Figure \ref{fig:CPE:large} using ResNet-50 via SGLD on CIFAR-10.}
\end{figure}

\newpage

\centerline
\centerline
\centerline

\begin{figure}[H]
  \begin{subfigure}{.32\linewidth}
      \includegraphics[width=\linewidth]{./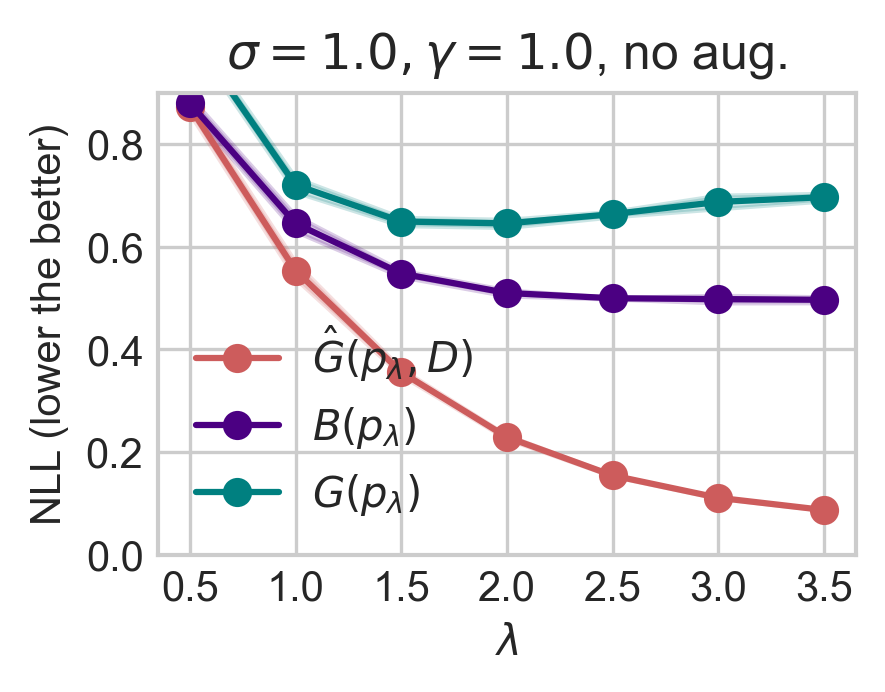}
      \captionsetup{format=hang, justification=centering}
      \caption{Standard prior and standard softmax}
      \label{fig:CPE:cifar10_50}
  \end{subfigure}
  \label{}
  \hfill
  \begin{subfigure}{.32\linewidth}
      \includegraphics[width=\linewidth]{./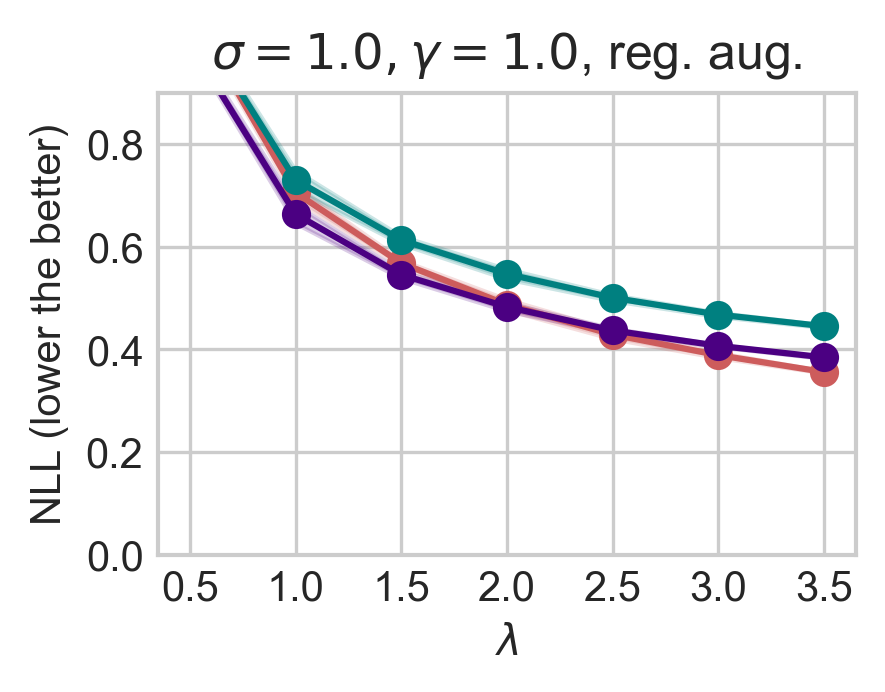}
      \captionsetup{format=hang, justification=centering}
      \caption{Random crop and horizontal flip}
      \label{fig:CPE:cifar10_50_aug}
  \end{subfigure}
  \hfill
  \begin{subfigure}{.32\linewidth}
      \includegraphics[width=\linewidth]{./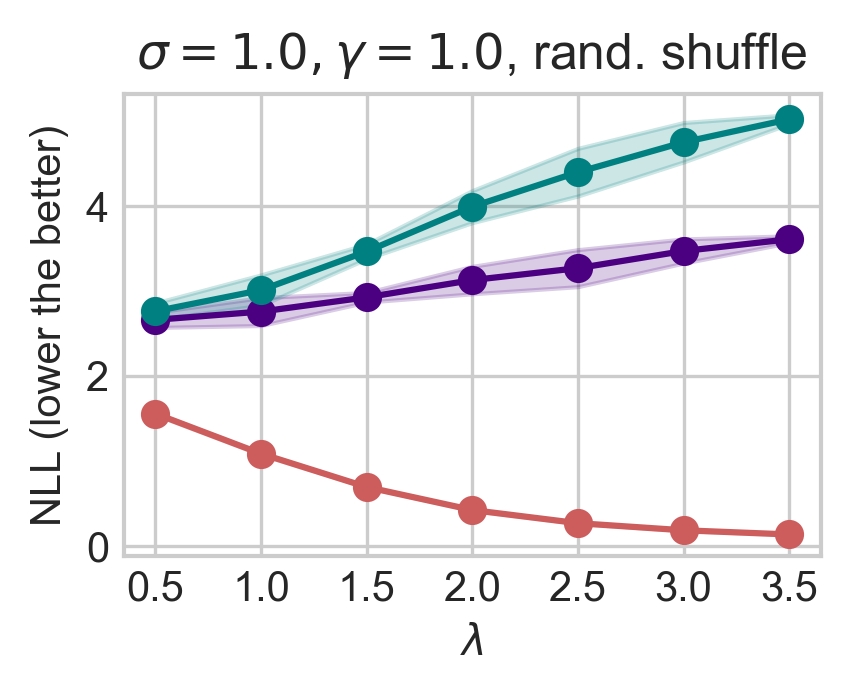}
      \captionsetup{format=hang, justification=centering}
      \caption{Image pixels randomly shuffled}
      \label{fig:CPE:cifar10_50_perm}
  \end{subfigure}
  \caption{Extended results of Figure \ref{fig:CPE:smallAug} using ResNet-18 via SGLD on CIFAR-10.}
  \label{fig:CPE:small_cifar10_aug}
\end{figure}

\centerline
\centerline
\centerline

\begin{figure}[H]
  \begin{subfigure}{.32\linewidth}
      \includegraphics[width=\linewidth]{./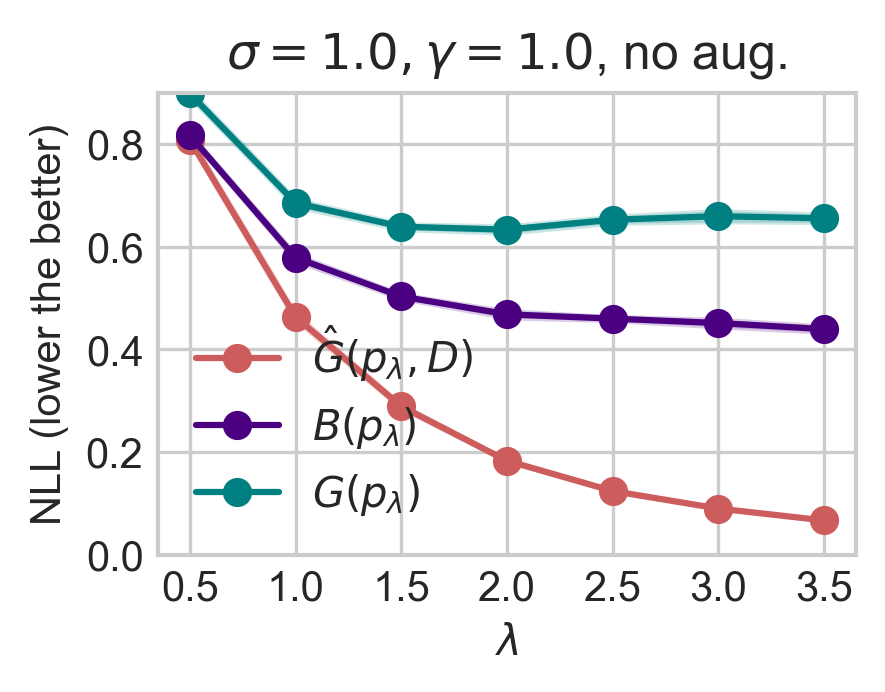}
      \captionsetup{format=hang, justification=centering}
      \caption{Standard prior and standard softmax}
  \end{subfigure}
  \begin{subfigure}{.32\linewidth}
      \includegraphics[width=\linewidth]{./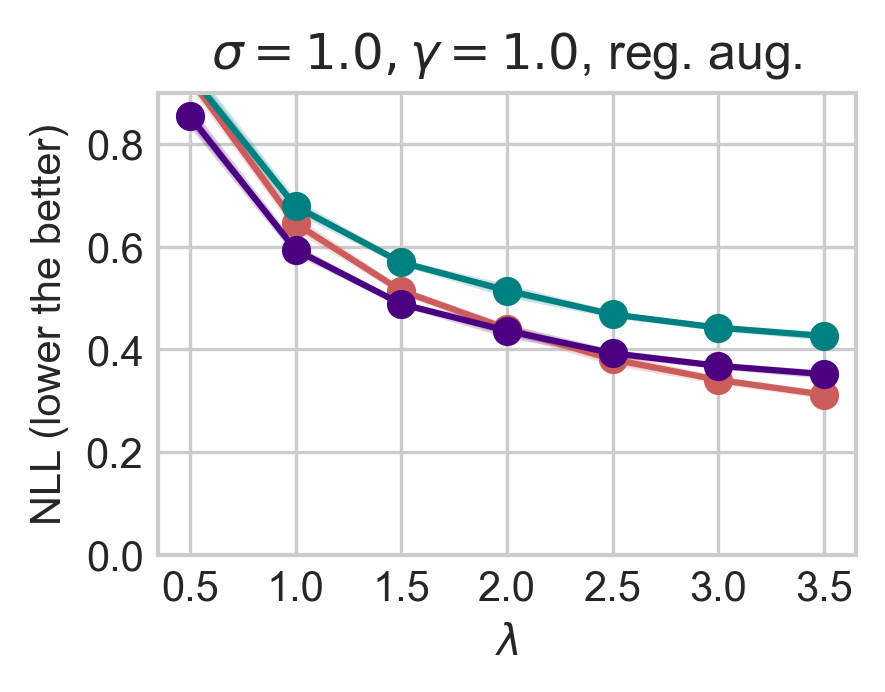}
      \captionsetup{format=hang, justification=centering}
      \caption{Random crop and horizontal flip}
  \end{subfigure}
  \begin{subfigure}{.32\linewidth}
      \includegraphics[width=\linewidth]{./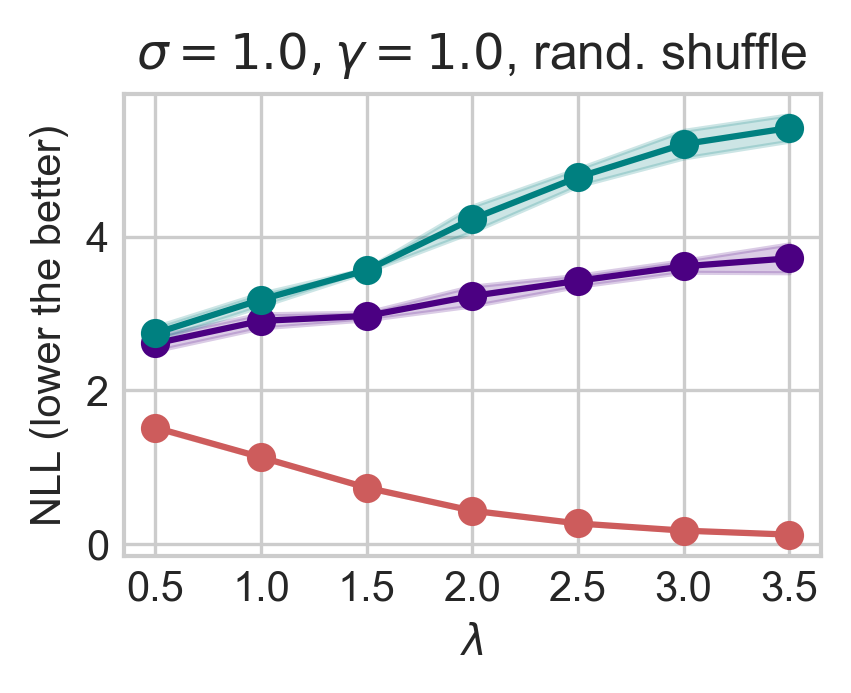}
      \captionsetup{format=hang, justification=centering}
      \caption{Image pixels randomly shuffled}
  \end{subfigure}  
  \caption{Extended results of Figure \ref{fig:CPE:largeAug} using ResNet-50 via SGLD on CIFAR-10.}
\end{figure}

\newpage

\paragraph{ResNet-18 and ResNet-50 via SGLD on CIFAR-100}\label{app:sec:sgld-cifar100}

\centerline
\centerline
\centerline

\begin{figure}[H]
  \begin{subfigure}{.32\linewidth}
      \includegraphics[width=\linewidth]{./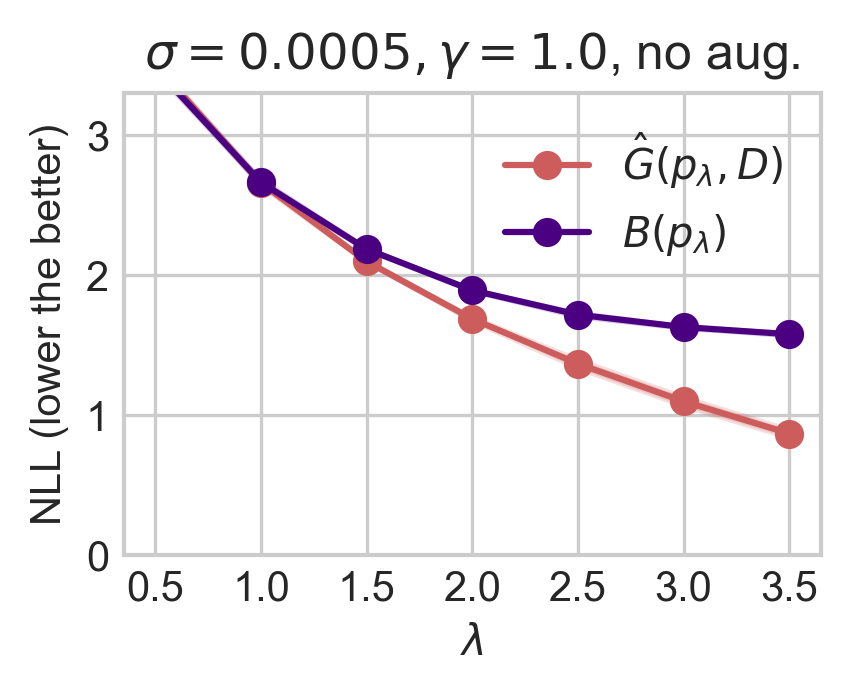}
      \captionsetup{format=hang, justification=centering}
      \caption{Narrow prior and standard softmax}
      \label{fig:CPE:NarrowPriorsmall_cifar100}
  \end{subfigure}
  \hfill
  \begin{subfigure}{.32\linewidth}
      \includegraphics[width=\linewidth]{./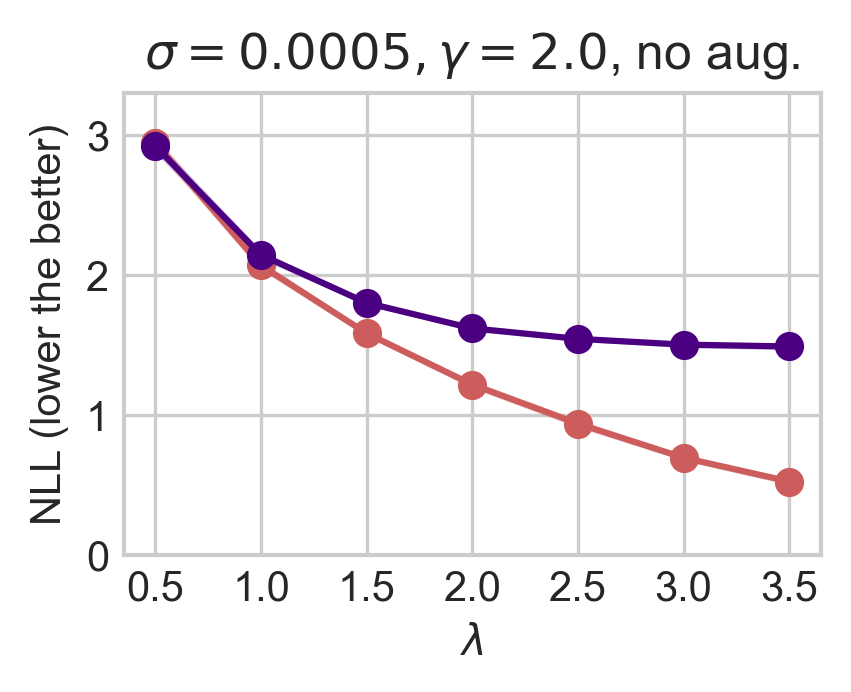}
      \captionsetup{format=hang, justification=centering}
      \caption{Narrow prior and tempered softmax}
      \label{fig:CPE:Softmaxsmall_cifar100}
  \end{subfigure}
  \hfill
  \begin{subfigure}{.32\linewidth}
      \includegraphics[width=\linewidth]{./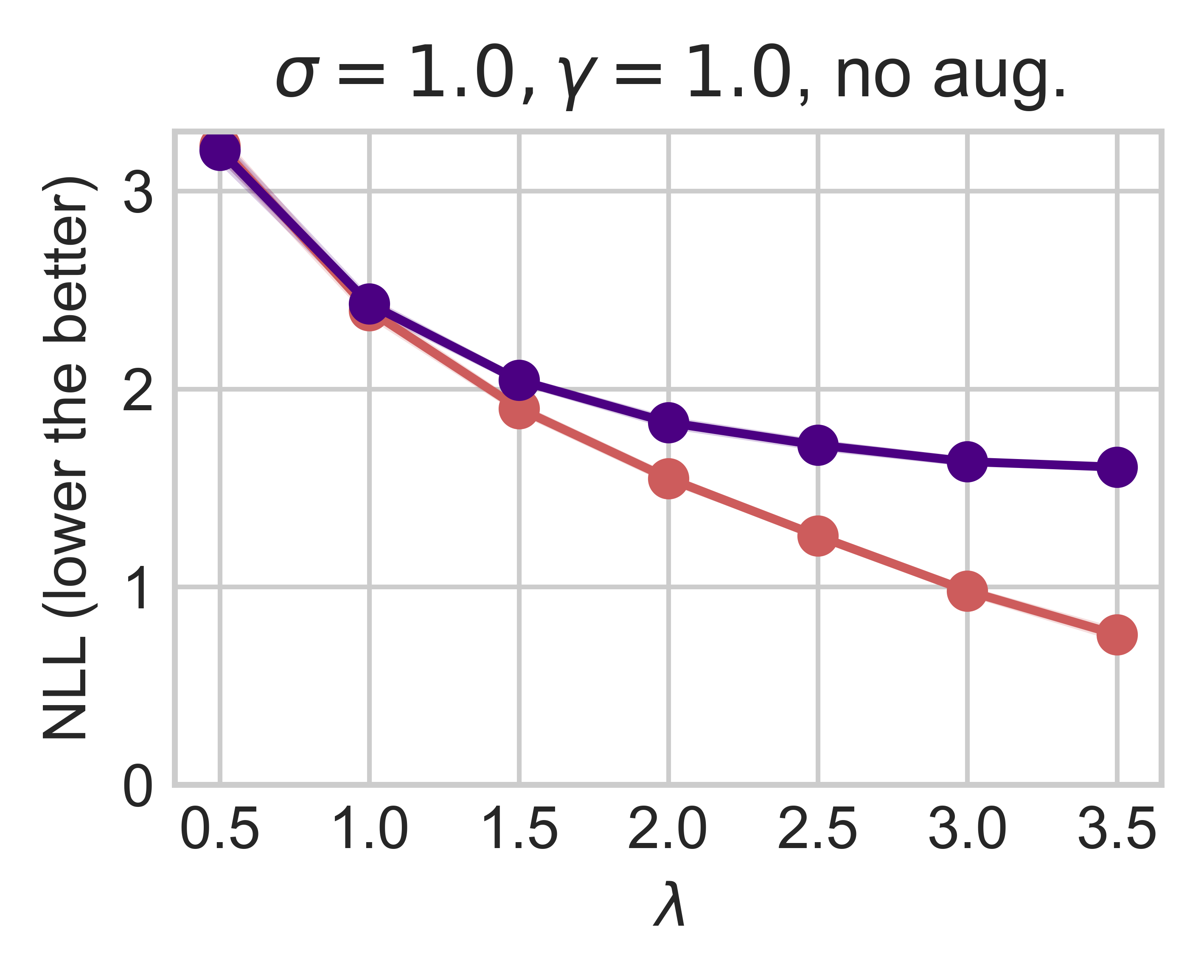}
      \captionsetup{format=hang, justification=centering}
      \caption{Standard prior and standard softmax}
      \label{fig:CPE:StandardPriorsmall_cifar100}
  \end{subfigure}
  \caption{Extended results of Figure \ref{fig:CPE:small} using ResNet-18 via SGLD on CIFAR-100.}
  \label{fig:CPE:small_cifar100}
\end{figure}

\centerline
\centerline
\centerline

\begin{figure}[H]

  \begin{subfigure}{.32\linewidth}
      \includegraphics[width=\linewidth]{./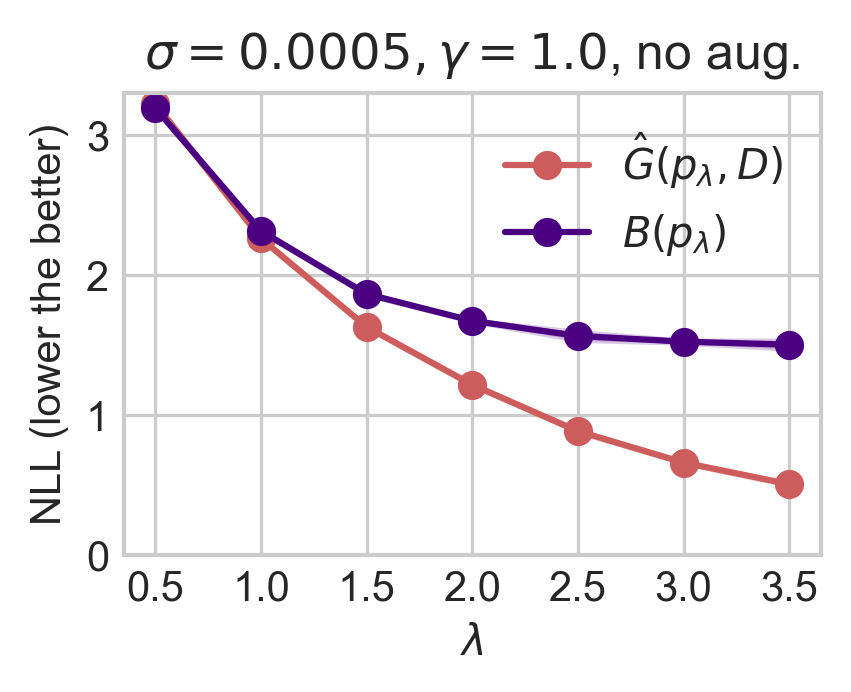}
      \captionsetup{format=hang, justification=centering}
      \caption{Narrow prior and standard softmax}
      \label{fig:CPE:NarrowPriorlarge_cifar100}
  \end{subfigure}
  \begin{subfigure}{.32\linewidth}
      \includegraphics[width=\linewidth]{./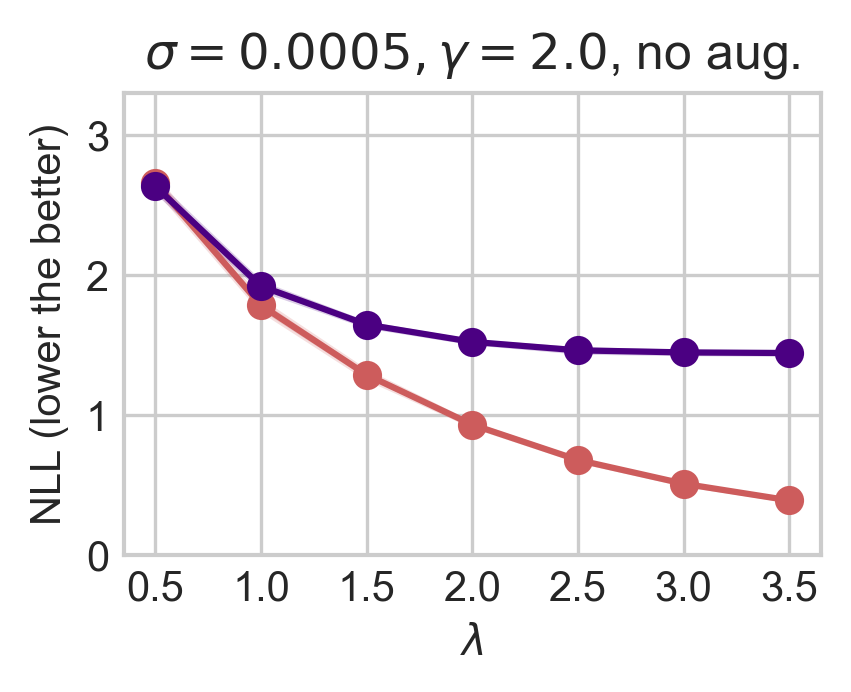}
      \captionsetup{format=hang, justification=centering}
      \caption{Narrow prior and tempered softmax}
      \label{fig:CPE:Softmaxlarge_cifar100}
  \end{subfigure}
  \begin{subfigure}{.32\linewidth}
      \includegraphics[width=\linewidth]{./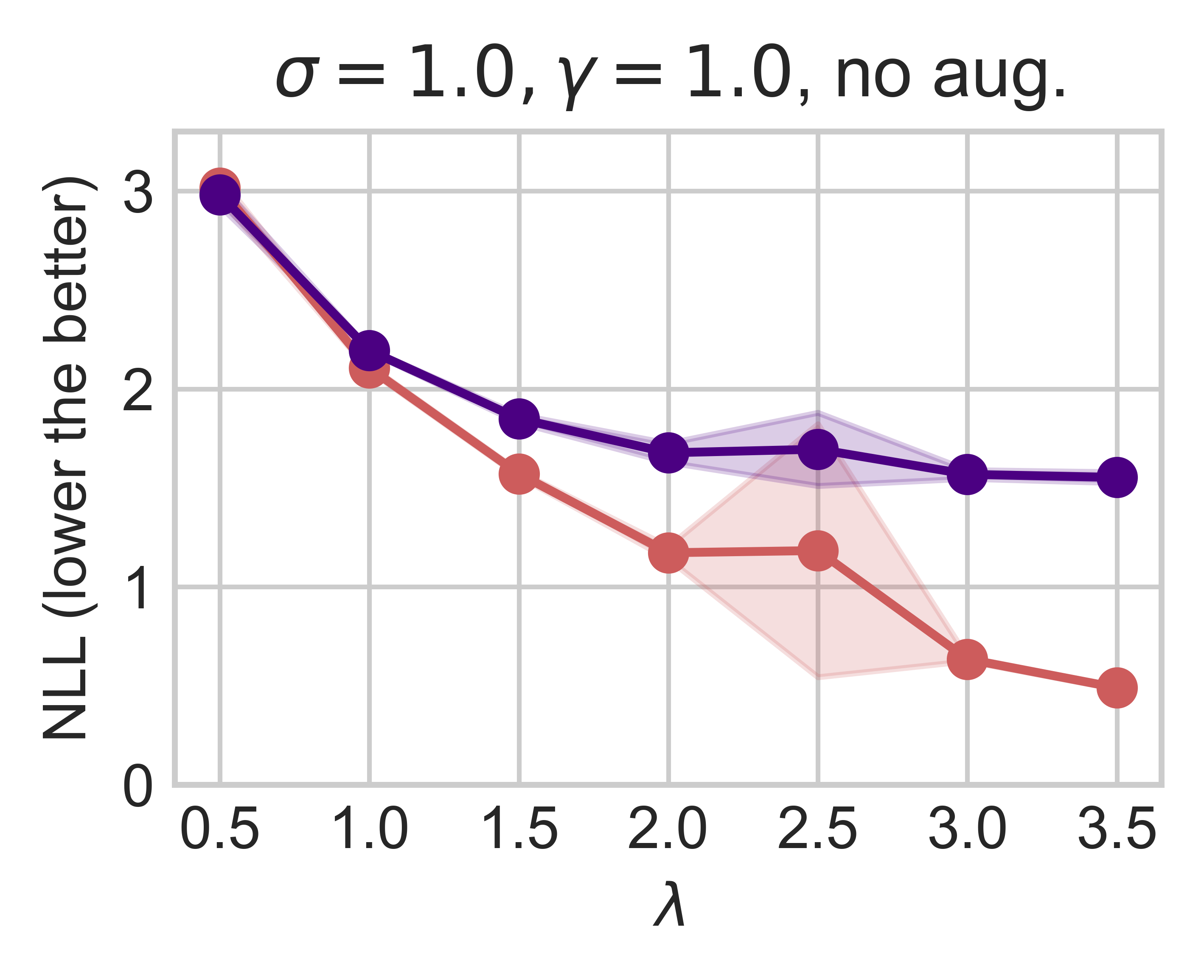}
      \captionsetup{format=hang, justification=centering}
      \caption{Standard prior and standard softmax}
      \label{fig:CPE:StandardPriorlarge_cifar100}
  \end{subfigure}
  \caption{Extended results of Figure \ref{fig:CPE:large} using ResNet-50 via SGLD on CIFAR-100.}
\end{figure}

\newpage

\centerline
\centerline
\centerline

\begin{figure}[H]
  \begin{subfigure}{.32\linewidth}
      \includegraphics[width=\linewidth]{./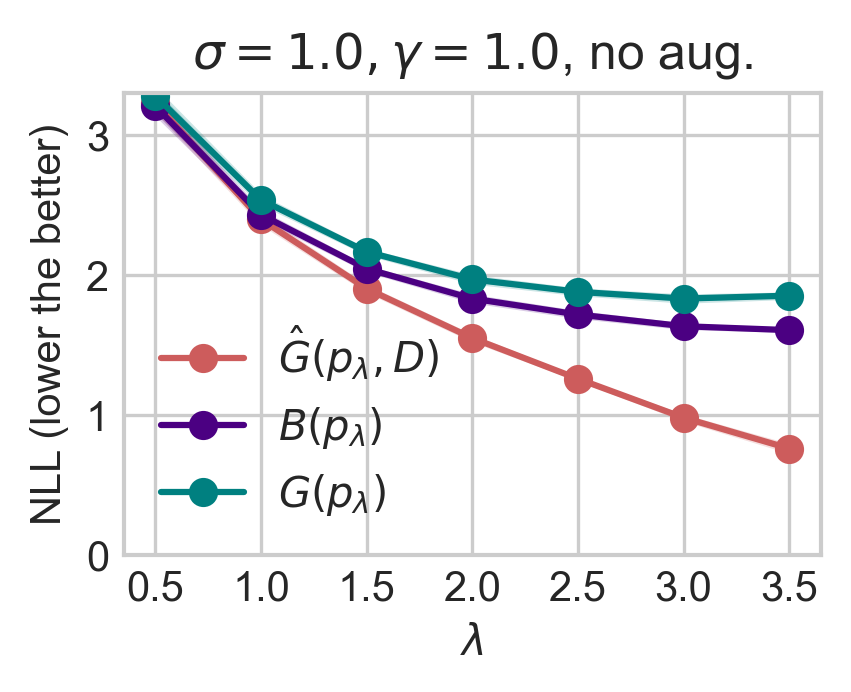}
      \captionsetup{format=hang, justification=centering}
      \caption{Standard prior and standard softmax}
      \label{fig:CPE:cifar100_50}
  \end{subfigure}
  \label{}
  \hfill
  \begin{subfigure}{.32\linewidth}
      \includegraphics[width=\linewidth]{./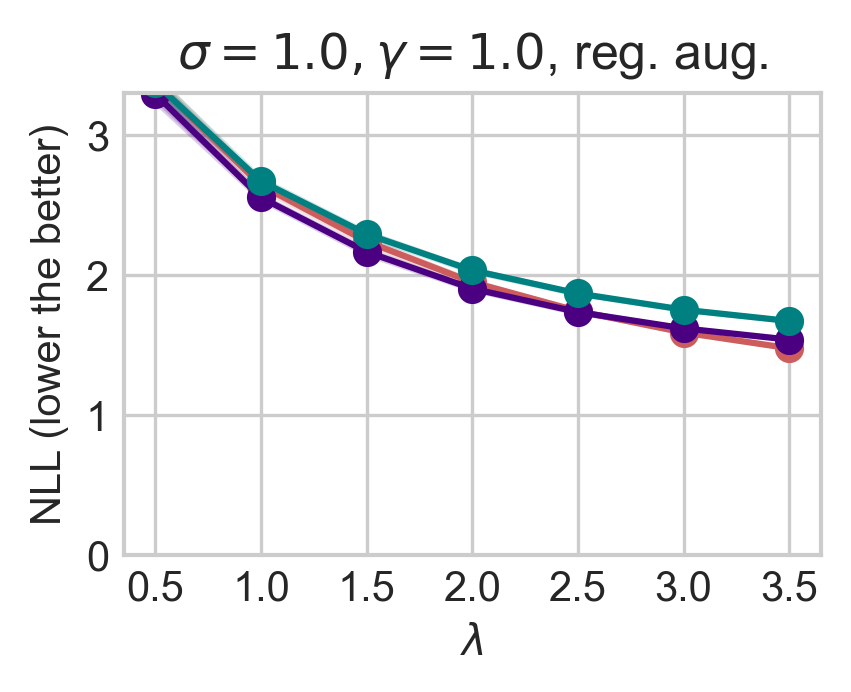}
      \captionsetup{format=hang, justification=centering}
      \caption{Random crop and horizontal flip}
      \label{fig:CPE:cifar100_50_aug}
  \end{subfigure}
  \hfill
  \begin{subfigure}{.32\linewidth}
      \includegraphics[width=\linewidth]{./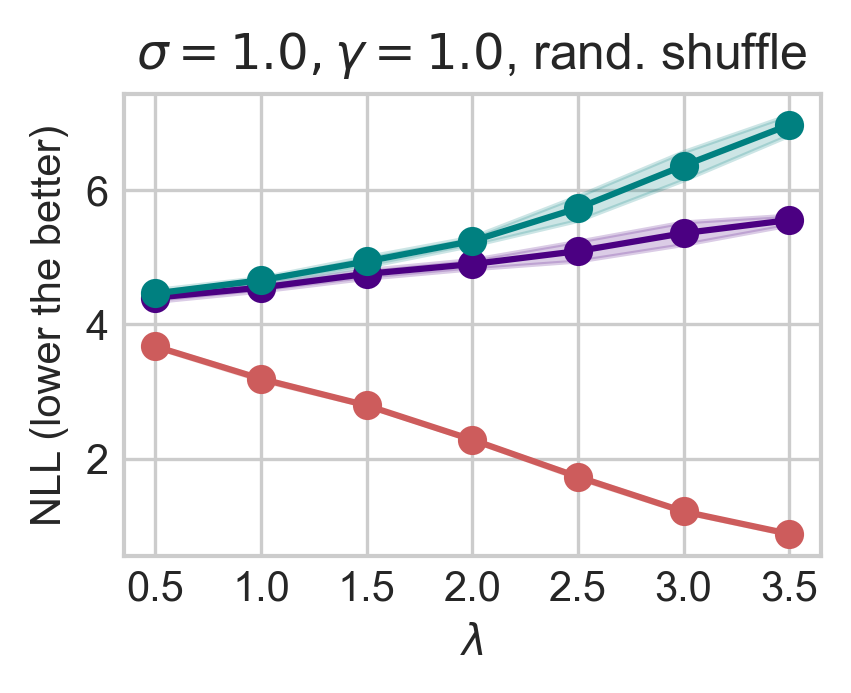}
      \captionsetup{format=hang, justification=centering}
      \caption{Image pixels randomly shuffled}
      \label{fig:CPE:cifar100_50_perm}
  \end{subfigure}
  \caption{Extended results of Figure \ref{fig:CPE:smallAug} using ResNet-18 via SGLD on CIFAR-100.}
  \label{fig:CPE:small_cifar100_aug}
\end{figure}

\centerline
\centerline
\centerline

\begin{figure}[H]
  \begin{subfigure}{.32\linewidth}
      \includegraphics[width=\linewidth]{./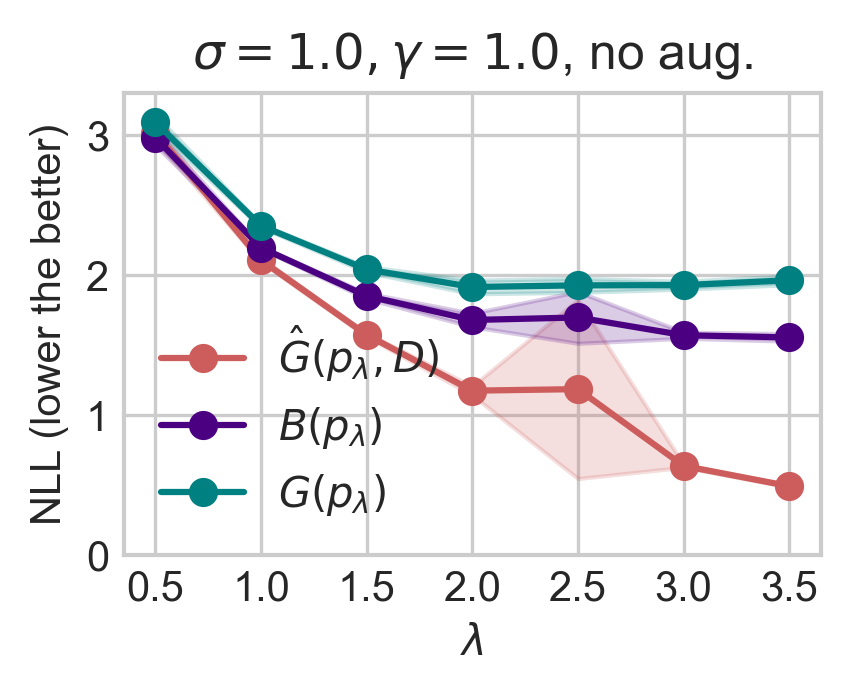}
      \captionsetup{format=hang, justification=centering}
      \caption{Standard prior and standard softmax}
  \end{subfigure}
  \begin{subfigure}{.32\linewidth}
      \includegraphics[width=\linewidth]{./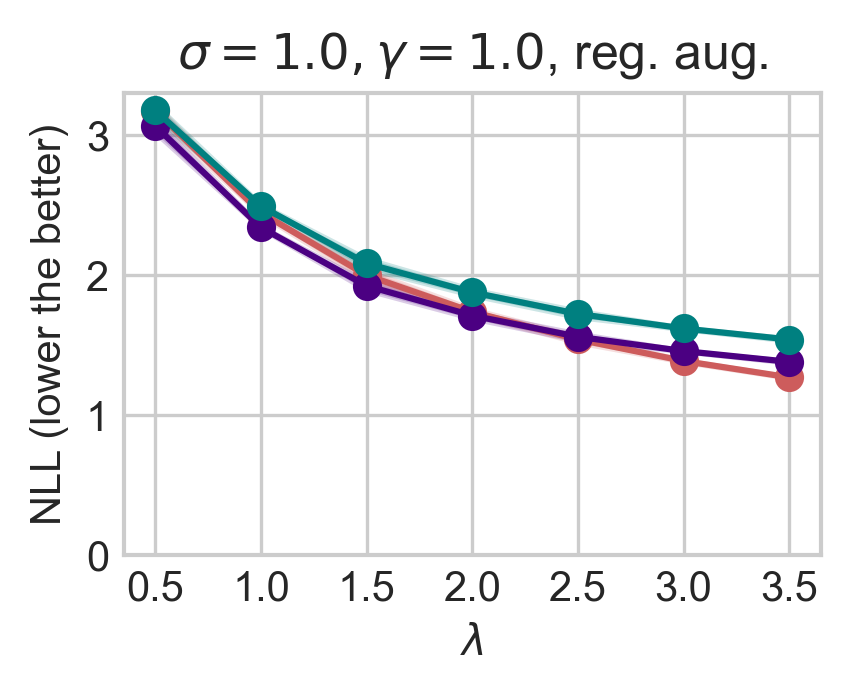}
      \captionsetup{format=hang, justification=centering}
      \caption{Random crop and horizontal flip}
  \end{subfigure}
  \begin{subfigure}{.32\linewidth}
      \includegraphics[width=\linewidth]{./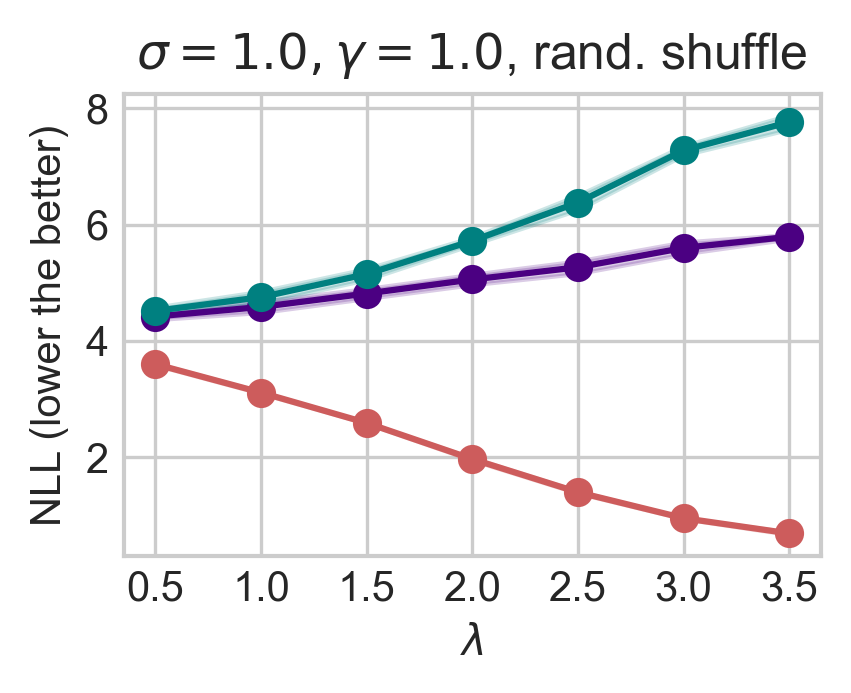}
      \captionsetup{format=hang, justification=centering}
      \caption{Image pixels randomly shuffled}
  \end{subfigure}  
  \caption{Extended results of Figure \ref{fig:CPE:largeAug} using ResNet-50 via SGLD on CIFAR-100.}
\end{figure}

\newpage
\chapter{Recursive PAC-Bayes: A Frequentist Approach to Sequential Prior Updates}
\label{chap:chap3}
The work presented in this chapter is based on a paper that is currently under review in \textit{NeurIPS} 2024: 
\\
\\
\bibentry{wu2024recursivepacbayesfrequentistapproach}. 

\newpage

\section*{Abstract}
    PAC-Bayesian analysis is a frequentist framework for incorporating prior knowledge into learning. It was inspired by Bayesian learning, which allows sequential data processing and naturally turns posteriors from one processing step into priors for the next. However, despite two and a half decades of research, the ability to update priors sequentially without losing confidence information along the way remained elusive for PAC-Bayes. While PAC-Bayes allows construction of data-informed priors, the final confidence intervals depend only on the number of points that were not used for the construction of the prior, whereas confidence information in the prior, which is related to the number of points used to construct the prior, is lost. This limits the possibility and benefit of sequential prior updates, because the final bounds depend only on the size of the final batch.

    We present a novel and, in retrospect, surprisingly simple and powerful PAC-Bayesian procedure that allows  sequential prior updates with no information loss. The procedure is based on a novel decomposition of the expected loss of randomized classifiers. The decomposition rewrites the loss of the posterior as an excess loss relative to a downscaled loss of the prior plus the downscaled loss of the prior, which is bounded recursively.
    As a side result, we also present a generalization of the split-kl and PAC-Bayes-split-kl inequalities to discrete random variables, which we use for bounding the excess losses, and which can be of independent interest. In empirical evaluation the new procedure significantly outperforms state-of-the-art.

\section{Introduction}
PAC-Bayesian analysis was born from an attempt to derive frequentist generalization guarantees for Bayesian-style prediction rules \citep{STW97,McA98}. The motivation was to provide a way to incorporate prior knowledge into the frequentist analysis of generalization. PAC-Bayesian bounds provide high-probability generalization guarantees for randomized classifiers. A randomized classifier is defined by a distribution $\rho$ on a set of prediction rules $\HH$, which is used to sample a prediction rule each time a prediction is to be made. Bayesian posterior is an example of a randomized classifier, whereas PAC-Bayesian bounds hold generally for all randomized classifiers. Prior knowledge is encoded through a prior distribution $\pi$ on $\HH$, and the complexity of a posterior distribution $\rho$ is measured by the Kullback-Leibler (KL) divergence from the prior, $\KL(\rho\|\pi)$. PAC-Bayesian generalization guarantees are optimized by posterior distributions $\rho$ that optimize a trade-off between empirical data fit and divergence from the prior in the KL sense.

Selection of a ``good'' prior plays an important role in the PAC-Bayesian bounds. If one manages to foresee which prediction rules are likely to produce low prediction error and allocate a higher prior mass for them, then the bounds are tighter, because the posterior only needs to make a small deviation from the prior. But if the prior mass on well-performing prediction rules is small, the bounds are loose. A major technique to design good priors is to use part of the data to estimate a good prior and the rest of the data to compute a PAC-Bayes bound. It is known as data-dependent or data-informed priors \citep{APS07}. However, all existing approaches to data-informed priors have three major disadvantages. The first is that the bounds are computed on ``the rest of the data'' that were not used in construction of the prior. Thus, the sample size in the bounds is only a fraction of the total sample size. Therefore, empirically data-informed priors are not always helpful. In many cases starting with an uninformed prior and using all the data to compute the posterior and the bound turns to be superior to sacrificing part of the data for prior construction \citep{APS07,MGG20}. The second disadvantage is that all the confidence information about the prior is lost in the process. In particular, a prior trained on a few data points is treated in the same way as a prior trained on a lot of data. And a third related disadvantage is that sequential data processing is detrimental, because the bounds only depend on the size of the last chunk and all the confidence information from processing earlier chunks is lost in the process.

Our main contribution is a new (and simple) way of decomposing the loss of a randomized classifier defined by the posterior. We write it as an excess loss relative to a downscaled loss of the randomized classifier defined by the prior plus the downscaled loss of the randomized classifier defined by the prior. The excess loss can be bounded using PAC-Bayes-Empirical-Bernstein-style inequalities \citep{TS13,MGG20,WMLIS21,WS22}, whereas the loss of the randomized classifier defined by the prior can be bounded recursively. The recursive bound can both use the data used for construction of the prior and ``the rest of the data'', and thereby preserves confidence information on the prior. Our contribution stands out relative to all prior work on PAC-Bayes, and in fact all prior work on frequentist generalization bounds, because it makes sequential data processing and sequential prior updates meaningful and beneficial. 

We note that while several recent papers experimented with sequential posterior updates by using martingale-style analysis, in all these works the prior remained fixed and only the posterior was changing \citep{CWR23,BG23,RTS24}. Another line of work used tools from online learning to derive PAC-Bayesian bounds \citep{JJKO23}, and in this context \citet{HG23} have used sequential prior updates, but their bounds hold for a uniform aggregation of sequentially constructed posteriors, which is different from standard posteriors studied in our work. The confidence bounds in their work come primarily from aggregation rather than confidence in individual posteriors in the sequence. Our work is the first one allowing sequential prior updates without loss of confidence information.

An additional side contribution of independent interest is a generalization of the split-kl and PAC-Bayes-split-kl inequalities of \citet{WS22} from ternary to general discrete random variables. It is based on a novel representation of discrete random variables as a superposition of Bernoulli random variables.

The paper is organized in the following way. In Section~\ref{sec:idea} we briefly survey the evolution of data-informed priors in PAC-Bayes and present our main idea behind Recursive PAC-Bayes; in Section~\ref{sec:split-kl} we present our generalization of the split-kl and PAC-Bayes-split-kl inequalities, which are later used to bound the excess losses; in Section~\ref{sec:main} we present the Recursive PAC-Bayes bound; in Section~\ref{sec:experiments} we present an empirical evaluation; and in Section~\ref{sec:discussion} we conclude with a discussion.

\section{The evolution of data-informed priors and the idea of recursive PAC-Bayes}\label{sec:idea}

In this section we briefly survey the evolution of data-informed priors, and then present our construction of Recursive PAC-Bayes. We consider the standard binary classification setting, with $\XX$ being a sample space, $\YY$ a label space, $\HH$ a set of prediction rules $h:\XX\to\YY$, and $\ell(h(X),Y) = \1[h(X)\neq Y]$ the zero-one loss function, where $\1[\cdot]$ denotes the indicator function. We let $\DD$ denote a distribution on $\XX\times\YY$ and $S = \lrc{(X_1,Y_1),\dots,(X_n,Y_n)}$ an i.i.d.\ sample from $\DD$. Let $L(h) = \E_{(X,Y)\sim \DD}[\ell(h(X),Y)]$ be the expected and $\hat L(h,S) = \frac{1}{n}\sum_{i=1}^n\ell(h(X_i),Y_i)$ the empirical loss. 

Let $\rho$ be a distribution on $\HH$. A \emph{randomized classifier} associated with $\rho$ samples a prediction rule $h$ according to $\rho$ for each sample $X\in\XX$, and applies it to make a prediction $h(X)$. The expected loss of such randomized classifier, which we call $\rho$, is $\E_{h\sim\rho}[L(h)]$ and the empirical loss is $\E_{h\sim\rho}[\hat L(h,S)]$. For brevity we use $\E_\rho[\cdot]$ to denote $\E_{h\sim\rho}[\cdot]$. 

We use $\KL(\rho\|\pi)$ to denote the Kullback-Leibler divergence between two probability distributions, $\rho$ and $\pi$ \citep{CT06}. For $p,q \in [0,1]$ we further use $\kl(p\|q) = \KL((1-p,p)\|(1-q,q))$ to denote the Kullback-Leibler divergence between two Bernoulli distributions with biases $p$ and $q$.

The goal of PAC-Bayes is to bound $\E_{\rho}[L(h)]$. Below we present how the bounds on $\E_{\rho}[L(h)]$ have evolved. In Figure~\ref{fig:evolution} we also provide a graphical illustration of the evolution.

\subsection{Uninformed priors} 
Early work on PAC-Bayes used \emph{uninformed priors} \citep{McA98}. An uniformed prior $\pi$ is a distribution on $\HH$ that is independent of the data $S$. A classical, and still one of the tightest bounds, is the following. 
\begin{theorem}[PAC-Bayes-$\kl$ Inequality, \citealp{See02}, \citealp{Mau04}] \label{thm:pbkl}
For any probability distribution $\pi$ on $\HH$ that is independent of $S$ and any $\delta \in (0,1)$:
\[
\P[\exists \rho\in{\mathcal{P}}: \kl\lr{\E_\rho[\hat L(h,S)]\middle\|\E_\rho\lrs{L(h)}} \geq \frac{\KL(\rho\|\pi) + \ln(2 \sqrt n/\delta)}{n}]\leq \delta,
\]
where $\mathcal{P}$ is the set of all probability distributions on $\HH$, including those dependent on $S$.
\end{theorem}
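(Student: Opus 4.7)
The plan is to combine the Donsker--Varadhan change of measure inequality, Jensen's inequality (using joint convexity of $\kl$), Markov's inequality, and a classical moment bound on $e^{n\kl(\hat L(h,S)\|L(h))}$ (Maurer's lemma). I would first apply the change of measure step: for any measurable $\phi:\HH\to\R$ and any posterior $\rho$ absolutely continuous with respect to $\pi$,
\[
\E_\rho[\phi(h)] \;\leq\; \KL(\rho\|\pi) + \ln \E_\pi\!\left[e^{\phi(h)}\right],
\]
and specialize this to $\phi(h) = n\,\kl(\hat L(h,S)\|L(h))$. Because $\kl(\cdot\|\cdot)$ is jointly convex in its two arguments, Jensen's inequality yields
\[
\kl\!\lr{\E_\rho[\hat L(h,S)]\,\middle\|\,\E_\rho[L(h)]} \;\leq\; \E_\rho\!\left[\kl(\hat L(h,S)\|L(h))\right],
\]
so that combining the two steps reduces the problem to controlling $\ln \E_\pi[e^{n\kl(\hat L(h,S)\|L(h))}]$ with high probability over $S$.

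Next I would remove the randomness in $S$ by Markov's inequality: with probability at least $1-\delta$ over $S$,
\[
\E_\pi\!\left[e^{n\kl(\hat L(h,S)\|L(h))}\right] \;\leq\; \frac{1}{\delta}\,\E_S\E_\pi\!\left[e^{n\kl(\hat L(h,S)\|L(h))}\right].
\]
Since $\pi$ is independent of $S$, Fubini's theorem allows swapping the expectations, so it suffices to bound $\E_S[e^{n\kl(\hat L(h,S)\|L(h))}]$ for each fixed $h$. This is exactly where Maurer's lemma enters: for any $h$, since $n\hat L(h,S)$ is a sum of $n$ i.i.d.\ Bernoulli$(L(h))$ variables,
\[
\E_S\!\left[e^{n\kl(\hat L(h,S)\|L(h))}\right] \;\leq\; 2\sqrt{n}.
\]
Chaining everything and dividing by $n$ delivers the stated bound with confidence parameter $\delta$ and the $\ln(2\sqrt{n}/\delta)$ term.

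The main obstacle is the sub-exponential-style moment bound $\E_S[e^{n\kl(\hat L(h,S)\|L(h))}]\leq 2\sqrt{n}$. The intuition is that Cram\'er's theorem (or Sanov's theorem for Bernoulli distributions) says the probability that $\hat L(h,S)$ deviates to a value $q$ decays exponentially like $e^{-n\kl(q\|L(h))}$, which cancels the exponential factor inside the expectation and leaves a subexponential remainder; the sharp constant $2\sqrt n$ requires a careful integration against the binomial distribution (comparing to the exact tail of $\mathrm{Bin}(n,L(h))$ and bounding the resulting sum). I would invoke this as a black-box lemma rather than reprove it. A secondary technical point to verify carefully is the uniformity of the bound in $\rho$: the change of measure step already gives the inequality simultaneously for all $\rho$ absolutely continuous with respect to $\pi$ (since the right-hand side after Markov no longer depends on $\rho$), so a single application of Markov suffices and no union bound over posteriors is needed.
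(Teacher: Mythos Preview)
Your proof is correct and follows the standard route to this classical inequality (change of measure via Donsker--Varadhan, Jensen for the jointly convex $\kl$, Markov to pass from expectation over $S$ to high probability, and Maurer's moment bound $\E_S[e^{n\kl(\hat L(h,S)\|L(h))}]\leq 2\sqrt n$). The paper itself does not prove this theorem: it is stated as a known result with attribution to \citet{See02} and \citet{Mau04}, so there is no in-paper proof to compare against. Your sketch is exactly the argument those references give.
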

A posterior $\rho$ that minimizes $\E_\rho[L(h)]$ has to balance between allocating higher mass to prediction rules $h$ with small $\hat L(h,S)$ and staying close to $\pi$ in the $\KL(\rho\|\pi)$ sense. Since $\pi$ has to be independent of $S$, typical uninformed priors aim ``to leave maximal options open'' for $\rho$ by staying close to uniform.

\subsection{Data-informed priors} \citet{APS07} proposed to split the data $S$ into two disjoint sets, $S = S_1\cup S_2$, and use $S_1$ to construct a \emph{data-informed prior} $\pi$ and compute a bound on $\E_\rho[L(h)]$ using $\pi$ and $S_2$. Since in this approach $\pi$ is independent of $S_2$, Theorem~\ref{thm:pbkl} can be applied. The advantage is that $\pi$ can use $S_1$ to give higher mass to promising classifiers, thus relaxing the regularization pressure $\KL(\rho\|\pi)$ and making it easier for $\rho$ to allocate even higher mass to well-performing classifiers (those with small $\hat L(h,S_2)$). The disadvantage is that the sample size in the bound (the $n$ in the denominator) decreases from the size of $S$ to the size of $S_2$. Indeed, \citeauthor{APS07} observed that the sacrifice of $S_1$ for prior construction does not always pay off.

\subsection{Data-informed priors + excess loss} \citet{MGG20} observed that if we have already sacrificed $S_1$ for the construction of $\pi$, we could also use it to construct a reference prediction rule $h^*$, typically an Empirical Risk Minimizer (ERM) on $S_1$. They then employed the decomposition
\[
    \E_\rho[L(h)] = \E_\rho[L(h) - L(h^*)] + L(h^*)
\]
and used $S_2$ to give a PAC-Bayesian bound on $\E_\rho[L(h) - L(h^*)]$ and a single-hypothesis bound on $L(h^*)$. The quantity $\E_\rho[L(h) - L(h^*)]$ is known as \emph{excess loss}. The advantage of this approach is that when $L(h^*)$ is a good approximation of $\E_\rho[L(h)]$, the excess loss has lower variance than the plain loss $\E_\rho[L(h)]$ and, therefore, is more efficient to bound, whereas the single-hypothesis bound on $L(h^*)$ does not involve the $\KL(\rho\|\pi)$ term. Therefore, it is generally beneficial to use excess losses in combination with data-informed priors. However, as with the previous approach, sacrificing $S_1$ to learn $\pi$ and $h^*$ means that the denominator in the bounds ($n$ in Theorem~\ref{thm:pbkl}) reduces to the size of $S_2$, and it does not always pay off. (We note that the excess loss is not binary and not in the $[0,1]$ interval, and in order to exploit small variance it is actually necessary to apply a PAC-Bayes-Empirical-Bernstein-style inequality \citep{TS13,MGG20,WMLIS21} or the PAC-Bayes-split-kl inequality \citep{WS22} rather than Theorem~\ref{thm:pbkl}, but the point about reduced sample size still applies.)

\subsection{Recursive PAC-Bayes (new)} 
We introduce the following decomposition of the loss
\begin{equation}
\label{eq:RPB-base}
\E_\rho[L(h)] = \E_\rho[L(h) - \gamma \E_\pi[L(h')]] + \gamma \E_\pi[L(h')].
\end{equation}
As before, we decompose $S$ into two disjoint sets $S= S_1\cup S_2$. We make the following major observations:
\begin{itemize}
    \item The quantity $\E_\pi[L(h')]$ on the right is ``of the same kind'' as $\E_\rho[L(h)]$ on the left.
    \item We can take an uninformed prior $\pi_0$ and apply Theorem~\ref{thm:pbkl} (or any other suitable PAC-Bayes bound) to bound $\E_\pi[L(h')]$. (The $\KL$ term in the bound on $\E_\pi[L(h')]$ will be $\KL(\pi\|\pi_0)$.)
    \item We can restrict $\pi$ to depend only on $S_1$, but still use all the data $S$ in calculation of the PAC-Bayes bound on $\E_\pi[L(h')]$, because $\pi$ is a posterior relative to $\pi_0$, and a posterior is allowed to depend on all the data, and in particular on any subset of the data. Therefore, the empirical loss $\E_\pi[\hat L(h',S)]$ can be computed on all the data $S$, and the denominator of the bound in Theorem~\ref{thm:pbkl} can be the size of $S$, and not the size of $S_2$. This is what we call \emph{preservation of confidence information on $\pi$}, because all the data $S$ are used to construct a confidence bound on $\E_\pi[L(h')]$, and not just $S_2$. This is in contrast to the bound on $L(h^*)$ in the approach of \citet{MGG20}, which only allows to use $S_2$ for bounding $L(h^*)$. Note that while we use all the data $S$ in calculation of the bound, we only use $S_1$ and $\E_\pi[\hat L(h',S_1)]$ in the construction of $\pi$. Nevertheless, we can still use the knowledge that we will have $n$ samples when we reach the estimation phase, i.e., when constructing $\pi$ we can leave the denominator of the bound at $n$, allowing more aggressive deviation from $\pi_0$.
    \item If we restrict $\pi$ to depend only on $S_1$, then it is a valid prior for estimation of any posterior quantity $\E_\rho[\cdot]$ based on $S_2$. Thus, if we also restrict $\gamma$ to depend only on $S_1$, we can use any PAC-Bayes-Empirical-Bernstein-style inequality or the PAC-Bayes-split-kl inequality to estimate the excess loss $\E_\rho[L(h) - \gamma \E_\pi[L(h')]]$ based on $S_2$, i.e., based on $\E_\rho[\hat L(h,S_2) - \gamma \E_\pi[\hat L(h',S_2)]]$. If $\gamma \E_\pi[L(h')]$ is a good approximation of $\E_\rho[L(h)]$ and $\E_\rho[L(h)]$ is not close to zero, then the excess loss $\E_\rho[L(h) - \gamma \E_\pi[L(h')]]$ is more efficient to bound than the plain loss $\E_\rho[L(h)]$.
    \item In general, since $\E_\rho[L(h)]$ is expected to improve on $\E_\pi[L(h')]$, it is natural to set $\gamma < 1$. However, $\gamma$ is not allowed to depend on $S_2$, because otherwise $\hat L(h,S_2) - \gamma \E_\pi[\hat L(h',S_2)]$ becomes a biased estimate of $L(h) - \gamma \E_\pi[L(h')]$. We discuss the choice of $\gamma$ in more detail when we present the bound and the experiments.
    \item \citet{BG23} have proposed a sequential martingale-style evaluation of a martingale version of $\E_\rho[L(h)-L(h^*)]$ and $L(h^*)$ in the approach of \citeauthor{MGG20}, but it has not been shown to yield significant improvements yet. The same ``martingalization'' can be directly applied to our decomposition, but to keep things simple we stay with the basic decomposition.
    \item Finally, we note that we can split $S_1$ further and apply Equation~\ref{eq:RPB-base} recursively to bound $\E_\pi[L(h')]$.
\end{itemize}
To set notation for recursive decomposition, we use $\pi_0,\pi_1,\dots,\pi_T$ to denote a sequence of distributions on $\HH$, where $\pi_0$ is an uninformed prior and $\pi_T=\rho$ is the final posterior. We use $\gamma_2,\dots,\gamma_T$ to denote a sequence of coefficients. For $t\geq 2$ we then have the recursive decomposition
\begin{equation}
\label{eq:RPB}
\E_{\pi_t}[L(h)] = \E_{\pi_t}[L(h) - \gamma_t\E_{\pi_{t-1}}[L(h)]] + \gamma_t\E_{\pi_{t-1}}[L(h)].
\end{equation}
To construct $\pi_1,\dots,\pi_T$ we split the data $S$ into $T$ non-overlapping subsets, $S= S_1\cup\cdots\cup S_T$. We restrict $\pi_t$ to depend on $\Utrain_t = \bigcup_{s=1}^t S_s$ only, and we use $\Uval_t = \bigcup_{s=t}^T S_s$ to estimate (recursively) $\E_{\pi_t}[L(h)]$ (see Figure~\ref{fig:evolution} in Appendix~\ref{app:illustrations}). Note that $S_t$ is used both for construction of $\pi_t$ and for estimation of $\E_{\pi_t}[L(h)]$ (it is both in $\Utrain_t$ and $\Uval_t$), resulting in efficient use of the data. It is possible to use any standard PAC-Bayes bound, e.g.,  Theorem~\ref{thm:pbkl}, to bound $\E_{\pi_1}[L(h)]$, and any PAC-Bayes-Empirical-Bernstein-style bound or the PAC-Bayes-split-kl bound to bound the excess losses $\E_{\pi_t}[L(h) - \gamma_t\E_{\pi_{t-1}}[L(h)]]$. The excess losses take more than three values, so in the next section we present a generalization of the PAC-Bayes-split-kl inequality to general discrete random variables, which may be of independent interest. The Recursive PAC-Bayes bound is presented in Section~\ref{sec:main}.

\section{Split-kl and PAC-Bayes-split-kl inequalities for discrete random variables}
\label{sec:split-kl}

The $\kl$ inequality is one of the tightest concentration of measure inequalities for binary random variables. Letting $\kl^{-1,+}(\hat{p},\varepsilon):=\max\lrc{p: p\in[0,1] \text{ and }\kl(\hat{p}\|p)\leq \varepsilon}$ denote the upper inverse of $\kl$ and $\kl^{-1,-}(\hat{p},\varepsilon):=\min\lrc{p: p\in[0,1] \text{ and }\kl(\hat{p}\|p)\leq \varepsilon}$ the lower inverse, it states the following.
\begin{theorem}[$\kl$ Inequality~\citep{Lan05,FBBT21,FBB22}]\label{thm:kl}
Let $Z_1,\cdots,Z_n$ be independent random variables bounded in the $[0,1]$ interval and with $\E[Z_i] = p$ for all $i$. Let $\hat{p}=\frac{1}{n}\sum_{i=1}^n Z_i$ be the empirical mean. Then, for any $\delta\in(0,1)$:
\[
\P[p\geq \kl^{-1,+}\lr{\hat p, \frac{1}{n}\ln\frac{1}{\delta}}
] \leq \delta \qquad ; \qquad
\P[p \leq \kl^{-1,-}\lr{\hat p, \frac{1}{n}\ln\frac{1}{\delta}}]\leq \delta.
\]
\end{theorem}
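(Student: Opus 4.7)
The plan is to follow the classical Cram\'er–Chernoff route: first derive the standard one-sided tail bound $\P[\hat p \geq q] \leq e^{-n\kl(q\|p)}$ for $q\geq p$ and its mirror image for the lower tail, and then invert using the definition of $\kl^{-1,+}$ (resp.\ $\kl^{-1,-}$) to recover the confidence-interval form in the statement.

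First, I would reduce the $[0,1]$-bounded case to the Bernoulli case at the level of moment generating functions. By convexity of $z\mapsto e^{\lambda z}$ on $[0,1]$, the chord inequality yields $e^{\lambda z} \leq (1-z) + z e^{\lambda}$, and taking expectations gives $\E[e^{\lambda Z_i}] \leq 1-p+pe^{\lambda}$, which is exactly the MGF of $\mathrm{Bernoulli}(p)$. This is the only place the $[0,1]$-boundedness is actually used; the rest of the argument proceeds as if each $Z_i$ were Bernoulli.

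Next, I would apply Markov's inequality to $e^{\lambda \sum_i Z_i}$ for $\lambda\geq 0$, combine with independence and the MGF bound, and obtain
\begin{equation*}
\P[\hat p \geq q] \;\leq\; e^{-\lambda n q}\bigl(1-p+pe^{\lambda}\bigr)^n \qquad \text{for every } \lambda\geq 0.
\end{equation*}
Optimizing in $\lambda$, with $q\geq p$ the minimizer is $\lambda^\star = \ln\frac{q(1-p)}{p(1-q)}\geq 0$, and substituting it back collapses the right-hand side to $e^{-n\kl(q\|p)}$. A symmetric argument with $\lambda\leq 0$ delivers $\P[\hat p\leq q]\leq e^{-n\kl(q\|p)}$ for $q\leq p$.

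The last step is the inversion. Setting $\varepsilon = \tfrac{1}{n}\ln\tfrac{1}{\delta}$, and using that $\kl(\hat p\|\cdot)$ is continuous and strictly increasing on $[\hat p,1]$, the event $\{p\geq \kl^{-1,+}(\hat p,\varepsilon)\}$ necessarily entails $\hat p\leq p$ together with $\kl(\hat p\|p)\geq\varepsilon$, which is the same as $\hat p\leq q_-$, where $q_-\in[0,p]$ is the unique solution of $\kl(q_-\|p)=\varepsilon$. The lower-tail bound then yields $\P[\hat p\leq q_-]\leq e^{-n\varepsilon}=\delta$; the second stated inequality is obtained identically from the upper-tail bound. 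The main obstacle I anticipate is not probabilistic but bookkeeping: carefully matching the (non-strict) event on $p$ with the corresponding tail event on $\hat p$, and handling the degenerate cases $p\in\{0,1\}$ where $\kl$ is improper, via continuity and the convention $\kl(p\|p)=0$.
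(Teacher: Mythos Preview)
The paper does not actually prove this theorem: it is stated with citations to \citet{Lan05,FBBT21,FBB22} and used as a black box in the proofs of Theorems~\ref{thm:Split_kl} and~\ref{thm:pac-bayes-split-kl-inequality}, so there is no ``paper's own proof'' to compare against. Your proposal is the standard Cram\'er--Chernoff argument from the cited literature and is correct, including the convexity reduction to the Bernoulli MGF, the optimization in $\lambda$ yielding $e^{-n\kl(q\|p)}$, and the inversion step translating the tail event $\{\hat p\leq q_-\}$ into the confidence-interval event $\{p\geq \kl^{-1,+}(\hat p,\varepsilon)\}$ via monotonicity of $\kl(\hat p\|\cdot)$ on $[\hat p,1]$.
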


While the $\kl$ inequality is tight for binary random variables, it is loose for random variables taking more than two values due to its inability to exploit small variance. To address this shortcoming \citet{WS22} have presented the split-kl and PAC-Bayes-split-kl inequalities for ternary random variables. Ternary random variables naturally appear in a variety of applications, including analysis of excess losses, certain ways of analysing majority votes, and in learning with abstention. The bound of \citeauthor{WS22} is based on decomposition of a ternary random variable into a pair of binary random variables and application of the $\kl$ inequality to each of them. Their decomposition yields a tight bound in the binary and ternary case, but loose otherwise. The same decomposition was used by \citet{BG23} to derive a slight variation of the inequality, with the same limitations. We present a novel decomposition of discrete random variables into a superposition of binary random variables. Unlike the decomposition of \citeauthor{WS22}, which only applies in the ternary case, our decomposition applies to general discrete random variables. By combining it with $\kl$ bounds for the binary elements we obtain a tight bound. The decomposition is presented formally below and illustrated graphically in Figure~\ref{fig:decomposition} in Appendix~\ref{app:illustrations}.

\subsection{Split-kl inequality}

Let $Z \in \lrc{b_0,\dots,b_K}$ be a $(K+1)$-valued random variable with $b_0 < b_1 < \cdots < b_K$. For $j\in\lrc{1,\dots,K}$ define $Z_{|j} = \1[Z\geq b_j]$ and $\alpha_j = b_j - b_{j-1}$. Then $Z = b_0+\sum_{j=1}^K \alpha_j Z_{|j}$. For a sequence $Z_1,\dots,Z_n$ of $(K+1)$-valued random variables with the same support, let $Z_{i|j} = \1[Z_i\geq b_j]$ denote the elements of binary decomposition of $Z_i$. 

\begin{theorem}[Split-$\kl$ inequality for discrete random variables]\label{thm:Split_kl}
Let $Z_1,\dots,Z_n$ be i.i.d.\ random variables taking values in $\lrc{b_0,\dots,b_K}$ with $\E[Z_i] = p$ for all $i$. Let $\hat p_{|j} = \frac{1}{n}\sum_{i=1}^n Z_{i|j}$. Then for any $\delta\in(0,1)$:
\[
\P[p\geq b_0+ \sum_{j=1}^K \alpha_j \kl^{-1,+}\lr{\hat p_{|j},\frac{1}{n}\ln\frac{K}{\delta}}]\leq \delta.
\]
\end{theorem}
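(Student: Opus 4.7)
The plan is to reduce the discrete case to $K$ independent applications of the binary $\kl$ inequality (Theorem~\ref{thm:kl}) via the superposition decomposition introduced just before the theorem statement, and then to patch them together by a union bound. The central observation, and really the main content of the result, is the identity
\[
Z \;=\; b_0 + \sum_{j=1}^K \alpha_j Z_{|j},
\]
which is easily verified: if $Z = b_k$ then $Z_{|j}=1$ exactly for $j\le k$, so the telescoping sum $b_0 + \sum_{j=1}^k (b_j - b_{j-1})$ collapses to $b_k$. Taking expectations gives $p = b_0 + \sum_{j=1}^K \alpha_j p_{|j}$, where $p_{|j} := \E[Z_{i|j}] = \Pr[Z_i \ge b_j]$.

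Next, fix $j \in \{1,\dots,K\}$. The sequence $Z_{1|j},\dots,Z_{n|j}$ is i.i.d.\ Bernoulli with mean $p_{|j}$, hence bounded in $[0,1]$, so the upper-tail half of Theorem~\ref{thm:kl} applies with confidence parameter $\delta/K$, yielding
\[
\P\!\left[p_{|j} \;\ge\; \kl^{-1,+}\!\lr{\hat p_{|j},\,\tfrac{1}{n}\ln\tfrac{K}{\delta}}\right] \;\le\; \frac{\delta}{K}.
\]
A union bound over $j=1,\dots,K$ then gives that, with probability at least $1-\delta$, the inequality $p_{|j} < \kl^{-1,+}(\hat p_{|j},\tfrac{1}{n}\ln\tfrac{K}{\delta})$ holds \emph{simultaneously} for every $j$.

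Finally, since $\alpha_j = b_j - b_{j-1} > 0$ by the assumption $b_0 < b_1 < \cdots < b_K$, multiplying the $j$-th inequality by $\alpha_j$ preserves its direction, and summing over $j$ and adding $b_0$ yields
\[
p \;=\; b_0 + \sum_{j=1}^K \alpha_j p_{|j} \;\le\; b_0 + \sum_{j=1}^K \alpha_j \,\kl^{-1,+}\!\lr{\hat p_{|j},\,\tfrac{1}{n}\ln\tfrac{K}{\delta}},
\]
with probability at least $1-\delta$, which is the claimed bound.

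There is no real technical obstacle once the decomposition is in hand; the step that does the conceptual work is recognising that the indicators $Z_{|j}=\mathbf{1}[Z\ge b_j]$ are Bernoulli with means $p_{|j}$ that recombine linearly into $p$ with positive weights $\alpha_j$, so that both the $\kl$ concentration and the direction of the resulting one-sided bound transfer cleanly through the superposition. The only minor design choice is how to split the failure budget across $j$, and the uniform choice $\delta/K$ is the natural one absent additional structure; a more refined (possibly variance-adaptive) allocation could tighten constants but is not needed to establish the stated inequality.
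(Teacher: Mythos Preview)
Your proof is correct and follows essentially the same approach as the paper: decompose $p = b_0 + \sum_j \alpha_j p_{|j}$, apply the binary $\kl$ inequality (Theorem~\ref{thm:kl}) to each Bernoulli component with budget $\delta/K$, and combine via a union bound, using $\alpha_j>0$ to preserve the inequality direction. The paper's proof is terser but structurally identical; your additional remarks (verifying the telescoping identity and explicitly noting that positivity of $\alpha_j$ is what makes the one-sided bound transfer through the sum) are correct elaborations rather than departures.
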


\begin{proof}
    Let $p_{|j} = \E[\hat p_{|j}]$, then $p = b_0+\sum_{j=1}^K \alpha_j p_{|j}$ and
    \[\P[p\geq b_0+ \sum_{j=1}^K \alpha_j \kl^{-1,+}\lr{\hat p_{|j},\frac{1}{n}\ln\frac{K}{\delta}}] \leq \P[\exists j: p_{|j}\geq \kl^{-1,+}\lr{\hat p_{|j},\frac{1}{n}\ln\frac{K}{\delta}}] \leq \delta,
    \]where the first inequality is by the decomposition of $p$ and the second inequality is by the union bound and Theorem~\ref{thm:kl}.
\end{proof}

\subsection{PAC-Bayes-Split-kl inequality}

Let $f:\HH\times\ZZ\to\lrc{b_0,\dots,b_K}$ be a $(K+1)$-valued loss function. (To connect it to the earlier examples, in the binary prediction case we would have $\ZZ=\XX\times\YY$ with elements $Z = (X,Y)$ and $f(h,Z) = \ell(h(X),Y)$, but we will need a more general space $\ZZ$ later.) For $j\in\lrc{1,\dots,K}$ let $f_j(\cdot,\cdot) = \1[f(\cdot,\cdot) \geq b_j]$. Let $\DD_Z$ be an unknown distribution on $\ZZ$. For $h\in\HH$ let $F(h) = \E_{\DD_Z}[f(h,Z)]$ and $F_j(h)=\E_{\DD_Z}[f_j(h,Z)]$. Let $S = \lrc{Z_1,\dots,Z_n}$ be an i.i.d.\ sample according to $\DD_Z$ and $\hat F_j(h,S)=\frac{1}{n}\sum_{i=1}^n f_j(h, Z_i)$. 

\begin{theorem}[PAC-Bayes-Split-kl Inequality]\label{thm:pac-bayes-split-kl-inequality}
For any distribution $\pi$ on $\HH$ that is independent of $S$ and any $\delta\in(0,1)$:
\[
    \P[\exists\rho\in\mathcal{P}:\E_\rho[F(h)] \geq b_0 + \sum_{j=1}^K \alpha_j \kl^{-1,+}\lr{\E_\rho[\hat F_j(h,S)],\frac{\KL(\rho\|\pi)+\ln\frac{2K\sqrt{n}}{\delta}}{n}}]\leq \delta,
\]
where $\mathcal{P}$ is the set of all possible probability distributions on $\HH$ that can depend on $S$.
\end{theorem}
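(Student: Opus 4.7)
The plan is to mirror the proof of Theorem~\ref{thm:Split_kl} (the split-$\kl$ inequality), but with the PAC-Bayes-$\kl$ inequality (Theorem~\ref{thm:pbkl}) replacing the plain $\kl$ inequality at each step of the binary decomposition. The key observation is that since $f(h,Z)\in\{b_0,\dots,b_K\}$, we have the deterministic pointwise identity $f(h,Z)=b_0+\sum_{j=1}^K \alpha_j f_j(h,Z)$ with $f_j(h,Z)=\1[f(h,Z)\geq b_j]\in\{0,1\}$, and hence by linearity of expectation $F(h)=b_0+\sum_{j=1}^K \alpha_j F_j(h)$, $\hat F(h,S)=b_0+\sum_{j=1}^K \alpha_j \hat F_j(h,S)$, and analogously after taking $\E_\rho[\cdot]$.

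For each fixed $j\in\{1,\dots,K\}$, the function $f_j:\HH\times\ZZ\to\{0,1\}$ is a Bernoulli loss, so Theorem~\ref{thm:pbkl} applies to it. Applying that theorem with confidence parameter $\delta/K$ yields that, with probability at least $1-\delta/K$ over $S$, for all $\rho\in\mathcal{P}$ simultaneously,
\[
\kl\bigl(\E_\rho[\hat F_j(h,S)]\,\big\|\,\E_\rho[F_j(h)]\bigr)\leq \frac{\KL(\rho\|\pi)+\ln\frac{2K\sqrt{n}}{\delta}}{n}.
\]
By definition of the upper inverse $\kl^{-1,+}$, this event is contained in the event
\[
\E_\rho[F_j(h)]\leq \kl^{-1,+}\!\left(\E_\rho[\hat F_j(h,S)],\ \frac{\KL(\rho\|\pi)+\ln\frac{2K\sqrt{n}}{\delta}}{n}\right)\qquad\forall \rho\in\mathcal{P}.
\]

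Next, I take a union bound over $j\in\{1,\dots,K\}$: with probability at least $1-\delta$, the above inequality holds simultaneously for every $j$ and every $\rho\in\mathcal{P}$. On this event, multiplying the $j$-th inequality by $\alpha_j\geq 0$, summing over $j$, and adding $b_0$, the linear decomposition $\E_\rho[F(h)]=b_0+\sum_j \alpha_j\E_\rho[F_j(h)]$ yields exactly the bound stated in Theorem~\ref{thm:pac-bayes-split-kl-inequality}. Taking complements gives the probability-$\delta$ form in the statement.

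Since every ingredient (the decomposition identity, Theorem~\ref{thm:pbkl}, monotonicity of $\kl^{-1,+}$ in its first argument, and the union bound) is either already established in the excerpt or elementary, I do not anticipate a genuine obstacle. The only subtle point worth checking is that the union bound is taken over the discrete index $j$ rather than over $\rho$: Theorem~\ref{thm:pbkl} is already uniform in $\rho\in\mathcal{P}$, so no extra supremum over posteriors is needed, and the factor $K$ in $\ln(2K\sqrt{n}/\delta)$ arises solely from the $K$ binary sub-problems. I would also briefly remark that $\alpha_j\geq 0$ is what makes the direction of the inequality preserved when combining the per-$j$ upper bounds.
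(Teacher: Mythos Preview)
Your proposal is correct and follows essentially the same approach as the paper: decompose $F(h)=b_0+\sum_j\alpha_j F_j(h)$, apply the PAC-Bayes-$\kl$ inequality (Theorem~\ref{thm:pbkl}) to each binary component $F_j$ at confidence $\delta/K$, and combine via a union bound over $j$. The paper's proof is slightly terser but identical in substance; your explicit remarks that the union bound is over the finite index $j$ (not over $\rho$) and that $\alpha_j\geq 0$ preserves the inequality direction are correct and worth keeping.
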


\begin{proof}
    We have $f(\cdot,\cdot) = b_0 + \sum_{j=1}^K\alpha_j f_j(\cdot,\cdot)$ and $F(h) = b_0 + \sum_{j=1}^K \alpha_j F_j(h)$. Therefore,
\begin{multline*}
    \P[\exists\rho\in\mathcal{P}:\E_\rho[F(h)] \geq b_0 + \sum_{j=1}^K \alpha_j \kl^{-1,+}\lr{\E_\rho[\hat F_j(h,S)],\frac{\KL(\rho\|\pi)+\ln\frac{2K\sqrt{n}}{\delta}}{n}}]
    \\\leq\P[\exists\rho\in\mathcal{P}\text{ and }\exists j:\E_\rho[F_j(h)] \geq \kl^{-1,+}\lr{\E_\rho[\hat F_j(h,S)],\frac{\KL(\rho\|\pi)+\ln\frac{2K\sqrt{n}}{\delta}}{n}}]
    \leq \delta,
\end{multline*}
where the first inequality is by the decomposition of $F$ and the second inequality is by the union bound and application of Theorem~\ref{thm:pbkl} to $F_j$ (note that $f_j$ is a zero-one loss function).
\end{proof}

\section{Recursive PAC-Bayes bound}
\label{sec:main}

Now we derive a Recursive PAC-Bayes bound based on the loss decomposition in Equation~\ref{eq:RPB}. In order to bound $\E_{\pi_t}[L(h) - \gamma_t \E_{\pi_{t-1}}[L(h')]]$ we need empirical estimates of $L(h) - \gamma_t \E_{\pi_{t-1}}[L(h')]$. We denote $F_{\gamma_t}(h,\pi_{t-1}) = L(h) - \gamma_t \E_{\pi_{t-1}}[L(h')] = \E_{\DD\times\pi_{t-1}}[\ell(h(X),Y) - \gamma_t\ell(h'(X),Y)]$, where $\DD\times\pi_{t-1}$ is a product distribution on $\XX\times\YY\times\HH$ and $h'\in\HH$ is sampled according to $\pi_{t-1}$. We define $f_{\gamma_t}(h,(X,Y,h')) = \ell(h(X),Y) - \gamma_t \ell(h'(X),Y) \in \lrc{-\gamma_t, 0, 1-\gamma_t, 1}$, then $F_{\gamma_t}(h,\pi_{t-1}) = \E[f_{\gamma_t}(h,(X,Y,h'))]$. We let $\lrc{b_{t|0},b_{t|1},b_{t|2},b_{t|3}} = \lrc{-\gamma_t, 0, 1-\gamma_t, 1}$ and $f_{\gamma_t|j}(h,(X,Y,h')) = \1[f_{\gamma_t}(h,(X,Y,h'))\geq b_{t|j}]$. We let $F_{\gamma_t|j}(h,\pi_{t-1}) = \E[f_{\gamma_t|j}(h,(X,Y,h'))]$, then $F_{\gamma_t}(h,\pi_{t-1}) = -\gamma_t + \sum_{j=1}^3 (b_{t|j}-b_{t|j-1})F_{\gamma_t|j}(h,\pi_{t-1})$.

For each sample $(X_i,Y_i)$ we sample a prediction rule $h_i$ to serve for $h'$. We let $\hat \pi_{t-1} = \lrc{h_{t-1,1},h_{t-1,2},\dots}$ to be the resulting sequence of prediction rules sampled independently according to $\pi_{t-1}$. We only use the first elements of the sequence in accordance with the size of the sample used in the corresponding estimation. We define $\hat F_{\gamma_t|j}(h,\Uval_t, \hat \pi_{t-1}) = \frac{1}{\nval_t} \sum_{(X,Y)\in\Uval_t, h'\in\hat\pi_{t-1}} f_{\gamma_t|j}(h,(X,Y,h'))$, where $\nval_t = |\Uval_t|$ and we take the first $\nval_t$ elements from $\hat \pi_{t-1}$, one for each sample. Note that $\E[\hat F_{\gamma_t|j}(h,S,\hat \pi_{t-1})] = F_{\gamma_t|j}(h,\pi_{t-1})$. Now we are ready to state the bound.
\begin{theorem}[Recursive PAC-Bayes Bound]
\label{thm:RPB}
    Let $S = S_1\cup\dots\cup S_T$ be an i.i.d.\ sample split in an arbitrary way into $T$ non-overlapping subsamples, and let $\Utrain_t = \bigcup_{s=1}^t S_s$ and $\Uval_t = \bigcup_{s=t}^T S_s$. Let $\nval_t = |\Uval_t|$. Let $\pi_0^*,\pi_1^*,\dots,\pi_T^*$ be a sequence of distributions on $\HH$, where $\pi_t^*$ is allowed to depend on $\Utrain_t$, but not the rest of the data. Let $\gamma_2,\dots,\gamma_T$ be a sequence of coefficients, where $\gamma_t$ is allowed to depend on $\Utrain_{t-1}$, but not the rest of the data. For $t\in\lrc{1,\dots,T}$ let $\PP_t$ be a set of distributions on $\HH$, which are allowed to depend on $\Utrain_t$. Then for any $\delta\in(0,1)$:
    \[
    \P[\exists t\in\lrc{1,\dots,T}\text{ and } \pi_t \in \PP_t: \E_{\pi_t}[L(h)] \geq \B_t(\pi_t)] \leq \delta,
    \]
    where $\B_t(\pi_t)$ is a PAC-Bayes bound on $\E_{\pi_t}[L(h)]$ defined recursively as follows. For $t=1$
    \[
    \B_1(\pi_1) = \kl^{-1,+}\lr{\E_{\pi_1}[\hat L(h,S)],\frac{\KL(\pi_1\|\pi_0^*)+\ln\frac{2T\sqrt{n}}{\delta}}{n}}.
    \]
    For $t\geq 2$ we let $\Ex_t(\pi_t,\gamma_t)$ denote a PAC-Bayes bound on $\E_{\pi_t}[L(h) - \gamma_t \E_{\pi_{t-1}^*}[L(h')]]$ given by
   \begin{align*}
    &\Ex_t(\pi_t,\gamma_t) = -\gamma_t +\\  
    &\sum_{j=1}^3 (b_{t|j} - b_{t|j-1})
    \kl^{-1,+}
    \lr{\E_{\pi_t}\lrs{\hat F_{\gamma_t|j}(h,\Uval_t, \hat \pi_{t-1}^*)}, \frac{\KL(\pi_t\|\pi_{t-1}^*)+\ln\frac{6T\sqrt{\nval_t}}{\delta}}{\nval_t}}
    \end{align*}
    and then
    \begin{equation}
    \label{eq:RPB-rec}
    \B_t(\pi_t) = \Ex_t(\pi_t) + \gamma_t \B_{t-1}(\pi_{t-1}^*).
    \end{equation}
\end{theorem}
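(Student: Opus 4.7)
The plan is to prove the statement by a combination of union bound and induction on $t$, where for each $t$ we set up one high-probability event using a standard PAC-Bayes inequality, and then chain them together via the decomposition in Equation~\ref{eq:RPB}. Concretely, I would define $T$ events $E_1,\dots,E_T$, each intended to hold with probability at least $1-\delta/T$, so that their intersection has probability at least $1-\delta$ by the union bound.

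For $E_1$, I would apply Theorem~\ref{thm:pbkl} (PAC-Bayes-$\kl$) with prior $\pi_0^*$, data $S$, and confidence level $\delta/T$. Since $\pi_0^*$ is independent of $S$, this is a direct application; the bound holds simultaneously for all posteriors on $\HH$, so in particular for every $\pi_1 \in \PP_1$, giving $\E_{\pi_1}[L(h)]\leq \B_1(\pi_1)$ on $E_1$. For $t\geq 2$, I would apply Theorem~\ref{thm:pac-bayes-split-kl-inequality} (PAC-Bayes-split-$\kl$) to the quaternary loss $f_{\gamma_t}(h,(X,Y,h'))$ taking values in $\{-\gamma_t,0,1-\gamma_t,1\}$ (so $K=3$), with the product space $\ZZ=\XX\times\YY\times\HH$, the product distribution $\DD\times\pi_{t-1}^*$, the sample being the pairing of $\Uval_t$ with fresh draws $\hat\pi_{t-1}^*$, prior $\pi_{t-1}^*$, and confidence $\delta/T$. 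The key independence check here is that $\pi_{t-1}^*$, $\gamma_t$, and the draws constituting $\hat\pi_{t-1}^*$ all depend only on $\Utrain_{t-1}$, and since $\Utrain_{t-1}\cap \Uval_t=\emptyset$ by construction, conditionally on $\Utrain_{t-1}$ the samples $(X_i,Y_i,h_i)$ indexed by $\Uval_t$ are i.i.d.\ from $\DD\times\pi_{t-1}^*$, which is exactly the setting required by Theorem~\ref{thm:pac-bayes-split-kl-inequality}. On the resulting event $E_t$, we have $\E_{\pi_t}[F_{\gamma_t}(h,\pi_{t-1}^*)]\leq \Ex_t(\pi_t,\gamma_t)$ for all $\pi_t\in\PP_t$. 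The confidence constants in the statement match: $\ln(2T\sqrt n/\delta)$ comes from Theorem~\ref{thm:pbkl} at level $\delta/T$, and $\ln(6T\sqrt{\nval_t}/\delta)$ from Theorem~\ref{thm:pac-bayes-split-kl-inequality} with $K=3$ at level $\delta/T$.

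Finally, on $\bigcap_{t=1}^T E_t$, I would prove $\E_{\pi_t}[L(h)]\leq \B_t(\pi_t)$ for all $\pi_t\in\PP_t$ by induction on $t$. The base case $t=1$ is $E_1$. For the inductive step, apply the loss decomposition
\[
\E_{\pi_t}[L(h)] = \E_{\pi_t}\bigl[L(h)-\gamma_t\E_{\pi_{t-1}^*}[L(h')]\bigr] + \gamma_t\E_{\pi_{t-1}^*}[L(h')],
\]
bound the first summand by $\Ex_t(\pi_t,\gamma_t)$ using $E_t$, and bound the second summand by $\gamma_t\B_{t-1}(\pi_{t-1}^*)$ using the inductive hypothesis applied at the specific posterior $\pi_{t-1}^*$ (which lies in $\PP_{t-1}$ since it depends only on $\Utrain_{t-1}$). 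Adding these gives $\B_t(\pi_t)$ by Equation~\ref{eq:RPB-rec}. I expect the main delicate step to be verifying the independence conditions for the split-$\kl$ application, in particular that expanding the sample space to $\XX\times\YY\times\HH$ and freshly sampling $h_i\sim\pi_{t-1}^*$ indeed preserves the i.i.d.\ assumption, and that the ``posterior'' class implicitly ranged over in Theorem~\ref{thm:pac-bayes-split-kl-inequality} is allowed to depend on the full enlarged sample, so that $\pi_t\in\PP_t$ (depending on $\Utrain_t$) is a legitimate choice.
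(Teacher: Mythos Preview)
Your proposal is correct and follows essentially the same approach as the paper's proof: apply Theorem~\ref{thm:pbkl} at level $\delta/T$ for $t=1$, apply Theorem~\ref{thm:pac-bayes-split-kl-inequality} at level $\delta/T$ for each $t\geq 2$, take a union bound over the $T$ events, and chain via the recursive loss decomposition in Equation~\ref{eq:RPB}. Your write-up is in fact considerably more detailed than the paper's (which is a three-sentence sketch), and your explicit verification of the independence conditions and the inductive step are exactly the points one would need to flesh out.
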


\begin{proof}
    By Theorem~\ref{thm:pbkl} we have $\P[\exists \pi_1 \in \PP_1: \E_{\pi_1}[L(h)] \geq B_1(\pi_1)] \leq \frac{\delta}{T}$. Further, by Theorem~\ref{thm:pac-bayes-split-kl-inequality} for $t\in\lrc{2,\dots,T}$ we have \\
    $\P[\exists \pi_t\in\PP_t: \E_{\pi_t}[L(h) - \gamma_t \E_{\pi_{t-1}^*}[L(h')]] \geq \Ex_t(\pi_t,\gamma_t)] \leq \frac{\delta}{T}$. The theorem follows by a union bound and the recursive decomposition of the loss (Equation~\ref{eq:RPB}).
\end{proof}

\paragraph*{Discussion}
\begin{itemize}
    \item Note that $\pi_1^*,\dots,\pi_T^*$ can be constructed sequentially, but $\pi_t^*$ can only be constructed based on the data in $\Utrain_t$, meaning that in the construction of $\pi_t^*$ we can only rely on $\E_{\pi_t}\lrs{\hat F_{\gamma_t|j}(h,S_t, \hat \pi_{t-1})}$, but not on $\E_{\pi_t}\lrs{\hat F_{\gamma_t|j}(h,\Uval_t, \hat \pi_{t-1})}$. Also note that $S_t$ is part of both $\Utrain_t$ and $\Uval_t$ (see Figure~\ref{fig:evolution} in Appendix~\ref{app:illustrations} for a graphical illustration). In other words, when we evaluate the bounds we can use additional data. And even though the additional data can only be used in the evaluation stage, we can still use the knowledge that we will get more data for evaluation when we construct $\pi_t^*$. For example, we can take
    \begin{equation}
    \label{eq:opt-pi-t}
    \pi_t^* = \arg\min_\pi \sum_{j=1}^3 (b_{t|j} - b_{t|j-1})\kl^{-1,+}\lr{\E_{\pi}\lrs{\hat F_{\gamma_t|j}(h,S_t, \hat \pi_{t-1}^*)}, \frac{\KL(\pi\|\pi_{t-1}^*)+\ln\frac{6T\sqrt{\nval_t}}{\delta}}{\nval_t}}.
    \end{equation}
    The empirical losses above are calculated on $S_t$ corresponding to $\pi_t^*$, but the sample sizes $\nval_t$ correspond to the size of the validation set $\Uval_t$ rather than the size of $S_t$. This allows to be more aggressive in deviating with $\pi_t^*$ from $\pi_{t-1}^*$ by sustaining larger $\KL(\pi_t^*\|\pi_{t-1}^*)$ terms.
    \item Similarly, $\gamma_2,\dots,\gamma_T$ can also be constructed sequentially, as long as $\gamma_t$ only depends on $\Utrain_{t-1}$ (otherwise $\hat F_{\gamma_t|j}(h,S_t,\hat \pi_{t-1}^*)$ becomes a biased estimate of $F_{\gamma_t|j}(h,\pi_{t-1}^*)$).
    \item We naturally want to have improvement over recursion rounds, meaning $\B_t(\pi_t^*) < \B_{t-1}(\pi_{t-1}^*)$. Plugging this into Equation~\ref{eq:RPB-rec}, we obtain $\Ex(\pi_t^*,\gamma_t) + \gamma_t B_{t-1}(\pi_{t-1}^*) < B_{t-1}(\pi_{t-1}^*)$, which implies that we want $\gamma_t$ to be sufficiently small to satisfy $\gamma_t < 1 - \frac{\Ex_t(\pi_t^*,\gamma_t)}{B_{t-1}(\pi_{t-1}^*)}$. At the same time, $\gamma_t$ should be non-negative. Therefore, improvement over recursion steps can only be maintained as long as $\Ex_t(\pi_t^*,\gamma_t) < B_{t-1}(\pi_{t-1}^*)$. We also note that if $\E_{\pi_t^*}[L(h)] > 0$, then overly small $\gamma_t$ can increase the excess loss. Thus, unless the loss is close to zero, $\gamma_t$ should not be too small either.
\end{itemize}

\section{Experiments}
\label{sec:experiments}

In this section, we provide an empirical comparison of our Recursive PAC-Bayes (RPB) procedure to the following prior work: i) Uninformed priors (Uninformed), \citep{DR17}; ii) Data-informed priors (Informed) \citep{APS07,PRSS21}; iii) Data-informed prior + excess loss (Informed + Excess) \citep{MGG20,WS22}. All the experiments were run on a laptop.

We start with describing the details of the optimization procedure, and then present the results.

\subsection{Details of the optimization and evaluation procedure}
\label{sec:optimization}
We constructed $\pi_1^*,\dots,\pi_T^*$ 
sequentially 
using the optimization objective \ref{eq:opt-pi-t}, and computed the bound using the recursive procedure in Theorem~\ref{thm:RPB}.
There are a few technical details concerning convexity of the optimization procedure and infinite size of the set of prediction rules $\HH$ that we address next.

\subsubsection{Convexification of the loss functions}\label{sec:surrogate-loss}

The functions $f_{\gamma_t|j}(h,(X,Y,h'))$ defined in Section~\ref{sec:main} are non-convex and non-differentiable:
$f_{\gamma_t|j}(h,(X,Y,h')) = \1[f_{\gamma_t}(h,(X,Y,h'))\geq b_{t|j}] =$ \\
$ \1[\ell(h(X),Y) - \gamma_t \ell(h'(X),Y) \ge b_{t|j}]$. In order to facilitate optimization, we approximate the external indicator function $\1[z\ge z_0]$ by a sigmoid function $\omega(z;c_1,z_0)=(1+\exp(c_1(z-z_0)))^{-1}$ with a fixed parameter $c_1>0$ specified in Appendix~\ref{app:sec:other-exp-details}.

Furthermore, since the zero-one loss $\ell(h(X),Y)$ is also non-differentiable, we adopt the cross-entropy loss, as in most modern training procedures \citep{PRSS21}. Specifically, for a $k$-class classification problem, let $h:\mathcal{X}\rightarrow \R^k$ represent the function implemented by the neural network, assigning each class a real value. 
Let $u=h(X)$ be the assignment, with $u_i$ being the $i$-th value of the vector.
To convert this real-valued vector into a probability distribution over classes, we apply the softmax function $\sigma:\R^k\rightarrow \Delta^{k-1}$, where $\sigma(u)_i=\exp(c_2 u_i)/\sum_j \exp(c_2 u_j)$ for some $c_2>0$ for each entry. The cross-entropy loss $\ell^{\text{ce}}:\R^k\times [k]\rightarrow \R$ is defined by $\ell^{\text{ce}}(u,Y)=-\log(\sigma(u)_Y)$. However, since this loss is unbounded, whereas the PAC-Bayes-kl bound requires losses within $[0,1]$, we enforce a $[0,1]$-valued cross-entropy loss by first lower-bounding the probability assigned to $Y$ by taking $\tilde{\sigma}(u)_Y=\max(\sigma(u)_Y,p_{\min})$ for $p_{\min}>0$, and then  rescaling it to $[0,1]$ by taking $\tilde{\ell}^{\text{ce}}(u,Y)=-\log(\tilde{\sigma}(u)_Y)/\log(1/p_{\min})$ \citep{DR17}.

We emphasize that in the evaluation of the bound (using Theorem~\ref{thm:RPB}), we directly compute the zero-one loss and the $f_{\gamma_t|j}$ functions without employing the approximations.

\subsubsection{Relaxation of the PAC-Bayes-kl bound}\label{sec:relaxation-of-pac-bayes-kl}
The PAC-Bayes-$\kl$ bound is often criticized for being unfriendly to optimization \citep{RTS24}. Therefore, several relaxations have been proposed, including the PAC-Bayes-classic bound \citep{McA99}, the PAC-Bayes-$\lambda$ bound \citep{TIWS17}, and the PAC-Bayes-quadratic bound \citep{RTS19,PRSS21}, among others. In our optimization we have adopted the bound of \citet{McA99} instead of the kl-based bounds in Equation~\ref{eq:opt-pi-t}.

We again emphasize that in the evaluation of the bound we used the kl-based bounds in Theorem~\ref{thm:RPB}.

\subsubsection{Estimation of $\E_\pi[\cdot]$} \label{sec:estimation-gibbs-loss}

Due to the infinite size of $\HH$ and lack of a closed-form expression for $\E_{\pi_1}[\hat L(h,S)]$ and $\E_{\pi_t}[\hat F_{\gamma_t|j}(h,\Uval_t,\hat \pi_{t-1}^*)]$ appearing in Theorem~\ref{thm:RPB}, we approximate them by sampling \citep{PRSS21}. For optimization, we sample one classifier for each mini-batch during stochastic gradient descent. For evaluation, we sample one classifier for each data in the corresponding evaluation dataset. Due to approximation of the empirical quantities the final bound in Theorem~\ref{thm:RPB} requires an additional concentration bound. (We note that the extra bound is only required for computation of the final bound, but not for optimization of $\hat \pi_t^*$.) Specifically, let $\hat \pi_t^*=\{h_{t,1},h_{t,2},\cdots,h_{t,m}\}$ be $m$ samples drawn i.i.d. from $\pi_t^*$. Then 
for any function $f(h)$ taking values in $[0,1]$ (which is the case for $\hat L(h,S)$ and $\hat F_{\gamma_t|j}(h,\Uval_t,\hat \pi_{t-1}^*)$) and $\delta'\in(0,1)$ we have
\[
    \P[\E_{\pi_t^*}[f(h)] \geq \kl^{-1,+}\lr{\frac{1}{m}\sum_{i=1}^m f(h_{t,i}), \frac{1}{m}\log \frac{1}{\delta'}}]\leq \delta'.
\]
It is worth noting that $\E_{\pi_t^*}[f(h)]$ is evaluated for a fixed $\pi_t^*$, meaning that there is no selection involved, and therefore no $\KL$ term appears in the bound above. We, of course, take a union bound over all the quantities being estimated. 

\subsection{Experimental results}

We evaluated our approach and compared it to prior work using multi-class classification tasks on MNIST \citep{lecun-mnisthandwrittendigit-2010} and Fashion MNIST \citep{xiao2017fashion} datasets, both with 60000 training data. The experimental setup was based on the work of \citet{DR17} and \citet{PRSS21}.  Similar to them we used Gaussian distributions for all the priors and posteriors, modeled by probabilistic neural networks. Technical details are provided in Appendix~\ref{sec:app:experimental details}.

The empirical evaluations on MNIST and Fashion MNIST are presented in Table \ref{tab:mnist_results} and Table \ref{tab:fmnist_results} respectively. For the Uninformed approach, we trained and evaluated the bound using the entire training dataset directly. For the other two baseline methods, Informed and Informed + Excess Loss, we used half of the training data to train the informed prior and an ERM $h^*$ for the excess loss, and the other half to learn the posterior. For our Recursive PAC-Bayes (RPB), we chose $\gamma_t=1/2$ for all $t$, and conducted experiments with $T=2$, $T=4$, $T=6$, and $T=8$ to study the impact of recursion depth. (Each value of $T$ corresponded to a separate run of the algorithm and a separate evaluation of the bound, i.e., they should not be seen as successive refinements.) We applied geometric split of the data, where at each recursion step the data were split in two equal halves. Specifically, for $T=2$ the split was (30000,30000) points, for $T=4$ it was (7500, 7500, 15000, 30000) points, for $T=6$ it was (1875, 1875, 3750, 7500, 15000, 30000) points, and for $T=8$ it was (468, 469, 938, 1875, 3750, 7500, 15000, 30000) points. The motivation was to let early recursion steps with few data points efficiently learn the prior, while keeping enough data for fine-tuning in the later steps. Note that with this approach the value of $\nval_t = |\Uval_t| = \sum_{s=t}^T |S_s|$, which is in the denominator of the bounds in Theorem \ref{thm:RPB}, is at least $\frac{n}{2}$. 

\begin{table}[H]
\caption{Comparison of the classification loss of the final posterior $\rho$ on the entire training data, $\E_\rho[\hat L(h,S)]$ (Train 0-1), and on the testing data, $\E_\rho[\hat L(h,S^{\text{test}})]$ (Test 0-1), and the corresponding bounds for each method on MNIST. We report the mean and one standard deviation over 5 repetitions. ``Unif.'' abbreviates the Uniform approach, ``Inf.'' the Informed, ``Inf. + Ex.'' the Informed + Excess Loss, and ``RPB'' the Recursive PAC-Bayes.
}
\centering
\begin{tabular}{l|ccc}
\multirow{2}{*}{} & \multicolumn{3}{c}{MNIST}\\
& Train 0-1 & Test 0-1 & Bound \\
\hline
Uninf.       & .343 (2e-3) & .335 (3e-3) & .457 (2e-3) \\
Inf.         & .377 (8e-4) & .371 (6e-3) & .408 (9e-4) \\
Inf. + Ex.   & .157 (2e-3) & .151 (3e-3) & .192 (2e-3) \\
RPB $T=2$    & .143 (2e-3) & .139 (3e-3) & .321 (3e-3) \\
RPB $T=4$    & .112 (1e-3) & .109 (1e-3) & .203 (8e-4) \\
RPB $T=6$    & .103 (1e-3) & .101 (1e-3) & .166 (1e-3) \\
RPB $T=8$    & \textbf{.101 (1e-3)} & \textbf{.097 (2e-3)} & \textbf{.158 (2e-3)} \\
\end{tabular}
\label{tab:mnist_results}
\end{table}

\begin{table}[H]
\caption{Comparison of the classification loss of the final posterior $\rho$ on the entire training data, $\E_\rho[\hat L(h,S)]$ (Train 0-1), and on the testing data, $\E_\rho[\hat L(h,S^{\text{test}})]$ (Test 0-1), and the corresponding bounds for each method on Fashion MNIST. We report the mean and one standard deviation over 5 repetitions. ``Unif.'' abbreviates the Uniform approach, ``Inf.'' the Informed, ``Inf. + Ex.'' the Informed + Excess Loss, and ``RPB'' the Recursive PAC-Bayes.
}
\centering
\begin{tabular}{l|ccc}
\multirow{2}{*}{} & \multicolumn{3}{c}{Fashion MNIST}\\
& Train 0-1 & Test 0-1 & Bound \\
\hline
Uninf.       & .382 (2e-3) & .384 (2e-3) & .464 (2e-3) \\
Inf.         & .412 (1e-3) & .413 (6e-3) & .440 (1e-3) \\
Inf. + Ex.   & .280 (4e-3) & .285 (5e-3) & .342 (6e-3) \\
RPB $T=2$    & .257 (3e-3) & .266 (5e-3) & .404 (3e-3) \\
RPB $T=4$    & .203 (2e-3) & .213 (3e-3) & .293 (1e-3) \\
RPB $T=6$    & .186 (4e-4) & .198 (1e-3) & .255 (1e-3) \\
RPB $T=8$    & \textbf{.181 (1e-3)} & \textbf{.192 (3e-3)} & \textbf{.242 (1e-3)} \\
\end{tabular}
\label{tab:fmnist_results}
\end{table}


Tables \ref{tab:mnist_results} and \ref{tab:fmnist_results} show that while $T=2$, which corresponds to the data split in the Informed and Informed + Excess Loss approaches, outperforms the Uninformed and Informed approaches on both datasets, it still underperforms compared to the Informed + Excess Loss approach. However, with deeper recursions, RPB yields dramatic improvements, offering significant advantages over prior work.

Tables \ref{tab:detail-rpb8-mnist} and \ref{tab:detail-rpb8-fmnist} provide a glimpse into the training progress of RPB with $T=8$ by showing the evolution of the key quantities along the recursive process. Similar tables for other values of $T$ are provided in Appendix \ref{app:sec:more-tables}, along with training details for other methods. The tables show an impressive reduction of the $\KL$ term and significant improvement of the bound as the recursion proceeds, demonstrating the effectiveness of the approach.

\begin{table}[H]
\caption{Insight into the training process of the Recursive PAC-Bayes for $T=8$ on MNIST. The table shows the evolution of $\Ex_t(\pi_t^*,\gamma_t)$, $B_t(\pi_t^*)$, and other quantities as the training progresses with $t$. We define $\hat F_{\gamma_t}(h,\Uval_t,\hat \pi_{t-1}) = -\gamma_t + \sum_{j=1}^3 (b_{t|j}-b_{t|j-1})\hat F_{\gamma_t|j}(h,\Uval_t,\hat\pi_{t-1})$.}
\centering
\begin{tabular}{c|ccccc|c}
$t$  & $\nval_t$ & $\E_{\pi_t}[\hat F_{\gamma_t}(h,\Uval_t,\hat \pi_{t-1})]$ & $\frac{\KL(\pi_t^*\|\pi_{t-1}^*)}{\nval_t}$ & $\Ex_t(\pi_t^*,\gamma_t)$ & $B_t(\pi^*_t)$ & Test 0-1 \\ \hline
1 & 60000 &             & .009 (3e-4) &             & .612 (9e-3) & .532 (.011) \\
2 & 59532 & -0.046 (4e-3) & .031 (1e-3) & .114 (2e-3) & .421 (5e-3) & .215 (7e-3)  \\
3 & 59063 & .040 (3e-3) & .013 (9e-4) & .125 (3e-3) & .336 (2e-3) & .146 (3e-3)  \\
4 & 58125 & .049 (1e-3) & .005 (3e-4) & .099 (1e-3) & .267 (7e-4) & .120 (2e-3)  \\
5 & 56250 & .052 (4e-4) & .002 (1e-4) & .083 (1e-3) & .217 (1e-3) & .111 (2e-3)  \\
6 & 52500 & .051 (1e-3) & .001 (4e-5) & .076 (1e-3) & .185 (1e-3) & .104 (2e-3) \\
7 & 45000 & .050 (1e-3) & 8e-4 (6e-5) & .073 (1e-3) & .166 (1e-3) & .099 (1e-3)  \\
8 & 30000 & .050 (1e-3) & 6e-4 (4e-5) & .074 (1e-3) & .158 (2e-3) & .097 (2e-3) 
\end{tabular}
\label{tab:detail-rpb8-mnist}
\end{table}

\begin{table}[H]
\caption{Insight into the training process of the Recursive PAC-Bayes for $T=8$ on Fashion MNIST.}
\centering
\begin{tabular}{c|ccccc|c}
$t$  & $\nval_t$ & $\E_{\pi_t}[\hat F_{\gamma_t}(h,\Uval_t,\hat \pi_{t-1})]$ & $\frac{\KL(\pi_t^*\|\pi_{t-1}^*)}{\nval_t}$ & $\Ex_t(\pi_t^*,\gamma_t)$ & $B_t(\pi^*_t)$ & Test 0-1 \\ \hline
1 & 60000 &             & .003 (7e-5) &             & .733 (7e-3) & .686 (8e-3) \\
2 & 59532 & -0.043 (8e-3) & .023 (6e-4) & .104 (9e-3) & .470 (8e-3) & .309 (7e-3)  \\
3 & 59063 & .083 (4e-3) & .008 (3e-4) & .161 (3e-3) & .396 (4e-3) & .242 (1e-3)  \\
4 & 58125 & .090 (3e-3) & .004 (5e-4) & .142 (4e-3) & .341 (5e-4) & .216 (5e-3)  \\
5 & 56250 & .093 (3e-3) & .001 (2e-4) & .126 (3e-3) & .297 (4e-3) & .204 (4e-3)  \\
6 & 52500 & .090 (1e-3) & 6e-4 (6e-5) & .117 (1e-3) & .265 (1e-3) & .195 (3e-3) \\
7 & 45000 & .090 (1e-3) & 4e-4 (2e-5) & .115 (1e-3) & .248 (1e-3) & .195 (5e-4)  \\
8 & 30000 & .090 (1e-3) & 4e-4 (1e-5) & .117 (1e-3) & .242 (1e-3) & .192 (3e-3) 
\end{tabular}
\label{tab:detail-rpb8-fmnist}
\end{table}

\section{Discussion}
\label{sec:discussion}

We have presented the first PAC-Bayesian bound that supports sequential prior updates and preserves confidence information on the prior. The work closes a long-standing gap between Bayesian and Frequentist learning by making sequential data processing and sequential updates of prior knowledge meaningful and beneficial in the frequentist framework, as it has always been in the Bayesian framework. We have shown that apart from theoretical beauty the approach is beneficial in practice.

\section{Appendix}
\subsection{Illustrations}
\label{app:illustrations}
In this appendix we provide graphical illustrations of the basic concepts presented in the paper.

\begin{figure}[H]
    \centering
    \includegraphics[width=\textwidth]{./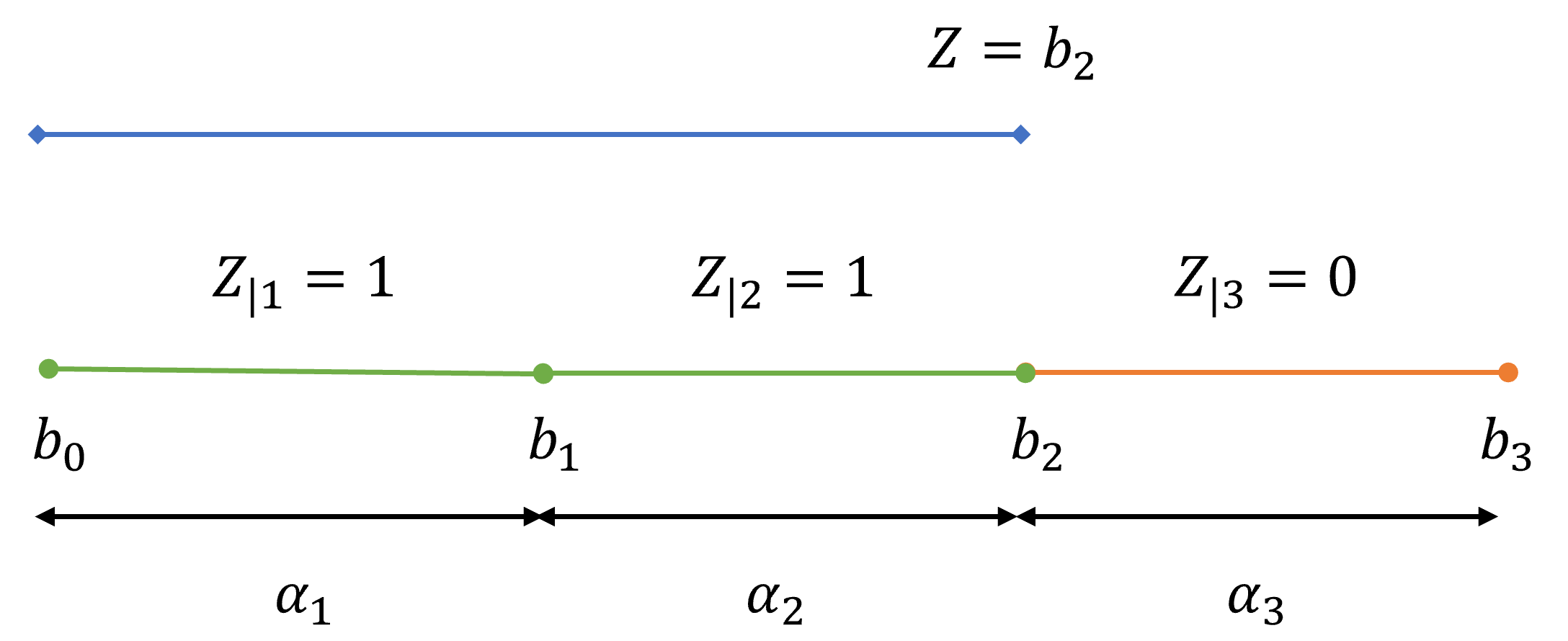}
    \caption{\textbf{Decomposition of a discrete random variable into a superposition of binary random variables.} The figure illustrates a decomposition of a discrete random variable $Z$ with domain of four values $b_0 < b_1 < b_2 < b_3$ into a superposition of three binary random variables, $Z = b_0 + \sum_{j=1}^3 \alpha_j Z_{|j}$. A way to think about the decomposition is to compare it to a progress bar. In the illustration $Z$ takes value $b_2$, and so the random variables $Z_{|1}$ and $Z_{|2}$ corresponding to the first two segments ``light up'' (take value 1), whereas the random variable $Z_{|3}$ corresponding to the last segment remains ``turned off'' (takes value 0). The value of $Z$ equals the sum of the lengths $\alpha_j$ of the ``lighted up'' segments.}
    \label{fig:decomposition}
\end{figure}

\begin{figure}[H]
    \centering
    \includegraphics[width=.95\textwidth]{./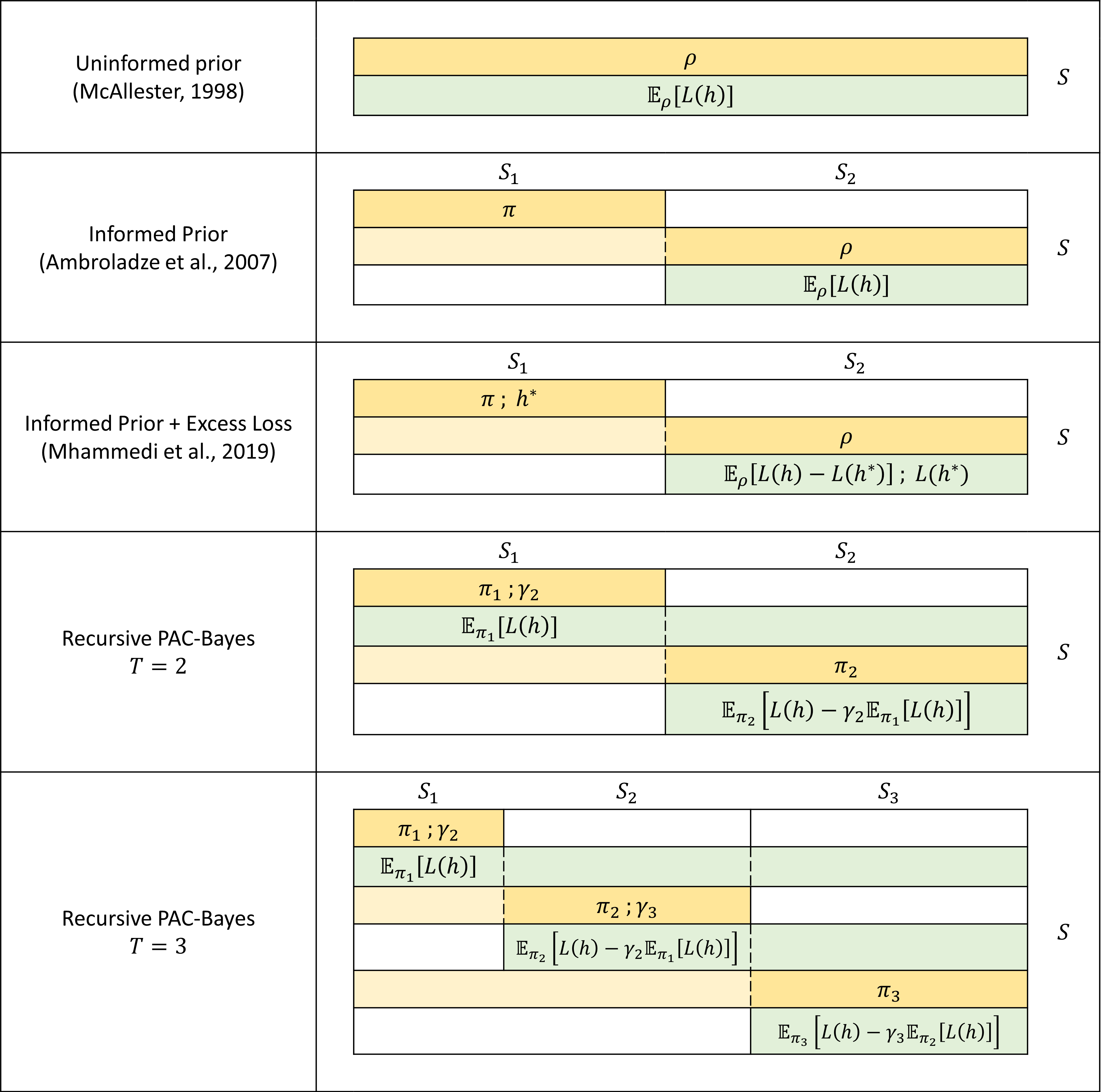}
    \caption{\textbf{Evolution of PAC-Bayes.} The figure shows how data are used by different methods. Dark yellow shows data used directly for optimization of the indicated quantities. Light yellow shows data involved indirectly through dependence on the prior. Light green shows data used for estimation of the indicated quantities. In Recursive PAC-Bayes data are released and used sequentially chunk-by-chunk, as indicated by the dashed lines. For example, in the $T=3$ case $\E_{\pi_1}[L(h)]$ is first evaluated on $S_1$ to construct $\pi_1$ and $\gamma_2$, then in the first recursion step on $S_1\cup S_2$, and in the last step on all $S$.
    }
    \label{fig:evolution}
\end{figure}

\subsection{Experimental details}\label{sec:app:experimental details}

In this section, we provide the details of the datasets in Appendix \ref{app:sec:datasets}, our neural network architectures in Appendix \ref{app:sec:nn}, and other details in Appendix \ref{app:sec:other-exp-details}. We provide further statistics for all the methods on both datasets in Appendix \ref{app:sec:more-tables}.

\subsubsection{Datasets}\label{app:sec:datasets}
We perform our evaluation on two datasets, MNIST \citep{lecun-mnisthandwrittendigit-2010} and Fashion MNIST \citep{xiao2017fashion}. We will introduce these two datasets in the following.

\paragraph{MNIST}
The MNIST (Modified National Institute of Standards and Technology) dataset is one of the most renowned and widely used datasets in the field of machine learning, particularly for training and testing in the domain of image processing and computer vision. It consists of a large collection of handwritten digit images, spanning the numbers 0 through 9. 

The MNIST dataset comprises a total of 70,000 grayscale images of handwritten digits, where the training set has 60,000 images and the test set has 10,000 images. Each image in the dataset is 28x28 pixels, resulting in a total of 784 pixels per image. The images are in grayscale, with pixel values ranging from 0 (black) to 255 (white). Each image is associated with a label from 0 to 9, indicating the digit that the image represents. The images are typically stored in a single flattened array of 784 elements, although they can also be represented in a 28x28 matrix format.

\paragraph{Fashion MNIST}
The Fashion MNIST dataset is a contemporary alternative to the traditional MNIST dataset, created to provide a more challenging benchmark for machine learning algorithms. It consists of images of various clothing items and accessories, offering a more complex and varied dataset for image classification tasks.

The Fashion MNIST dataset contains a total of 70,000 grayscale images, where the training set has 60,000 images and the test set has 10,000 images. Each image in the dataset is 28x28 pixels, resulting in a total of 784 pixels per image. The images are in grayscale, with pixel values ranging from 0 (black) to 255 (white). Each image is associated with one of 10 categories, representing different types of fashion items. The categories are: 1. T-shirt/top 2. Trouser 3. Pullover 4. Dress 5. Coat 6. Sandal 7. Shirt 8. Sneaker 9. Bag 10. Ankle boot. Similar to MNIST, the images are stored in a single flattened array of 784 elements but can also be represented in a 28x28 matrix format.

\subsubsection{Neural network architectures}\label{app:sec:nn}

For all methods, we adopt a family of factorized Gaussian distributions to model both priors and posteriors, characterized by the form $\pi=\mathcal{N}(w,\sigma \mathbf I)$ where $w\in\R^d$ denotes the mean vector, and $\sigma$ represents the scalar variance. We use feedforward neural networks for the MNIST dataset \citep{lecun-mnisthandwrittendigit-2010}, while using convolutional neural networks for the Fashion MNIST dataset \citep{xiao2017fashion}.

Both our feedforward neural network and convolutional neural network are probabilistic, and each layer has a factorized (i.e. mean-field) Gaussian distribution.

Our feedforward neural network has the following architecture:
\begin{enumerate}[left=0.3cm]
    \item Input layer. Input size: $28 \times 28$ (flattened to 784 features).

    \item Probabilistic linear layer 1. Input features: 784, output features: 600, activation: ReLU.

    \item Probabilistic linear layer 2. Input features: 600, output features: 600, activation: ReLU.

    \item Probabilistic linear layer 3. Input features: 600, output features: 600, activation: ReLU.

    \item Probabilistic linear layer 4. Input features: 600, output features: 10, activation: Softmax.
\end{enumerate}

Our convolutional neural network has the following architecture:
\begin{enumerate}
    \item Input layer. Input size: $1 \times 28 \times 28$.

    \item Probabilistic convolutional layer 1. Input channels: 1, output channels: 32, kernel size: 3x3, activation: ReLU.

    \item Probabilistic convolutional layer 2. Input channels: 32, output channels: 64, kernel size: 3x3, activation: ReLU.

    \item Max pooling layer. Pooling size: 2x2.

    \item Flattening layer. Flattens the output from the previous layers into a single vector.

    \item Probabilistic linear layer 1. Input features: 9216, output features: 128, activation: ReLU.

    \item Probabilistic linear layer 2 (output layer). Input features: 128, output features: 10, activation: Softmax.
\end{enumerate}

\subsubsection{Other details in the experiments}\label{app:sec:other-exp-details}

\paragraph{General for all methods}
The methods in comparisons are trained and evaluated using the procedure described in Section \ref{sec:idea} and visually illustrated in Figure \ref{fig:evolution}. We will provide some further details for each method later in the following.
For all methods in comparison, we apply the optimization and evaluation method described in 
Section \ref{sec:optimization}. For the approximation described in Section \ref{sec:surrogate-loss}, we set the parameters $c_1=c_2=5$. The lower bound for the prediction $p_{\min}=1e-5$. The $\delta$ in our bound and all the other methods is selected to be $\delta=0.025$. As mentioned in Section \ref{sec:relaxation-of-pac-bayes-kl}, we use the PAC-Bayes-classic bound by \citeauthor{McA99} in replacement of PAC-Bayes-$\kl$ when doing optimization. Note that for all methods, we also have to estimate the empirical loss of the posterior $\E_\pi[\cdot]$ described in Section \ref{sec:estimation-gibbs-loss}. We also allocate the budget for the union bound for the estimation such that these estimations in the bound are controlled with probability at least $1-\delta'$, where we chose $\delta'=0.01$. Therefore, the ultimate bounds for all methods hold with probability at least $1-\delta-\delta'$. Note that we do not consider such bounds during optimization but only when estimating the bounds.

For all methods, we adopt a family of factorized Gaussian distributions to model both priors and posteriors of all the learnable parameters of the classifiers, characterized by the form $\pi=\mathcal{N}(w,\sigma \mathbf I)$ where $w\in\R^d$ denotes the mean vector, and $\sigma$ represents the scalar variance. For all methods, we initialize an uninformed prior $\pi_0=\mathcal{N}(w_0,\sigma_0 \mathbf{I})$ that is independent of data, where the mean is randomly initialized, and the variance $\sigma_0$ is initialized to 0.03 \citep{PRSS21}. 

In the training process of all methods in our experiments, we set the batch size to 250, the number of training epochs to 200, and use stochastic gradient descent with a learning rate of 0.001 and a momentum of 0.95.

\paragraph{Uninformed priors}
We take $\pi_0$ defined above as the uninformed prior. We then learn the posterior $\rho$ from the prior using the entire training dataset $S$, applying a PAC-Bayes bound. We evaluate the bound using, again, the entire training dataset $S$.

\paragraph{Data-informed priors}
We start with the same $\pi_0$ as the uninformed prior. We train the informed prior $\pi_1$ using $S_1$ with $|S_1|=|S|/2$ by minimizing a PAC-Bayes bound. The posterior $\rho$ is then learned using the informed prior $\pi_1$ and the subset $S_2$ with $|S_2|=|S|/2$, again by minimizing a PAC-Bayes bound. The bound is evaluated using $S_2$.

\paragraph{Data-informed priors + excess loss}
We train the informed prior $\pi_1$ and the reference classifier $h^*$ using $S_1$ that contains half of the training dataset. $\pi_1$ is obtained by minimizing a PAC-Bayes bound with the uninformed prior $\pi_0$, while the reference classifier $h^*$ is obtained by an empirical risk minimizer (ERM). The posterior $\rho$ is obtained by minimizing a PAC-Bayes bound on the excess loss between $\rho$ and $h^*$. The prior used in the bound for both training and evaluation is the data-informed prior  $\pi_1$. Therefore, the data for both training and evaluation of $\rho$ must be the other half of data $S_2$.

\subsubsection{Further results for the experiments}\label{app:sec:more-tables}

In this section, we report some more statistics for all methods.

For all methods, to calculate the classification loss of $\rho$ on the testing data, $\E_\rho[\hat L(h,S^{\text{test}})]$ (Test 0-1), we sample one classifier for each data. The train 0-1 loss for all methods is computed on the entire training dataset $S$, while the test 0-1 loss for all methods is computed on the test dataset $S_{\text{test}}$.

\paragraph{Recursive PAC-Bayes}
We report the additional results of Recursive PAC-Bayes on MNIST with $T=2$ in Table \ref{tab:detail-rpb2-mnist}, $T=4$ in Table \ref{tab:detail-rpb4-mnist}, and $T=6$ in Table~\ref{tab:detail-rpb6-mnist}. We report Recursive PAC-Bayes on Fashion MNIST with $T=2$ in Table \ref{tab:detail-rpb2-fmnist}, $T=4$ in Table \ref{tab:detail-rpb4-fmnist}, and $T=6$ in Table~\ref{tab:detail-rpb6-fmnist}.

\begin{table}[h]
\caption{Insight into the training process of the Recursive PAC-Bayes for $T=2$ on MNIST. 
}
\centering
\begin{tabular}{c|ccccc|c}
$t$  & $\nval_t$ & $\E_{\pi_t}[\hat F_{\gamma_t}(h,\Uval_t,\hat \pi_{t-1})]$ & $\frac{\KL(\pi_t^*\|\pi_{t-1}^*)}{\nval_t}$ & $\Ex_t(\pi_t^*,\gamma_t)$ & $B_t(\pi^*_t)$ & Test 0-1 \\ \hline
1 & 60000 &             & .024 (3e-5) &             & .370 (1e-3) & .254 (2e-3) \\
2 & 30000 & .013 (3e-3) & .024 (1e-4) & .136 (3e-3) & .321 (3e-3) & .139 (3e-3) 
\end{tabular}
\label{tab:detail-rpb2-mnist}
\end{table}

\begin{table}[h]
\caption{Insight into the training process of the Recursive PAC-Bayes for $T=4$ on MNIST. 
}
\centering
\begin{tabular}{c|ccccc|c}
$t$  & $\nval_t$ & $\E_{\pi_t}[\hat F_{\gamma_t}(h,\Uval_t,\hat \pi_{t-1})]$ & $\frac{\KL(\pi_t^*\|\pi_{t-1}^*)}{\nval_t}$ & $\Ex_t(\pi_t^*,\gamma_t)$ & $B_t(\pi^*_t)$ & Test 0-1 \\ \hline
1 & 60000 &             & .023 (8e-5) &             & .374 (1e-3) & .258 (1e-3) \\
2 & 52500 & -4e-4 (1e-3) & .025 (3e-4) & .118 (1e-3) & .305 (1e-3) & .126 (2e-3)  \\
3 & 45000 & .053 (1e-3) & .002 (9e-5) & .087 (2e-3) & .240 (2e-3) & .114 (1e-3)  \\
4 & 30000 & .054 (1e-3) & .001 (2e-5) & .083 (1e-3) & .203 (8e-4) & .109 (1e-3)  
\end{tabular}
\label{tab:detail-rpb4-mnist}
\end{table}

\begin{table}[H]
\caption{Insight into the training process of the Recursive PAC-Bayes for $T=6$ on MNIST.}
\centering
\begin{tabular}{c|ccccc|c}
$t$  & $\nval_t$ & $\E_{\pi_t}[\hat F_{\gamma_t}(h,\Uval_t,\hat \pi_{t-1})]$ & $\frac{\KL(\pi_t^*\|\pi_{t-1}^*)}{\nval_t}$ & $\Ex_t(\pi_t^*,\gamma_t)$ & $B_t(\pi^*_t)$ & Test 0-1 \\ \hline
1 & 60000 &             & .019 (7e-5) &             & .425 (1e-3) & .311 (3e-3) \\
2 & 58125 & -0.013 (1e-3) & .032 (6e-4) & .128 (2e-3) & .341 (3e-3) & .139 (1e-3)  \\
3 & 56250 & .050 (1e-3) & .003 (1e-4) & .093 (6e-4) & .264 (1e-3) & .117 (2e-3)  \\
4 & 52500 & .051 (1e-3) & .001 (6e-5) & .080 (9e-4) & .212 (5e-4) & .108 (2e-3)  \\
5 & 45000 & .051 (1e-3) & 9e-4 (3e-5) & .076 (2e-3) & .182 (1e-3) & .104 (6e-4)  \\
6 & 30000 & .049 (1e-3) & 7e-4 (3e-5) & .074 (1e-3) & .166 (1e-3) & .101 (1e-3) 
\end{tabular}
\label{tab:detail-rpb6-mnist}
\end{table}

\begin{table}[h]
\caption{Insight into the training process of the Recursive PAC-Bayes for $T=2$ on Fashion MNIST. 
}
\centering
\begin{tabular}{c|ccccc|c}
$t$  & $\nval_t$ & $\E_{\pi_t}[\hat F_{\gamma_t}(h,\Uval_t,\hat \pi_{t-1})]$ & $\frac{\KL(\pi_t^*\|\pi_{t-1}^*)}{\nval_t}$ & $\Ex_t(\pi_t^*,\gamma_t)$ & $B_t(\pi^*_t)$ & Test 0-1 \\ \hline
1 & 60000 &             & .011 (3e-5) &             & .466 (1e-3) & .389 (5e-3) \\
2 & 30000 & .064 (3e-3) & .013 (2e-4) & .171 (4e-3) & .404 (3e-3) & .266 (5e-3)
\end{tabular}
\label{tab:detail-rpb2-fmnist}
\end{table}

\begin{table}[h]
\caption{Insight into the training process of the Recursive PAC-Bayes for $T=4$ on Fashion MNIST.
}
\centering
\begin{tabular}{c|ccccc|c}
$t$  & $\nval_t$ & $\E_{\pi_t}[\hat F_{\gamma_t}(h,\Uval_t,\hat \pi_{t-1})]$ & $\frac{\KL(\pi_t^*\|\pi_{t-1}^*)}{\nval_t}$ & $\Ex_t(\pi_t^*,\gamma_t)$ & $B_t(\pi^*_t)$ & Test 0-1 \\ \hline
1 & 60000 &             & .011 (1e-4) &             & .476 (4e-3) & .397 (6e-3) \\
2 & 52500 & .032 (6e-4) & .017 (9e-4) & .147 (2e-3) & .386 (3e-3) & .240 (4e-3)  \\
3 & 45000 & .100 (3e-3) & 3e-3 (1e-4) & .138 (5e-3) & .331 (4e-3) & .222 (5e-3)  \\
4 & 30000 & .095 (2e-3) & 7e-4 (5e-5) & .128 (2e-3) & .293 (1e-3) & .213 (3e-3)  
\end{tabular}
\label{tab:detail-rpb4-fmnist}
\end{table}

\begin{table}[H]
\caption{Insight into the training process of the Recursive PAC-Bayes for $T=6$ on Fashion MNIST.}
\centering
\begin{tabular}{c|ccccc|c}
$t$  & $\nval_t$ & $\E_{\pi_t}[\hat F_{\gamma_t}(h,\Uval_t,\hat \pi_{t-1})]$ & $\frac{\KL(\pi_t^*\|\pi_{t-1}^*)}{\nval_t}$ & $\Ex_t(\pi_t^*,\gamma_t)$ & $B_t(\pi^*_t)$ & Test 0-1 \\ \hline
1 & 60000 &             & 9e-3 (8e-5) &             & .534 (4e-3) & .462 (5e-3)  \\
2 & 58125 & .013 (6e-3) & .023 (1e-3) & .151 (4e-3) & .418 (5e-3) & .254 (6e-3)  \\
3 & 56250 & .091 (3e-3) & 3e-3 (4e-4) & .141 (2e-3) & .350 (3e-3) & .223 (1e-3)  \\
4 & 52500 & .090 (2e-3) & 9e-4 (9e-5) & .121 (1e-3) & .296 (1e-3) & .207 (1e-3)  \\
5 & 45000 & .090 (1e-3) & 6e-4 (3e-5) & .117 (2e-3) & .265 (2e-3) & .199 (2e-3)   \\
6 & 30000 & .093 (1e-3) & 5e-4 (2e-5) & .122 (1e-3) & .255 (1e-3) & .198 (1e-3)  
\end{tabular}
\label{tab:detail-rpb6-fmnist}
\end{table}

\paragraph{Uninformed priors}
We report the additional results of uninformed priors \citep{McA98} on MNIST and Fashion MNIST in Table \ref{tab:detail-uninformed}. As described earlier in Section \ref{sec:idea}, Section \ref{sec:experiments}, and Section \ref{app:sec:other-exp-details}, we evaluate the bound using the entire training set.

\begin{table}[H]
    \caption{Further details to compute the bound for the uninformed prior approach on MNIST and Fashion MNIST.}
    \centering
    \begin{tabular}{l|ccc|c}
          & $\E_\rho[\hat L(h,S)]$ & $\frac{\KL(\rho\|\pi_0)}{n}$ & Bound &  Test 0-1 \\
         \hline
    MNIST  & .343 (2e-3) & .023 (4e-5) & .457 (2e-3) &  .335 (3e-3) \\
    F-MNIST  & .382 (2e-3) & .011 (8e-6) & .464 (2e-3) & .384 (5e-3)
    \end{tabular}
    \label{tab:detail-uninformed}
\end{table}

\paragraph{Data-informed priors}
We report the additional results of data-informed priors \citep{APS07} on MNIST and Fashion MNIST in Table \ref{tab:detail-informed}.  As described earlier in Section \ref{sec:idea}, Section \ref{sec:experiments}, and Section \ref{app:sec:other-exp-details}, we evaluate the bound using $S_2$ that is independent of the data-informed prior $\pi_1$.

\begin{table}[H]
    \caption{Further details to compute the bound for the data-informed prior on MNIST and Fashion MNIST.}
    \centering
    \begin{tabular}{l|ccc|c}
         & $\E_\rho[\hat L(h,S_2)]$ & $\frac{\KL(\rho\|\pi_0)}{|S_2|}$ & Bound & Test 0-1 \\
         \hline
    MNIST  & .376 (8e-4) & 8e-4 (9e-6) & .408 (9e-4) & .371 (6e-3) \\
    F-MNIST  & .412 (1e-3) & 4e-4 (7e-6) & .440 (1e-3) & .413 (6e-3)
    \end{tabular}
    \label{tab:detail-informed}
\end{table}

\paragraph{Data-informed priors + excess loss} 
We report the additional results of data-informed priors + excess loss \citep{MGG20} on MNIST and Fashion MNIST in Table \ref{tab:detail-informedexcess-general} and \ref{tab:detail-informedexcess-bounds}.  As described earlier in Section \ref{sec:idea}, Section \ref{sec:experiments}, and Section \ref{app:sec:other-exp-details}, we evaluate the bound using $S_2$ that is independent of the data-informed prior $\pi_1$ and the reference prediction rule $h^*$. The bound is composed of two parts: a bound on the excess loss of $\rho$ with respect to $h^*$ (Excess bound) and a single hypothesis bound on $h^*$ ($h^*$ bound). We report the two components of the bound in Table \ref{tab:detail-informedexcess-general}. We provide further details to compute these bounds from the losses of their corresponding quantities in Table \ref{tab:detail-informedexcess-bounds}. 

\begin{table}[H]
    \caption{Details to compute the bound for the data-informed prior and excess loss on MNIST and Fashion MNIST. The table shows the bound on the excess loss of $\rho$ with respect to $h^*$ (Excess bound) and a single hypothesis bound on $h^*$ ($h^*$ bound).}
    \centering
    \begin{tabular}{l|ccc|c}
          & Ex. Bound & $h^*$ Bound & Bound & Test 0-1 \\
         \hline
        MNIST & .162 (1e-3) & .029 (4e-4) & .192 (2e-3) & .151 (3e-3) \\
        F-MNIST & .196 (5e-3) & .145 (1e-3) & .342 (6e-3) & .285 (5e-3)
    \end{tabular}
    \label{tab:detail-informedexcess-general}
\end{table}

\begin{table}[H]
    \caption{Further details to compute the bound for the data-informed prior and excess loss on MNIST and Fashion MNIST. The table shows the empirical excess loss $\E_\rho[\hat \Delta(h,h^*,S_2)]$, where we define $\Delta(h,h^*,S_2)=\hat L(h,S_2)-\hat L(h^*,S_2)$, and its bound (Excess Bound). It also shows the empirical loss of the reference prediction rule $\hat L(h^*,S_2)$ and its bound. The computation of such bound does not involve the $\KL$ term.}
    \centering
    \begin{tabular}{l|ccc|cc|c}
         & $\E_\rho[\hat \Delta(h,h^*,S_2)]$ & $\frac{\KL(\rho\|\pi_1)}{|S_2|}$ & Ex. Bound & $\hat L(h^*,S_2)$ & $h^*$ Bound & Bound \\
         \hline
        MNIST & -0.011 (3e-3) & .035 (5e-4) & .162 (1e-3) & .026 (4e-4) & .029 (4e-4) & .192 (2e-3)\\
        F-MNIST & .104 (6e-3) & .018 (5e-4) & .196 (5e-3) & .112 (1e-3) & .145 (1e-3) & .342 (6e-3)
    \end{tabular}
    \label{tab:detail-informedexcess-bounds}
\end{table}

\chapter{Another Perspective on Recursive PAC-Bayes: Connections to Cold Posteriors and KL-Annealing}
\label{chap:chap4}
\newpage

\section{Introduction}
The field of machine learning has experienced tremendous advancements in recent years, with Bayesian inference playing a pivotal role in the development of robust and reliable models. In particular, the PAC-Bayesian framework has been instrumental in providing generalization guarantees for learning algorithms by incorporating prior knowledge into the learning process \citep{STW97,McA98}. Despite its theoretical strengths, traditional PAC-Bayesian approaches face significant challenges when it comes to sequential prior updates, often losing valuable confidence information in the process \citep{McA98, APS07, MGG20}. This limitation has spurred the development of novel methodologies aimed at preserving the information of priors as they are updated with new data \citep{CWR23,BG23,RTS24}.

In this context, Recursive PAC-Bayes (RPB), introduced in Chapter \ref{chap:chap3}, emerges as a powerful technique that addresses the shortcomings of traditional PAC-Bayesian methods. By employing a novel decomposition of expected loss and recursive bounding techniques, Recursive PAC-Bayes enables sequential prior updates without losing confidence information. This innovative approach not only preserves the integrity of prior knowledge but also continually enhances the learning process.

Parallel to these advancements, cold posteriors \citep{WRVS+20} and KL-annealing \citep{DBLP:conf/conll/BowmanVVDJB16, DBLP:journals/jmlr/BinghamCJOPKSSH19, DBLP:journals/pami/ZhangBKM19, DBLP:journals/corr/SonderbyRMSW16}, two commonly used practical techniques, have shown empirical success in the application of Bayesian methods. Cold posteriors \citep{WRVS+20}, discussed thoroughly in Chapter \ref{chap:chap2} and characterized by the use of a temperature scaling factor of less than one ($T < 1$), have demonstrated significant improvements in the performance of Bayesian neural networks. This counterintuitive adjustment challenges traditional Bayesian paradigms, offering new insights into optimizing posterior distributions.

KL-annealing \citep{DBLP:conf/conll/BowmanVVDJB16, DBLP:journals/jmlr/BinghamCJOPKSSH19, DBLP:journals/pami/ZhangBKM19, DBLP:journals/corr/SonderbyRMSW16}, another practical technique, gradually increases the weight of the Kullback-Leibler (KL) divergence term in the loss function during training. This method strikes a crucial balance between fitting the empirical data and maintaining a close distance between the prior and posterior distributions, thus enhancing the stability and performance of variational inference.

In this chapter, we aim to explore the connections between these methodologies and Recursive PAC-Bayes. By delving into the updating rules of Recursive PAC-Bayes, we will uncover how its recursive updates can be viewed through the lens of cold posteriors and KL-annealing. Through this exploration, we seek to provide another perspective on the success of Recursive PAC-Bayes.

\begin{itemize}
    \item In Section \ref{sec:4.1}, we elucidate the connection between cold posteriors and Recursive PAC-Bayes.
    \item In Section \ref{sec:4.2}, we provide a brief introduction to KL-annealing and demonstrate its connection to Recursive PAC-Bayes.
\end{itemize}

\section{Recursive PAC-Bayes and cold posteriors}\label{sec:4.1}
To see the connection between Recursive PAC-Bayes and cold posteriors \citep{WRVS+20} (if needed, please refer to Chapter \ref{chap:chap2} for introduction or background knowledge), we first present the form of cold posteriors (Equation \ref{eq:likelihoodTempering}):
\begin{align*}
    p_\lambda(\bmtheta|D) \propto p(\bmY|\bmX,\bmtheta)^\lambda p(\bmtheta), \quad \lambda > 1 \quad \text{where } \lambda=\dfrac{1}{T}.
\end{align*}
Such cold posteriors can be obtained by optimizing the (generalized) ELBO objective (Equation \ref{eq:likelihoodTempering:ELBO}) \citep{alquier2016properties,HMPB+17}. To make it clearer, we rewrite Equation \ref{eq:likelihoodTempering:ELBO} in the form of average loss:
\begin{align*}
    p_\lambda(\bmtheta|D) &= \argmin_\rho \E_\rho[-\ln p(D|\bmtheta)] + \frac{1}{\lambda}\operatorname{KL}(\rho(\bmtheta|D),p(\bmtheta)) \\
    &= \argmin_\rho \E_\rho[-\ln \prod_{(\bmy,\bmx) \in D} p(\bmy|\bmx,\bmtheta)] + \frac{1}{\lambda}\operatorname{KL}(\rho(\bmtheta|D),p(\bmtheta)) \\
    &= \argmin_\rho \E_\rho[-\sum_{(\bmy,\bmx) \in D} \ln p(\bmy|\bmx,\bmtheta)] + \frac{1}{\lambda}\operatorname{KL}(\rho(\bmtheta|D),p(\bmtheta)) \\
    &= \argmin_\rho \E_\rho[-\frac{1}{n}\sum_{(\bmy,\bmx) \in D} \ln p(\bmy|\bmx,\bmtheta)] + \underbrace{\frac{1}{\lambda}}_{\text{temperature $T$}} \underbrace{\frac{\operatorname{KL}(\rho(\bmtheta|D),p(\bmtheta))}{n}}_{\text{average KL}},
\end{align*}
where $n$ is the size of the dataset $D$ used for constructing the posterior. It is now clear that obtaining cold posteriors ($T < 1$, or $\lambda > 1$ equivalently) corresponds to searching a posterior that optimizes a (generalized) ELBO objective where the coefficient of the average KL term is lesser than $1$.

On the other hand, in Recursive PAC-Bayes, we obtain the sequential posteriors using Equation \ref{eq:opt-pi-t}:
\begin{align*}
    \pi_t^* &= \arg\min_\pi \sum_{j=1}^3 (b_{t|j} - b_{t|j-1})\\
    &\cdot \kl^{-1,+}\left(\E_{\pi}\left[\hat F_{\gamma_t|j}(h,S_t, \hat \pi_{t-1}^*)\right], \frac{\KL(\pi\|\pi_{t-1}^*)+\ln\frac{6T\sqrt{\nval_t}}{\delta}}{\nval_t}\right) \\
    &= \arg\min_\pi \sum_{j=1}^3 (b_{t|j} - b_{t|j-1})\\
    &\cdot \kl^{-1,+}\left(\E_{\pi}\left[\hat F_{\gamma_t|j}(h,S_t, \hat \pi_{t-1}^*)\right], \underbrace{\frac{|S_t|}{\nval_t}}_{\text{temperature $T$}} \underbrace{\frac{\KL(\pi\|\pi_{t-1}^*)}{|S_t|}}_{\text{average KL}}+\dfrac{\ln\frac{6T\sqrt{\nval_t}}{\delta}}{\nval_t}\right).
\end{align*}
As shown in Figure \ref{fig:evolution}, for every step $t$ until the final step $T$, the denominator of the coefficient of the average KL term, $\nval_t = |\Uval_t| = \sum_{s=t}^T |S_s|$, is always larger than the size of the current dataset $|S_t|$ used for the construction of the posteriors, thus resulting in a cold temperature $T=\frac{|S_t|}{\nval_t}<1$.

In this sense, Recursive PAC-Bayes can be seen as implicitly utilizing cold posteriors for all sequential steps $t$ except for the last step $T$. As Bayesian neural networks ($\lambda=1$) are shown to have an underfitting phenomenon with curated image datasets (please refer to Chapter \ref{chap:chap2}), such a procedure allows Recursive PAC-Bayes to make more aggressive updates for fitting the data instead of staying close to the uninformed priors, especially in the early stages. Thus, it mitigates the underfitting problem and achieves better performance.

However, while this connection helps explain the success of Recursive PAC-Bayes, it is important to note the differences. Recursive PAC-Bayes is only ``partially cold'' compared to cold posteriors, and in each update step on the corresponding chunk of data $S_t$, the implicit temperature is different, increasing gradually from the lowest temperature ($T \ll 1$, or $\lambda \gg 1$ equivalently) to normal ($T =1$, or $\lambda=1$). More importantly, the implicit temperature in Recursive PAC-Bayes, $\frac{|S_t|}{\nval_t}$, is determined by the data splitting strategy rather than as a pre-defined hyperparameter in cold posteriors. Interestingly, obtained from rigorous theoretical bounds, such a method might provide insights for choosing appropriate temperatures for Bayesian predictive models.

\section{Recursive PAC-Bayes and KL-annealing}\label{sec:4.2}

\begin{figure}[H]
    \centering
    \includegraphics[width=.95\textwidth]{./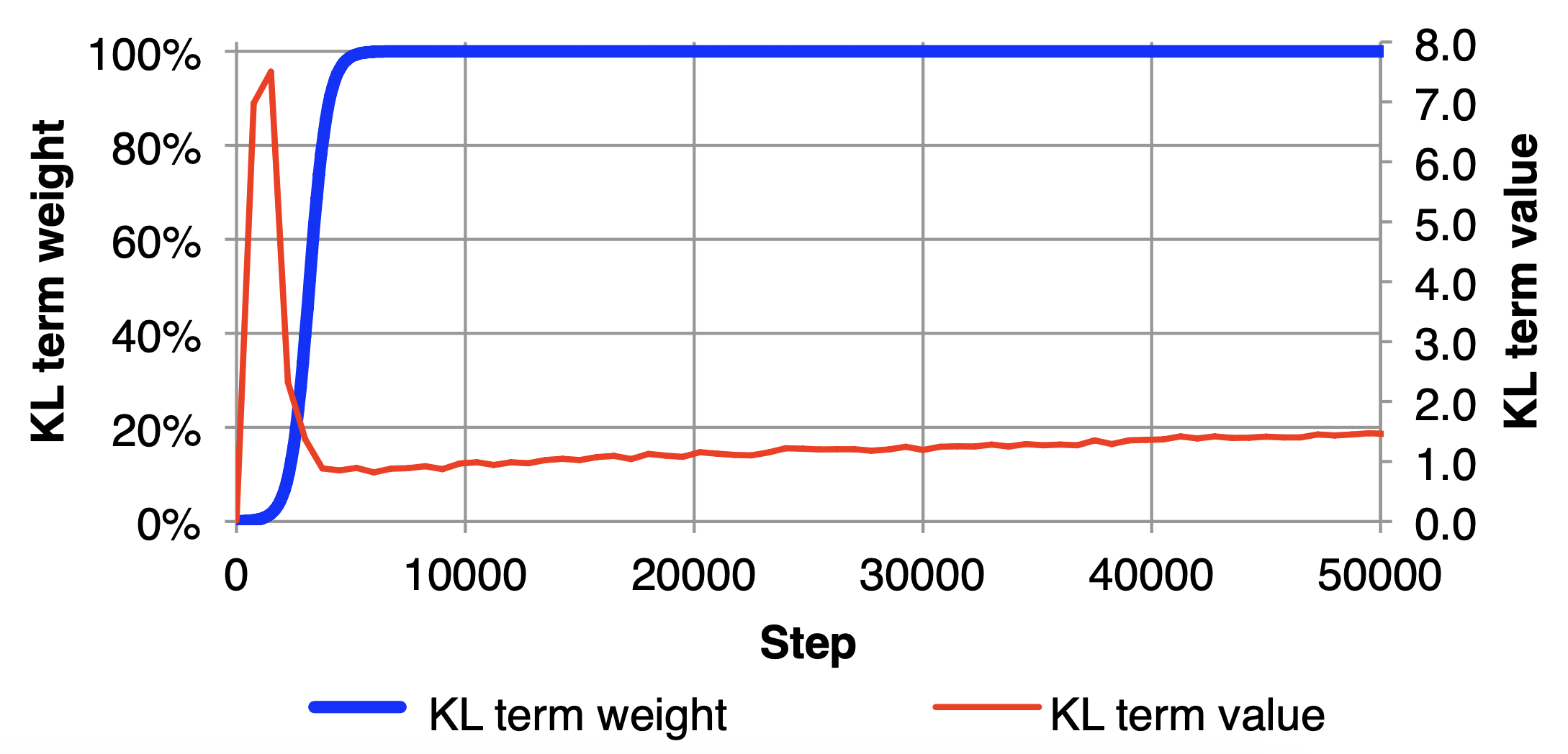}
    \caption{\textbf{Figure 2 in \cite{DBLP:conf/conll/BowmanVVDJB16}: a graphical illustration of KL-Annealing.} 
    } \label{fig:anneal}
\end{figure}

In the realm of Bayesian inference and variational methods, KL-annealing \citep{DBLP:conf/conll/BowmanVVDJB16, DBLP:journals/jmlr/BinghamCJOPKSSH19, DBLP:journals/pami/ZhangBKM19, DBLP:journals/corr/SonderbyRMSW16} has emerged as a powerful technique to improve the training of probabilistic models, particularly variational autoencoders (VAEs) and Bayesian neural networks. KL-annealing addresses the challenge of balancing the trade-off between data fidelity and regularization during the optimization process, a crucial aspect in ensuring that the model learns a meaningful representation of the data while avoiding overfitting.

KL-annealing addresses this challenge by gradually increasing the weight of the KL divergence term in the ELBO during training, as demonstrated graphically in Figure \ref{fig:anneal}. This approach allows the model to initially focus on fitting the data, gradually introducing the regularization effect of the KL term to ensure the optimization of the posterior distribution remains well-behaved. The modified ELBO with annealed KL divergence is given by:
\begin{align*}
    \text{ELBO}_\beta = \mathbb{E}_{q(\bmtheta)} \left[ \log p(D \mid \bmtheta) \right] - \beta(t) \cdot \text{KL}(q(\bmtheta), p(\bmtheta)),
\end{align*}
where $\beta(t)$ is an annealing parameter that starts from a small value (often close to zero) and is gradually increased to one during the training process.

KL-annealing offers several benefits that enhance the training and performance of probabilistic models. By gradually increasing the weight of the KL divergence term in the ELBO during training, it allows the model to initially focus on fitting the data accurately, leading to improved convergence rates and more precise representations. This technique helps prevent mode collapse, ensuring that the model captures diverse features of the data. Additionally, KL-annealing stabilizes the training dynamics by avoiding abrupt changes in the optimization landscape, resulting in a balanced trade-off between data fidelity and regularization. This balance improves the model's ability to generalize well to new, unseen data, making it a valuable approach in variational inference and Bayesian neural networks.

As shown in Section \ref{sec:4.1}, the connection between Recursive PAC-Bayes and KL-annealing becomes evident: both methods gradually increase the KL coefficient, which converges to one at the final stages of optimization. Although Recursive PAC-Bayes may benefit from KL-annealing, there is a significant difference. Recursive PAC-Bayes employs different KL coefficients (i.e., temperatures) for each chunk of data, with optimization performed until convergence. In contrast, KL-annealing trains on all data at once, using smaller-than-one KL coefficients (cold temperatures) for more efficient optimization in the early stages, but ultimately converges with a KL coefficient equal to one.

\section{Conclusion}
In this chapter, we explored the Recursive PAC-Bayes (RPB) framework and its connections to cold posteriors and KL-annealing. Recursive PAC-Bayes addresses traditional PAC-Bayesian challenges by enabling sequential prior updates without losing confidence information, thereby enhancing the learning process with each iteration.

The concept of cold posteriors, which involves a temperature scaling factor less than one, aligns with Recursive PAC-Bayes's approach of using ``cold'' coefficients in the KL term during optimization (except for the final update). This adjustment allows Recursive PAC-Bayes to make more aggressive updates early in the learning process, helping to mitigate underfitting and improve performance.

Similarly, KL-annealing gradually increases the weight of the KL divergence term during training, balancing data fidelity and regularization dynamically for better optimization. Recursive PAC-Bayes's recursive updates inherently incorporate this principle, leading to stable training dynamics and better generalization.

By considering the perspectives of cold posteriors and KL-annealing, we showed that Recursive PAC-Bayes implicitly leverages their strengths. This connection provides a deeper understanding of the success of Recursive PAC-Bayes and highlights its potential for advancing machine learning models.

\chapter{(Generalized) ELBO Revisited: Another Decomposition for Mean-Field Variational Global Latent Variable Models}
\label{chap:chap5}
\newpage

\section{Introduction}
Variational inference \citep{blei2017variational, DBLP:journals/ml/JordanGJS99, DBLP:journals/ftml/WainwrightJ08} has become a cornerstone technique for approximate Bayesian inference in complex models. The Evidence Lower Bound (ELBO) is a crucial objective in this framework, facilitating the optimization of variational distributions to approximate true posteriors. While \cite{hoffman2016elbo} have provided insightful decompositions of the ELBO for local latent variable models like Variational Autoencoders (VAEs) \citep{DBLP:journals/corr/KingmaW13}, there remains a need for a deeper understanding of ELBO dynamics in global latent variable models.

In this chapter, we introduce a novel decomposition for variational global latent variable models, where the latent variables (usually the model or network weights) are shared across all data points, contrasting with the local latent variables (usually latent representation of each data point) in models like VAEs. More specifically, our decomposition holds for any mean-field variational distribution, which is the common design choice in practice. 

Our proposed decomposition aims to shed light on several aspects of the training and evaluation of variational global latent variable models:
\begin{itemize}
    \item Understanding training dynamics: by dissecting the ELBO into interpretable components, we can gain insights into how different parts of the objective function influence the training process, potentially leading to more effective optimization strategies.
    \item Evaluating the variational posteriors: the decomposition provides a finer-grained analysis of the variational posterior quality, enabling practitioners to diagnose and address issues in the approximation process more effectively.
    \item Finer control of the optimization process: with a clearer understanding of the contributions of different terms in the ELBO, we can develop more nuanced training controls, e.g., inserting separate coefficients for each term, potentially improving convergence and performance.
\end{itemize}

\section{Rewriting the (generalized) ELBO for mean-field variational global latent variable models}
In (generalized) variational inference, we consider approximating the (tempered) Bayesian posterior $p(\bmtheta|D)^{\lambda}$ with a variational distribution $q(\bmtheta)$. Such variational posteriors are typically obtained by optimizing the (generalized) ELBO objective (essentially Equation \ref{eq:likelihoodTempering:ELBO}) within the selected variational family:
\begin{align*}
    q(\bmtheta) &= \argmin_q \E_q[-\ln p(D|\bmtheta)] + \frac{1}{\lambda}\operatorname{KL}(q(\bmtheta),p(\bmtheta)),
\end{align*} 
where $p(\boldsymbol{\theta})$ is the prior and typically chosen to be the standard normal distribution for tractability. In practice, people often consider a factorized (i.e., mean-field) variational distribution, $q(\bmtheta)=\prod_{d=1}^{D}q_d(\theta_d)$, where $\bmtheta \in \bmTheta$, the parameters of the predictive model, is a $D$-dimension vector and $d$ indexes the dimension. 

Inspired by \cite{hoffman2016elbo}, we present a similar but novel decomposition of the KL term in ELBO. However, while \cite{hoffman2016elbo}'s decomposition is for variational local latent variables (e.g. VAEs), our decomposition applies to variational global latent variable models (e.g. Bayesian neural networks). To begin our decomposition, we first define the following:
\begin{align*}
    &q(d,\theta)=q(d)q(\theta|d), \quad q(\theta|d)=q_d(\theta_d), \quad q(d)=\dfrac{1}{D}, \quad q^{\text{avg}}(\theta)=\dfrac{1}{D}\sum_{d=1}^{D}q_d(\theta_d)\\
    &p(d,\theta)=p(d)p(\theta|d), \quad p(\theta|d)=p(\theta), \quad p(d)=\dfrac{1}{D},
\end{align*}
where the index $d$ is treated as a random variable and $q^{\text{avg}}(\theta)=\sum_{d=1}^{D}q(d,\theta)=\dfrac{1}{D}\sum_{d=1}^{D}q_d(\theta_d)$ is the average (or ``mixture'') variational posterior over the dimensions. Then, the (generalized) ELBO can be rewritten as:
\begin{align} \label{eq:5.1}
    \text{ELBO}_{\lambda}=\E_q[-\ln p(D|\bmtheta)] + \frac{1}{\lambda} \left( D *\left( \operatorname{KL}\left(q^{\text{avg}}\left(\theta\right), p(\theta) \right) + \mathbb{I}_{q(d,\theta)}[d,\theta]\right) \right),
\end{align}
where $\mathbb{I}_{q(d,\theta)}[d,\theta]$ is the mutual information between the index $d$ and the parameter $\theta$ of the $d$-th dimension. 

Next, we demonstrate the decomposition of the KL term in Equation \ref{eq:5.1}:
\begin{align*}
    &  \text{KL}\left(q\left(\boldsymbol{\theta}\right), p(\boldsymbol{\theta})\right)\\
    &= \sum_{d=1}^{D}\text{KL}(q_d(\theta_d),p(\theta)) \\
    &= \sum_{d=1}^{D} \int q_d\left(\theta_d\right) \log \dfrac{q_d\left(\theta_d\right)}{p\left(\theta\right)} d\theta \text{\quad definition of KL} \\ 
    &= \sum_{d=1}^{D} \int q\left(\theta|d\right) \log \dfrac{q\left(\theta|d\right)}{p\left(\theta|d\right)} d\theta \text{\quad substitute the definitions accordingly} \\
    &= D \sum_{d=1}^{D} \int \dfrac{1}{D} q\left(\theta|d\right) \log \dfrac{q\left(\theta|d\right)\dfrac{1}{D}}{p\left(\theta|d\right)\dfrac{1}{D}} d\theta \text{\quad add $D$ and $\dfrac{1}{D}$} \\
    &= D \sum_{d=1}^{D} \int q(d) q\left(\theta|d\right) \log \dfrac{q\left(\theta|d\right)q(d)}{p\left(\theta|d\right)p(d)} d\theta \text{\quad substitute the definitions accordingly}\\
    &= D \sum_{d=1}^{D} \int q\left(d,\theta\right) \log \dfrac{q\left(d,\theta\right)}{p\left(d,\theta\right)} d\theta \text{\quad product rule} \\ 
    &= D \sum_{d=1}^{D} \int q\left(d,\theta\right) \log \dfrac{q^{\text{avg}}\left(\theta\right)q\left(d|\theta\right)}{p\left(\theta\right)p\left(d|\theta\right)} d\theta \text{\quad product rule}\\
    &= D \sum_{d=1}^{D} \int q\left(d,\theta\right) \left(\log \dfrac{q^{\text{avg}}\left(\theta\right)}{p\left(\theta\right)}+\log \dfrac{q\left(d|\theta\right)}{p\left(d\right)}\right) d\theta \quad \text{since } p(d|\theta)=p(d) \\
    &= D \left(\sum_{d=1}^{D} \int q\left(d,\theta\right) \log \dfrac{q^{\text{avg}}\left(\theta\right)}{p\left(\theta\right)}+ q\left(d,\theta\right)\log \dfrac{q\left(d|\theta\right)}{p\left(d\right)} d\theta \right) \text{\quad combine the log}\\
    &= D \left(\int \sum_{d=1}^{D} q\left(d,\theta\right) \log \dfrac{q^{\text{avg}}\left(\theta\right)}{p\left(\theta\right)}d\theta + \int\sum_{d=1}^{D} q\left(d,\theta\right)\log \dfrac{q\left(d|\theta\right)}{p\left(d\right)} d\theta \right) \text{\quad Fubini's theorem}\\
    &= D \left( \int q^{\text{avg}}\left(\theta\right) \log \dfrac{q^{\text{avg}}\left(\theta\right)}{p\left(\theta\right)}d\theta + \int\sum_{d=1}^{D} q\left(d,\theta\right)\log \dfrac{q\left(d,\theta\right)}{q\left(d\right)q^{\text{avg}}\left(\theta\right)} d\theta \right) \text{\quad $d$ is integrated out}\\
    &= D \left(\text{KL}\left(q^{\text{avg}}\left(\theta\right), p(\theta) \right) + \mathbb{I}_{q(d,\theta)}[d,\theta] \right). \text{\quad substitute the definitions accordingly}
\end{align*}

Intuitively, the mutual information term $\mathbb{I}_{q(d,\theta)}[d,\theta]$ could be able to describe the learning process. In the beginning, $\mathbb{I}_{q(d,\theta)}[d,\theta]$ should be $0$ as we start from a factorized prior, where the dimension $d$ is independent of the parameter $\theta$. As learning progresses, $\mathbb{I}_{q(d,\theta)}[d,\theta]$ should increase gradually and stay at a stable level. On the other hand, the average variational posterior, $q^{\text{avg}}(\theta)=\dfrac{1}{D}\sum_{d=1}^{D}q_d(\theta_d)$, could be arbitrarily close to the prior $p(\theta)$, enabling the term $\operatorname{KL}\left(q^{\text{avg}}\left(\theta\right), p(\theta) \right)$ to be arbitrarily small. Think of each dimension $q_d(\theta_d)$, even though being different and diverse in itself, could still constitute a mixture centered at $0$.

Furthermore, one might consider adding extra parameters to control the optimization objective as needed \citep{DBLP:conf/aistats/Esmaeili0JBSPBD19}, ending in the following form for (generalized) ELBO:
\begin{align} \label{eq:5.2}
    \text{ELBO}_{\lambda}=\E_q[-\ln p(D|\bmtheta)] + D *\left( \frac{1}{\lambda_1}\operatorname{KL}\left(q^{\text{avg}}\left(\theta\right), p(\theta) \right) + \frac{1}{\lambda_2}\mathbb{I}_{q(d,\theta)}[d,\theta]\right).
\end{align} 

\section{Conclusion}
In this chapter, we have introduced a novel decomposition for the Evidence Lower Bound (ELBO) tailored to mean-field variational global latent variable models. This decomposition could provide a deeper understanding of the training dynamics and optimization challenges inherent in these models, which involve latent variables shared across all data points. By dissecting the ELBO into more interpretable components, this decomposition facilitates finer control over the optimization process and allows for a more detailed evaluation of the quality of variational posteriors. These insights might lead to the development of more effective training strategies, ultimately enhancing the performance and reliability of variational global latent variable models.

\chapter{Summary and Discussion}
\label{chap:chap6}

In Chapter \ref{chap:chap2}, we provided a detailed analysis of the cold posterior effect (CPE) in Bayesian deep learning. Theoretically, the Bayesian posterior is considered optimal when the model is perfectly specified. However, empirical evidence shows that posteriors with a temperature \( T < 1 \) can outperform the standard Bayesian posterior, a phenomenon known as CPE. We investigated the underlying causes of CPE, establishing that it occurs primarily due to underfitting resulting from model misspecification. Moreover, we demonstrated that these tempered posteriors are valid Bayesian posteriors corresponding to different combinations of likelihoods and priors, thus justifying temperature adjustment as an effective strategy for addressing underfitting. This finding is significant as it not only deepens our theoretical understanding of CPE but also provides a practical method for improving Bayesian model performance through temperature fine-tuning.

In Chapter \ref{chap:chap3}, we presented the development of Recursive PAC-Bayes (RPB), a novel framework for sequentially updating posteriors in a manner that preserves confidence information. Traditional PAC-Bayesian analysis faces challenges in maintaining confidence when posteriors are updated sequentially, limiting its utility in real-world applications. The introduction of RPB addresses this issue by decomposing the expected loss of randomized classifiers into an excess loss relative to a downscaled prior loss. This approach enables recursive bounding, allowing for meaningful and beneficial sequential data processing without losing crucial confidence information. We further enhanced the practical applicability of PAC-Bayesian analysis by generalizing key inequalities and developing a robust optimization procedure based on RPB. The empirical results indicated that RPB significantly outperforms existing methods, making it a valuable tool for improving the generalization capabilities of neural networks.


In Chapter \ref{chap:chap4}, we explored the Recursive PAC-Bayes (RPB) framework and its relationship with cold posteriors and KL-annealing, two empirically effective techniques in Bayesian modeling. We demonstrated that the intermediate posteriors generated by RPB's updating rules are inherently ``cold'', aligning with the principles underlying cold posteriors. Furthermore, RPB's recursive updates is similar to the process of KL-annealing, with the potential of bringing in more stable training dynamics. These connections not only shed light on the success of RPB but also suggest intriguing avenues for future research.

In Chapter \ref{chap:chap5}, we introduced a novel decomposition of the evidence lower bound (ELBO) for variational global latent variable models. The proposed decomposition breaks down the ELBO into more interpretable components, with the potential of offering a finer-grained understanding of the training dynamics and posterior quality in these models. This decomposition might improve optimization strategies by allowing for more nuanced control of the temperature parameter \( T \), leading to better convergence and model performance.


\newpage
\phantomsection
\pagestyle{plain}
\chapter*{List of Publications}
\addcontentsline{toc}{section}{List of Publications}
The work presented in this thesis has led to this publication \citep{zhang2024the} and this preprint \citep{wu2024recursivepacbayesfrequentistapproach}. Additionally, the work during my Ph.D. has resulted in another publication \citep{DBLP:journals/entropy/LyuCZT23}. However, for the coherence of the contributions, \cite{DBLP:journals/entropy/LyuCZT23} is not included in the thesis.

\begin{enumerate}
    \item \bibentry{zhang2024the}. 
    \item \bibentry{wu2024recursivepacbayesfrequentistapproach}.
    \item \bibentry{DBLP:journals/entropy/LyuCZT23}.
\end{enumerate}


\newpage

\addcontentsline{toc}{section}{Bibliography}
\bibliography{references}

\end{document}